\documentclass[10pt]{article} 
\usepackage{amsmath,amssymb,amsthm,mathtools}
\usepackage[T1]{fontenc}

\usepackage{newtxtext}
\usepackage[section]{placeins}
\usepackage{float}
\usepackage{newtxmath}
\usepackage[margin=1in]{geometry}
\usepackage{booktabs}
\usepackage{algorithm}
\usepackage{multirow}
\usepackage{algpseudocode}
\usepackage{enumitem}
\usepackage{subcaption}
\usepackage{color,soul}
\usepackage{enumitem}
\usepackage{siunitx}
\usepackage{algorithm}
\usepackage{algpseudocode}
\usepackage{placeins}

\usepackage{amsmath,amsfonts,bm}

\def\eqref#1{equation~\ref{#1}}

\def\plaineqref#1{(\ref{#1})}

\def\1{\bm{1}}

\DeclareMathAlphabet{\mathsfit}{\encodingdefault}{\sfdefault}{m}{sl}
\SetMathAlphabet{\mathsfit}{bold}{\encodingdefault}{\sfdefault}{bx}{n}

\newcommand{\E}{\mathbb{E}}

\DeclareMathOperator*{\argmin}{arg\,min}

\usepackage[backref,colorlinks,citecolor=blue,bookmarks=true]{hyperref}
\usepackage{mathtools, amssymb, amsthm, dsfont}
\numberwithin{equation}{section}
\usepackage{enumitem}
\usepackage[nameinlink,capitalize]{cleveref}
\usepackage{algorithm}
\usepackage[table]{xcolor}

\usepackage{parskip}
\usepackage{mathtools, amssymb, amsthm, bbm}
\usepackage[nameinlink,capitalize]{cleveref}

\theoremstyle{plain}
\newtheorem{theorem}{Theorem}[section]
\newtheorem{lemma}{Lemma}[section]

\newtheorem{assumption}{Assumption}[section]

\theoremstyle{definition}

\newtheorem{example}{Example}[section]

\theoremstyle{remark}

\newtheorem{remark}[theorem]{Remark}

\newcommand{\norm}[1]{\|#1\|}

\usepackage{url}
\usepackage{etoc}
\usepackage[title]{appendix}

\definecolor{light-gray}{gray}{0.95}

\definecolor{HHgreen}{HTML}{34C759}

\newlist{thmassumptions}{enumerate}{1}
\setlist[thmassumptions,1]{label=(\alph*), ref=\thetheorem(\alph*)}

\crefname{thmassumptionsi}{Assumption}{assumptions}
\crefname{thmassumptionsi}{Assumption}{Assumptions}

\title{Adapt or Forget: Provable Tradeoffs Between Adam and SGD in Nonstationary Optimization}

\author{ Sharan Sahu\footnotemark[1] \\ \texttt{ss4329@cornell.edu} \\ Department of Statistics and Data Science \\ Cornell University \and Abir Sarkar\footnotemark[1] \\ \texttt{as4458@cornell.edu} \\ Department of Statistics and Data Science \\ Cornell University \and Cameron J. Hogan\footnotemark[1] \\ \texttt{cjh337@cornell.edu} \\ Department of Statistics and Data Science \\ Cornell University \and Martin T. Wells \\ \texttt{mtw1@cornell.edu} \\ Department of Statistics and Data Science \\ Cornell University}

\makeatletter
\gdef\equalcontrib{
    \renewcommand{\thefootnote}{\fnsymbol{footnote}} 
    \footnotetext[1]{\normalsize These authors contributed equally to this work.}
    \renewcommand{\thefootnote}{\arabic{footnote}}
}
\makeatother

\begin{document}

\maketitle
\equalcontrib
\begin{abstract}
We provide a theoretical analysis of Adam under non-stationary stochastic objectives, separating two regimes: Euclidean tracking under adaptive strong monotonicity of the Adam-preconditioned mean-gradient operator, and high-probability projected stationarity guarantees under general \(L\)-smooth objectives. In the tracking regime, we derive finite-time expected and high-probability bounds that decompose sharply into four components: initialization, objective drift, a first-moment tracking error governed by \(\beta_1\), and a preconditioner perturbation governed by \(\beta_2\). We characterize the burn-in time required for the transient terms to decay to the asymptotic tracking bound under constant and step-decay schedules. We also prove a high-probability bound on the average projected stationarity gap for Adam under distribution shift. Across both analyses, our bounds reveal a noise--drift tradeoff: in noise-dominated regimes, first-moment averaging and adaptive preconditioning can yield favorable upper guarantees, whereas in drift-dominated regimes, stale first-moment information and preconditioner perturbations can enlarge Adam's tracking guarantee, potentially allowing vanilla SGD to attain a smaller tracking error. Our explicit \((\beta_1,\beta_2,\epsilon)\)-dependent bounds identify mechanisms through which adaptive step-sizing can help or hurt under nonstationarity and provide theoretical explanations consistent with Adam's empirical instability and stabilization under distribution shift.
\end{abstract}

\section{Introduction}
Consider the optimization problem on a compact convex set $\Theta \subset \mathbb{R}^{d}$ posed by a smooth objective function $G_{t}: \mathbb{R}^{d} \rightarrow \mathbb{R}$ defined as:
\begin{equation}
    \tag{Opt}\label{eq:nonstationary-convex-opt}
    \boldsymbol{\theta}_{t}^{\star} \in \argmin_{\boldsymbol{\theta}\in \Theta} G_{t}(\boldsymbol{\theta}), \;\; G_{t}(\boldsymbol{\theta}) = \mathbb{E}_{X_{t} \sim \Pi_{t}}[g(\boldsymbol{\theta}, X_{t})].
\end{equation}
Here $g(\boldsymbol{\theta}, X_{t})$ is a noise perturbed measurement of $G_{t}(\boldsymbol{\theta})$ and $X_{t}$ is a random variable sampled from a time-varying distribution $\Pi_{t}$. This setting connects naturally to classical stochastic tracking in signal processing \cite{Kushner1997,sayed2003fundamentals}, concept drift in online learning \cite{10.1561/2400000013,NEURIPS2021_62e7f2e0}, and minimum divergence-type statistical objectives \cite{roy2026asymptotic}, where the goal is not convergence to a fixed solution but faithful tracking of an unknown, time-varying minimizer.

Adaptive gradient methods such as Adam \cite{DBLP:journals/corr/KingmaB14} have become the default optimizer for training large-scale deep learning models \cite{brown2020language, dosovitskiy2020image}, combining coordinate-wise adaptive step sizes derived from second-moment estimates with first-moment gradient averaging. In stationary settings, Adam is known to enjoy favorable convergence properties \cite{ hong2024convergence, jin2025comprehensive}, and its adaptivity is widely credited with reducing sensitivity to hyperparameter tuning and accelerating progress on ill-conditioned objectives. Recent work analyzing stochastic gradient descent (SGD) and its momentum-based variants (SGDM), including Polyak's Heavy-Ball \cite{POLYAK1963864} and Nesterov acceleration \cite{Nesterov1983AMF}, in non-stationary settings \cite{sahu2026provable} has shown that gradient averaging induces tracking lag under distribution shift. Since Adam incorporates momentum in both its first and second-moment estimates, it is natural to ask if similar degradation occurs, and whether the adaptive preconditioning compounds or alleviates it. These failure modes have indeed been observed empirically in continual learning and non-stationary optimization \cite{dohare2023maintaining, lyle2023understanding, ellis2024adam, DONG2026133279}, yet a precise theoretical account of when and why Adam degrades under distribution shift has remained unclear.

These observations motivate the following central question in the non-stationary tracking setting:
\begin{center}
\begin{minipage}{0.99\linewidth}
\centering\itshape
When does Adam's adaptive preconditioning help under distribution shift, and when can its moment estimates provably hurt relative to vanilla SGD, for \(L\)-smooth time-varying functions \(G_t(\boldsymbol{\theta})\)?
\end{minipage}
\end{center}

\textbf{Our contributions.} We answer this question under predictable distribution shift and make the following advances.

\begin{enumerate}
    \item \textbf{Finite-time tracking bounds under adaptive strong monotonicity.} We prove finite-time tracking bounds for Adam that separate four distinct error sources: initialization decay, drift-induced tracking lag, a first-moment tracking error governed by $\beta_1$, and a preconditioner perturbation governed by $\beta_2$. To obtain a tracking recursion, we impose strong monotonicity on the mean-gradient map after Adam-style preconditioning. Since the realized Adam preconditioner is computed from the fresh sample $X_{t+1}$ at the same step, we formulate this condition using a predictable proxy rather than the sample-dependent preconditioner itself. This yields a population-level contraction condition, while the difference between the realized preconditioner and its predictable proxy is controlled explicitly as an Adam-specific perturbation term.

    \item \textbf{A noise--drift tradeoff separating Adam from SGD.} 
    Our explicit \((\beta_1,\beta_2,\epsilon)\)-dependent bounds reveal an optimizer-specific noise--drift tradeoff that is absent from the corresponding vanilla SGD guarantee and takes a different form from that of SGDM. In noise-dominated regimes, first-moment averaging and adaptive preconditioning can yield favorable high-probability tracking guarantees, and our bounds identify parameter regimes in which Adam's upper bound is smaller than the corresponding SGD upper bound, consistent with our numerical simulations. In drift-dominated regimes, however, stale first-moment information together with preconditioner perturbations can enlarge Adam's asymptotic tracking bound, potentially allowing vanilla SGD to attain a smaller tracking error. The bounds also provide a theoretical mechanism consistent with the empirical observation of \cite{lyle2023understanding, lee2024slow, bai2025adaptive, gogianu2026revisiting} that increasing \(\epsilon\) stabilizes Adam under task changes: larger \(\epsilon\) dampens the effect of adaptive second-moment variability, while also weakening the effective adaptive step size and thereby potentially slowing adaptation to a changing objective.

    \item \textbf{Projected stationarity bounds under general Adam preconditioners.} Without the adaptive monotonicity condition, we prove a high-probability bound on the average projected stationarity gap for Adam under distribution shift. The same Adam-specific error structure persists: objective drift, first-moment bias-variance tradeoffs, and second-moment preconditioner perturbation all contribute explicitly, showing that the qualitative picture identified in the tracking regime extends to general Adam preconditioners.

    \item \textbf{Experiments corroborating the noise--drift tradeoff.} We validate the theory on strongly convex least squares and three non-convex online problems: teacher--student MLP regression, phase retrieval, and matrix factorization under controlled drift and time-varying noise. Across these settings, the results support the predicted tradeoff: SGD performs better when drift dominates because it adapts more quickly, whereas Adam is more effective when stochastic noise dominates due to momentum and adaptive scaling.
\end{enumerate}

\subsection{Related Work}
Adam \cite{DBLP:journals/corr/KingmaB14} has been the subject of extensive theoretical study in stationary settings. Indeed, in a variety of nonconvex settings, Adam provably converges under realistic 
conditions and adapts to the geometry of the problem in ways that SGD cannot \cite{li2023convergence, hong2024convergence, taniguchi2024adopt, jin2026adam}. The advantage of Adam over 
SGD has been attributed to several sources, including heavy-tailed gradient distributions in language modeling \cite{kunstner2024heavytailed}, Hessian heterogeneity across layers in Transformer architectures \cite{zhang2024transformers}, and the 
implicit sign-descent behavior of its updates \cite{kunstner2023noise}. However, Adam does not universally outperform simpler methods: it has 
been shown to generalize worse than SGD in certain settings \cite{wilson2017marginal}, fail to converge without modification in convex problems \cite{ReddiKK18}, and exhibit instability and loss of 
plasticity under distribution shift \cite{lyle2023understanding, dohare2023maintaining, ellis2024adam}. Most existing convergence theory for Adam assumes a fixed stationary objective.

Under non-stationarity, past gradients become stale and temporal averaging can be actively harmful. Building on the foundational dynamic regret framework of \cite{zinkevich2003online}, non-asymptotic 
tracking and regret guarantees have been established for SGD under time drift and dependent data \cite{8814889, JMLR:v24:21-1410, JMLR:v25:21-0748, shen2026sgddependentdataoptimal}. Most directly 
related to our work, \cite{sahu2026provable} provide finite-time tracking bounds for SGD and its momentum variants under strongly convex smooth objectives, showing that momentum amplifies drift-induced 
tracking lag. Nonconvex tracking guarantees under distribution shift remain scarce even for SGD, and no prior work addresses Adam in this setting. The failure of Adam under distribution shift has been documented empirically in continual learning and reinforcement learning \cite{lyle2023understanding, dohare2023maintaining, ellis2024adam}, yet no prior work provides a theoretical framework characterizing when and why Adam degrades, how its $(\beta_1, \beta_2, \epsilon)$ hyperparameters interact with the drift-noise balance, or how it compares to SGD in a unified tracking setting. We survey the broader literature on adaptive methods and non-stationary optimization more extensively in Appendix~\ref{app:A}.

\section{Preliminaries}
\label{sec:preliminaries}
We first summarize the notation used throughout the paper. Scalars, vectors, and matrices are denoted by lowercase, bold lowercase, and bold uppercase letters, respectively; calligraphic letters denote sets, operators, or $\sigma$-algebras. For $\boldsymbol{x} \in \mathbb{R}^d$, $\|\boldsymbol{x}\|$ denotes the $\ell_2$ norm,  $\langle \boldsymbol{x}, 
\boldsymbol{x}' \rangle$ the inner product, and $\boldsymbol{x}^{\odot 2} := (x_1^2, \ldots, x_d^2)^\top$ the coordinatewise square. For a positive semidefinite matrix $\boldsymbol{A} \in \mathbb{R}^{d \times d}$, we write $\|\boldsymbol{x}\|_{\boldsymbol{A}} := \sqrt{\boldsymbol{x}^\top \boldsymbol{A} \boldsymbol{x}}$. For $x \in \mathbb{R}$, we write $(x)_+ := \max\{x, 0\}$. We write $a_m = \mathcal{O}(b_m)$ if $a_m \le Cb_m$ for some $C>0$, $a_m = \Theta(b_m)$ if $a_m$ and $b_m$ are of the same order, and $a_m = o(b_m)$ if $a_m/b_m \to 0$ as $m \to \infty$. The notation $a_m \lesssim b_m$ (resp.\ $\gtrsim$, $\asymp$) indicates inequality up to constants independent of $m$ and problem parameters. For $f: \mathbb{R}^d \to \mathbb{R}$, we write $\nabla f$ for the gradient and $f^{\star} = \min_{x \in \mathbb{R}^{d}} f(x)$. We use $\mathbb{E}[\cdot]$ for expectation and $\mathcal{F}_t = \sigma(X_0, \ldots, X_t)$ for the natural filtration. We write $a \wedge b := \min \left\{a, b \right\}$ and $a \vee b := \max \left\{ a, b \right\}$. We define $\log_+(x):=\log(e\vee x)$ for $x\ge0$. We write $\mathcal{P}_{\Theta}$ to denote the projection operator onto the set $\Theta$.

We next introduce conditional $\Psi_\alpha$--Orlicz norms, which we will use throughout our analysis to control random quantities in a filtration-adapted (i.e., history-dependent) manner. Fix $\alpha\ge 1$. For a real-valued random variable $X$, recall the (unconditional) $\Psi_\alpha$--Orlicz norm
\[
\|X\|_{\Psi_\alpha}:=\inf\Big\{u>0:\ \mathbb{E}\exp\big((|X|/u)^\alpha\big)\le 2\Big\}.
\]
Given a $\sigma$-algebra $\mathcal F$, we write $\|X\|_{\Psi_\alpha\mid\mathcal F}\le K_{\mathcal F}$ for some $\mathcal F$-measurable $K_{\mathcal F}>0$ if $\mathbb{E}\!\left[\exp\!\left(\big(|X|/K_{\mathcal F}\big)^\alpha\right)\middle|\mathcal F\right]\le 2$. Equivalently, one may define for $u$
\[
\|X\|_{\Psi_\alpha\mid\mathcal F} :=\inf\Big\{u>0:\
\mathbb{E}\!\left[\exp\big((|X|/u)^\alpha\big)\mid\mathcal F\right]\le 2\ \Big\}
\]
provided $u$ is $\mathcal{F}$ measurable. Vector and matrix conditional Orlicz norms are defined analogously to their unconditional counterparts by taking suprema over one-dimensional projections (see \cite{sahu2026provable} for details).

\subsection{Problem setup}
Recall the optimization problem (\ref{eq:nonstationary-convex-opt}). Let $(\mathcal{F}_t)_{t \geq 1}$ be the natural filtration $\mathcal{F}_{t} = \sigma(X_{0}, \dots, X_{t})$. Throughout our analysis, we adopt the following assumption from \cite{sahu2026provable}, which models stochasticity in the non-stationary setting:

\vspace{0.5em}

\begin{assumption}[Stochastic predictability framework]
\label{assump:filtered-predictable-mds}
There exists a filtered probability space $(\Omega,\mathcal F,(\mathcal F_t)_{t\ge 0},\mathbb P)$ with $\mathcal F_0=\{\emptyset,\Omega\}$.
Let $(X_t)_{t\ge 0}$ be an $\mathbb F$--adapted process, i.e., $X_t$ is $\mathcal F_t$--measurable for all $t$. For each $t\ge 0$, let $\Pi_{t+1}$ denote the regular conditional law of $X_{t+1}$ given $\mathcal F_t$, i.e., $\Pi_{t+1}(A)=\mathbb P(X_{t+1}\in A\mid \mathcal F_t)$ a.s.\ for every measurable set $A$, and note $\Pi_{t+1}$ is $\mathcal F_t$--measurable.
Define the conditional risk
\[
G_{t+1}(\boldsymbol{\theta})
:= \mathbb E\!\left[g(\boldsymbol{\theta},X_{t+1})\mid \mathcal F_t\right]
= \mathbb E_{X\sim \Pi_{t+1}}\!\left[g(\boldsymbol{\theta},X)\right],
\]
and let $\boldsymbol{\theta}_{t+1}^\star\in\arg\min_{\boldsymbol{\theta}\in \Theta} G_{t+1}(\boldsymbol{\theta})$ denote a (measurable) minimizer contained within the interior of $\Theta$ where $\Theta \subset \mathbb{R}^{d}$ is compact and convex. Assume the following hold for all $t\ge 0$:
\begin{enumerate}
    \item \textbf{(Predictable minimizer)} $\boldsymbol{\theta}_{t+1}^\star$ is $\mathcal F_t$--measurable.\footnote[1]{Since $G_{t+1}(\boldsymbol{\theta})=\E[g(\boldsymbol{\theta}, X_{t+1})\mid \mathcal{F}_t]$ is $\mathcal{F}_t$-measurable for each fixed $\boldsymbol{\theta}$, standard measurable selection conditions ensure existence of an $\mathcal{F}_t$-measurable minimizer. If the minimizer is a.s.\ unique (e.g., under strong convexity), measurability follows by the measurable maximum theorem. See \cite{sahu2026provable} for details.}
    \item \textbf{(Algorithm adaptedness)} The iterate $\boldsymbol{\theta}_t$ is $\mathcal F_t$--measurable.
    \item \textbf{(Martingale difference noise)} Define the conditional mean gradient $\bar{\boldsymbol{g}}_{t+1}(\boldsymbol{\theta}) := \mathbb{E}[\nabla_{\boldsymbol{\theta}} g(\boldsymbol{\theta}, X_{t+1}) \mid \mathcal{F}_t]$, the conditional second moment $\boldsymbol{s}_{t+1}(\boldsymbol{\theta}) := \mathbb{E}[(\nabla_{\boldsymbol{\theta}} g(\boldsymbol{\theta}, X_{t+1}))^{\odot 2} \mid \mathcal{F}_t]$, and the associated noise terms $\boldsymbol{\xi}_{t+1} := \nabla_{\boldsymbol{\theta}} g(\boldsymbol{\theta}_t, X_{t+1}) - \bar{\boldsymbol{g}}_{t+1}(\boldsymbol{\theta}_t)$ and $\boldsymbol{\chi}_{t+1} := (\nabla_{\boldsymbol{\theta}} g(\boldsymbol{\theta}_t, X_{t+1}))^{\odot 2} - \boldsymbol{s}_{t+1}(\boldsymbol{\theta}_t)$. Then $\boldsymbol{\xi}_{t+1}$ and $\boldsymbol{\chi}_{t+1}$ are $\mathcal{F}_{t+1}$-measurable and satisfy $\mathbb{E}[\boldsymbol{\xi}_{t+1} \mid \mathcal{F}_t] = \mathbf{0}$ and $\mathbb{E}[\boldsymbol{\chi}_{t+1} \mid \mathcal{F}_t] = \mathbf{0}$ a.s. for all $t$.
\end{enumerate}
\end{assumption}

The time-varying distribution $\Pi_t$ in \cref{assump:filtered-predictable-mds} is a standard modeling assumption in the non-stationary stochastic optimization literature \cite{sahu2026provable,NEURIPS2021_62e7f2e0} and captures several practical settings, including policy optimization and reinforcement learning \cite{kakade2002approximately, schulman2015trust}, online recommendation and contextual bandits \cite{li2010contextual}, continual learning \cite{parisi2019continual}, and federated learning with non-stationary clients \cite{kairouz2021advances}.

The Adam update \cite{DBLP:journals/corr/KingmaB14}, with step size
\(\alpha>0\), momentum parameters \(\beta_1,\beta_2\in(0,1)\), and
stabilization parameter \(\epsilon>0\), is summarized in
Algorithm~\ref{alg:adam}. We write the stochastic gradient and its
elementwise square as

$$
    \nabla_{\boldsymbol{\theta}} g(\boldsymbol{\theta}_t,X_{t+1})
    =
    \bar{\boldsymbol g}_{t+1}(\boldsymbol{\theta}_t)
    +
    \boldsymbol{\xi}_{t+1},
    \;\;
    \bigl(
    \nabla_{\boldsymbol{\theta}} g(\boldsymbol{\theta}_t,X_{t+1})
    \bigr)^{\odot 2}
    =
    \boldsymbol{s}_{t+1}(\boldsymbol{\theta}_t)
    +
    \boldsymbol{\chi}_{t+1}.
$$

For any positive definite matrix \(\boldsymbol A\succ0\), define the
\(\boldsymbol A\)-metric projection onto \(\Theta\) by
$
    \mathcal P_\Theta^{\boldsymbol A}(\boldsymbol z)
    :=
    \operatorname*{arg\,min}_{\boldsymbol\theta\in\Theta}
    \frac12
    \|\boldsymbol\theta-\boldsymbol z\|_{\boldsymbol A}^2.
$
Thus, \(\mathcal P_\Theta^{\boldsymbol I_d}\) is the usual Euclidean
projection. Throughout a given run of Algorithm~\ref{alg:adam}, the
projection convention is fixed. For the tracking results, we take
\(\boldsymbol A_{t+1}=\boldsymbol I_d\), whereas for the
metric-projected stationarity results, we take
\(\boldsymbol A_{t+1}=\widetilde{\boldsymbol P}_{t+1}^{-1}\), where
\(\widetilde{\boldsymbol P}_{t+1}\) is the predictable preconditioner (see Appendix~\ref{app:C3}).

\begin{algorithm}[h]
\caption{Adam optimizer}
\label{alg:adam}
\begin{algorithmic}[1]
\Require $\boldsymbol{\theta}_0\in\Theta$, horizon $T\ge1$,
step size $\alpha>0$, $\beta_1,\beta_2\in(0,1)$, and $\epsilon>0$
\State Initialize $\boldsymbol m_0=\boldsymbol v_0=\boldsymbol 0$
\For{$t=0,1,\ldots,T-1$}
    \State $\boldsymbol g_{t+1}
    :=\nabla_{\boldsymbol\theta}g(\boldsymbol\theta_t,X_{t+1})$
    \State $\boldsymbol m_{t+1}
    :=\beta_1\boldsymbol m_t+(1-\beta_1)\boldsymbol g_{t+1}$
    \State $\boldsymbol v_{t+1}
    :=\beta_2\boldsymbol v_t+(1-\beta_2)
    \boldsymbol g_{t+1}^{\odot 2}$
    \State $\widehat{\boldsymbol m}_{t+1}
    :=\boldsymbol m_{t+1}/(1-\beta_1^{t+1})$, \quad
    $\widehat{\boldsymbol v}_{t+1}
    :=\boldsymbol v_{t+1}/(1-\beta_2^{t+1})$
    \State $\boldsymbol P_{t+1}
    :=\operatorname{Diag}\!\left(
    \left(\sqrt{\widehat{\boldsymbol v}_{t+1}}+\epsilon\right)^{-1}
    \right)$
    \State
\begingroup
\renewcommand{\theequation}{Adam}%
\refstepcounter{equation}\label{eq:adam-update}%
$\displaystyle
\boldsymbol{\theta}_{t+1}
:=
\mathcal P_{\Theta}^{\boldsymbol A_{t+1}}\!\left(
    \boldsymbol{\theta}_t
    - \alpha \boldsymbol P_{t+1}\widehat{\boldsymbol m}_{t+1}
\right)
$
\hfill \textnormal{(Adam)}
\endgroup
\EndFor
\end{algorithmic}
\end{algorithm}

To analyze the convergence of \plaineqref{eq:adam-update}, we assume the conditional mean gradient map $\boldsymbol{\theta} \mapsto \bar{\boldsymbol{g}}_{t+1}(\boldsymbol{\theta})$ is uniformly $L$-Lipschitz. Under mild regularity  conditions (e.g.\ sufficient integrability for the Dominated  Convergence Theorem), this implies $G_{t+1}(\boldsymbol{\theta})$ is $L$-smooth.

\vspace{0.5em}

\begin{assumption}[Uniform Lipschitz continuity]
\label{assumption:lipschitz}
There exists a constant $0 < L < \infty$ such that for all $t \geq 0$ and all $\boldsymbol{\theta}, \boldsymbol{\theta}' \in 
\Theta$,
$
    \|\bar{\boldsymbol{g}}_{t+1}(\boldsymbol{\theta}) - 
    \bar{\boldsymbol{g}}_{t+1}(\boldsymbol{\theta}')\| 
    \leq L\|\boldsymbol{\theta} - \boldsymbol{\theta}'\|.
$
\end{assumption}

To obtain high-probability guarantees, we will need to make a standard light-tail assumption on the gradient noise \cite{sahu2026provable, pmlr-v99-harvey19a, NEURIPS2021_62e7f2e0}: 

\vspace{0.5em}

\begin{assumption}[Conditional sub-Gaussian gradient noise along iterates]
\label{assump:cond-subgauss-noise}
There exists a constant $\sigma>0$ such that for all $t\ge 0$, $\big\|\boldsymbol{\xi}_{t+1}(\boldsymbol{\theta}_t)\big\|_{\Psi_2 \mid \mathcal F_t}\le \sigma \;\; \text{a.s.}$
\end{assumption}

\vspace{0.5em}

\begin{remark}
\label{rem:noise-assumptions}
Sub-exponential noise can be handled via Bernstein-type martingale inequalities, while bounded conditional \(q\)-th moments for \(q>2\) yield polynomial-tail guarantees. Predictable time-varying noise levels \(\sigma_t\) are also allowed by replacing uniform variance terms with the corresponding sums of \(\sigma_t^2\). We use the sub-Gaussian formulation for simplicity and consistency with the stochastic
tracking literature.
\end{remark}

\section{Theoretical Results}

\subsection{Tracking under adaptive strong monotonicity}
\label{subsec:tracking-under-adaptive-strong-monotonicity}
In this section, we obtain high-probability guarantees on the tracking error
\(\|\boldsymbol{\theta}_t-\boldsymbol{\theta}_t^\star\|\), deferring
expectation bounds to Appendix~\ref{app:C3}. Unlike \cite{sahu2026provable}, which
requires only that \(\bar{\boldsymbol g}_{t+1}\) itself be strongly monotone,
Adam's preconditioned update requires a compatibility condition between the
preconditioner and the curvature of the conditional risk. Since
\(\boldsymbol P_{t+1}\) depends on the current sample \(X_{t+1}\), it is
\(\mathcal F_{t+1}\)-measurable but not \(\mathcal F_t\)-measurable. Moreover,
\(\bar{\boldsymbol g}_{t+1}\) is the conditional mean-gradient map and is
\(\mathcal F_t\)-measurable, so imposing monotonicity on
\(\boldsymbol P_{t+1}\bar{\boldsymbol g}_{t+1}\) would make the structural
contraction condition depend on a single stochastic realization. We therefore
work with the predictable proxy \(\widetilde{\boldsymbol P}_{t+1}\), obtained
by replacing
\((\nabla_{\boldsymbol{\theta}}g(\boldsymbol{\theta}_t,X_{t+1}))^{\odot 2}\)
in the second-moment update by its conditional mean
\(\boldsymbol s_{t+1}(\boldsymbol\theta_t)\). This separates the predictable
preconditioned mean-gradient geometry from the random preconditioner
perturbation; the precise construction is given in Appendix~\ref{app:C1}.

\vspace{0.5em}

\begin{assumption}[Adaptive strong monotonicity]
\label{assumption:adaptive-strong-monotonicity}. There exists \(0<\mu<\infty\) such that,
almost surely, for all \(t\ge 0\) and all
\(\boldsymbol{\theta},\boldsymbol{\theta}'\in \Theta\),
$
    \left\langle \boldsymbol{\theta}-\boldsymbol{\theta}',\,
    \widetilde{\boldsymbol P}_{t+1}
    \left(
        \bar{\boldsymbol g}_{t+1}(\boldsymbol{\theta})
        -
        \bar{\boldsymbol g}_{t+1}(\boldsymbol{\theta}')
    \right)
    \right\rangle
    \ge
    \mu
    \|\boldsymbol{\theta}-\boldsymbol{\theta}'\|^2 .
$
\end{assumption}

\cref{assumption:adaptive-strong-monotonicity} requires the predictable preconditioner to be compatible with the curvature of the conditional risk. It holds in simple settings such as scalar preconditioning and diagonal quadratics, but also for substantially richer models. We refer the reader to Appendix~\ref{app:C-examples} for examples involving dense generalized linear models, graph-coupled objectives, and locally contracting nonconvex problems. Since \(\widetilde{\boldsymbol P}_{t+1}\) is positive definite, preconditioning preserves the zeros of the mean-gradient map, and strong monotonicity therefore guarantees uniqueness of the moving optimum. Importantly, the condition is imposed on the preconditioned operator and does not require \(G_{t+1}\) itself to be strongly convex.

We can now state the following theorem that establishes the high-probability tracking error for \plaineqref{eq:adam-update} in nonstationary stochastic environments. We defer the proofs for this section to Appendix \ref{app:C}.

\vspace{0.5em}

\begin{theorem}[High probability tracking error bound for \plaineqref{eq:adam-update}]
    \label{thm:high-prob-adam}
    Suppose \cref{assumption:lipschitz}, \cref{assump:cond-subgauss-noise}, and
    \ref{assumption:adaptive-strong-monotonicity} hold and
    \(\operatorname{diam}(\Theta)\le R\). Then, for all \(t\in[T]\),
    \(\delta\in(0,1)\), and
    \(\alpha\le\min\{\mu\epsilon^2/(4L^2),\,\mu^{-1}\}\), the following
    tracking error bound holds for \plaineqref{eq:adam-update} with probability
    at least \(1-\delta\):
    \begin{align*}
        \norm{\boldsymbol\theta_t-\boldsymbol\theta_t^\star}^2
        \lesssim\;&
        \rho_\alpha^t
        \norm{\boldsymbol\theta_0-\boldsymbol\theta_0^\star}^2
        +\frac{\mathfrak D_t}{\alpha\mu}
        + \frac{\alpha}{\mu\epsilon^4}
        \left[
            L^2R^2
            +\sigma^2
            \left(d+\log\frac{T}{\delta}\right)
        \right]^2
        \left[
            \beta_2 t\lambda_2^{t-1}
            +\frac{1-\beta_2}{\alpha\mu}
        \right] \\
        &+
        \frac{1}{\mu^2\epsilon^2}
        \left\{
            \beta_1^2L^2R^2
            +
            \frac{\sigma^2}{1+\beta_1}
            \left(d+\log\frac{T}{\delta}\right)
            \left[
                1-\beta_1
                +\alpha\mu\beta_1 t\lambda_1^{t-1}
            \right]
        \right\}
    \end{align*}
    where
    \(\rho_\alpha:=1-\alpha\mu/2\),
    \(\mathfrak D_t:=\sum_{\ell=0}^{t-1}
    \rho_\alpha^{t-\ell-1}\Delta_\ell^2\),
    \(\Delta_\ell
    :=\norm{\boldsymbol\theta_\ell^\star-\boldsymbol\theta_{\ell+1}^\star}\),
    and \(\lambda_i:=\rho_\alpha\vee\beta_i\) for \(i\in\{1,2\}\).
\end{theorem}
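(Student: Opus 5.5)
We will derive a one-step perturbed contraction for $e_t:=\boldsymbol\theta_t-\boldsymbol\theta_t^\star$, unroll it, and bound the residual sequences with high probability. Euclidean projection is nonexpansive and $\boldsymbol\theta_t^\star\in\Theta$. Write the update direction as
\[
\boldsymbol P_{t+1}\widehat{\boldsymbol m}_{t+1}=\widetilde{\boldsymbol P}_{t+1}\bar{\boldsymbol g}_{t+1}(\boldsymbol\theta_t)+\widetilde{\boldsymbol P}_{t+1}\bigl(\widehat{\boldsymbol m}_{t+1}-\bar{\boldsymbol g}_{t+1}(\boldsymbol\theta_t)\bigr)+(\boldsymbol P_{t+1}-\widetilde{\boldsymbol P}_{t+1})\widehat{\boldsymbol m}_{t+1}.
\]
Using $\bar{\boldsymbol g}_{t+1}(\boldsymbol\theta^\star_{t+1})=0$ (interior minimizer), this gives
\[
\|\boldsymbol\theta_{t+1}-\boldsymbol\theta_{t+1}^\star\|\le\bigl\|\boldsymbol\theta_t-\boldsymbol\theta^\star_{t+1}-\alpha\widetilde{\boldsymbol P}_{t+1}\bigl(\bar{\boldsymbol g}_{t+1}(\boldsymbol\theta_t)-\bar{\boldsymbol g}_{t+1}(\boldsymbol\theta^\star_{t+1})\bigr)\bigr\|+\alpha\|\boldsymbol b_{t+1}\|+\alpha\|\boldsymbol p_{t+1}\|.
\]
Here $\boldsymbol b_{t+1}$ is the preconditioned first-moment error and $\boldsymbol p_{t+1}$ is the preconditioner perturbation.

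\textbf{Contraction step.} Expand the first term using \cref{assumption:adaptive-strong-monotonicity} together with $\|\widetilde{\boldsymbol P}_{t+1}\|\le\epsilon^{-1}$ and \cref{assumption:lipschitz}. This gives the factor $1-2\alpha\mu+\alpha^2L^2/\epsilon^2\le 1-\alpha\mu/2$ (squared), using $\alpha\le\mu\epsilon^2/(4L^2)$. Then handle the drift via $\|\boldsymbol\theta_t-\boldsymbol\theta_{t+1}^\star\|\le\|e_t\|+\Delta_t$ and Young's inequality, with weight $\alpha\mu$ on the cross terms. Squaring with Young's inequality yields
\[
\|e_{t+1}\|^2\le\rho_\alpha\|e_t\|^2+\tfrac{C}{\alpha\mu}\Delta_t^2+\tfrac{C\alpha}{\mu}\bigl(\|\boldsymbol b_{t+1}\|^2+\|\boldsymbol p_{t+1}\|^2\bigr),
\]
with the constant adjusted to keep $\rho_\alpha$. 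Unrolling produces the initialization term and $\mathfrak D_t/(\alpha\mu)$, plus $\frac{\alpha}{\mu}\sum_\ell\rho_\alpha^{t-\ell-1}(\|\boldsymbol b_{\ell+1}\|^2+\|\boldsymbol p_{\ell+1}\|^2)$.

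\textbf{First-moment term.} Unroll $\boldsymbol m$ to get $\widehat{\boldsymbol m}_{t+1}-\bar{\boldsymbol g}_{t+1}(\boldsymbol\theta_t)$ as a $\beta_1$-weighted average of $\bar{\boldsymbol g}_{k}(\boldsymbol\theta_{k-1})-\bar{\boldsymbol g}_{t+1}(\boldsymbol\theta_t)$ plus a weighted sum of $\boldsymbol\xi_k$.
\begin{itemize}
\item The bias part is bounded by $\beta_1 LR$ up to constants, using Lipschitzness and the diameter. Note that $\|\bar{\boldsymbol g}\|\le LR$ on $\Theta$, since the gradient vanishes at the interior minimizer. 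This gives $\beta_1^2L^2R^2$.
\item The noise part is a weighted martingale sum with weights $(1-\beta_1)\beta_1^{t+1-k}$. Its squared weight norm is $\frac{(1-\beta_1)^2}{1-\beta_1^2}=\frac{1-\beta_1}{1+\beta_1}$, modulo bias correction. A vector sub-Gaussian concentration plus a union bound over $t\le T$ gives $\sigma^2(d+\log(T/\delta))\frac{1-\beta_1}{1+\beta_1}$.
\end{itemize}
When these enter the convolution $\sum_\ell\rho_\alpha^{t-\ell-1}$, the stationary part gives a factor $(\alpha\mu)^{-1}$, which cancels $\alpha$ and produces the $\frac{1}{\mu^2\epsilon^2}$ prefactor. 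The transient from the early, bias-correction-dominated window is handled by the cross-convolution $\sum_\ell\rho_\alpha^{t-\ell-1}\beta_1^{\ell}\le t\lambda_1^{t-1}$, giving the $\alpha\mu\beta_1t\lambda_1^{t-1}$ piece.

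\textbf{Preconditioner term (main obstacle).} For each coordinate, $|(\sqrt{\hat v}+\epsilon)^{-1}-(\sqrt{\tilde v}+\epsilon)^{-1}|\le|\hat v-\tilde v|/(\epsilon^2(\sqrt{\hat v}+\sqrt{\tilde v}))$. Combined with $\|\widehat{\boldsymbol m}\|\lesssim LR+\sigma\sqrt{d+\log(T/\delta)}$, it suffices to bound $\|\widehat{\boldsymbol v}-\widetilde{\boldsymbol v}\|$. I will handle the $\sqrt{\hat v}+\sqrt{\tilde v}$ denominator by absorbing it against one factor of $\|\widehat{\boldsymbol m}\|$, which is what yields $\epsilon^{-4}$ after squaring.

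The difference $\widehat{\boldsymbol v}-\widetilde{\boldsymbol v}$ is a $\beta_2$-weighted sum of $\boldsymbol\chi_k$ and of the discrepancies between realized and proxy squared gradients. On the high-probability event, each summand is bounded by $L^2R^2+\sigma^2(d+\log(T/\delta))$. The weighted sum therefore contributes the squared bracket, multiplied by $(1-\beta_2)$ for the stationary part and by $\beta_2^{\ell}$-type transients. Convolving with $\rho_\alpha$ yields $\frac{\alpha}{\mu\epsilon^4}[\cdot]^2[\beta_2t\lambda_2^{t-1}+\frac{1-\beta_2}{\alpha\mu}]$.

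The delicate part is making precise how the proxy $\widetilde{\boldsymbol P}$ is defined in Appendix~\ref{app:C1}, so that the difference telescopes cleanly. I will first try defining the proxy recursion to share $\boldsymbol v_t$ and differ only in the last increment. Then $\widehat{\boldsymbol v}-\widetilde{\boldsymbol v}=\frac{1-\beta_2}{1-\beta_2^{t+1}}\boldsymbol\chi_{t+1}$, and sub-exponential concentration of $\boldsymbol\chi$ gives the squared sub-Gaussian scale. A final union bound over all events, with $\delta/3$ each, concludes the proof.
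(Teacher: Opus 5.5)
Your overall structure matches the paper's proof: the predictable decomposition of $\boldsymbol P_{t+1}\widehat{\boldsymbol m}_{t+1}$, the one-step contraction with $\alpha\mu$-weighted Young steps, the bias/noise split of $\widehat{\boldsymbol m}_{t+1}-\bar{\boldsymbol g}_{t+1}(\boldsymbol\theta_t)$ with weighted martingale concentration, and the $\lambda_i=\rho_\alpha\vee\beta_i$ convolution bookkeeping. The preconditioner step, however, has two genuine problems.

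\textbf{First problem: the proxy.} The $\widetilde{\boldsymbol P}_{t+1}$ in the adaptive strong monotonicity hypothesis is fixed in Appendix~\ref{app:C1} by $\widetilde{\boldsymbol v}_{t+1}=\mathbb E[\boldsymbol v_{t+1}\mid\mathcal F_t]=\beta_2\boldsymbol v_t+(1-\beta_2)\boldsymbol s_{t+1}(\boldsymbol\theta_t)$, with no bias correction. Your identity $\widehat{\boldsymbol v}_{t+1}-\widetilde{\boldsymbol v}_{t+1}=\frac{1-\beta_2}{1-\beta_2^{t+1}}\boldsymbol\chi_{t+1}$ holds only if you also divide the proxy by $1-\beta_2^{t+1}$. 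Because of $\epsilon$, that division rescales $\widetilde{\boldsymbol P}_{t+1}$ by a coordinate-dependent factor. Adaptive strong monotonicity is not invariant under such rescalings (compare Example~\ref{ex:adaptive-sm-failure}), so you would be proving the bound under a different hypothesis. With the correct proxy,
\[
\widehat{\boldsymbol v}_{t+1}-\widetilde{\boldsymbol v}_{t+1}
=\frac{\beta_2^{t+1}}{1-\beta_2^{t+1}}\boldsymbol v_{t+1}+(1-\beta_2)\boldsymbol\chi_{t+1},
\]
and the bias-correction piece needs its own bound. Unrolling $\boldsymbol v_{t+1}=(1-\beta_2)\sum_{j\le t}\beta_2^{t-j}\boldsymbol g_{j+1}^{\odot 2}$ on the gradient-concentration event gives $\|\boldsymbol v_{t+1}\|_\infty\lesssim(1-\beta_2^{t+1})[L^2R^2+\sigma^2\log(dT/\delta)]$. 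Hence this piece is $\lesssim\beta_2^{t+1}[L^2R^2+\sigma^2\log(dT/\delta)]$, and together with the $\boldsymbol\chi$ piece you get exactly the factor $\vartheta_{2,t+1}=\beta_2^{t+1}+(1-\beta_2)$. With your own proxy the argument does close, via $\frac{1-\beta_2}{1-\beta_2^{t+1}}\le\vartheta_{2,t+1}$, but only under the modified assumption.

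\textbf{Second problem: the absorption step.} Absorbing $\sqrt{\widehat v_{t+1,i}}+\sqrt{\widetilde v_{t+1,i}}$ against one factor of $\widehat m_{t+1,i}$ requires $|\widehat m_{t+1,i}|\lesssim\sqrt{\widehat v_{t+1,i}}$, which fails in general. If coordinate $i$ sees gradients of size $G$ followed by $J$ steps of zero gradient, the ratio is of order $(\beta_1/\sqrt{\beta_2})^J$, which is unbounded when $\beta_1^2>\beta_2$. Even for $\beta_1^2<\beta_2$, the Cauchy--Schwarz constant $\sum_k w_{1,t+1,k}^2/w_{2,t+1,k}$ depends on $(\beta_1,\beta_2)$ and would change the stated $\beta$-dependence.

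The fix, which is what the paper does, is to absorb the denominator against one factor of $|\widehat v_{t+1,i}-\widetilde v_{t+1,i}|$ instead. The inequality $|(\sqrt a+\epsilon)^{-1}-(\sqrt b+\epsilon)^{-1}|\le\epsilon^{-2}|\sqrt a-\sqrt b|\le\epsilon^{-2}\sqrt{|a-b|}$ gives $\|\boldsymbol P_{t+1}-\widetilde{\boldsymbol P}_{t+1}\|_{\mathrm{op}}^2\le\epsilon^{-4}\|\widehat{\boldsymbol v}_{t+1}-\widetilde{\boldsymbol v}_{t+1}\|_\infty$. Then $\|\boldsymbol p_{t+1}\|^2\le\|\boldsymbol P_{t+1}-\widetilde{\boldsymbol P}_{t+1}\|_{\mathrm{op}}^2\|\widehat{\boldsymbol m}_{t+1}\|^2$, with $\|\widehat{\boldsymbol m}_{t+1}\|^2\lesssim L^2R^2+\sigma^2(d+\log(T/\delta))$ by convexity of the bias-corrected average. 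The product of the two brackets is exactly the squared bracket times $\vartheta_{2,t+1}$ in the statement, with no coupling between $\beta_1$ and $\beta_2$ required.
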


\paragraph{Comparison with SGD, SGDM, and implications for nonstationary optimization.}
To make the comparison with Adam self-contained, we recall the corresponding high-probability tracking guarantee for constant-stepsize SGD from \cite{sahu2026provable}:

\vspace{0.5em}

\begin{theorem}[High probability tracking error bound for SGD]
    \label{thm:high-prob-sgd}
    Assume $G_{t}$ is $\mu$-strongly convex and $L$-smooth. Under \cref{assump:cond-subgauss-noise}, for all $t \in [T]$, $\alpha \leq \min \left\{ \mu / L^2, 1/L \right\}$, and $\delta \in (0, 1)$, the following tracking error bound holds for SGD with probability at least $1-\delta$:
    \begin{equation*}
        \norm{\boldsymbol{\theta}_{t} - \boldsymbol{\theta}_{t}^{\star}}^2 
        \lesssim 
        \left(1 - \frac{\alpha \mu}{2} \right)^{t} 
        \norm{\boldsymbol{\theta}_{0} - \boldsymbol{\theta}_{0}^{\star}}^2
        + \frac{\mathfrak{D}_t}{\alpha \mu}
        + \frac{d\sigma^2\alpha}{\mu}
        + d\sigma^2\alpha^2\log\frac{T}{\delta}
        + \left(
            \frac{\sigma^2\alpha}{\mu}
            + \alpha^2\sigma^2\mathfrak D_t^{(2)}
          \right)\log\frac{T}{\delta},
    \end{equation*}
    where $\mathfrak{D}_t := \sum_{\ell=0}^{t-1} (1 - \alpha \mu / 2)^{t-\ell-1}\Delta_{\ell}^2$ and $\mathfrak{D}_t^{(2)} := \sum_{\ell=0}^{t-1} (1 - \alpha \mu / 2)^{2(t-\ell-1)}\Delta_{\ell}^2$.
\end{theorem}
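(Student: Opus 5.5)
The plan is to analyze projected SGD, $\boldsymbol\theta_{t+1}=\mathcal P_\Theta(\boldsymbol\theta_t-\alpha\nabla_{\boldsymbol\theta}g(\boldsymbol\theta_t,X_{t+1}))$, through a one-step recursion for $e_t:=\norm{\boldsymbol\theta_t-\boldsymbol\theta_t^\star}^2$. We unroll this recursion and then control its two stochastic sums by sub-exponential and sub-Gaussian martingale concentration, followed by a union bound over $t\in[T]$.

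\emph{One-step recursion.} Since $\boldsymbol\theta_{t+1}^\star$ lies in the interior of $\Theta$, it is a fixed point of $\mathcal P_\Theta$ and satisfies $\bar{\boldsymbol g}_{t+1}(\boldsymbol\theta_{t+1}^\star)=\mathbf 0$. Nonexpansiveness of the projection then gives
\[
e_{t+1}\le \norm{\boldsymbol u_t-\alpha(\bar{\boldsymbol g}_{t+1}(\boldsymbol\theta_t)-\bar{\boldsymbol g}_{t+1}(\boldsymbol\theta_{t+1}^\star))-\alpha\boldsymbol\xi_{t+1}}^2,
\qquad
\boldsymbol u_t:=\boldsymbol\theta_t-\boldsymbol\theta_{t+1}^\star .
\]
Note that $\boldsymbol u_t$ is $\mathcal F_t$-measurable by predictability of the minimizer. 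Expanding the square, three pieces appear.
\begin{itemize}[leftmargin=*]
\item The deterministic piece is at most $(1-2\alpha\mu+\alpha^2L^2)\norm{\boldsymbol u_t}^2\le(1-\alpha\mu)\norm{\boldsymbol u_t}^2$, by strong monotonicity, $L$-Lipschitzness, and $\alpha\le\mu/L^2$.
\item There is a cross term $-2\alpha\langle \boldsymbol w_t,\boldsymbol\xi_{t+1}\rangle$, where $\boldsymbol w_t$ is the predictable vector $\boldsymbol u_t-\alpha(\bar{\boldsymbol g}_{t+1}(\boldsymbol\theta_t)-\bar{\boldsymbol g}_{t+1}(\boldsymbol\theta_{t+1}^\star))$. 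It satisfies $\norm{\boldsymbol w_t}\le\norm{\boldsymbol u_t}$.
\item There is a quadratic noise term $\alpha^2\norm{\boldsymbol\xi_{t+1}}^2$.
\end{itemize}
For the drift, Young's inequality with parameter $c=\alpha\mu/2$ gives $\norm{\boldsymbol u_t}^2\le(1+\alpha\mu/2)e_t+(1+2/(\alpha\mu))\Delta_t^2$. Since $(1-\alpha\mu)(1+\alpha\mu/2)\le\rho_\alpha:=1-\alpha\mu/2$ and $\alpha\le 1/L\le1/\mu$, the drift coefficient is $\lesssim 1/(\alpha\mu)$. Unrolling yields
\[
e_t\le \rho_\alpha^t e_0+\frac{C\,\mathfrak D_t}{\alpha\mu}+\alpha^2\sum_{\ell<t}\rho_\alpha^{t-\ell-1}\norm{\boldsymbol\xi_{\ell+1}}^2+M_t,
\qquad
M_t:=-2\alpha\sum_{\ell<t}\rho_\alpha^{t-\ell-1}\langle \boldsymbol w_\ell,\boldsymbol\xi_{\ell+1}\rangle .
\]

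\emph{Noise-energy sum.} By \cref{assump:cond-subgauss-noise}, $\norm{\boldsymbol\xi_{\ell+1}}^2$ is conditionally sub-exponential with mean $\lesssim d\sigma^2$ and scale $\lesssim\sigma^2$. A Bernstein-type martingale bound on the weighted sum then gives, with probability $1-\delta/(2T)$, a bound of the form $\alpha^2\big(d\sigma^2+\sigma^2\log(T/\delta)\big)\sum_\ell\rho_\alpha^{t-\ell-1}$ plus a lower-order term $\alpha^2 d\sigma^2\log(T/\delta)$. Using $\sum_\ell\rho_\alpha^{t-\ell-1}\le 2/(\alpha\mu)$, this produces the terms $d\sigma^2\alpha/\mu$, $\sigma^2\alpha\mu^{-1}\log(T/\delta)$, and $d\sigma^2\alpha^2\log(T/\delta)$.

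\emph{Main obstacle: the martingale $M_t$.} Its increments are conditionally sub-Gaussian with proxy $\alpha^2\sigma^2\rho_\alpha^{2(t-\ell-1)}\norm{\boldsymbol w_\ell}^2$. This proxy depends on the iterates themselves, through $\norm{\boldsymbol w_\ell}^2\lesssim e_\ell+\Delta_\ell^2/(\alpha\mu)$. I would first try an exponential supermartingale with a fixed $\lambda$ and a Freedman-type bound in the style of Harvey et al. This gives
\[
M_t\le \lambda\,\alpha^2\sigma^2\sum_\ell\rho_\alpha^{2(t-\ell-1)}\norm{\boldsymbol w_\ell}^2+\lambda^{-1}\log(T/\delta).
\]
Splitting $\norm{\boldsymbol w_\ell}^2$, the $\Delta_\ell^2$ part yields the $\alpha^2\sigma^2\mathfrak D_t^{(2)}\log(T/\delta)$ term after choosing $\lambda\asymp 1/(\alpha\sigma^2\cdot\text{const})$ up to the $\log$. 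The $e_\ell$ part has to be absorbed, and here the circularity must be broken. I would either (i) run the argument on the weighted running maximum $\max_{s\le t}e_s$ and absorb a $\tfrac12$ fraction of it into the left-hand side, which is possible because $\lambda\alpha^2\sigma^2\sum_\ell\rho_\alpha^{2(t-\ell-1)}\lesssim \lambda\alpha\sigma^2/\mu$ can be made $\le 1/2$; or (ii) argue by induction on $t$ on the good event, plugging the already-established bound on $e_\ell$ into the variance proxy. Either route, together with a union bound over $t\in[T]$ and over the two concentration events, gives exactly the stated terms up to absolute constants.
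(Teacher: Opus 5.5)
Your argument is correct. The paper does not prove this statement; it is quoted from \cite{sahu2026provable}. The closest in-paper comparison is therefore the Adam tracking proof (\cref{lem:app-adam-recursive-relation,thm:app-high-prob-adam-restated}), and it takes a genuinely different route.

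There, the stochastic perturbation is separated from the contraction by Young's inequality with parameter $\alpha\mu/4$. It is then bounded pathwise, uniformly in $t$, by sub-Gaussian concentration and a net argument (\cref{lem:app-adam-r-hp}). That approach never meets an iterate-dependent variance proxy, and it tolerates perturbations that are not martingale differences: the stale-gradient bias and past noise inside $\boldsymbol r_{t+1}$, and the preconditioner error $\boldsymbol\eta_{t+1}$. This is why it is used for Adam. Applied verbatim to SGD (with $\boldsymbol r_{t+1}=\boldsymbol\xi_{t+1}$), however, it only gives an $\alpha$-independent floor of order $\sigma^2(d+\log(T/\delta))/\mu^2$. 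So it cannot deliver the $\alpha\sigma^2/\mu$ floor claimed here.

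Your route keeps $-2\alpha\langle\boldsymbol w_t,\boldsymbol\xi_{t+1}\rangle$ as a martingale increment with a predictable multiplier, which is exactly what produces the step-size-proportional floor. The price is the self-referential proxy $\norm{\boldsymbol w_\ell}^2$. A single deterministic $\lambda$, combined with either absorption or induction, handles this correctly.

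A few bookkeeping corrections, none of which affect validity:

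\textbf{The choice of $\lambda$ and the $\mathfrak D_t^{(2)}$ term.} Absorption forces $\lambda\asymp\mu/(\alpha\sigma^2)$, not $1/(\alpha\sigma^2)$. With this single $\lambda$, the drift part of the proxy contributes only $O(\lambda\alpha\sigma^2\mathfrak D_t^{(2)}/\mu)=O(\mathfrak D_t^{(2)})\le O(\mathfrak D_t/(\alpha\mu))$. It is therefore swallowed by the drift term and does not generate the $\alpha^2\sigma^2\mathfrak D_t^{(2)}\log(T/\delta)$ term, which your proof simply does not need. In the proxy you may also use $\norm{\boldsymbol w_\ell}^2\le 2e_\ell+2\Delta_\ell^2$.

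\textbf{The running maximum in option (i).} The maximum must be taken of $\rho_\alpha^{-s}e_s$; an unweighted running maximum degrades $\rho_\alpha^t e_0$ to $e_0$. The estimate $\sum_{\ell<t}\rho_\alpha^{2(t-\ell-1)}\rho_\alpha^{\ell}\le 4\rho_\alpha^{t}/(\alpha\mu)$, using $\rho_\alpha\ge 1/2$, is what closes either (i) or (ii) while preserving the decay.

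\textbf{The scale of $\norm{\boldsymbol\xi_{\ell+1}}^2$.} For a general conditionally sub-Gaussian vector, the centered $\norm{\boldsymbol\xi_{\ell+1}}^2$ can have $\Psi_1$-scale of order $d\sigma^2$, not $\sigma^2$. You can fix this in either of two ways:
\begin{itemize}
\item use the one-sided bound $\mathbb E[\exp(\lambda\norm{\boldsymbol\xi_{\ell+1}}^2)\mid\mathcal F_\ell]\le\exp(C\lambda d\sigma^2)$ for $0\le\lambda\le c/\sigma^2$; or
\item use Bernstein with scale $d\sigma^2$, which produces exactly the $d\sigma^2\alpha/\mu+d\sigma^2\alpha^2\log(T/\delta)$ terms.
\end{itemize}
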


Our Adam bound has the same basic decomposition as the SGD and SGDM guarantees of \cite{sahu2026provable}: an exponentially decaying initialization error, a discounted drift term, and stochastic terms determining the eventual tracking floor.

Adam separates the effects of first and second-moment memory. The parameter $\beta_1$ controls a smoothing--responsiveness tradeoff: its worst-case deterministic memory-bias contribution scales as $\beta_1^2$, while its stochastic contribution is reduced by the averaging factor $(1-\beta_1)/(1+\beta_1)$. Thus larger $\beta_1$ suppresses gradient noise without causing the bias to diverge, but increases the effective memory horizon, so stale gradients persist longer under nonstationarity. The remaining $\beta_1t\lambda_1^{t-1}$ term is a finite-time transient, with $\lambda_1=\rho_\alpha\vee\beta_1$, and vanishes geometrically. Consequently, larger $\beta_1$ is most favorable in relatively stable, noise-dominated regimes, whereas smaller $\beta_1$ permits faster adaptation when the objective changes rapidly.

The parameter $\beta_2$ induces an analogous transient--floor tradeoff through the adaptive preconditioner. Its persistent contribution scales with $1-\beta_2$, while the bias-correction transient scales as $\beta_2t\lambda_2^{t-1}$, where $\lambda_2=\rho_\alpha\vee\beta_2$. Hence increasing $\beta_2$ reduces the long-run perturbation induced by the stochastic second-moment estimate, but slows the decay of the corresponding transient. Large $\beta_2$ is therefore advantageous over long and comparatively stable horizons, while smaller $\beta_2$ can be preferable when rapid adaptation is more important.

Our bounds also reveal a tradeoff governed by $\epsilon$. Increasing $\epsilon$ permits a larger admissible stepsize through $\alpha\le \mu\epsilon^2/(4L^2)$ and reduces the first and second-moment adaptive penalties, which scale as $\mathcal{O}(\epsilon^{-2})$ and $\mathcal{O}(\epsilon^{-4})$, respectively. Thus larger $\epsilon$ can improve tracking when stochastic adaptive-preconditioning effects dominate, while its overall benefit still depends on how the preconditioned geometry, and hence $\mu$, changes with $\epsilon$. This provides a theoretical mechanism for the empirical observation of \cite{lyle2023understanding, lee2024slow, bai2025adaptive, gogianu2026revisiting} that increasing $\epsilon$ can stabilize Adam under nonstationarity.

After the initialization and bias-correction transients decay, the Adam tracking floor consists of four principal contributions: drift of order $\Delta^2/(\mu^2\alpha^2)$, first-moment memory bias of order $\beta_1^2L^2R^2/(\mu^2\epsilon^2)$, first-moment variance of order $\sigma^2(1-\beta_1)(d+\log(N/\delta))/[\mu^2\epsilon^2(1+\beta_1)]$, and second-moment preconditioner perturbation of order $(1-\beta_2)[L^2R^2+\sigma^2(d+\log(N/\delta))]^2/(\mu^2\epsilon^4)$. Thus, increasing $\beta_1$ suppresses stochastic gradient noise but increases the persistent bias from stale first-moment information, while increasing $\beta_2$ reduces the persistent error induced by fluctuations in the adaptive preconditioner. Both forms of memory can also slow adaptation: the corresponding transients decay at rates governed by $\lambda_{1,\alpha}=\rho_\alpha\vee\beta_1$ and $\lambda_{2,\alpha}=\rho_\alpha\vee\beta_2$, as reflected in the burn-in terms $\mathcal B_1$ and $\mathcal B_2$. Hence larger $\beta_1$ and $\beta_2$ improve temporal averaging and preconditioner stability, respectively, at the cost of slower response to nonstationarity.

\vspace{0.5em}

\begin{theorem}[Time to reach the tracking floor with high probability for \plaineqref{eq:adam-update}]
\label{thm:time-to-track-hp-adam}
Under the standing assumptions of
\cref{thm:high-prob-adam}, suppose
$0<\alpha \le \alpha_{\max} := \min\{\mu\epsilon^2/(4L^2),\,\mu^{-1}\}$,
$\Delta_t \le \Delta$ almost surely for all $t\ge 0$, and fix a
deterministic horizon bound $N\ge1$ and $\delta\in(0,1)$.
Define the tracking-floor
$$
\mathcal{E}_{\mathrm{A}}(N,\delta,\alpha)
\asymp 
\frac{\Delta^2}{\mu^2\alpha^2}
+
\frac{\beta_1^2L^2R^2}{\mu^2\epsilon^2}
+
\frac{C^2\sigma^2(1-\beta_1)}
{\mu^2\epsilon^2(1+\beta_1)}
\left(d+\log\frac{N}{\delta}\right)
+
\frac{C(1-\beta_2)}{\mu^2\epsilon^4}
\left[
    L^2R^2
    +
    \sigma^2\left(d+\log\frac{N}{\delta}\right)
\right]^2,
$$
where $C>0$ is a fixed sufficiently large universal constant.
For $\alpha\in(0,\alpha_{\max}]$, let
$\rho_\alpha:=1-\alpha\mu/2$ and
$\lambda_{i,\alpha}:=\rho_\alpha\vee\beta_i$ for $i\in\{1,2\}$.
For a target floor $E>0$, define
\begin{align*}
\mathcal{B}_1(\alpha,E)
&\asymp
\frac{1}{1-\lambda_{1,\alpha}}
\log_{+}\!\left(
    \frac{
        C^2\alpha\sigma^2
        \left(d+\log\frac{N}{\delta}\right)
    }{
        \mu\epsilon^2(1+\beta_1)
        (1-\lambda_{1,\alpha})E
    }
\right) \\
\mathcal{B}_2(\alpha,E)
&\asymp
\frac{1}{1-\lambda_{2,\alpha}}
\log_{+}\!\left(
    \frac{
        C\alpha
        \left[
            L^2R^2
            +
            \sigma^2\left(d+\log\frac{N}{\delta}\right)
        \right]^2
    }{
        \mu\epsilon^4
        (1-\lambda_{2,\alpha})E
    }
\right).
\end{align*}
Then we have the following:
\begin{enumerate}
    \item \textbf{(Constant learning rate).} If $\alpha_t\equiv\alpha$,
    with $\alpha$ chosen deterministically, then
    for any integer $1\le T\le N$, with probability $\ge 1-\delta$,
    for all $t\in[T]$,
    $$
    \begin{aligned}
        \|\boldsymbol{\theta}_t-\boldsymbol{\theta}_t^\star\|^2
        \;\lesssim\;&
        \rho_\alpha^t
        \|\boldsymbol{\theta}_0-\boldsymbol{\theta}_0^\star\|^2
        +
        \mathcal{E}_{\mathrm{A}}(N,\delta,\alpha)
        +
        \frac{\alpha\sigma^2}
        {\mu\epsilon^2(1+\beta_1)}
        \left(d+\log\frac{N}{\delta}\right)
        \sum_{\ell=0}^{t-1}
        \rho_\alpha^{t-\ell-1}\beta_1^{\ell+1}
        \\
        &+
        \frac{\alpha}{\mu\epsilon^4}
        \left[
            L^2R^2
            +\sigma^2\left(d+\log\frac{N}{\delta}\right)
        \right]^2
        \sum_{\ell=0}^{t-1}
        \rho_\alpha^{t-\ell-1}\beta_2^{\ell+1}.
    \end{aligned}
    $$
    Let $\alpha_{\mathrm{A}}^\star\in(0,\alpha_{\max}]$
    be a deterministic target stepsize and
    $
        \mathcal{E}_{\mathrm{A}}^\star
        :=
        \mathcal{E}_{\mathrm{A}}
        (N,\delta,\alpha_{\mathrm{A}}^\star).
    $
    Then, running Adam with $\alpha=\alpha_{\mathrm A}^\star$ gives
    $\|\boldsymbol{\theta}_t-\boldsymbol{\theta}_t^\star\|^2
    \lesssim
    \mathcal{E}_{\mathrm{A}}^\star$
    after at most
    $$
    \begin{aligned}
        t_{\mathrm A}^\star
        \;\lesssim\;&
        \frac{1}{\mu\alpha_{\mathrm{A}}^\star}
        \log_{+}\!\left(
            \frac{
                \|\boldsymbol{\theta}_0
                -\boldsymbol{\theta}_0^\star\|^2
            }{
                \mathcal{E}_{\mathrm{A}}^\star
            }
        \right)
        +
        \mathcal{B}_1
        \left(
            \alpha_{\mathrm{A}}^\star,
            \mathcal{E}_{\mathrm{A}}^\star
        \right)
        +
        \mathcal{B}_2
        \left(
            \alpha_{\mathrm{A}}^\star,
            \mathcal{E}_{\mathrm{A}}^\star
        \right)
    \end{aligned}
    $$
    iterations, provided $t_{\mathrm A}^\star\le T$.
    The bound holds simultaneously for
    $t_{\mathrm A}^\star\le t\le T$ with probability $\ge1-\delta$.

    \item \textbf{(Step-decay with Adam-state restart).}
    Suppose a deterministic upper bound
    $D\ge \|\boldsymbol{\theta}_0-\boldsymbol{\theta}_0^\star\|^2$
    is available and $\alpha_{\mathrm{A}}^\star<\alpha_{\max}$.
    Set $\alpha_0:=\alpha_{\max}$,
    $\alpha_k:=(\alpha_{k-1}+\alpha_{\mathrm{A}}^\star)/2$, and
    $K:=1+\lceil\log_2(\alpha_0/\alpha_{\mathrm{A}}^\star)\rceil$.
    For each $k\geq0$, let
    $
        E_k
        :=
        \mathcal{E}_{\mathrm{A}}(N,\delta,\alpha_k),
    $
    and set
    $$
    \begin{aligned}
        T_0
        &\asymp
        \left\lceil
        \max\left\{
            \frac{1}{\mu\alpha_0}
            \log_{+}\!\left(\frac{D}{E_0}\right),
            \,
            \mathcal{B}_1(\alpha_0,E_0),
            \,
            \mathcal{B}_2(\alpha_0,E_0)
        \right\}
        \right\rceil,
        \\
        T_k
        &\asymp
        \left\lceil
        \max\left\{
            \frac{1}{\mu\alpha_k},
            \,
            \mathcal{B}_1(\alpha_k,E_k),
            \,
            \mathcal{B}_2(\alpha_k,E_k)
        \right\}
        \right\rceil,
        \qquad k\geq1.
    \end{aligned}
    $$
    Suppose $T:=\sum_{k=0}^{K-1}T_k\le N$.
    Running Adam at constant stepsize $\alpha_k$ for $T_k$ steps per
    epoch with $(\boldsymbol m,\boldsymbol v)$ and the bias-correction restarted at the beginning of every epoch yields
    $
        \|\boldsymbol{\theta}_T-\boldsymbol{\theta}_T^\star\|^2
        \lesssim
        \mathcal{E}_{\mathrm{A}}^\star
    $
    with probability $\geq1-K\delta$
    Moreover,
    $$
    \begin{aligned}
        T
        \;\lesssim\;
        \frac{1}{\mu\alpha_0}
        \log_{+}\!\left(
            \frac{D}{\mathcal{E}_{\mathrm{A}}^\star}
        \right)
        +
        \frac{1}{\mu\alpha_{\mathrm{A}}^\star}
        +
        \sum_{k=0}^{K-1}
        \mathcal{B}_1(\alpha_k,E_k)
        +
        \sum_{k=0}^{K-1}
        \mathcal{B}_2(\alpha_k,E_k).
    \end{aligned}
    $$
\end{enumerate}
\end{theorem}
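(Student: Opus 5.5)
The plan is to take \cref{thm:high-prob-adam} as the main input, in a slightly sharper form that keeps the transient sums explicit. In its proof the bias-correction transients appear before they are crudely bounded as $\beta_i t\lambda_i^{t-1}$. Concretely, the first-moment stochastic error at step $\ell$ carries a transient proportional to $\beta_1^{\ell+1}$, and the preconditioner perturbation carries one proportional to $\beta_2^{\ell+1}$; both are propagated by the contraction $\rho_\alpha^{t-\ell-1}$. This is exactly what produces the convolutions $\sum_{\ell}\rho_\alpha^{t-\ell-1}\beta_i^{\ell+1}$, and since this sum is at most $t\lambda_i^{t}$ we recover the stated form.

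\textbf{Constant stepsize.} I would replace the discounted drift $\mathfrak D_t$ using $\Delta_\ell\le\Delta$, so $\mathfrak D_t\le \Delta^2\sum_j\rho_\alpha^j\le 2\Delta^2/(\alpha\mu)$. This gives the drift contribution $\Delta^2/(\alpha^2\mu^2)$. Next I would replace $\log(T/\delta)$ by $\log(N/\delta)$, which is valid since $T\le N$. The persistent terms then collect into $\mathcal E_{\mathrm A}$:
\begin{itemize}
\item the $(1-\beta_2)/(\alpha\mu)$ factor times $\alpha/(\mu\epsilon^4)$ gives $(1-\beta_2)/(\mu^2\epsilon^4)$;
\item the $\beta_1^2L^2R^2$ term and the $(1-\beta_1)\sigma^2/(1+\beta_1)$ term give the first-moment floor.
\end{itemize}

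\textbf{Burn-in time.} To reach the floor, I would force each transient below $\mathcal E_{\mathrm A}^\star$ separately. For initialization, $\rho_\alpha^t\le e^{-\alpha\mu t/2}$, which yields the $\frac{1}{\mu\alpha}\log_+(\cdot)$ term. For the $\beta_i$ convolutions, I would use the bound $\sum_\ell \rho_\alpha^{t-\ell-1}\beta_i^{\ell+1}\le \lambda_i^{t}/(1-\lambda_i)$. One way to get this is to bound by $t\lambda_i^t$ and absorb the factor $t$ via $t\lambda^{t/2}\lesssim 1/(1-\lambda)$, at the cost of halving the rate. Setting the prefactor times $\lambda_i^t/(1-\lambda_i)$ at most $E$ and solving $t\ge \log_+(\cdot)/(1-\lambda_i)$, using $\log(1/\lambda)\ge 1-\lambda$, gives exactly $\mathcal B_1$ and $\mathcal B_2$. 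Since all events come from the single high-probability event of \cref{thm:high-prob-adam}, the bound holds uniformly for $t\in[t^\star_{\mathrm A},T]$.

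\textbf{Step decay with restart.} Each epoch is a fresh Adam run with zeroed moments and bias correction, started from $\boldsymbol\theta$ at the end of the previous epoch. Since $\Delta_t\le\Delta$ and the conditional assumptions hold for every $t$, the time shift is harmless. I would apply the constant-stepsize result in epoch $k$ with $\delta$ and union bound over the $K$ epochs, giving probability $\ge 1-K\delta$.

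The epochs are handled by induction:
\begin{itemize}
\item \emph{Epoch $0$.} The initial error $D$ decays to $E_0$ within $T_0$ steps.
\item \emph{Epoch $k\ge1$.} The starting error is at most a constant times $E_{k-1}$. I would show $E_{k-1}\lesssim E_k$ by monotonicity of $\mathcal E_{\mathrm A}$ in $\alpha$. Only the drift term depends on $\alpha$, and $\alpha_{k-1}\le 2\alpha_k$, so $E_{k-1}\le 4E_k$. Hence $T_k\gtrsim 1/(\mu\alpha_k)$ steps suffice to reach $\lesssim E_k$, with the log factor an absolute constant.
\item \emph{Final epoch.} Here $\alpha_{K-1}\le 2\alpha_{\mathrm A}^\star$, giving $E_{K-1}\lesssim\mathcal E_{\mathrm A}^\star$.
\end{itemize}

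For the total length, the sum $\sum_k 1/(\mu\alpha_k)$ is a geometric-type sum dominated by $1/(\mu\alpha_{\mathrm A}^\star)$, since $\alpha_k-\alpha^\star_{\mathrm A}$ halves each epoch. I expect the main obstacle to be that, because the gaps halve, $\alpha_k$ approaches $\alpha^\star$ only geometrically in the gap. I would bound $\sum_k 1/\alpha_k \le K/\alpha^\star$ and absorb the resulting $K$ factor via $\lesssim$, or else argue more carefully that the $1/\alpha_k$ for early epochs are geometrically small. The $\mathcal B_i$ sums are simply kept as stated.
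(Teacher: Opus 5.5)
Your outline follows the paper's proof. You work from the per-step bounds on $\|\boldsymbol r_{\ell+1}\|$ and $\|\boldsymbol\eta_{\ell+1}\|^2$ (with horizon $N$), taken before the crude $\beta_i t\lambda_i^{t-1}$ step. You bound the drift by $10\Delta^2/(\alpha^2\mu^2)$ and split $\kappa_{1,\ell+1}$ and $\vartheta_{2,\ell+1}$ into a floor plus a $\beta_i^{\ell+1}$ transient. You then use $\sum_\ell\rho_\alpha^{t-\ell-1}\beta_i^{\ell+1}\le t\lambda_i^t\le\frac{2}{1-\lambda_i}e^{-(1-\lambda_i)t/2}$, and for part (ii) restart each epoch, union bound over $K$ epochs, and induct.

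Two slips are harmless. First, your intermediate claim $\sum_\ell\rho_\alpha^{t-\ell-1}\beta_i^{\ell+1}\le\lambda_i^t/(1-\lambda_i)$ is false: if $\rho_\alpha=\beta_i$ the sum equals $t\beta_i^t$. Your halved-rate fallback is exactly what is needed. Second, $E_{k-1}\le E_k$ follows directly from $\alpha_{k-1}\ge\alpha_k$. The inequality $\alpha_{k-1}\le 2\alpha_k$ you cite gives the opposite comparison $E_k\le 4E_{k-1}$.

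The genuine gap is the bound on $T$. You cannot absorb $K$ into $\lesssim$, because $K=1+\lceil\log_2(\alpha_0/\alpha_{\mathrm A}^\star)\rceil$ depends on problem parameters. The estimate $\sum_k 1/\alpha_k\le K/\alpha_{\mathrm A}^\star$ would leave an extra $\log(\alpha_0/\alpha_{\mathrm A}^\star)$ factor on the $1/(\mu\alpha_{\mathrm A}^\star)$ term. You must commit to the geometric argument, as the paper does. Set $r:=\alpha_0/\alpha_{\mathrm A}^\star$. If $r\le 2$, then $K\le 2$ and the bound is trivial. For $r>2$, use $\alpha_k=\alpha_{\mathrm A}^\star+2^{-k}(\alpha_0-\alpha_{\mathrm A}^\star)\ge 2^{-k}(\alpha_0-\alpha_{\mathrm A}^\star)$ together with $2^{K-1}\le 2r$. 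This gives $\sum_{k=1}^{K-1}1/\alpha_k\le(2^K-2)/(\alpha_0-\alpha_{\mathrm A}^\star)\le 8/\alpha_{\mathrm A}^\star$.

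You also skip the conversion of the $T_0$ term. It carries $\log_+(D/E_0)$, and since $E_0\le\mathcal E_{\mathrm A}^\star$ you cannot simply substitute $\mathcal E_{\mathrm A}^\star$ for $E_0$. Instead, use $\mathcal E_{\mathrm A}^\star\le r^2E_0$, which holds because only the $\alpha^{-2}$ drift term depends on $\alpha$. This gives $\log_+(D/E_0)\le\log_+(D/\mathcal E_{\mathrm A}^\star)+2\log r$. Then $(\log r)/(\mu\alpha_0)=(\log r/r)/(\mu\alpha_{\mathrm A}^\star)\le 1/(e\mu\alpha_{\mathrm A}^\star)$, which is absorbed into the $1/(\mu\alpha_{\mathrm A}^\star)$ term.

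The same caution about a non-universal $K$ applies to your induction. The step ``starting error at most a constant times $E_{k-1}$'' is legitimate only because the initialization term in part (i) has coefficient exactly $1$. Choosing $T_k$ so that $e^{-\mu\alpha_kT_k/2}\le 1/2$ then makes the hidden constants satisfy $A_{k+1}\le A_k/2+C$, so they stay bounded instead of compounding over $K$ epochs.
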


\section{Metric-projected stationarity under general Adam preconditioning}
\label{sec:metric-projected-stationarity}

The preceding analysis establishes Euclidean tracking guarantees under adaptive strong monotonicity. We now drop this condition and instead study stationarity through the \(\widetilde{\boldsymbol P}_{t+1}^{-1}\)-metric projected-gradient mapping \(\mathcal G_{\alpha,t}(\boldsymbol\theta_t)\), the natural constrained analogue of the preconditioned gradient (see Appendix~\ref{app:E1} for details). We derive high-probability bounds on its average squared norm under nonstationary objectives and defer the corresponding expectation bounds to Appendix~\ref{app:D4}. Throughout this section, we additionally assume that \(G_t(\boldsymbol\theta)\) is uniformly bounded below by \(G^\star\), as standard in nonconvex optimization \cite{ReddiKK18, defossez2022simple, li2023convergence}.

\vspace{0.5em}

\begin{assumption}[Uniform lower boundedness]
\label{assumption:lower-bounded}
There exists a constant $G^\star > -\infty$ such that for all 
$t \in [T]$ and all $\boldsymbol{\theta} \in \Theta$, $G_t(\boldsymbol{\theta}) \geq G^\star$.
\end{assumption}

We now state our main result for Adam with general adaptive preconditioners, which gives a high-probability bound on the average projected stationarity gap of the projected Adam update under non-stationary objectives. All proofs for this section are deferred to Appendix~\ref{app:D}.

\vspace{0.5em}

\begin{theorem}[High-probability projected-gradient rate]
\label{cor:nonconvex-adam-hp}
Suppose
\(\operatorname{diam}(\Theta)\le R\). Then, for all integers \(T\ge1\)
and all \(\delta\in(0,1)\), if \(\alpha\le\epsilon/(4L)\), the iterates
generated by the projected Adam update satisfy,  with
probability at least \(1-\delta\),
\begin{equation}
    \frac{1}{T}
    \sum_{t=0}^{T-1}
    \left\|
        \mathcal G_{\alpha,t}(\boldsymbol\theta_t)
    \right\|_{\widetilde{\boldsymbol P}_{t+1}^{-1}}^2
    \lesssim
    \frac{
        G_1(\boldsymbol\theta_0)-G^\star+\mathfrak D_T
    }{\alpha T}
    +
    \frac{\mathsf{Decay}_T(\delta)}{T}
    +
    \mathsf{Floor}_T(\delta),
\end{equation}
where
\begin{align*}
    \mathsf{Decay}_T(\delta)
    &\sim \mathcal{O} \left\{
    \frac{
        \beta_2
    }{
        \epsilon^3(1-\beta_2)
    }
    \left[
        L^2R^2
        +
        \sigma^2
        \left(
            d+\log\frac{T}{\delta}
        \right)
    \right]^2
    +
    \frac{
        \sigma^2
        \left(
            d+\log\frac{T}{\delta}
        \right)
        (1+\log T)
    }{
        \epsilon(1+\beta_1)
    } \right\}, \\
    \mathsf{Floor}_T(\delta)
    &\sim \mathcal{O} \left \{ 
    \frac{
        L^2R^2\beta_1^2
    }{
        \epsilon
    }
    +
    \frac{
        \sigma^2(1-\beta_1)
    }{
        \epsilon(1+\beta_1)
    }
    \left(
        d+\log\frac{T}{\delta}
    \right)
    +
    \frac{
        1-\beta_2
    }{
        \epsilon^3
    }
    \left[
        L^2R^2
        +
        \sigma^2
        \left(
            d+\log\frac{T}{\delta}
        \right)
    \right]^2 \right \}.
\end{align*}
\end{theorem}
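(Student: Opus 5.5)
The plan is a descent-lemma argument in the $\widetilde{\boldsymbol P}_{t+1}^{-1}$ metric, with all Adam-specific errors treated as perturbations of a predictable preconditioned projected-gradient step. Write the realized update direction as
\[
\boldsymbol P_{t+1}\widehat{\boldsymbol m}_{t+1}
=
\widetilde{\boldsymbol P}_{t+1}\bar{\boldsymbol g}_{t+1}(\boldsymbol\theta_t)
+\widetilde{\boldsymbol P}_{t+1}\boldsymbol e^{(1)}_{t+1}
+(\boldsymbol P_{t+1}-\widetilde{\boldsymbol P}_{t+1})\widehat{\boldsymbol m}_{t+1}.
\]
Here $\boldsymbol e^{(1)}_{t+1}:=\widehat{\boldsymbol m}_{t+1}-\bar{\boldsymbol g}_{t+1}(\boldsymbol\theta_t)$ is the first-moment error. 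The last term is the second-moment preconditioner perturbation.

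\textbf{Step 1 (one-step descent).} Since $\widetilde{\boldsymbol P}_{t+1}\preceq\epsilon^{-1}\boldsymbol I$, the map $G_{t+1}$ is $L/\epsilon$-smooth relative to the $\widetilde{\boldsymbol P}_{t+1}^{-1}$ metric. Use the three-point/firm nonexpansiveness property of the metric projection $\mathcal P_\Theta^{\widetilde{\boldsymbol P}_{t+1}^{-1}}$, together with $\alpha\le\epsilon/(4L)$, to obtain
\[
G_{t+1}(\boldsymbol\theta_{t+1})\le G_{t+1}(\boldsymbol\theta_t)-\tfrac{\alpha}{4}\|\mathcal G_{\alpha,t}(\boldsymbol\theta_t)\|^2_{\widetilde{\boldsymbol P}_{t+1}^{-1}}+C\alpha\,\|\text{perturbation}\|^2_{\widetilde{\boldsymbol P}_{t+1}^{-1}}.
\]
The projected step is compared with the exact projected-gradient step using the Lipschitz property of the projection in the same metric. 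To telescope, replace $G_{t+1}(\boldsymbol\theta_{t+1})$ with $G_{t+2}(\boldsymbol\theta_{t+1})$; the difference is controlled by the objective variation and accumulates into $\mathfrak D_T$. Summing over $t$ and applying Assumption~\ref{assumption:lower-bounded} yields the term $(G_1(\boldsymbol\theta_0)-G^\star+\mathfrak D_T)/(\alpha T)$.

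\textbf{Step 2 (first-moment error).} Unroll $\boldsymbol m_{t+1}$ as a $\beta_1$-weighted sum. The error $\boldsymbol e^{(1)}$ then splits into two parts:
\begin{itemize}
\item a bias part, $\sum_\ell w_\ell(\bar{\boldsymbol g}_{\ell+1}(\boldsymbol\theta_\ell)-\bar{\boldsymbol g}_{t+1}(\boldsymbol\theta_t))$, which is bounded crudely by $LR$ using Assumption~\ref{assumption:lipschitz} and $\operatorname{diam}\Theta\le R$, and contributes $\beta_1^2L^2R^2/\epsilon$;
\item a noise part, $\frac{1-\beta_1}{1-\beta_1^{t+1}}\sum_\ell\beta_1^{t-\ell}\boldsymbol\xi_{\ell+1}$.
\end{itemize}
For the noise part, apply a conditional sub-Gaussian vector concentration bound (Assumption~\ref{assump:cond-subgauss-noise}) with a union bound over $t\le T$. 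This gives a squared norm of order $\sigma^2(d+\log(T/\delta))\frac{(1-\beta_1)}{(1+\beta_1)}\frac{1+\beta_1^{t+1}}{1-\beta_1^{t+1}}$. The stationary piece gives the floor term. The bias-correction excess, of order $1/(1+t(1-\beta_1))$, sums to $\log T/(1+\beta_1)$ and produces the $\mathsf{Decay}$ term with its $(1+\log T)$ factor.

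\textbf{Step 3 (preconditioner perturbation; the main obstacle).} Use the coordinatewise identity
\[
|(\sqrt{a}+\epsilon)^{-1}-(\sqrt{b}+\epsilon)^{-1}|\le |a-b|/(2\epsilon^3).
\]
This is where cross terms must be arranged to land at $\epsilon^{-3}$ after multiplication by $\widetilde{\boldsymbol P}^{-1}$ weights. Bound $|\widehat{\boldsymbol v}_{t+1}-\widetilde{\boldsymbol v}_{t+1}|$ by the bias-corrected $\beta_2$-average of the $\boldsymbol\chi$'s and of past mismatches. Each coordinate of $\boldsymbol g^{\odot2}$ is at most $\lesssim L^2R^2+\sigma^2(d+\log(T/\delta))$ with high probability. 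The error is therefore $\lesssim\frac{1-\beta_2}{1-\beta_2^{t+1}}[\cdots]$ on the high-probability event, and $\|\widehat{\boldsymbol m}\|^2$ is bounded by the same bracket. Squaring gives the bracket squared. The sum over $t$ of the $(1-\beta_2^{t+1})^{-1}$ excess is at most $\beta_2/(1-\beta_2)$, giving the $\mathsf{Decay}$ term, while the persistent part gives the $(1-\beta_2)/\epsilon^3$ floor.

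The delicate points are the following. The true $\mathcal F_t$-predictable proxy must be matched to the recursion for $\widetilde{\boldsymbol P}$ as constructed in Appendix~\ref{app:C1}. The error terms correlated with the realized sample must be handled by Young's inequality rather than martingale arguments, since $\boldsymbol P_{t+1}$ is not predictable. All events must be intersected with total failure probability $\delta$. Finally, dividing by $\alpha T$ yields the stated bound.
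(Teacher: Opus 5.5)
Your Steps 1 and 2 follow the paper's argument. Those steps are: descent in the $\widetilde{\boldsymbol P}_{t+1}^{-1}$ metric with perturbation $\|\boldsymbol r_{t+1}+\widetilde{\boldsymbol P}_{t+1}^{-1}\boldsymbol\eta_{t+1}\|_{\widetilde{\boldsymbol P}_{t+1}}^2$, comparison with the noiseless projected point, telescoping into $\mathfrak D_T$, and the bias/noise split of the first-moment error. There is one small slip. A per-step excess of order $1/(1+t(1-\beta_1))$ sums to $\log(1+T(1-\beta_1))/(1-\beta_1)$, not $\log T$. What you actually have is $\kappa_{1,t}-\frac{1-\beta_1}{1+\beta_1}\le\frac{2}{(1+\beta_1)t}$, which does sum to order $\log T$.

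The genuine gap is Step 3. Your inequality $|(\sqrt a+\epsilon)^{-1}-(\sqrt b+\epsilon)^{-1}|\le|a-b|/(2\epsilon^3)$ is false, because $x\mapsto(\sqrt x+\epsilon)^{-1}$ is not Lipschitz at $0$. For $a=0$ the left side is $\sqrt b/(\epsilon(\sqrt b+\epsilon))\approx\sqrt b/\epsilon^2\gg b/(2\epsilon^3)$ as $b\to0$. Nothing keeps the coordinates of $\widehat{\boldsymbol v}_{t+1},\widetilde{\boldsymbol v}_{t+1}$ away from zero.

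Even granting a Lipschitz bound, it would not give the stated form. Squaring it makes the perturbation quadratic in $\|\widehat{\boldsymbol v}_{t+1}-\widetilde{\boldsymbol v}_{t+1}\|_\infty$. That yields $\epsilon^{-6}$, a cubed bracket and a squared $\beta_2$-factor, so ``squaring gives the bracket squared'' does not follow.

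The missing idea is to bound the weighted quantity directly with the square-root modulus:
\[
\frac{(P_{t+1,i}-\widetilde P_{t+1,i})^2}{\widetilde P_{t+1,i}}
=\frac{(\sqrt a-\sqrt b)^2}{(\sqrt a+\epsilon)^2(\sqrt b+\epsilon)}
\le\epsilon^{-3}(\sqrt a-\sqrt b)^2\le\epsilon^{-3}|a-b|.
\]
This gives $\|\widetilde{\boldsymbol P}_{t+1}^{-1}\boldsymbol\eta_{t+1}\|_{\widetilde{\boldsymbol P}_{t+1}}^2\le\epsilon^{-3}\|\widehat{\boldsymbol m}_{t+1}\|^2\|\widehat{\boldsymbol v}_{t+1}-\widetilde{\boldsymbol v}_{t+1}\|_\infty$, which is \emph{linear} in the mismatch. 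That linearity is exactly what produces $\epsilon^{-3}$ times the bracket squared times a linear $\beta_2$-factor.

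Your per-step mismatch factor is also wrong for the proxy the theorem is stated with. In Appendix~\ref{app:C1}, $\widetilde{\boldsymbol v}_{t+1}=\mathbb E[\boldsymbol v_{t+1}\mid\mathcal F_t]=\beta_2\boldsymbol v_t+(1-\beta_2)\boldsymbol s_{t+1}(\boldsymbol\theta_t)$ is \emph{not} bias-corrected and keeps the realized $\boldsymbol v_t$. Hence $\widehat{\boldsymbol v}_{t+1}-\widetilde{\boldsymbol v}_{t+1}=\frac{\beta_2^{t+1}}{1-\beta_2^{t+1}}\boldsymbol v_{t+1}+(1-\beta_2)\boldsymbol\chi_{t+1}$, with the following consequences:
\begin{itemize}
\item Only the fresh $\boldsymbol\chi_{t+1}$ enters. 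A crude bound on a weighted average of past $\boldsymbol\chi$'s would give an $O(1)$ factor, not $1-\beta_2$.
\item For a noiseless constant gradient $\boldsymbol g$, the bias-correction term equals $\beta_2^{t+1}\boldsymbol g^{\odot2}$.
\item At $t\approx(1-\beta_2)^{-1}$ and $\beta_2$ near $1$, this is not $\lesssim\frac{1-\beta_2}{1-\beta_2^{t+1}}$ times the bracket.
\end{itemize}
Your factor would be correct for the bias-corrected proxy $\mathbb E[\widehat{\boldsymbol v}_{t+1}\mid\mathcal F_t]$, but that changes the metric appearing in the statement. The correct per-step factor is $\vartheta_{2,t+1}=\beta_2^{t+1}+(1-\beta_2)$. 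Its average, $(1-\beta_2)+\beta_2/((1-\beta_2)T)$, yields exactly the stated $\mathsf{Floor}_T$ and $\mathsf{Decay}_T$ contributions. So this part is repairable once you fix the factor and use the $\epsilon^{-3}$ weighted bound above.
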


The explicit rate in \cref{cor:nonconvex-adam-hp} separates objective
variation, finite-horizon transients, and persistent Adam errors. The term
\((G_1(\boldsymbol\theta_0)-G^\star+\mathfrak D_T)/(\alpha T)\) vanishes
whenever \(\mathfrak D_T=o(T)\), and at the usual \(1/T\) rate when
\(\mathfrak D_T=O(1)\). Under persistent nonstationarity with
\(\mathfrak D_T=\Theta(T)\), this contribution need not vanish.
The term \(\mathsf{Decay}_T(\delta)/T\) collects the remaining
bias-correction and finite-horizon effects and vanishes up to logarithmic
factors.

The persistent term \(\mathsf{Floor}_T(\delta)\) makes the roles of
\(\beta_1\) and \(\beta_2\) explicit. The first-moment parameter
\(\beta_1\) trades off a memory bias of order
\(\beta_1^2\) against a variance contribution proportional to
\((1-\beta_1)/(1+\beta_1)\). Thus larger \(\beta_1\) improves noise
averaging but increases the cost of stale gradients. Similarly,
\(\beta_2\) induces a transient--floor tradeoff: the finite-horizon
transient scales as \(\beta_2/(1-\beta_2)\), while the persistent
preconditioner perturbation scales as \(1-\beta_2\). Hence larger
\(\beta_2\) improves the long-run preconditioner floor at the cost of
slower finite-horizon adaptation. Together, these terms quantify the
bias--variance and stability--adaptivity tradeoffs governing Adam under
nonstationarity.


\section{Numerical Experiments}
\label{sec:numerical-experiments}
We compare SGD and Adam on four online problems: strongly convex least squares and three non-convex models, namely teacher--student MLP regression, phase retrieval, and matrix factorization. In all experiments, the target evolves via a normalized random walk,
\(
\boldsymbol{\theta}_{t+1}^\star
=
\boldsymbol{\theta}_t^\star
+
\Delta_t \frac{\boldsymbol{u}_t}{\|\boldsymbol{u}_t\|},
\,
\boldsymbol{u}_t \sim \mathcal{N}(\boldsymbol 0,\boldsymbol I_d),
\)
where \(\Delta_t\) controls drift and \(\sigma_t\) controls stochastic observation noise. For each problem, we show high-drift/low-noise (left) and low-drift/high-noise (right) regimes. SGD performs better in drift-dominated settings due to faster adaptation, while Adam is more effective in noise-dominated regimes via momentum and adaptive scaling. Additional experimental details are provided in Appendix~\ref{app:F}.

\begin{figure}[H]
    \centering
    \includegraphics[width=0.75\textwidth]{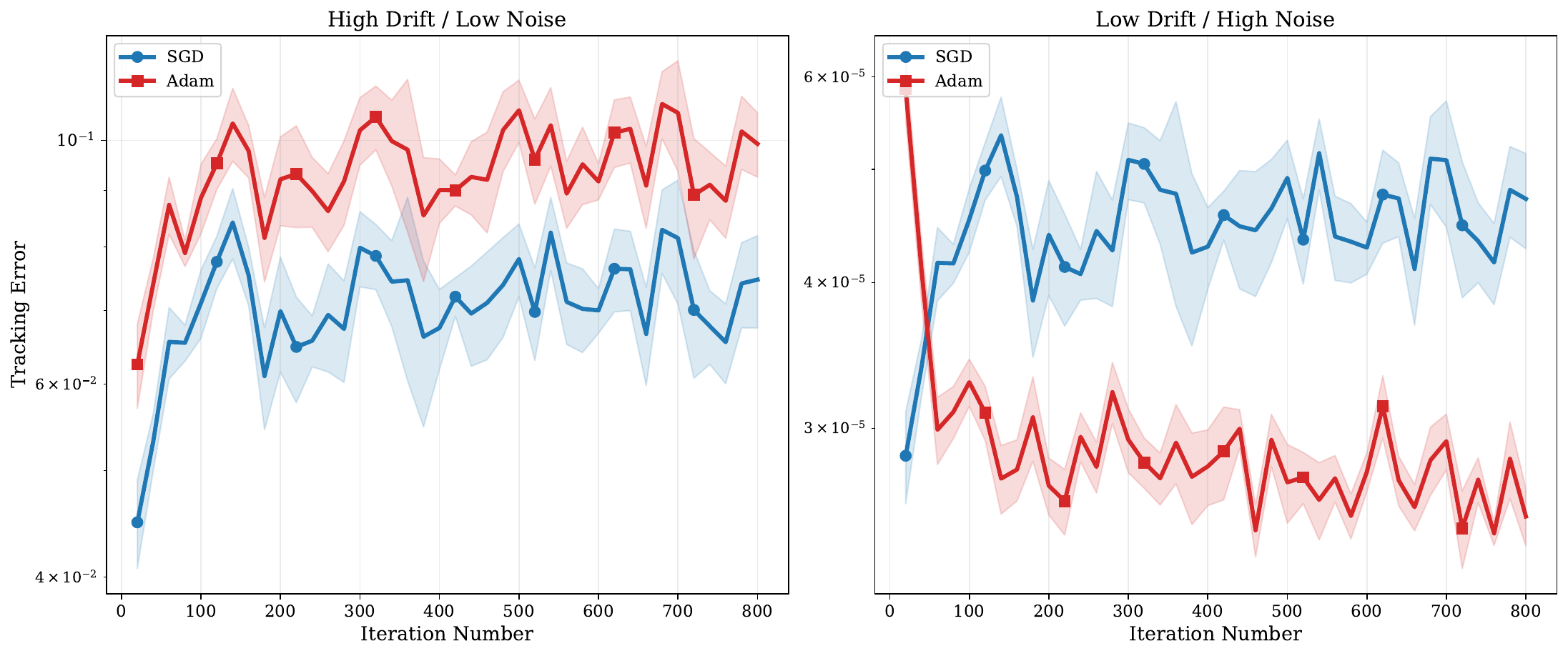}
    \caption{Strongly convex least squares with objective
\(F_t(\boldsymbol{\theta})=\frac{1}{2}\|\boldsymbol A(\boldsymbol{\theta}-\boldsymbol{\theta}_t^\star)\|^2\).
We report tracking error \(\|\boldsymbol{\theta}_t-\boldsymbol{\theta}_t^\star\|^2\).
\textbf{Left}: high-drift, low-noise regime with
\(\Delta_t=c_2\log(t+2)\) and \(\sigma_t=c_1\log(t+2)\).
\textbf{Right}: low-drift, high-noise regime with
\(\Delta_t=c_1\log(t+2)\) and \(\sigma_t=c_2\log(t+2)\),
for positive constants \(c_1<c_2\).}
\label{fig:diagram_ols}
    \label{fig:diagram_ols}
\end{figure}

\begin{figure}[H]
    \centering
    \includegraphics[width=0.75\textwidth]{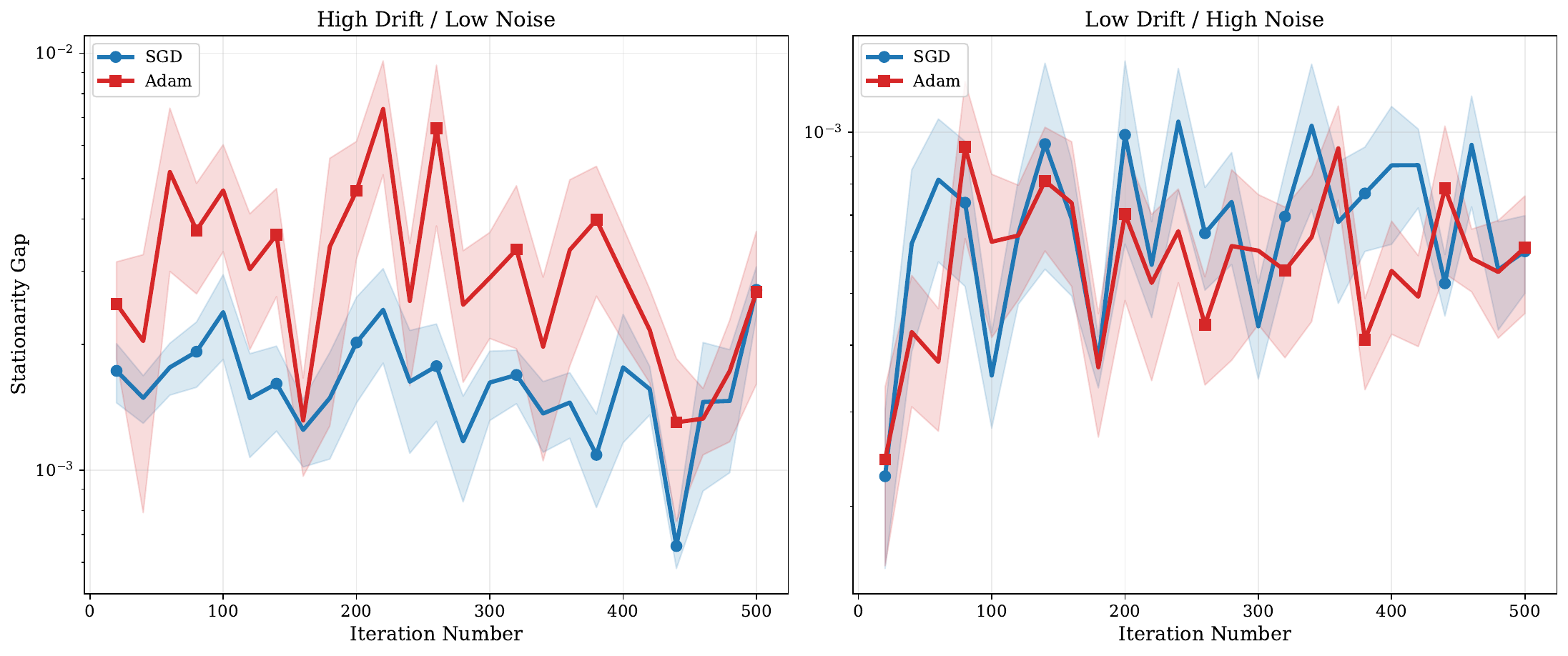}
    \caption{Teacher--student MLP regression. The objective is
\(
F_t(\boldsymbol{\theta}) = \frac{1}{2}
\mathbb{E}_{\boldsymbol{x}} \left[
\left( f_{\boldsymbol{\theta}}(\boldsymbol{x}) - f_{\boldsymbol{\theta}_t^\star}(\boldsymbol{x}) \right)^2
\right],
\)
and we report the stationarity gap.
\textbf{Left}: high-drift, low-noise regime with
\(\Delta_t=c_2\log(t+2)\) and \(\sigma_t=c_1\log(t+2)\).
\textbf{Right}: low-drift, high-noise regime with
\(\Delta_t=c_1\log(t+2)\) and \(\sigma_t=c_2\log(t+2)\),
for positive constants \(c_1<c_2\).}
    \label{fig:diagram_mlp}
\end{figure}

\begin{figure}[!hbtp]
    \centering
    \includegraphics[width=0.75\textwidth]{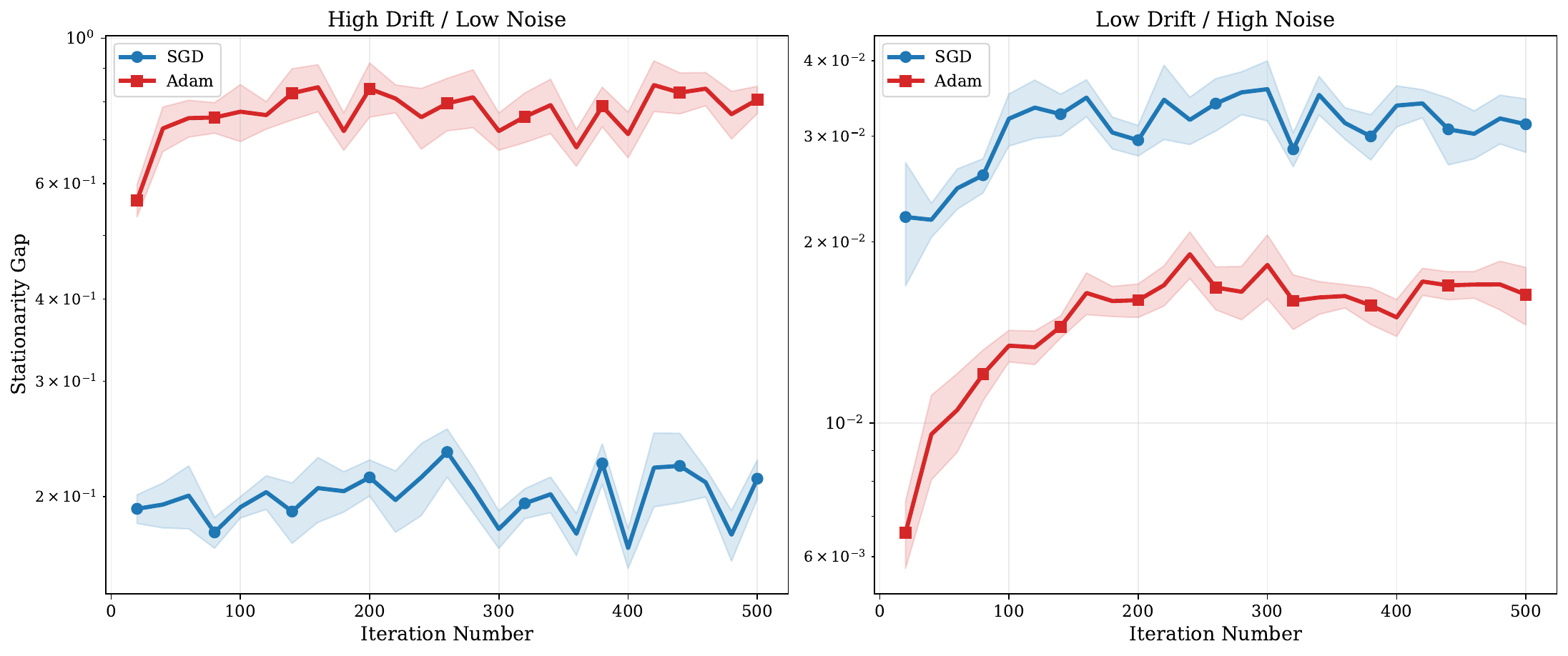}
    \caption{Phase retrieval. The objective is
\(
F_t(\boldsymbol{\theta})
=
\frac{1}{2}
\mathbb{E}_{\boldsymbol{x}}
\left[
\left(
(\boldsymbol{x}^{\top}\boldsymbol{\theta})^2
-
(\boldsymbol{x}^{\top}\boldsymbol{\theta}_t^\star)^2
\right)^2
\right],
\)
and we report the stationarity gap.
\textbf{Left}: high-drift, low-noise regime with
\(\Delta_t=c_{2}\log(t+2)\) and \(\sigma_t=c_{1}\log(t+2)\).
\textbf{Right}: low-drift, high-noise regime with
\(\Delta_t=c_1\log(t+2)\) and \(\sigma_t=c_2\log(t+2)\),
for positive constants \(c_1<c_2\).}
    \label{fig:diagram_phase}
\end{figure}

\begin{figure}[H]
    \centering
    \includegraphics[width=0.75\textwidth]{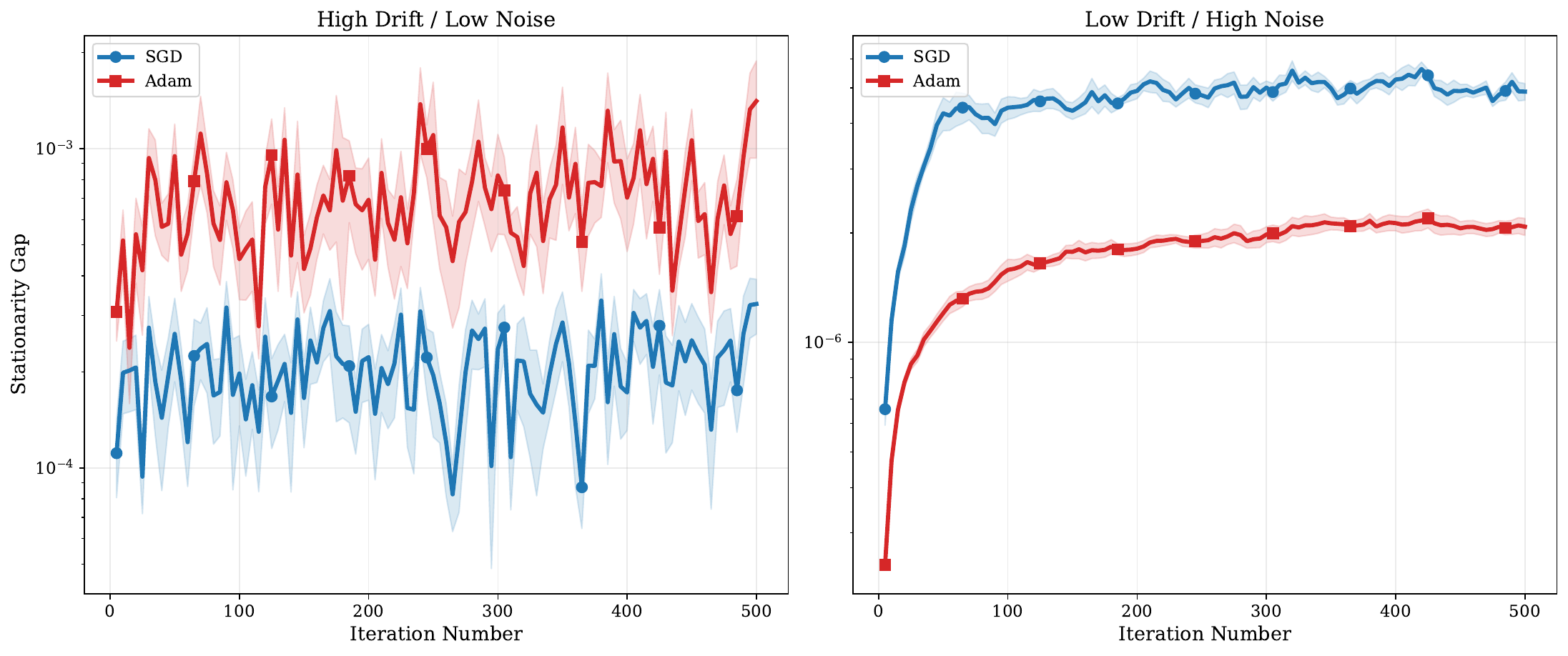}
\caption{Matrix factorization with objective
\(
F_t(\boldsymbol U,\boldsymbol V)
=
\frac{1}{2mn}
\left\|
\boldsymbol U\boldsymbol V^{\top}
-
\boldsymbol M_t^\star
\right\|_F^2,
\)
and we report the stationarity gap.
\textbf{Left}: high-drift, low-noise regime with
\(\Delta_t=c_2\) and \(\sigma_t=c_1\).
\textbf{Right}: low-drift, high-noise regime with
\(\Delta_t=c_3\) and \(\sigma_t=c_4\),
where \(0<c_1<c_2\) and \(0<c_3<c_4\).}
    \label{fig:diagram_matrix}
\end{figure}

\section{Conclusion}
\label{sec:conclusion}
We provide a finite-time theoretical analysis of Adam under non-stationary stochastic objectives, characterizing how its moment estimates interact with distribution shift. Our bounds decompose the tracking error into four interpretable components (initialization, objective drift, first-moment memory bias, and preconditioner perturbation) and reveal a fundamental noise--drift tradeoff: Adam's adaptive mechanisms can yield more favorable tracking guarantees when gradient noise dominates, whereas under substantial drift, stale first-moment information and preconditioner perturbations can enlarge Adam's tracking bound, potentially allowing vanilla SGD to attain a smaller tracking error. The explicit $(\beta_1, \beta_2, \epsilon)$-dependence of our bounds provides a theoretical explanation for the empirical instability of Adam under distribution shift, including the stabilizing effect of increasing $\epsilon$ under task changes, and our experimental results corroborate the predicted tradeoffs across synthetic nonstationary settings. 

For future work, it would be interesting to establish minimax lower bounds for Adam under nonstationarity, both to characterize the best achievable tracking rates and to determine whether the dependence of our upper bounds on drift, noise, and \((\beta_1,\beta_2,\epsilon)\) is minimax optimal. The lower bounds of \cite{sahu2026provable} for SGD and SGDM rely on strongly convex constructions, and extending such results to the general \(L\)-smooth setting considered here would likely require different techniques. Another promising direction is to develop adaptive optimizers that can switch between update rules or parameter regimes so as to minimize cumulative tracking error.

\clearpage %
\bibliographystyle{alpha}
\bibliography{refs}

\clearpage 
\clearpage
\appendix
\crefalias{section}{appendix}
\crefalias{subsection}{appendix}

\part*{Appendix} 
\addcontentsline{toc}{part}{Appendix} 

\begingroup
\etocsettocstyle{%
  \section*{Table of Contents}%
  \vspace{-0.25em}%
  \noindent\rule{\linewidth}{0.4pt}\par
  \vspace{0.75em}%
}{}
\localtableofcontents
\endgroup
\clearpage

\section{Related work}
\label{app:A}

\paragraph{Adam in stationary settings.}
\label{app:A1}
Since its introduction \cite{DBLP:journals/corr/KingmaB14}, Adam has been the subject of extensive theoretical study in stationary settings. Early work showed that Adam can fail to converge even in simple convex problems due to the exponential moving average of squared gradients, motivating the AMSGrad variant \cite{ReddiKK18}. Subsequent analyses established convergence guarantees under progressively weaker assumptions: \cite{defossez2022simple} provide a clean proof for smooth nonconvex objectives with bounded gradients, obtaining the tightest known dependence on $\beta_1$; \cite{li2023convergence} prove convergence to $\epsilon$-stationary points under a generalized smoothness condition without requiring globally bounded gradients; and \cite{hong2024convergence} extend this to affine variance noise models, showing that Adam is free to tune step sizes without knowledge of problem parameters. Most recently, \cite{taniguchi2024adopt} propose ADOPT, a modification of Adam that achieves the optimal $\mathcal{O}(1/\sqrt{T})$ nonconvex rate with any choice of $\beta_2$ without relying on bounded noise, by removing the current gradient from the second-moment estimate. The adaptivity of Adam relative to SGD has also been studied, with work identifying heavy-tailed gradient distributions and ill-conditioned objectives as settings where Adam's coordinate-wise preconditioning yields provable advantages 
\cite{kunstner2024heavytailed, zhang2024transformers, kunstner2023noise}. Complementary recent work by \cite{jin2026adam} gives a high-probability separation between Adam and SGD for stationary stochastic optimization under bounded-variance noise, showing that Adam's second-moment normalization can yield sharper confidence dependence than SGD through a stopping-time martingale analysis. All of these results assume a fixed stationary objective and none address the behavior of the preconditioner under distribution shift.

\paragraph{SGD and momentum in non-stationary settings.}
\label{app:A2}
A complementary line of work studies optimization under time-varying objectives and distribution shift. The foundational dynamic regret framework of \cite{zinkevich2003online} measures performance against a drifting comparator in online convex optimization, and has inspired a large body of follow-up work on efficient non-stationary online learning \cite{JMLR:v25:21-0748}. In stationary settings with constant step sizes, momentum methods are essentially equivalent at steady state to SGD with a rescaled learning rate and do not reduce the MSE floor \cite{JMLR:v17:16-157, JMLR:v22:19-466}. Under non-stationarity, this equivalence breaks down: past gradients become stale and may point in the wrong direction, making temporal averaging actively harmful. \cite{8814889} derive dynamic regret bounds for online SGD under time-varying distributions, while \cite{JMLR:v24:21-1410} establish non-asymptotic tracking guarantees for proximal stochastic gradient methods under time drift. \cite{shen2026sgddependentdataoptimal} establish optimal bounds for SGD under temporally dependent data. Most directly related to our work, \cite{sahu2026provable} provide finite-time tracking bounds for SGD and its momentum variants, including Polyak's Heavy-Ball and Nesterov acceleration, under strongly convex smooth objectives, showing explicitly that momentum amplifies drift-induced tracking lag and establishing minimax lower bounds confirming this penalty is unavoidable. Nonconvex tracking guarantees under distribution shift remain scarce even for SGD, and our work is the first to establish such guarantees for Adam. Our work extends this line of inquiry to Adam, where the adaptive preconditioner introduces an additional and qualitatively distinct source of non-stationary error.

\paragraph{Adam and adaptive methods under non-stationarity.}
\label{app:A3}
The failure of Adam under distribution shift has been documented empirically in several settings. \cite{lyle2023understanding} study plasticity loss in neural networks under non-stationary objectives and find that Adam's moment estimates degrade under task changes, recommending increased $\epsilon$ and more aggressive second-moment decay as fixes — a phenomenon our bounds now explain theoretically. \cite{dohare2023maintaining} show empirically that loss of plasticity in continual learning is worse for Adam than for SGD, and \cite{ellis2024adam} directly analyze how non-stationary gradient magnitudes in reinforcement learning cause Adam's updates to become excessively large, proposing a local-timestep variant to address this. Despite this empirical evidence, no prior work provides a theoretical framework that precisely characterizes when and why Adam degrades under distribution shift, how its $(\beta_1, \beta_2, \epsilon)$ hyperparameters interact with the drift-noise balance, or how it compares to SGD in a unified tracking setting. This paper fills that gap.

\section{Problem setup, standing assumptions, and notation}
\label{app:B}
We work on a filtered probability space
$(\Omega, \mathcal{F}, (\mathcal{F}_t)_{t \geq 0}, \mathbb{P})$ with
$\mathcal{F}_0 = \{\emptyset, \Omega\}$, and let $(X_t)_{t \geq 0}$ be
an $\mathbb{F}$-adapted process. We take the natural filtration
$\mathcal{F}_t := \sigma(X_0, \ldots, X_t)$. For each $t \geq 0$,
conditional mean gradient and conditional second moment by
\[
\bar{\boldsymbol{g}}_{t+1}(\boldsymbol{\theta})
:= \mathbb{E}\!\left[\nabla_{\boldsymbol{\theta}} g(\boldsymbol{\theta},
X_{t+1}) \mid \mathcal{F}_t\right],
\;\;
\boldsymbol{s}_{t+1}(\boldsymbol{\theta})
:= \mathbb{E}\!\left[(\nabla_{\boldsymbol{\theta}} g(\boldsymbol{\theta},
X_{t+1}))^{\odot 2} \mid \mathcal{F}_t\right].
\]
We write $\boldsymbol{\theta}_{t+1}^\star \in \arg\min_{\boldsymbol{\theta}
\in \Theta} G_{t+1}(\boldsymbol{\theta})$ for a (measurable)
conditional minimizer where $\Theta \subset \mathbb{R}^{d}$ is compact and convex, and define the tracking error, minimizer drift,
and shifted error as
\[
\boldsymbol{e}_t := \boldsymbol{\theta}_t - \boldsymbol{\theta}_t^\star,
\;\;
\boldsymbol{\Delta}_t := \boldsymbol{\theta}_t^\star -
\boldsymbol{\theta}_{t+1}^\star,
\;\;
\boldsymbol{d}_t := \boldsymbol{\theta}_t - \boldsymbol{\theta}_{t+1}^\star
= \boldsymbol{e}_t + \boldsymbol{\Delta}_t.
\]
The associated first and second moment noise terms are
\[
\boldsymbol{\xi}_{t+1} :=
\nabla_{\boldsymbol{\theta}} g(\boldsymbol{\theta}_t, X_{t+1}) -
\bar{\boldsymbol{g}}_{t+1}(\boldsymbol{\theta}_t),
\;\;
\boldsymbol{\chi}_{t+1} :=
(\nabla_{\boldsymbol{\theta}} g(\boldsymbol{\theta}_t,
X_{t+1}))^{\odot 2} - \boldsymbol{s}_{t+1}(\boldsymbol{\theta}_t),
\]
both of which are $\mathcal{F}_{t+1}$-measurable martingale differences
satisfying $\mathbb{E}[\boldsymbol{\xi}_{t+1} \mid \mathcal{F}_t] =
\mathbf{0}$ and $\mathbb{E}[\boldsymbol{\chi}_{t+1} \mid \mathcal{F}_t]
= \mathbf{0}$ a.s.

\vspace{0.5em}

\begin{assumption}[Stochastic predictability framework]
\label{assump:filtered-predictable-mds-restate}
There exists a filtered probability space $(\Omega, \mathcal{F},
(\mathcal{F}_t)_{t \geq 0}, \mathbb{P})$ with $\mathcal{F}_0 =
\{\emptyset, \Omega\}$. Let $(X_t)_{t \geq 0}$ be an
$\mathbb{F}$-adapted process, i.e., $X_t$ is
$\mathcal{F}_t$-measurable for all $t$. For each $t \geq 0$, let
$\Pi_{t+1}$ denote the regular conditional law of $X_{t+1}$ given
$\mathcal{F}_t$, i.e., $\Pi_{t+1}(A) = \mathbb{P}(X_{t+1} \in A \mid
\mathcal{F}_t)$ a.s.\ for every measurable set $A$, and assume
$\Pi_{t+1}$ is $\mathcal{F}_t$-measurable. Define the conditional risk
\[
G_{t+1}(\boldsymbol{\theta})
:= \mathbb{E}\!\left[g(\boldsymbol{\theta}, X_{t+1}) \mid
\mathcal{F}_t\right]
= \mathbb{E}_{X \sim \Pi_{t+1}}\!\left[g(\boldsymbol{\theta},
X)\right],
\]
and let $\boldsymbol{\theta}_{t+1}^\star\in\arg\min_{\boldsymbol{\theta}\in \Theta} G_{t+1}(\boldsymbol{\theta})$ denote a (measurable) minimizer contained within the interior of $\Theta$ where $\Theta \subset \mathbb{R}^{d}$ is compact and convex. Assume
the following hold for all $t \geq 0$:
\begin{enumerate}
    \item \textbf{(Predictable minimizer)}
    $\boldsymbol{\theta}_{t+1}^\star$ is
    $\mathcal{F}_t$-measurable.
    \item \textbf{(Algorithm adaptedness)} The iterate
    $\boldsymbol{\theta}_t$ is $\mathcal{F}_t$-measurable.
    \item \textbf{(Martingale difference noise)} The noise terms
    $\boldsymbol{\xi}_{t+1}$ and $\boldsymbol{\chi}_{t+1}$ defined
    above are $\mathcal{F}_{t+1}$-measurable and satisfy
    $\mathbb{E}[\boldsymbol{\xi}_{t+1} \mid \mathcal{F}_t] =
    \mathbf{0}$ and $\mathbb{E}[\boldsymbol{\chi}_{t+1} \mid
    \mathcal{F}_t] = \mathbf{0}$ a.s.\ for all $\boldsymbol{\theta}
    \in \mathbb{R}^d$.
\end{enumerate}
\end{assumption}

The Adam update \cite{DBLP:journals/corr/KingmaB14} with step size
\(\alpha>0\), parameters \(\beta_1,\beta_2\in(0,1)\), and \(\epsilon>0\),
initialized with \(\boldsymbol m_0=\boldsymbol v_0=\boldsymbol 0\), uses the
uncorrected moment recursions
\(\boldsymbol m_{t+1}:=\beta_1\boldsymbol m_t+(1-\beta_1)
\nabla_{\boldsymbol\theta}g(\boldsymbol\theta_t,X_{t+1})\) and
\(\boldsymbol v_{t+1}:=\beta_2\boldsymbol v_t+(1-\beta_2)
(\nabla_{\boldsymbol\theta}g(\boldsymbol\theta_t,X_{t+1}))^{\odot 2}\).
The bias-corrected Adam update can then be written as
\begin{equation}
    \tag{Adam}
    \label{eq:adam-update-restate}
    \begin{split}
        \widehat{\boldsymbol{m}}_{t+1}
        &:= \frac{\boldsymbol m_{t+1}}{1-\beta_1^{t+1}}, \\[4pt]
        \widehat{\boldsymbol{v}}_{t+1}
        &:= \frac{\boldsymbol v_{t+1}}{1-\beta_2^{t+1}}, \\[4pt]
        \boldsymbol{P}_{t+1}
        &:= \mathrm{Diag}\!\left(\left(
        \sqrt{\widehat{\boldsymbol{v}}_{t+1}} + \epsilon\right)^{-1}\right), \\[4pt]
        \boldsymbol{\theta}_{t+1}
        &:=
        \mathcal P_{\Theta}^{\boldsymbol A_{t+1}}\!\left(
        \boldsymbol{\theta}_t
        -\alpha\boldsymbol P_{t+1}\widehat{\boldsymbol m}_{t+1}
        \right).
    \end{split}
\end{equation} 
where $\nabla_{\boldsymbol{\theta}} g(\boldsymbol{\theta}_t, X_{t+1}) = \bar{\boldsymbol{g}}_{t+1}(\boldsymbol{\theta}_t) + \boldsymbol{\xi}_{t+1}$,  $(\nabla_{\boldsymbol{\theta}} g(\boldsymbol{\theta}_t, X_{t+1}))^{\odot 2} 
= \boldsymbol{s}_{t+1}(\boldsymbol{\theta}_t) + \boldsymbol{\chi}_{t+1}$, and $\mathcal P_\Theta^{\boldsymbol A}(\boldsymbol z)
    :=
    \operatorname*{arg\,min}_{\boldsymbol\theta\in\Theta}
    \frac12
    \|\boldsymbol\theta-\boldsymbol z\|_{\boldsymbol A}^2.$

The following three assumptions hold throughout all analyses:

\vspace{0.5em}

\begin{assumption}[Uniform $L$-Lipschitz continuity]
\label{assumption:lipschitz-restate}
There exists $L > 0$ such that for all $t \geq 0$ and all
$\boldsymbol{\theta}, \boldsymbol{\theta}' \in \Theta$,
\[
\|\bar{\boldsymbol{g}}_{t+1}(\boldsymbol{\theta}) -
\bar{\boldsymbol{g}}_{t+1}(\boldsymbol{\theta}')\|
\;\leq\;
L \|\boldsymbol{\theta} - \boldsymbol{\theta}'\|.
\]
\end{assumption}

\vspace{0.5em}

\begin{assumption}[Conditional sub-Gaussian gradient noise along iterates]
\label{assump:cond-subgauss-noise-restate}
There exists a constant $\sigma>0$ such that for all $t\ge 0$, $\big\|\boldsymbol{\xi}_{t+1}(\boldsymbol{\theta}_t)\big\|_{\Psi_2 \mid \mathcal F_t}\le \sigma \;\; \text{a.s.}$
\end{assumption}

\vspace{0.5em}

\paragraph{Bias-correction weights and constants.}
For \(i\in\{1,2\}\) and \(0\le k\le t-1\), define the bias-correction
weights
\[
    w_{i,t,k}
    :=
    \frac{(1-\beta_i)\beta_i^{t-1-k}}
    {1-\beta_i^t},
\]
which satisfy \(\sum_{k=0}^{t-1}w_{i,t,k}=1\). The following quantities appear throughout the analysis and are collected here for reference:
\begin{equation}
\label{eq:adam-constants}
\begin{aligned}
    \kappa_{1,t}
    &:=
    \frac{(1-\beta_1)(1+\beta_1^t)}
    {(1+\beta_1)(1-\beta_1^t)},
    \;\;
    \vartheta_{2,t}
    :=
    \beta_2^t+(1-\beta_2),
    \;\;
    q_+
    :=
    \frac{1}{\epsilon}.
\end{aligned}
\end{equation}
Here $\kappa_{1,t}=\sum_{k=0}^{t-1}w_{1,t,k}^2$ governs the variance
of the bias-corrected first-moment average,
$\vartheta_{2,t}$ captures the transient bias and stochastic fluctuation
of the bias-corrected second moment, and $q_+$ is the deterministic upper
spectral bound on the Adam preconditioners.

\section{Examples and interpretation of adaptive strong monotonicity}
\label{app:C-examples}

We provide several examples to illustrate the scope of
adaptive strong monotonicity. In
particular, adaptive strong monotonicity is not restricted to diagonal quadratics or objectives whose curvature is aligned with the adaptive
preconditioner. It can hold for dense, nonseparable objectives such as
generalized linear models and graph-coupled ranking problems, as well as for
globally nonconvex objectives when the projected iterates remain in a locally
contracting region. We also provide an explicit example showing that the
assumption is nonvacuous: ordinary strong convexity alone does not guarantee adaptive strong monotonicity.

Throughout this section, fix \(t\) and suppress the time index when there is
no ambiguity. When
\(\bar{\boldsymbol g}_{t+1}=\nabla F_{t+1}\),
\(\Theta\) is convex, and \(F_{t+1}\) is twice continuously
differentiable, a direct sufficient condition for adaptive strong
monotonicity is
\[
    \boldsymbol S_{t+1}(\boldsymbol\theta)
    :=
    \frac{
        \widetilde{\boldsymbol P}_{t+1}
        \nabla^2F_{t+1}(\boldsymbol\theta)
        +
        \nabla^2F_{t+1}(\boldsymbol\theta)
        \widetilde{\boldsymbol P}_{t+1}
    }{2}
    \succeq
    \mu\boldsymbol I
\]
uniformly over \(t\) and \(\boldsymbol\theta\in\Theta\). Indeed, if
\(\boldsymbol u:=\boldsymbol\theta-\boldsymbol\theta'\), then the
fundamental theorem of calculus gives
\(
\nabla F_{t+1}(\boldsymbol\theta)
-
\nabla F_{t+1}(\boldsymbol\theta')
=
\int_0^1
\nabla^2F_{t+1}(\boldsymbol\theta'+s\boldsymbol u)
\boldsymbol u\,ds
\).
Since
\(\boldsymbol u^\top\widetilde{\boldsymbol P}_{t+1}
\nabla^2F_{t+1}\boldsymbol u
=
\boldsymbol u^\top\boldsymbol S_{t+1}\boldsymbol u\),
integration yields
\[
    \left\langle
        \boldsymbol\theta-\boldsymbol\theta',
        \widetilde{\boldsymbol P}_{t+1}
        \left(
            \nabla F_{t+1}(\boldsymbol\theta)
            -
            \nabla F_{t+1}(\boldsymbol\theta')
        \right)
    \right\rangle
    \ge
    \mu
    \norm{\boldsymbol\theta-\boldsymbol\theta'}^2.
\]
Thus each example below amounts to establishing a uniform positive lower
bound on the symmetric preconditioned Hessian.

\vspace{0.5em}

\begin{example}[Parameter-separable objectives]
\label{ex:adaptive-sm-separable}
Consider
\(F(\boldsymbol\theta)=\sum_{j=1}^d f_j(\theta_j)\), where
\(f_j''(\theta_j)\ge\mu_0>0\), together with the diagonal adaptive
preconditioner
\(\widetilde{\boldsymbol P}
=\operatorname{Diag}(p_1,\ldots,p_d)\), with \(p_j>0\). Then, for any
\(\boldsymbol\theta,\boldsymbol\theta'\in\Theta\),
\[
    \begin{aligned}
    \left\langle
        \boldsymbol\theta-\boldsymbol\theta',
        \widetilde{\boldsymbol P}
        \left(
            \nabla F(\boldsymbol\theta)
            -
            \nabla F(\boldsymbol\theta')
        \right)
    \right\rangle
    &=
    \sum_{j=1}^d
    p_j(\theta_j-\theta_j')
    \left(
        f_j'(\theta_j)-f_j'(\theta_j')
    \right)
    \\
    &\ge
    \mu_0
    \left(
        \min_{j\in[d]}p_j
    \right)
    \norm{\boldsymbol\theta-\boldsymbol\theta'}^2.
    \end{aligned}
\]
Thus, at time \(t\), the preconditioned map is strongly monotone with
modulus
\(\mu_t=\mu_0\min_j p_{t+1,j}\). In particular, if
\(\inf_t\min_j p_{t+1,j}>0\), then
Assumption~\ref{assumption:adaptive-strong-monotonicity-restated} holds
uniformly with
\(\mu=\mu_0\inf_t\min_j p_{t+1,j}\).

Scalar and diagonal strongly convex quadratics are immediate special cases,
but the argument is not specific to quadratic losses. It also applies to
coordinatewise M-estimation, separable stochastic regression, and other
coordinate-separable stochastic programs. The same argument extends directly
to block-separable objectives with the corresponding block-diagonal
preconditioner.
\end{example}

\vspace{0.5em}

\begin{example}[Dense generalized linear models]
\label{ex:adaptive-sm-glm}
Consider the conditional regularized generalized linear model
\[
    F_{t+1}(\boldsymbol\theta)
    =
    \mathbb E_t\left[
        b(\boldsymbol X_{t+1}^{\top}\boldsymbol\theta)
        -
        Y_{t+1}\boldsymbol X_{t+1}^{\top}\boldsymbol\theta
    \right]
    +
    \frac{\lambda}{2}\norm{\boldsymbol\theta}^2.
\]
Its Hessian is
\(
\nabla^2F_{t+1}(\boldsymbol\theta)
=
\mathbb E_t[
b''(\boldsymbol X_{t+1}^{\top}\boldsymbol\theta)
\boldsymbol X_{t+1}\boldsymbol X_{t+1}^{\top}]
+\lambda\boldsymbol I
\).
Suppose, uniformly over \(\boldsymbol\theta\in\Theta\), that
\(0\le\underline b\le
b''(\boldsymbol X_{t+1}^{\top}\boldsymbol\theta)\le\overline b\) and
\[
    \nu\boldsymbol I
    \preceq
    \mathbb E_t[
        \boldsymbol X_{t+1}\boldsymbol X_{t+1}^{\top}
    ]
    \preceq
    M_X\boldsymbol I.
\]
Then
\(m\boldsymbol I
\preceq\nabla^2F_{t+1}(\boldsymbol\theta)
\preceq M\boldsymbol I\),
where \(m:=\lambda+\underline b\,\nu\) and
\(M:=\lambda+\overline b\,M_X\).

The conditional covariance may be fully dense and need not share
eigenvectors with \(\widetilde{\boldsymbol P}_{t+1}\). Let
\[
    \underline p_t
    :=
    \lambda_{\min}(
        \widetilde{\boldsymbol P}_{t+1}
    ),
    \;\;
    \overline p_t
    :=
    \lambda_{\max}(
        \widetilde{\boldsymbol P}_{t+1}
    ),
    \;\;
    \kappa_{P,t}
    :=
    \frac{\overline p_t}{\underline p_t}.
\]
Write
\(\widetilde{\boldsymbol P}_{t+1}
=p_{0,t}\boldsymbol I+\boldsymbol E_t\), where
\(p_{0,t}:=(\underline p_t+\overline p_t)/2\). Then
\(\norm{\boldsymbol E_t}
\le(\overline p_t-\underline p_t)/2\). If
\(\boldsymbol H:=\nabla^2F_{t+1}(\boldsymbol\theta)\), then for every unit
vector \(\boldsymbol v\),
\begin{align*}
    \boldsymbol v^\top
    \frac{
        \widetilde{\boldsymbol P}_{t+1}\boldsymbol H
        +
        \boldsymbol H\widetilde{\boldsymbol P}_{t+1}
    }{2}
    \boldsymbol v
    &=
    p_{0,t}
    \boldsymbol v^\top\boldsymbol H\boldsymbol v
    +
    \boldsymbol v^\top
    \boldsymbol E_t\boldsymbol H\boldsymbol v
    \\
    &\ge
    p_{0,t}m
    -
    \norm{\boldsymbol E_t}M
    \\
    &=
    \underline p_t
    \left[
        m
        -
        \frac{\kappa_{P,t}-1}{2}(M-m)
    \right].
\end{align*}
Therefore, the preconditioned mean-gradient map is strongly monotone whenever
\begin{equation}
    m
    >
    \frac{\kappa_{P,t}-1}{2}(M-m),
    \label{eq:app-adaptive-sm-glm-condition}
\end{equation}
with local modulus
\[
    \mu_t
    =
    \underline p_t
    \left[
        m
        -
        \frac{\kappa_{P,t}-1}{2}(M-m)
    \right].
\]
A uniform lower bound \(\inf_t\mu_t>0\) verifies
Assumption~\ref{assumption:adaptive-strong-monotonicity-restated}.

Substituting the definitions of \(m\) and \(M\),
\cref{eq:app-adaptive-sm-glm-condition} is equivalent to
\[
    \lambda+\underline b\,\nu
    >
    \frac{\kappa_{P,t}-1}{2}
    \left(
        \overline b\,M_X-\underline b\,\nu
    \right).
\]
For linear regression, \(b''\equiv1\), so the condition becomes
\(\lambda+\nu>
(\kappa_{P,t}-1)(M_X-\nu)/2\). Thus the feature covariance may be fully
dense and arbitrarily rotated relative to the adaptive coordinates.

For logistic regression,
\(b''(z)=e^z/(1+e^z)^2\le1/4\). If the projected parameter region and
covariates satisfy \(|\boldsymbol X^\top\boldsymbol\theta|\le B\), then
\(b''(\boldsymbol X^\top\boldsymbol\theta)
\ge e^B/(1+e^B)^2=:\underline b_B>0\).
Consequently, regularized logistic regression, and unregularized logistic
regression under sufficiently well-conditioned covariance and adaptive
scaling, also satisfy the condition. Linear and logistic regression are
among the examples used in our experiments.

This example shows directly that adaptive strong monotonicity does not
require the Hessian and adaptive preconditioner to commute; it only requires
their anisotropies to be sufficiently compatible.
\end{example}

\vspace{0.5em}

\begin{example}[Pairwise logistic regression and graph-ranking models]
\label{ex:adaptive-sm-ranking}
Consider
\[
    F(\boldsymbol\theta)
    =
    \frac{\lambda}{2}\norm{\boldsymbol\theta}^2
    +
    \sum_{(i,j)\in E}
    a_{ij}
    \log\left(
        1+\exp\left(
            -y_{ij}(\theta_i-\theta_j)
        \right)
    \right),
\]
where \(a_{ij}\ge0\), \(y_{ij}\in\{-1,1\}\), and
\(\widetilde{\boldsymbol P}
=\operatorname{Diag}(p_1,\ldots,p_d)\). This includes regularized
Bradley--Terry estimation and pairwise graph-ranking models.

For an edge \((i,j)\), let
\(h_{ij}:=\ell_{ij}''(\theta_i-\theta_j)\), where
\(\ell_{ij}(z)=a_{ij}\log(1+\exp(-y_{ij}z))\).
Since \(0\le h_{ij}\le a_{ij}/4\), the Hessian has weighted
graph-Laplacian structure,
\[
    H_{ii}
    =
    \lambda+\sum_{j:(i,j)\in E}h_{ij},
    \;\;
    H_{ij}=-h_{ij}
    \;\text{for }(i,j)\in E.
\]
For
\(\boldsymbol S
=(\widetilde{\boldsymbol P}\boldsymbol H+
\boldsymbol H\widetilde{\boldsymbol P})/2\),
we have
\(S_{ii}=p_i(\lambda+\sum_jh_{ij})\) and
\(S_{ij}=-(p_i+p_j)h_{ij}/2\). Gershgorin's theorem therefore gives
\begin{align*}
    \lambda_{\min}(\boldsymbol S)
    &\ge
    \min_i
    \left\{
        \lambda p_i
        +
        \frac12
        \sum_{j:(i,j)\in E}
        (p_i-p_j)h_{ij}
    \right\}
    \\
    &\ge
    \min_i
    \left\{
        \lambda p_i
        -
        \frac18
        \sum_{j:(i,j)\in E}
        a_{ij}|p_i-p_j|
    \right\}.
\end{align*}
Thus adaptive strong monotonicity with modulus \(\mu>0\) holds whenever
\begin{equation}
    \lambda p_i
    -
    \frac18
    \sum_{j:(i,j)\in E}
    a_{ij}|p_i-p_j|
    \ge
    \mu
    \;\;
    \text{for every }i\in[d].
    \label{eq:app-adaptive-sm-ranking-condition}
\end{equation}

This condition has a direct interpretation. The edge weights \(a_{ij}\)
measure the strength of the statistical coupling between coordinates, while
\(|p_i-p_j|\) measures the discrepancy between their adaptive scales.
Adaptive scaling is therefore benign when strongly coupled coordinates
receive similar preconditioning. If \(p_i=p_j\) along every edge, the
graph-induced penalty vanishes regardless of graph density. More generally,
dense coupling is allowed provided the adaptive scales vary sufficiently
slowly across strongly interacting coordinates.
\end{example}

\vspace{0.5em}

\begin{example}[A globally nonconvex objective]
\label{ex:adaptive-sm-nonconvex}
Adaptive strong monotonicity only needs to hold on the projected tracking
region \(\Theta\). Thus the objective may be globally nonconvex while the
preconditioned gradient remains contracting on the region visited by the
algorithm.

Consider
\(F(x,y)=x^4/4-x^2/2+y^2/2+cxy\) and
\(\widetilde{\boldsymbol P}=\operatorname{Diag}(p_1,p_2)\). The objective
is globally nonconvex because
\[
    \nabla^2F(0,0)
    =
    \begin{pmatrix}
        -1 & c\\
        c & 1
    \end{pmatrix}
\]
is indefinite. Restrict instead to
\(\Theta=[a,b]\times[-R,R]\), where \(a>1/\sqrt{3}\), and let
\(m:=3a^2-1>0\). On this region,
\[
    \nabla^2F(x,y)
    =
    \begin{pmatrix}
        3x^2-1 & c\\
        c & 1
    \end{pmatrix},
    \;\;
    3x^2-1\ge m.
\]
Hence
\[
    \frac{
        \widetilde{\boldsymbol P}\nabla^2F
        +
        \nabla^2F\widetilde{\boldsymbol P}
    }{2}
    \succeq
    \begin{pmatrix}
        p_1m & (p_1+p_2)c/2\\
        (p_1+p_2)c/2 & p_2
    \end{pmatrix}.
\]
The matrix on the right is positive definite precisely when
\begin{equation}
    4p_1p_2m>(p_1+p_2)^2c^2.
    \label{eq:app-adaptive-sm-nonconvex-condition}
\end{equation}
Equivalently,
\(|c|<2\sqrt{p_1p_2m}/(p_1+p_2)\). Under this condition, its smallest
eigenvalue is
\[
    \gamma
    =
    \frac12
    \left[
        p_1m+p_2
        -
        \sqrt{
            (p_1m-p_2)^2
            +(p_1+p_2)^2c^2
        }
    \right]
    >0,
\]
so adaptive strong monotonicity holds on \(\Theta\) with modulus
\(\mu=\gamma\).

This example is both nonseparable and globally nonconvex. Projection
restricts the iterates to a region of positive local curvature, while
\cref{eq:app-adaptive-sm-nonconvex-condition} controls the strength of the
cross-coordinate interaction relative to that curvature and the adaptive
scales.
\end{example}

\vspace{0.5em}

\begin{example}[Strong convexity does not imply adaptive strong monotonicity]
\label{ex:adaptive-sm-failure}
Finally, adaptive strong monotonicity imposes a compatibility
condition between objective curvature and adaptive scaling. Consider the
strongly convex quadratic
\(F(\boldsymbol\theta)
=\boldsymbol\theta^\top\boldsymbol H\boldsymbol\theta/2\), where
\[
    \boldsymbol H
    =
    \begin{pmatrix}
        a & c\\
        c & b
    \end{pmatrix},
    \;\;
    \widetilde{\boldsymbol P}
    =
    \begin{pmatrix}
        p_1 & 0\\
        0 & p_2
    \end{pmatrix}.
\]
Assume \(a,b,p_1,p_2>0\) and \(ab>c^2\), so
\(\boldsymbol H\succ0\). The symmetric preconditioned Hessian is
\[
    \boldsymbol S
    =
    \frac{
        \widetilde{\boldsymbol P}\boldsymbol H
        +
        \boldsymbol H\widetilde{\boldsymbol P}
    }{2}
    =
    \begin{pmatrix}
        p_1a & (p_1+p_2)c/2\\
        (p_1+p_2)c/2 & p_2b
    \end{pmatrix}.
\]
Although \(\boldsymbol H\succ0\), \(\boldsymbol S\succ0\) only when
\begin{equation}
    4p_1p_2ab>(p_1+p_2)^2c^2.
    \label{eq:app-adaptive-sm-failure-condition}
\end{equation}
Define
\(\rho_H:=|c|/\sqrt{ab}\) and
\(\rho_P:=2\sqrt{p_1p_2}/(p_1+p_2)\). Then
\cref{eq:app-adaptive-sm-failure-condition} is exactly
\(\rho_H<\rho_P\): stronger cross-coordinate coupling requires more balanced
adaptive scaling.

For a concrete failure case, take
\[
    \boldsymbol H
    =
    \begin{pmatrix}
        1&r\\
        r&1
    \end{pmatrix},
    \;\;
    \widetilde{\boldsymbol P}
    =
    \begin{pmatrix}
        9&0\\
        0&1
    \end{pmatrix}.
\]
The Hessian eigenvalues are \(1-r\) and \(1+r\), so the objective is
strongly convex whenever \(|r|<1\). However,
\[
    \boldsymbol S
    =
    \begin{pmatrix}
        9&5r\\
        5r&1
    \end{pmatrix},
    \;\;
    \det(\boldsymbol S)=9-25r^2.
\]
Thus adaptive strong monotonicity requires \(|r|<3/5\). Taking
\(r=4/5\), for example, gives a strongly convex objective with Hessian
eigenvalues \(1/5\) and \(9/5\), while
\(\det(\boldsymbol S)=-7<0\).
\end{example}

Taken together, these examples show that
Assumption~\ref{assumption:adaptive-strong-monotonicity-restated} is substantive but not restricted to diagonal or quadratic objectives. It applies to separable problems with uniformly positive adaptive scales, dense generalized linear models, graph-coupled estimation problems, and locally contracting regions of globally nonconvex objectives. At the same time, Example \ref{ex:adaptive-sm-failure} shows that sufficiently strong cross-coordinate curvature can destroy contraction when it is incompatible
with the adaptive scaling.

\section{Proofs for tracking under adaptive strong monotonicity}
\label{app:C}
For this section, we will assume the preconditioned conditional mean gradient map $\boldsymbol{\theta} \mapsto \widetilde{\boldsymbol P}_{t+1}\bar{\boldsymbol{g}}_{t+1}(\boldsymbol{\theta})$ is $\mu$-strongly monotone.

\vspace{0.5em}

\begin{assumption}[Adaptive strong monotonicity]
\label{assumption:adaptive-strong-monotonicity-restated}
There exists \(0<\mu<\infty\) such that,
almost surely, for all \(t\ge 0\) and all
\(\boldsymbol{\theta},\boldsymbol{\theta}'\in \Theta\),
$
    \left\langle \boldsymbol{\theta}-\boldsymbol{\theta}',\,
    \widetilde{\boldsymbol P}_{t+1}
    \left(
        \bar{\boldsymbol g}_{t+1}(\boldsymbol{\theta})
        -
        \bar{\boldsymbol g}_{t+1}(\boldsymbol{\theta}')
    \right)
    \right\rangle
    \ge
    \mu
    \|\boldsymbol{\theta}-\boldsymbol{\theta}'\|^2 .
$
\end{assumption}

\subsection{Proofs for \cref{thm:high-prob-adam}}
\label{app:C1}

We first introduce the predictable preconditioner proxy used throughout the Adam analysis. The realized Adam preconditioner \(\boldsymbol P_{t+1}\) is computed from the fresh sample \(X_{t+1}\), and is therefore \(\mathcal F_{t+1}\)-measurable rather than \(\mathcal F_t\)-measurable. Since our tracking argument imposes contraction on a predictable preconditioned mean-gradient map, we separate the predictable geometry from the random second-moment fluctuation. 

To do so, we construct a proxy by replacing the uncorrected second
moment with its conditional expectation. Define
\(\widetilde{\boldsymbol v}_{t+1}
:=\mathbb E[\boldsymbol v_{t+1}\mid\mathcal F_t]\) and
\(\widetilde{\boldsymbol P}_{t+1}
:=
[\operatorname{Diag}(\sqrt{\widetilde{\boldsymbol v}_{t+1}})
+\epsilon\boldsymbol I]^{-1}\).
Since
\(\boldsymbol v_{t+1}
=
\beta_2\boldsymbol v_t
+
(1-\beta_2)
(\nabla_{\boldsymbol\theta}
g(\boldsymbol\theta_t,X_{t+1}))^{\odot2}\),
taking conditional expectation given \(\mathcal F_t\) yields
\(\widetilde{\boldsymbol v}_{t+1}
=
\beta_2\boldsymbol v_t
+
(1-\beta_2)\boldsymbol s_{t+1}(\boldsymbol\theta_t)\).
Thus both \(\widetilde{\boldsymbol v}_{t+1}\) and
\(\widetilde{\boldsymbol P}_{t+1}\) are \(\mathcal F_t\)-measurable.

We next rewrite the Adam update around this predictable proxy. Define
the first-moment tracking error by
\(\boldsymbol r_{t+1}
:=
\widehat{\boldsymbol m}_{t+1}
-
\bar{\boldsymbol g}_{t+1}(\boldsymbol\theta_t)\)
and the preconditioner perturbation by
\(\boldsymbol\eta_{t+1}
:=
(\boldsymbol P_{t+1}
-
\widetilde{\boldsymbol P}_{t+1})
\widehat{\boldsymbol m}_{t+1}\).
Then
\[
    \boldsymbol P_{t+1}\widehat{\boldsymbol m}_{t+1}
    =
    \widetilde{\boldsymbol P}_{t+1}
    \bar{\boldsymbol g}_{t+1}(\boldsymbol\theta_t)
    +
    \widetilde{\boldsymbol P}_{t+1}\boldsymbol r_{t+1}
    +
    \boldsymbol\eta_{t+1}.
\]
Consequently, for the pre-projection point
\(\boldsymbol z_{t+1}
:=
\boldsymbol\theta_t
-
\alpha\boldsymbol P_{t+1}\widehat{\boldsymbol m}_{t+1}\),
we obtain the predictable decomposition
\begin{equation}
    \boldsymbol z_{t+1}
    =
    \boldsymbol\theta_t
    -
    \alpha\widetilde{\boldsymbol P}_{t+1}
    \bar{\boldsymbol g}_{t+1}(\boldsymbol\theta_t)
    -
    \alpha\widetilde{\boldsymbol P}_{t+1}
    \boldsymbol r_{t+1}
    -
    \alpha\boldsymbol\eta_{t+1},
    \label{eq:app-adam-predictable-update}
\end{equation}
with
\(\boldsymbol\theta_{t+1}
=
\mathcal P_\Theta(\boldsymbol z_{t+1})\).

We will analyze the exact Adam update through this decomposition: the first term is the predictable preconditioned mean-gradient step, \(\boldsymbol r_{t+1}\) measures the error from using Adam's first-moment estimate instead of the current conditional mean gradient, and \(\boldsymbol\eta_{t+1}\) captures the price of replacing the random preconditioner by its predictable proxy.

We first establish uniform high-probability control of both the stochastic gradient and the centered squared-gradient fluctuation. These bounds will then be used to control the bias-corrected second moment and the corresponding preconditioner perturbation.

\vspace{0.5em}

\begin{lemma}[Uniform high-probability control of the stochastic gradient]
\label{lem:app-adam-gradient-hp}
Suppose Assumption~\ref{assump:cond-subgauss-noise-restate} holds and
\(\operatorname{diam}(\Theta)\le R\). Then there exists a universal constant
\(C>0\) such that, for every \(T\ge1\) and \(\delta\in(0,1)\), with
probability at least \(1-\delta\), simultaneously for every
\(t\in\{0,\ldots,T-1\}\), the following statements hold:
\begin{enumerate}
    \item The stochastic gradient satisfies
    \begin{equation}
        \left\|
            \nabla_{\boldsymbol\theta}
            g(\boldsymbol\theta_t,X_{t+1})
        \right\|
        \le
        LR
        +
        C\sigma
        \sqrt{
            d+\log\frac{4T}{\delta}
        }.
        \label{eq:app-adam-gradient-hp}
    \end{equation}

    \item The centered coordinatewise squared-gradient fluctuation satisfies
    \begin{equation}
        \left\|
            \left(
                \nabla_{\boldsymbol\theta}
                g(\boldsymbol\theta_t,X_{t+1})
            \right)^{\odot 2}
            -
            \boldsymbol s_{t+1}(\boldsymbol\theta_t)
        \right\|_\infty
        \le
        C\left(
            LR\sigma
            \sqrt{\log\frac{4dT}{\delta}}
            +
            \sigma^2
            \log\frac{4dT}{\delta}
        \right).
        \label{eq:app-adam-square-gradient-hp}
    \end{equation}

    \item The squared gradient coordinates satisfy
    \begin{equation}
        \left\|
            \left(
                \nabla_{\boldsymbol\theta}
                g(\boldsymbol\theta_t,X_{t+1})
            \right)^{\odot 2}
        \right\|_\infty
        \le
        C\left(
            L^2R^2
            +
            \sigma^2
            \log\frac{4dT}{\delta}
        \right).
        \label{eq:app-adam-coordinate-square-hp}
    \end{equation}
\end{enumerate}
\end{lemma}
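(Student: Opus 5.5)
Our plan is to split the stochastic gradient as \(\nabla_{\boldsymbol\theta}g(\boldsymbol\theta_t,X_{t+1})=\bar{\boldsymbol g}_{t+1}(\boldsymbol\theta_t)+\boldsymbol\xi_{t+1}\), bound the deterministic part and the noise part separately, and then take a union bound over \(t\in\{0,\dots,T-1\}\) (and over coordinates \(j\in[d]\) where needed). For the mean part, we use that \(\boldsymbol\theta_{t+1}^\star\) lies in the interior of \(\Theta\) and minimizes \(G_{t+1}\), so \(\bar{\boldsymbol g}_{t+1}(\boldsymbol\theta_{t+1}^\star)=\mathbf 0\). Assumption~\ref{assumption:lipschitz-restate} then gives \(\|\bar{\boldsymbol g}_{t+1}(\boldsymbol\theta_t)\|\le L\|\boldsymbol\theta_t-\boldsymbol\theta_{t+1}^\star\|\le LR\) almost surely, since both points lie in \(\Theta\) and \(\operatorname{diam}(\Theta)\le R\). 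This bound also holds coordinatewise, \(|\bar g_{t+1,j}(\boldsymbol\theta_t)|\le LR\).

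For item 1, we use that the vector conditional Orlicz bound \(\|\boldsymbol\xi_{t+1}\|_{\Psi_2\mid\mathcal F_t}\le\sigma\) controls every one-dimensional projection. A standard \(1/2\)-net argument on the sphere, with \(5^d\) points, combined with the conditional sub-Gaussian tail, then gives \(\mathbb P(\|\boldsymbol\xi_{t+1}\|>C\sigma\sqrt{d+\log(1/\delta')}\mid\mathcal F_t)\le\delta'\). Setting \(\delta'=\delta/(4T)\), integrating out the conditioning, and union bounding over \(t\) yields \eqref{eq:app-adam-gradient-hp} on an event of probability at least \(1-\delta/4\). Everything is done conditionally on \(\mathcal F_t\), with \(\boldsymbol\theta_t\) being \(\mathcal F_t\)-measurable, so no independence across steps is needed.

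For items 2 and 3, we work coordinatewise. Each \(\xi_{t+1,j}=\langle \boldsymbol e_j,\boldsymbol\xi_{t+1}\rangle\) is conditionally sub-Gaussian with parameter \(\sigma\), so \(|\xi_{t+1,j}|\le C\sigma\sqrt{\log(4dT/\delta)}\) simultaneously for all \(j,t\) with probability \(1-\delta/4\). Expanding \(g_{t+1,j}^2=\bar g_j^2+2\bar g_j\xi_j+\xi_j^2\) and recalling that \(s_{t+1,j}=\bar g_j^2+\mathbb E[\xi_j^2\mid\mathcal F_t]\), we obtain
\[
g_{t+1,j}^2-s_{t+1,j}=2\bar g_j\xi_j+\bigl(\xi_j^2-\mathbb E[\xi_j^2\mid\mathcal F_t]\bigr).
\]
The first term is at most \(2LR\cdot C\sigma\sqrt{\log(4dT/\delta)}\) on this event. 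For the second term, \(\xi_j^2\) is at most \(C\sigma^2\log(4dT/\delta)\) on the event, and \(\mathbb E[\xi_j^2\mid\mathcal F_t]\le C\sigma^2\) by the Orlicz bound. Together these give \eqref{eq:app-adam-square-gradient-hp}. Item 3 follows directly from \(g_j^2\le 2\bar g_j^2+2\xi_j^2\le 2L^2R^2+C\sigma^2\log(4dT/\delta)\) on the same event. A final union bound over the two events gives probability at least \(1-\delta\), after adjusting \(C\).

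The main obstacle is being careful with conditioning. The tail bounds must be applied conditionally on \(\mathcal F_t\), with an \(\mathcal F_t\)-measurable scale; since the scale here is the constant \(\sigma\), this is harmless. Converting conditional tail bounds into unconditional ones is then just the tower property before the union bound. The net argument is routine, so the only genuine subtlety is justifying \(\bar{\boldsymbol g}_{t+1}(\boldsymbol\theta_{t+1}^\star)=\mathbf 0\), which relies on the interior-minimizer assumption together with \(\bar{\boldsymbol g}_{t+1}=\nabla G_{t+1}\) under the standing regularity.
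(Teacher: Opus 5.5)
Your proposal is correct. The mean-gradient bound $\|\bar{\boldsymbol g}_{t+1}(\boldsymbol\theta_t)\|\le LR$ and the $1/2$-net argument for item 1 coincide with the paper's proof, up to how the failure probability is split.

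The difference is in items 2 and 3:
\begin{itemize}
\item \textbf{The paper's route.} It uses the same decomposition $2\bar g_{t+1,i}\xi_{t+1,i}+(\xi_{t+1,i}^2-\mathbb E[\xi_{t+1,i}^2\mid\mathcal F_t])$. It bounds the centered square in conditional $\Psi_1$-norm by $C\sigma^2$. It then invokes its conditional Bernstein inequality to get a $2e^{-x}$ tail at level $C(LR\sigma\sqrt{x}+\sigma^2x)$. Item 3 is then derived from item 2, together with $s_{t+1,i}\le L^2R^2+C\sigma^2$ and Young's inequality.
\item \textbf{Your route.} You place a single coordinatewise sub-Gaussian event $\max_{t,j}|\xi_{t+1,j}|\le C\sigma\sqrt{\log(4dT/\delta)}$. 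On that event you derive both items deterministically. The conditional second moment $\mathbb E[\xi_{t+1,j}^2\mid\mathcal F_t]\le\sigma^2$ is absorbed into $\sigma^2\log(4dT/\delta)$, which is worth stating explicitly: it works because $\log(4dT/\delta)>\log 4>1$.
\end{itemize}

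Your route is more elementary. It needs neither a sub-exponential norm of centered squares nor a Bernstein-type bound. Such a bound buys nothing when it is applied to a single random variable rather than a martingale sum, and one event serves both items. The paper's Bernstein framing would pay off only if the fluctuations $\boldsymbol\chi_{t+1}$ were later summed over time, which this lemma does not require.
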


\begin{proof}[Proof of \cref{lem:app-adam-gradient-hp}]
Since \(\boldsymbol\theta_{t+1}^\star\) lies in the interior of \(\Theta\), the first-order optimality condition gives \(\bar{\boldsymbol g}_{t+1}(\boldsymbol\theta_{t+1}^\star)=\boldsymbol 0\). Therefore, by the \(L\)-Lipschitz continuity of \(\bar{\boldsymbol g}_{t+1}\) and the fact that \(\operatorname{diam}(\Theta)\le R\),
\begin{align}
    \left\|
        \bar{\boldsymbol g}_{t+1}(\boldsymbol\theta_t)
    \right\|
    &=
    \left\|
        \bar{\boldsymbol g}_{t+1}(\boldsymbol\theta_t)
        -
        \bar{\boldsymbol g}_{t+1}(\boldsymbol\theta_{t+1}^\star)
    \right\|
    \le
    L
    \left\|
        \boldsymbol\theta_t
        -
        \boldsymbol\theta_{t+1}^\star
    \right\|
    \le
    LR.
    \label{eq:app-adam-mean-gradient-bound}
\end{align}

By definition, $\nabla_{\boldsymbol\theta}
    g(\boldsymbol\theta_t,X_{t+1})
    =
    \bar{\boldsymbol g}_{t+1}(\boldsymbol\theta_t)
    +
    \boldsymbol\xi_{t+1}(\boldsymbol\theta_t)$.  Under Assumption~\ref{assump:cond-subgauss-noise-restate}, for every \(\boldsymbol u\in\mathbb S^{d-1}\) $\left\|
        \left\langle
            \boldsymbol u,
            \boldsymbol\xi_{t+1}(\boldsymbol\theta_t)
        \right\rangle
    \right\|_{\Psi_2\mid\mathcal F_t}
    \le
    \sigma$ a.s. 
Thus, by the one-step conditional form of \cref{thm:martingale-subgaussian-hoeffding} (see also
\cite[Proposition~2.6.1]{Vershynin_2018}), there exists a universal constant
\(c>0\) such that, conditionally on \(\mathcal F_t\),
\[
    \mathbb P\left(
        \left.
        \left|
            \left\langle
                \boldsymbol u,
                \boldsymbol\xi_{t+1}(\boldsymbol\theta_t)
            \right\rangle
        \right|
        \ge x
        \,\right|\,
        \mathcal F_t
    \right)
    \le
    2
    \exp\left(
        -\frac{c x^2}{\sigma^2}
    \right).
\]

Let $\mathcal{N}\subseteq\mathbb{S}^{d-1}$ be a $1/2$-net of $\mathbb{S}^{d-1}$. Standard volumetric arguments then imply that $|\mathcal{N}|\le 5^d$ \cite[Corollary 4.2.11]{Vershynin_2018}. By \cite[Exercise 4.35]{Vershynin_2018}, $\|\boldsymbol x\|
    \le
    2
    \max_{\boldsymbol u\in\mathcal N}
    \left|
        \langle
            \boldsymbol u,\boldsymbol x
        \rangle
    \right|$ for every \(\boldsymbol x\in\mathbb R^d\), a union bound over \(\mathcal N\) gives
\[
    \mathbb P\left(
        \left.
        \left\|
            \boldsymbol\xi_{t+1}(\boldsymbol\theta_t)
        \right\|
        \ge
        C\sigma\sqrt{d+x}
        \,\right|\,
        \mathcal F_t
    \right)
    \le
    2e^{-x}
\]
for every \(x>0\). Taking expectations, choosing \(x=\log(4T/\delta)\), and applying a union bound over \(t=0,\ldots,T-1\) yields, with probability at least \(1-\delta/2\),
\[
    \max_{0\le t\le T-1}
    \left\|
        \boldsymbol\xi_{t+1}(\boldsymbol\theta_t)
    \right\|
    \le
    C\sigma
    \sqrt{
        d+\log\frac{4T}{\delta}
    }.
\]
Together with \cref{eq:app-adam-mean-gradient-bound}, this gives
\[
    \left\|
        \nabla_{\boldsymbol\theta}
        g(\boldsymbol\theta_t,X_{t+1})
    \right\|
    \le
    LR
    +
    C\sigma
    \sqrt{
        d+\log\frac{4T}{\delta}
    },
\]
simultaneously for all \(t\in\{0,\ldots,T-1\}\), which proves \cref{eq:app-adam-gradient-hp}. 

We next control the centered squared-gradient fluctuation. Fix \(t\in\{0,\ldots,T-1\}\) and \(i\in[d]\). Using $\nabla_{\theta_i}
    g(\boldsymbol\theta_t,X_{t+1})
    =
    \bar g_{t+1,i}(\boldsymbol\theta_t)
    +
    \xi_{t+1,i}(\boldsymbol\theta_t)$ 
and \(\mathbb E[\xi_{t+1,i}(\boldsymbol\theta_t)\mid\mathcal F_t]=0\), we have $s_{t+1,i}(\boldsymbol\theta_t)
    =
    \bar g_{t+1,i}(\boldsymbol\theta_t)^2
    +
    \mathbb E\!\left[
        \xi_{t+1,i}(\boldsymbol\theta_t)^2
        \mid
        \mathcal F_t
    \right]$.  
Consequently,
\begin{align}
    &
    \left(
        \nabla_{\theta_i}
        g(\boldsymbol\theta_t,X_{t+1})
    \right)^2
    -
    s_{t+1,i}(\boldsymbol\theta_t)
    =
    2
    \bar g_{t+1,i}(\boldsymbol\theta_t)
    \xi_{t+1,i}(\boldsymbol\theta_t)
    +
    \xi_{t+1,i}(\boldsymbol\theta_t)^2
    -
    \mathbb E\!\left[
        \xi_{t+1,i}(\boldsymbol\theta_t)^2
        \mid
        \mathcal F_t
    \right].
    \label{eq:app-adam-square-decomp}
\end{align}

Assumption~\ref{assump:cond-subgauss-noise-restate} implies $\left\|
        \xi_{t+1,i}(\boldsymbol\theta_t)
    \right\|_{\Psi_2\mid\mathcal F_t}
    \le
    \sigma$. 
Hence by \cite[Exercise 2.44]{Vershynin_2018}, we have
\[
    \left\|
        \xi_{t+1,i}(\boldsymbol\theta_t)^2
        -
        \mathbb E\!\left[
            \xi_{t+1,i}(\boldsymbol\theta_t)^2
            \mid
            \mathcal F_t
        \right]
    \right\|_{\Psi_1\mid\mathcal F_t}
    \le
    C\sigma^2.
\]
Moreover, by \cref{eq:app-adam-mean-gradient-bound}, $\left|
        \bar g_{t+1,i}(\boldsymbol\theta_t)
    \right|
    \le LR$, 
so \(2\bar g_{t+1,i}(\boldsymbol\theta_t)\xi_{t+1,i}(\boldsymbol\theta_t)\) is conditionally sub-Gaussian with scale at most \(CLR\sigma\). Therefore, conditional Bernstein's inequality (\cref{thm:martingale-subexponential-bernstein}) applied to \cref{eq:app-adam-square-decomp} gives, for every \(x\ge1\),
\begin{align}
    &\mathbb P\Bigg(
        \left.
        \left|
            \left(
                \nabla_{\theta_i}
                g(\boldsymbol\theta_t,X_{t+1})
            \right)^2
            -
            s_{t+1,i}(\boldsymbol\theta_t)
        \right|
        >
        C\left(
            LR\sigma\sqrt{x}
            +
            \sigma^2x
        \right)
        \,\right|\,
        \mathcal F_t
    \Bigg)
    \le
    2e^{-x}.
    \label{eq:app-adam-square-bernstein}
\end{align}
Taking expectations, choosing \(x=\log(4dT/\delta)\), and applying a union bound over \(t=0,\ldots,T-1\) and \(i\in[d]\) yields, with probability at least \(1-\delta/2\),
\[
    \max_{\substack{0\le t\le T-1\\i\in[d]}}
    \left|
        \left(
            \nabla_{\theta_i}
            g(\boldsymbol\theta_t,X_{t+1})
        \right)^2
        -
        s_{t+1,i}(\boldsymbol\theta_t)
    \right|
    \le
    C\left(
        LR\sigma
        \sqrt{\log\frac{4dT}{\delta}}
        +
        \sigma^2
        \log\frac{4dT}{\delta}
    \right),
\]
which proves \cref{eq:app-adam-square-gradient-hp}.

It remains to obtain a direct bound on the squared gradient coordinates. Conditional sub-Gaussianity implies
$\mathbb E\!\left[
        \xi_{t+1,i}(\boldsymbol\theta_t)^2
        \mid
        \mathcal F_t
    \right]
    \le
    C\sigma^2$, 
and hence $s_{t+1,i}(\boldsymbol\theta_t)
    \le
    L^2R^2
    +
    C\sigma^2$. 
Combining this with \cref{eq:app-adam-square-gradient-hp} gives
\begin{align*}
    \left|
        \nabla_{\theta_i}
        g(\boldsymbol\theta_t,X_{t+1})
    \right|^2
    &\le
    L^2R^2
    +
    C\sigma^2
    +
    C LR\sigma
    \sqrt{\log\frac{4dT}{\delta}}
    +
    C\sigma^2
    \log\frac{4dT}{\delta}
    \\
    &\le
    C\left(
        L^2R^2
        +
        \sigma^2
        \log\frac{4dT}{\delta}
    \right),
\end{align*}
where the last inequality follows from Young's inequality. Taking the maximum over \(i\in[d]\) proves \cref{eq:app-adam-coordinate-square-hp}. Intersecting the two preceding events completes the proof.
\end{proof}

Having established uniform concentration of the stochastic gradient and its coordinatewise square, we can now control the discrepancy between the realized Adam preconditioner and its predictable proxy. The argument separates naturally into two pieces: the transient error introduced by bias correction of the second moment and the fresh second-moment fluctuation represented by \(\boldsymbol\chi_{t+1}\).

\vspace{0.5em}

\begin{lemma}[High-probability control of the preconditioner perturbation]
\label{lem:app-adam-eta-hp}
Suppose Assumption~\ref{assump:cond-subgauss-noise-restate} holds. Then there exists a universal constant \(C>0\) such that, for every \(T\ge1\) and \(\delta\in(0,1)\), with probability at least \(1-\delta\), simultaneously for every \(t\in\{0,\ldots,T-1\}\),
\begin{equation}
    \norm{
        \boldsymbol P_{t+1}
        -
        \widetilde{\boldsymbol P}_{t+1}
    }_{\operatorname{op}}^2
    \le
    C\epsilon^{-4}
    \left(
        L^2R^2
        +
        \sigma^2
        \log\frac{4dT}{\delta}
    \right)
    \vartheta_{2,t+1}.
    \label{eq:app-adam-preconditioner-hp-simple}
\end{equation}
Moreover,
\begin{equation}
    \norm{\boldsymbol\eta_{t+1}}^2
    \le
    C\epsilon^{-4}
    \left(
        L^2R^2
        +
        \sigma^2
        \left(
            d+\log\frac{4T}{\delta}
        \right)
    \right)^2
    \vartheta_{2,t+1}.
    \label{eq:app-adam-eta-hp}
\end{equation}
\end{lemma}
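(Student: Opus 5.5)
We work on the event of \cref{lem:app-adam-gradient-hp}, which holds with probability at least \(1-\delta\); everything that follows is deterministic on that event. Both matrices are diagonal, so the operator norm of their difference is the largest coordinatewise gap. Write \(\widehat v_{t+1,i}=v_{t+1,i}/(1-\beta_2^{t+1})\) and \(\widehat{\widetilde v}_{t+1,i}\) for the corresponding (bias-corrected, if that is how the proxy is normalized) value of \(\widetilde v_{t+1,i}\). For \(a,b\ge0\) we have
\[
\left|\frac{1}{\sqrt a+\epsilon}-\frac{1}{\sqrt b+\epsilon}\right|
=\frac{|\sqrt a-\sqrt b|}{(\sqrt a+\epsilon)(\sqrt b+\epsilon)}
\le \frac{|\sqrt a-\sqrt b|}{\epsilon^2}.
\]
It therefore suffices to bound \(|\sqrt{\widehat v_{t+1,i}}-\sqrt{\widehat{\widetilde v}_{t+1,i}}|^2\) by \(C(L^2R^2+\sigma^2\log(4dT/\delta))\,\vartheta_{2,t+1}\).

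We split this gap into two sources. The first is the fresh fluctuation: \(v_{t+1}-\widetilde v_{t+1}=(1-\beta_2)\boldsymbol\chi_{t+1}\). The second is the transient bias-correction mismatch. If the proxy is corrected with the same factor, this mismatch is absent and only \((1-\beta_2)\chi/(1-\beta_2^{t+1})\) remains. Otherwise the transient contributes a term proportional to \(\beta_2^{t+1}\) times the EMA magnitude. Each coordinate of \(\widehat v\) and \(\widehat{\widetilde v}\) is a convex combination of squared-gradient coordinates (respectively their conditional means), so by \cref{eq:app-adam-coordinate-square-hp} both are bounded by \(M:=C(L^2R^2+\sigma^2\log(4dT/\delta))\).

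We then use \(|\sqrt a-\sqrt b|^2\le |a-b|\) together with \(|a-b|\le \min\{M,\ (1-\beta_2)|\chi_{t+1,i}|/(1-\beta_2^{t+1})\}+\beta_2^{t+1}M\) in the transient case. Combining this with \cref{eq:app-adam-square-gradient-hp} and Young's inequality (\(LR\sigma\sqrt{x}\le L^2R^2+\sigma^2 x\)), each piece is at most \(M(1-\beta_2)\) or \(M\beta_2^{t+1}\). Summing gives \(M\vartheta_{2,t+1}\). The delicate point is the factor \((1-\beta_2^{t+1})^{-1}\) at small \(t\). When \(1-\beta_2^{t+1}\) is small, \(\beta_2^{t+1}\) is of constant order, so we absorb that case using the trivial bound \(|a-b|\le M\le 2M\beta_2^{t+1}\) whenever \(\beta_2^{t+1}\ge1/2\). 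Otherwise \(1-\beta_2^{t+1}\ge1/2\). This case split is the main obstacle; the rest is routine. This establishes \cref{eq:app-adam-preconditioner-hp-simple}.

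For \cref{eq:app-adam-eta-hp}, we use \(\|\boldsymbol\eta_{t+1}\|\le\|\boldsymbol P_{t+1}-\widetilde{\boldsymbol P}_{t+1}\|_{\operatorname{op}}\|\widehat{\boldsymbol m}_{t+1}\|\). Since \(\widehat{\boldsymbol m}_{t+1}=\sum_k w_{1,t+1,k}\boldsymbol g_{k+1}\) is a convex combination, \cref{eq:app-adam-gradient-hp} gives \(\|\widehat{\boldsymbol m}_{t+1}\|^2\le 2L^2R^2+2C\sigma^2(d+\log(4T/\delta))\). Multiplying this by the first bound, and noting \(\log(4dT/\delta)\le \log d+\log(4T/\delta)\le d+\log(4T/\delta)\), gives \(\|\boldsymbol\eta_{t+1}\|^2\le C\epsilon^{-4}(L^2R^2+\sigma^2(d+\log(4T/\delta)))^2\vartheta_{2,t+1}\), after enlarging \(C\).
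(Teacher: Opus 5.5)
Your proposal is correct and follows essentially the paper's route. Both arguments use the coordinatewise bound $|(\sqrt a+\epsilon)^{-1}-(\sqrt b+\epsilon)^{-1}|^2\le\epsilon^{-4}|a-b|$, split the second-moment gap into a bias-correction transient plus the fresh fluctuation $(1-\beta_2)\boldsymbol\chi_{t+1}$ controlled on the event of \cref{lem:app-adam-gradient-hp} with Young's inequality, and finish with $\|\boldsymbol\eta_{t+1}\|\le\|\boldsymbol P_{t+1}-\widetilde{\boldsymbol P}_{t+1}\|_{\operatorname{op}}\|\widehat{\boldsymbol m}_{t+1}\|$. The only difference is bookkeeping: the paper's proxy $\widetilde{\boldsymbol v}_{t+1}=\beta_2\boldsymbol v_t+(1-\beta_2)\boldsymbol s_{t+1}(\boldsymbol\theta_t)$ is not bias-corrected, and writing $\widehat{\boldsymbol v}_{t+1}-\widetilde{\boldsymbol v}_{t+1}=\beta_2^{t+1}\widehat{\boldsymbol v}_{t+1}+(1-\beta_2)\boldsymbol\chi_{t+1}$ keeps the factor $(1-\beta_2^{t+1})^{-1}$ off the fluctuation term, so your case split on $\beta_2^{t+1}\ge 1/2$ is valid but unnecessary there.
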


\begin{proof}[Proof of \cref{lem:app-adam-eta-hp}]
Since both \(\boldsymbol P_{t+1}\) and
\(\widetilde{\boldsymbol P}_{t+1}\) are diagonal, it suffices to compare
their entries coordinatewise. For any \(a,b\ge0\),
\(\left|(\sqrt a+\epsilon)^{-1}-(\sqrt b+\epsilon)^{-1}\right|
\le \epsilon^{-2}|\sqrt a-\sqrt b|
\le \epsilon^{-2}\sqrt{|a-b|}\).
Applying this inequality coordinatewise yields
\(\norm{\boldsymbol P_{t+1}-\widetilde{\boldsymbol P}_{t+1}}_{\operatorname{op}}^2
\le
\epsilon^{-4}
\norm{\widehat{\boldsymbol v}_{t+1}-\widetilde{\boldsymbol v}_{t+1}}_\infty\).

It therefore remains to control the discrepancy between the
bias-corrected second moment \(\widehat{\boldsymbol v}_{t+1}\) and its
predictable proxy \(\widetilde{\boldsymbol v}_{t+1}\). Since
\(\widehat{\boldsymbol v}_{t+1}
=\boldsymbol v_{t+1}/(1-\beta_2^{t+1})\), we may write
\(\widehat{\boldsymbol v}_{t+1}-\widetilde{\boldsymbol v}_{t+1}
=
\frac{\beta_2^{t+1}}{1-\beta_2^{t+1}}\boldsymbol v_{t+1}
+
\boldsymbol v_{t+1}-\widetilde{\boldsymbol v}_{t+1}\).

The first term is the transient bias-correction error. Since
\(\boldsymbol v_0=\boldsymbol 0\), unrolling the second-moment recursion gives
\(\boldsymbol v_{t+1}
=
(1-\beta_2)\sum_{j=0}^{t}\beta_2^{t-j}
(\nabla_{\boldsymbol\theta}
g(\boldsymbol\theta_j,X_{j+1}))^{\odot2}\).
By \cref{lem:app-adam-gradient-hp}
(\cref{eq:app-adam-coordinate-square-hp}), with probability at least
\(1-\delta\), simultaneously for every \(j\in\{0,\ldots,T-1\}\),
\begin{align}
    \norm{\boldsymbol v_{t+1}}_\infty
    &\le
    C(1-\beta_2)
    \sum_{j=0}^{t}
    \beta_2^{t-j}
    \left(
        L^2R^2
        +
        \sigma^2\log\frac{4dT}{\delta}
    \right)
    =
    C(1-\beta_2^{t+1})
    \left(
        L^2R^2
        +
        \sigma^2\log\frac{4dT}{\delta}
    \right).
    \label{eq:app-adam-v-hp}
\end{align}

For the second term, the definition of
\(\widetilde{\boldsymbol v}_{t+1}\) gives
\(\boldsymbol v_{t+1}-\widetilde{\boldsymbol v}_{t+1}
=
(1-\beta_2)
[(\nabla_{\boldsymbol\theta}
g(\boldsymbol\theta_t,X_{t+1}))^{\odot2}
-
\boldsymbol s_{t+1}(\boldsymbol\theta_t)]
=
(1-\beta_2)\boldsymbol\chi_{t+1}\).
Hence, by \cref{lem:app-adam-gradient-hp}
(\cref{eq:app-adam-square-gradient-hp}),
\begin{equation}
    \norm{
        \boldsymbol v_{t+1}
        -
        \widetilde{\boldsymbol v}_{t+1}
    }_\infty
    \le
    C(1-\beta_2)
    \left(
        LR\sigma\sqrt{\log\frac{4dT}{\delta}}
        +
        \sigma^2\log\frac{4dT}{\delta}
    \right).
    \label{eq:app-adam-v-vtilde-hp-bound}
\end{equation}

Combining the preceding two bounds with the decomposition of
\(\widehat{\boldsymbol v}_{t+1}-\widetilde{\boldsymbol v}_{t+1}\) yields
\begin{align}
    \norm{
        \widehat{\boldsymbol v}_{t+1}
        -
        \widetilde{\boldsymbol v}_{t+1}
    }_\infty
    \le
    C\Bigg[
        &\beta_2^{t+1}
        \left(
            L^2R^2
            +
            \sigma^2\log\frac{4dT}{\delta}
        \right)
        +
        (1-\beta_2)
        \left(
            LR\sigma\sqrt{\log\frac{4dT}{\delta}}
            +
            \sigma^2\log\frac{4dT}{\delta}
        \right)
    \Bigg].
    \label{eq:app-adam-vhat-vtilde-hp-bound}
\end{align}
Young's inequality gives
\(LR\sigma\sqrt{\log(4dT/\delta)}
\le
\frac12L^2R^2
+
\frac12\sigma^2\log(4dT/\delta)\).
Recalling that
\(\vartheta_{2,t+1}:=\beta_2^{t+1}+(1-\beta_2)\), we therefore obtain
\[
    \norm{
        \boldsymbol P_{t+1}
        -
        \widetilde{\boldsymbol P}_{t+1}
    }_{\operatorname{op}}^2
    \le
    C\epsilon^{-4}
    \left(
        L^2R^2
        +
        \sigma^2\log\frac{4dT}{\delta}
    \right)
    \vartheta_{2,t+1},
\]
which proves \cref{eq:app-adam-preconditioner-hp-simple}.

It remains to control \(\widehat{\boldsymbol m}_{t+1}\). Unrolling the
bias-corrected first moment gives
\(\widehat{\boldsymbol m}_{t+1}
=
\sum_{j=0}^{t}
w_{1,t+1,j}
\nabla_{\boldsymbol\theta}
g(\boldsymbol\theta_j,X_{j+1})\),
where \(w_{1,t+1,j}\ge0\) and
\(\sum_{j=0}^{t}w_{1,t+1,j}=1\).
By convexity of the squared norm,
\(\norm{\widehat{\boldsymbol m}_{t+1}}^2
\le
\sum_{j=0}^{t}
w_{1,t+1,j}
\norm{\nabla_{\boldsymbol\theta}
g(\boldsymbol\theta_j,X_{j+1})}^2\).
Therefore, by \cref{lem:app-adam-gradient-hp}
(\cref{eq:app-adam-gradient-hp}),
\(\norm{\widehat{\boldsymbol m}_{t+1}}^2
\le
C\left(
L^2R^2
+
\sigma^2(d+\log(4T/\delta))
\right)\).

Finally, by definition,
\(\boldsymbol\eta_{t+1}
=
(\boldsymbol P_{t+1}-\widetilde{\boldsymbol P}_{t+1})
\widehat{\boldsymbol m}_{t+1}\), and hence
\(\norm{\boldsymbol\eta_{t+1}}^2
\le
\norm{\boldsymbol P_{t+1}-\widetilde{\boldsymbol P}_{t+1}}_{\operatorname{op}}^2
\norm{\widehat{\boldsymbol m}_{t+1}}^2\).
Combining the preceding two estimates gives
\[
    \norm{\boldsymbol\eta_{t+1}}^2
    \le
    C\epsilon^{-4}
    \left(
        L^2R^2
        +
        \sigma^2
        \left(
            d+\log\frac{4T}{\delta}
        \right)
    \right)^2
    \vartheta_{2,t+1},
\]
which proves \cref{eq:app-adam-eta-hp}.
\end{proof}

We now use the predictable decomposition in \cref{eq:app-adam-predictable-update} to derive the one-step tracking recursion. This proof follows similarly to \cite{sahu2026provable, NEURIPS2021_62e7f2e0}.

\vspace{0.5em}

\begin{lemma}[One-step recursive relation]
\label{lem:app-adam-recursive-relation}
Assume \cref{assumption:adaptive-strong-monotonicity-restated} and suppose $\alpha
    \le
    \min\left\{
        \frac{\mu\epsilon^{2}}{4L^2},
        \frac{1}{\mu}
    \right\}$. 
Then, for every \(t\ge 0\),
\begin{align}
    \norm{\boldsymbol{\theta}_{t+1}-\boldsymbol{\theta}_{t+1}^\star}^2
    \le\;&
    \left(1-\frac12\alpha \mu\right)
    \norm{\boldsymbol{\theta}_t-\boldsymbol{\theta}_t^\star}^2
    +\frac{5}{\alpha \mu}
    \norm{\boldsymbol{\theta}_t^\star-\boldsymbol{\theta}_{t+1}^\star}^2
    +
    \frac{10\alpha \epsilon^{-2}}{\mu}
    \norm{\boldsymbol r_{t+1}}^2
    +
    \frac{10\alpha}{\mu}
    \norm{\boldsymbol\eta_{t+1}}^2 .
    \label{eq:app-adam-descent-recursion}
\end{align}
\end{lemma}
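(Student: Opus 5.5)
The plan is a standard projected-contraction argument built on the predictable decomposition \cref{eq:app-adam-predictable-update}. Since \(\boldsymbol\theta_{t+1}^\star\in\Theta\) and the Euclidean projection \(\mathcal P_\Theta\) is nonexpansive and fixes points of \(\Theta\), we have \(\norm{\boldsymbol\theta_{t+1}-\boldsymbol\theta_{t+1}^\star}\le\norm{\boldsymbol z_{t+1}-\boldsymbol\theta_{t+1}^\star}\). Because \(\boldsymbol\theta_{t+1}^\star\) is interior, first-order optimality gives \(\bar{\boldsymbol g}_{t+1}(\boldsymbol\theta_{t+1}^\star)=\boldsymbol 0\). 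Using \(\boldsymbol d_t=\boldsymbol\theta_t-\boldsymbol\theta_{t+1}^\star\), we can therefore write
\[
\boldsymbol z_{t+1}-\boldsymbol\theta_{t+1}^\star=\boldsymbol a_t-\boldsymbol u_t,
\]
where
\[
\boldsymbol a_t:=\boldsymbol d_t-\alpha\widetilde{\boldsymbol P}_{t+1}\bigl(\bar{\boldsymbol g}_{t+1}(\boldsymbol\theta_t)-\bar{\boldsymbol g}_{t+1}(\boldsymbol\theta_{t+1}^\star)\bigr),
\qquad
\boldsymbol u_t:=\alpha(\widetilde{\boldsymbol P}_{t+1}\boldsymbol r_{t+1}+\boldsymbol\eta_{t+1}).
\]

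First, I contract the deterministic part \(\boldsymbol a_t\). Expanding the square, \cref{assumption:adaptive-strong-monotonicity-restated} with \(\boldsymbol\theta=\boldsymbol\theta_t\) and \(\boldsymbol\theta'=\boldsymbol\theta_{t+1}^\star\) (both in \(\Theta\)) bounds the cross term by \(-2\alpha\mu\norm{\boldsymbol d_t}^2\). For the quadratic term, \(\widetilde{\boldsymbol P}_{t+1}\) is diagonal with entries at most \(q_+=1/\epsilon\), and \cref{assumption:lipschitz-restate} applies, so it is at most \(\alpha^2L^2\epsilon^{-2}\norm{\boldsymbol d_t}^2\). The stepsize condition \(\alpha\le\mu\epsilon^2/(4L^2)\) gives \(\alpha^2L^2\epsilon^{-2}\le\alpha\mu/4\). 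Hence
\[
\norm{\boldsymbol a_t}^2\le(1-\tfrac74\alpha\mu)\norm{\boldsymbol d_t}^2 .
\]

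Second, I absorb the perturbation \(\boldsymbol u_t\) with Young's inequality \(\norm{\boldsymbol a-\boldsymbol u}^2\le(1+c)\norm{\boldsymbol a}^2+(1+c^{-1})\norm{\boldsymbol u}^2\), taking \(c=\alpha\mu/2\). This gives \((1+\alpha\mu/2)(1-\tfrac74\alpha\mu)\le1-\tfrac54\alpha\mu\). Since \(\alpha\mu\le1\), we also have \(1+2/(\alpha\mu)\le3/(\alpha\mu)\). Together with \(\norm{\boldsymbol u_t}^2\le2\alpha^2(\epsilon^{-2}\norm{\boldsymbol r_{t+1}}^2+\norm{\boldsymbol\eta_{t+1}}^2)\), the perturbation contributes at most \(\tfrac{6\alpha}{\mu}(\epsilon^{-2}\norm{\boldsymbol r_{t+1}}^2+\norm{\boldsymbol\eta_{t+1}}^2)\).

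Third, I split \(\boldsymbol d_t=\boldsymbol e_t+\boldsymbol\Delta_t\) by Young again with parameter \(c'=\alpha\mu/2\). This yields \((1-\tfrac54\alpha\mu)(1+\tfrac12\alpha\mu)\le1-\tfrac12\alpha\mu\) on \(\norm{\boldsymbol e_t}^2\). The drift coefficient is at most \((1+2/(\alpha\mu))\le3/(\alpha\mu)\le5/(\alpha\mu)\). Collecting terms gives \cref{eq:app-adam-descent-recursion}, since \(6\le10\).

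The only delicate point is the constant bookkeeping: the two Young parameters must be chosen so the contraction factors multiply to at most \(1-\alpha\mu/2\) while the inverse factors stay \(O(1/(\alpha\mu))\). Both constraints \(\alpha\le\mu\epsilon^2/(4L^2)\) and \(\alpha\mu\le1\) are used exactly there. Apart from this, I need to check that \(\boldsymbol\theta_t\in\Theta\) (true by projection, or by \(\boldsymbol\theta_0\in\Theta\) at \(t=0\)), so that the monotonicity assumption applies.
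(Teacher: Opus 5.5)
Your proposal is correct and follows the paper's proof essentially step for step. Both arguments use projection nonexpansiveness, contract the predictable step $\boldsymbol d_t-\alpha\widetilde{\boldsymbol P}_{t+1}\bar{\boldsymbol g}_{t+1}(\boldsymbol\theta_t)$ via adaptive strong monotonicity together with $\widetilde{\boldsymbol P}_{t+1}\preceq\epsilon^{-1}\boldsymbol I$, reinsert $\widetilde{\boldsymbol P}_{t+1}\boldsymbol r_{t+1}+\boldsymbol\eta_{t+1}$ by Young's inequality, and split $\boldsymbol d_t=\boldsymbol e_t+\boldsymbol\Delta_t$ by Young again. The only difference is that you keep the sharper factor $1-\tfrac74\alpha\mu$ and use Young parameters $\alpha\mu/2$ rather than $\alpha\mu/4$, which gives slightly smaller constants.

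One detail you leave unstated: multiplying the split of $\norm{\boldsymbol d_t}^2$ by $1-\tfrac54\alpha\mu$ requires that factor to be nonnegative. It is, because Cauchy--Schwarz in the monotonicity assumption gives $\mu\le L/\epsilon$, hence $\alpha\mu\le 1/4$; and if the factor were negative, the bound would hold trivially anyway.
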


\begin{proof}[Proof of \cref{lem:app-adam-recursive-relation}]
Define the pre-projection point
\(\boldsymbol z_{t+1}
:=\boldsymbol{\theta}_t-\alpha\boldsymbol P_{t+1}
\widehat{\boldsymbol m}_{t+1}\).
By the predictable decomposition in
\cref{eq:app-adam-predictable-update},
\(\boldsymbol z_{t+1}
=
\boldsymbol{\theta}_t
-\alpha\widetilde{\boldsymbol P}_{t+1}
\bar{\boldsymbol g}_{t+1}(\boldsymbol{\theta}_t)
-\alpha\widetilde{\boldsymbol P}_{t+1}\boldsymbol r_{t+1}
-\alpha\boldsymbol\eta_{t+1}\).
Since
\(\boldsymbol{\theta}_{t+1}
=\mathcal{P}_\Theta(\boldsymbol z_{t+1})\) and
\(\boldsymbol{\theta}_{t+1}^\star\in\Theta\), nonexpansiveness of the
Euclidean projection (\cref{lem:metric-projection-facts}) gives
\(\norm{\boldsymbol{\theta}_{t+1}
-\boldsymbol{\theta}_{t+1}^\star}^2
\le
\norm{\boldsymbol z_{t+1}
-\boldsymbol{\theta}_{t+1}^\star}^2\).

Recall
\(\boldsymbol d_t
:=\boldsymbol{\theta}_t-\boldsymbol{\theta}_{t+1}^\star\), and define
\(\boldsymbol u_t
:=
\boldsymbol d_t
-\alpha\widetilde{\boldsymbol P}_{t+1}
\bar{\boldsymbol g}_{t+1}(\boldsymbol{\theta}_t)\).
Then
\(\boldsymbol z_{t+1}-\boldsymbol{\theta}_{t+1}^\star
=
\boldsymbol u_t
-\alpha\widetilde{\boldsymbol P}_{t+1}\boldsymbol r_{t+1}
-\alpha\boldsymbol\eta_{t+1}\). Since
\(\bar{\boldsymbol g}_{t+1}(\boldsymbol{\theta}_{t+1}^\star)
=\boldsymbol 0\), adaptive strong monotonicity
(\cref{assumption:adaptive-strong-monotonicity-restated}) implies
$\left\langle
        \boldsymbol d_t,\,
        \widetilde{\boldsymbol P}_{t+1}
        \bar{\boldsymbol g}_{t+1}(\boldsymbol{\theta}_t)
    \right\rangle
    \ge
    \mu\norm{\boldsymbol d_t}^2.
$

Moreover, by \(L\)-Lipschitz continuity of
\(\bar{\boldsymbol g}_{t+1}\),
\(\bar{\boldsymbol g}_{t+1}(\boldsymbol{\theta}_{t+1}^\star)=0\), and
\(\widetilde{\boldsymbol P}_{t+1}\preceq q_+\boldsymbol I\), we have
\(\norm{\widetilde{\boldsymbol P}_{t+1}
\bar{\boldsymbol g}_{t+1}(\boldsymbol{\theta}_t)}^2
\le
q_+^2L^2\norm{\boldsymbol d_t}^2\).
Consequently,
\begin{align}
    \norm{\boldsymbol u_t}^2
    &=
    \norm{\boldsymbol d_t}^2
    -2\alpha
    \left\langle
        \boldsymbol d_t,\,
        \widetilde{\boldsymbol P}_{t+1}
        \bar{\boldsymbol g}_{t+1}(\boldsymbol{\theta}_t)
    \right\rangle
    +
    \alpha^2
    \norm{
        \widetilde{\boldsymbol P}_{t+1}
        \bar{\boldsymbol g}_{t+1}(\boldsymbol{\theta}_t)
    }^2
    \notag\\
    &\le
    \left(
        1-2\alpha\mu+\alpha^2q_+^2L^2
    \right)
    \norm{\boldsymbol d_t}^2
    \le
    (1-\alpha\mu)\norm{\boldsymbol d_t}^2,
    \label{eq:clean-contract-theta}
\end{align}
where the last inequality follows from
\(\alpha\le\mu/(4q_+^2L^2)\).

We now add back the two Adam-specific error terms. Let
\(\tau:=\alpha\mu/4\). By Young's inequality and the projection
inequality above,
\begin{align}
    \norm{
        \boldsymbol{\theta}_{t+1}
        -
        \boldsymbol{\theta}_{t+1}^\star
    }^2
    &\le
    (1+\tau)\norm{\boldsymbol u_t}^2
    +(1+\tau^{-1})\alpha^2
    \norm{
        \widetilde{\boldsymbol P}_{t+1}\boldsymbol r_{t+1}
        +
        \boldsymbol\eta_{t+1}
    }^2
    \notag\\
    &\le
    (1+\tau)\norm{\boldsymbol u_t}^2
    +
    2(1+\tau^{-1})\alpha^2
    \left(
        q_+^2\norm{\boldsymbol r_{t+1}}^2
        +
        \norm{\boldsymbol\eta_{t+1}}^2
    \right).
    \label{eq:young-error-theta}
\end{align}
Since \(\alpha\mu\le1\), we have \(\tau\le1/4\), and hence
\((1+\tau)(1-\alpha\mu)
\le1-\frac34\alpha\mu\).
Also,
\(2(1+\tau^{-1})\alpha^2
=
2\alpha^2+8\alpha/\mu
\le10\alpha/\mu\).
Combining these bounds with
\cref{eq:clean-contract-theta,eq:young-error-theta} yields
\begin{equation}
    \norm{
        \boldsymbol{\theta}_{t+1}
        -
        \boldsymbol{\theta}_{t+1}^\star
    }^2
    \le
    \left(1-\frac34\alpha\mu\right)
    \norm{\boldsymbol d_t}^2
    +
    \frac{10\alpha q_+^2}{\mu}
    \norm{\boldsymbol r_{t+1}}^2
    +
    \frac{10\alpha}{\mu}
    \norm{\boldsymbol\eta_{t+1}}^2.
    \label{eq:descent-before-drift-theta}
\end{equation}

It remains to express the shifted error \(\boldsymbol d_t\) in terms of
the current tracking error and the minimizer drift. Since
\(\boldsymbol d_t
=
\boldsymbol{\theta}_t-\boldsymbol{\theta}_t^\star
+
\boldsymbol{\theta}_t^\star-\boldsymbol{\theta}_{t+1}^\star\),
Young's inequality with parameter \(\alpha\mu/4\) gives
\begin{align}
    \norm{\boldsymbol d_t}^2
    &\le
    \left(1+\frac{\alpha\mu}{4}\right)
    \norm{
        \boldsymbol{\theta}_t-\boldsymbol{\theta}_t^\star
    }^2
    +
    \left(1+\frac{4}{\alpha\mu}\right)
    \norm{
        \boldsymbol{\theta}_t^\star
        -
        \boldsymbol{\theta}_{t+1}^\star
    }^2
    \notag\\
    &\le
    \left(1+\frac{\alpha\mu}{4}\right)
    \norm{
        \boldsymbol{\theta}_t-\boldsymbol{\theta}_t^\star
    }^2
    +
    \frac{5}{\alpha\mu}
    \norm{
        \boldsymbol{\theta}_t^\star
        -
        \boldsymbol{\theta}_{t+1}^\star
    }^2,
    \label{eq:shifted-error-young-theta}
\end{align}
where the last step again uses \(\alpha\mu\le1\). Finally,
\(\left(1-\frac34\alpha\mu\right)
\left(1+\frac14\alpha\mu\right)
\le1-\frac12\alpha\mu\).
Substituting \cref{eq:shifted-error-young-theta} into
\cref{eq:descent-before-drift-theta} and using
\(q_+^2=\epsilon^{-2}\) therefore gives
\[
    \norm{
        \boldsymbol{\theta}_{t+1}
        -
        \boldsymbol{\theta}_{t+1}^\star
    }^2
    \le
    \left(1-\frac12\alpha\mu\right)
    \norm{
        \boldsymbol{\theta}_t
        -
        \boldsymbol{\theta}_t^\star
    }^2
    +
    \frac{5}{\alpha\mu}
    \norm{
        \boldsymbol{\theta}_t^\star
        -
        \boldsymbol{\theta}_{t+1}^\star
    }^2
    +
    \frac{10\alpha\epsilon^{-2}}{\mu}
    \norm{\boldsymbol r_{t+1}}^2
    +
    \frac{10\alpha}{\mu}
    \norm{\boldsymbol\eta_{t+1}}^2,
\]
which is exactly \cref{eq:app-adam-descent-recursion}.
\end{proof}

Having reduced the tracking recursion to the two Adam-specific error terms
\(\boldsymbol r_{t+1}\) and \(\boldsymbol\eta_{t+1}\), it remains to control
the first-moment tracking error. The high-probability control of
\(\boldsymbol\eta_{t+1}\) is provided by
\cref{lem:app-adam-eta-hp}. We now turn to
\(\boldsymbol r_{t+1}\), which has two sources: a bias from averaging stale
conditional mean gradients and a stochastic noise term obtained by averaging
martingale differences.

\vspace{0.5em}

\begin{lemma}[Bias--noise decomposition of \(\boldsymbol r_{t+1}\)]
\label{lem:app-adam-r-split}
For every \(t\ge0\),
$
    \boldsymbol r_{t+1}
    =
    \boldsymbol B_{t+1}^{(1)}
    +
    \boldsymbol N_{t+1}^{(1)},
$
where
\begin{align}
    \boldsymbol B_{t+1}^{(1)}
    &:=
    \sum_{k=0}^{t}
    w_{1,t+1,k}
    \left(
        \bar{\boldsymbol g}_{k+1}(\boldsymbol\theta_k)
        -
        \bar{\boldsymbol g}_{t+1}(\boldsymbol\theta_t)
    \right),
    \label{eq:app-adam-B1-def}\\
    \boldsymbol N_{t+1}^{(1)}
    &:=
    \sum_{k=0}^{t}
    w_{1,t+1,k}
    \boldsymbol\xi_{k+1}.
    \label{eq:app-adam-N1-def}
\end{align}
Moreover, if \(\operatorname{diam}(\Theta)\le R\), then $\norm{\boldsymbol B_{t+1}^{(1)}} \le
    2LR\,
    \frac{\beta_1(1-\beta_1^t)}
    {1-\beta_1^{t+1}}
    \le
    2LR\beta_1.$
    
\end{lemma}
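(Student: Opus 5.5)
The decomposition is an algebraic identity, and the bound then follows from the Lipschitz property and the diameter bound.

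\textbf{Step 1: the identity.} Unroll the first-moment recursion from \(\boldsymbol m_0=\boldsymbol 0\):
\[
\boldsymbol m_{t+1}=(1-\beta_1)\sum_{k=0}^{t}\beta_1^{t-k}\,\nabla_{\boldsymbol\theta}g(\boldsymbol\theta_k,X_{k+1}).
\]
Dividing by \(1-\beta_1^{t+1}\) gives
\[
\widehat{\boldsymbol m}_{t+1}=\sum_{k=0}^{t}w_{1,t+1,k}\,\nabla_{\boldsymbol\theta}g(\boldsymbol\theta_k,X_{k+1}),
\qquad \sum_{k=0}^{t}w_{1,t+1,k}=1 .
\]
Substitute \(\nabla_{\boldsymbol\theta}g(\boldsymbol\theta_k,X_{k+1})=\bar{\boldsymbol g}_{k+1}(\boldsymbol\theta_k)+\boldsymbol\xi_{k+1}\). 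Because the weights sum to one, we may also write \(\bar{\boldsymbol g}_{t+1}(\boldsymbol\theta_t)=\sum_{k=0}^{t} w_{1,t+1,k}\,\bar{\boldsymbol g}_{t+1}(\boldsymbol\theta_t)\). Subtracting this from \(\widehat{\boldsymbol m}_{t+1}\) yields \(\boldsymbol r_{t+1}=\boldsymbol B^{(1)}_{t+1}+\boldsymbol N^{(1)}_{t+1}\) exactly.

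\textbf{Step 2: the bias bound.} The \(k=t\) summand of \(\boldsymbol B^{(1)}_{t+1}\) vanishes, so only \(k\le t-1\) contributes.

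For each such \(k\), the interior minimizers satisfy \(\bar{\boldsymbol g}_{k+1}(\boldsymbol\theta^\star_{k+1})=\boldsymbol 0\). Lipschitz continuity (Assumption~\ref{assumption:lipschitz-restate}) then gives
\[
\|\bar{\boldsymbol g}_{k+1}(\boldsymbol\theta_k)\|\le L\|\boldsymbol\theta_k-\boldsymbol\theta^\star_{k+1}\|\le LR,
\]
as in \cref{eq:app-adam-mean-gradient-bound}. The same argument gives \(\|\bar{\boldsymbol g}_{t+1}(\boldsymbol\theta_t)\|\le LR\). Each summand therefore has norm at most \(2LR\).

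The triangle inequality gives
\[
\|\boldsymbol B^{(1)}_{t+1}\|\le 2LR\sum_{k=0}^{t-1}w_{1,t+1,k}.
\]
The remaining weight sum is
\[
\sum_{k=0}^{t-1}w_{1,t+1,k}=1-w_{1,t+1,t}=1-\frac{1-\beta_1}{1-\beta_1^{t+1}}=\frac{\beta_1-\beta_1^{t+1}}{1-\beta_1^{t+1}}=\frac{\beta_1(1-\beta_1^t)}{1-\beta_1^{t+1}}.
\]

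\textbf{Step 3: the simplified bound.} We have \(1-\beta_1^t\le 1-\beta_1^{t+1}\), so the ratio above is at most \(\beta_1\), which gives \(\|\boldsymbol B^{(1)}_{t+1}\|\le 2LR\beta_1\).

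Why not apply Lipschitz directly to \(\bar{\boldsymbol g}_{k+1}(\boldsymbol\theta_k)-\bar{\boldsymbol g}_{t+1}(\boldsymbol\theta_t)\)? The two maps belong to different time indices, so the Lipschitz assumption does not apply to their difference. The step that needs this care is bounding each mean gradient separately through its own minimizer, which is why the constant is \(2LR\) rather than \(LR\).
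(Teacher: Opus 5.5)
Your proof is correct and follows essentially the same route as the paper. You unroll the bias-corrected first moment, use that the weights sum to one to get the exact decomposition, bound each mean gradient by \(LR\) through its own interior minimizer (giving \(2LR\) per stale summand), and sum the weights over \(k\le t-1\). The only cosmetic difference is that you compute that weight sum as \(1-w_{1,t+1,t}\) rather than by a direct geometric series. This also covers \(t=0\) automatically, so you don't need the paper's separate remark for that case.
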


\begin{proof}[Proof of \cref{lem:app-adam-r-split}]
By unrolling the bias-corrected first moment, we have
\[
    \widehat{\boldsymbol m}_{t+1}
    =
    \sum_{k=0}^{t}
    w_{1,t+1,k}
    \nabla_{\boldsymbol\theta}
    g(\boldsymbol\theta_k,X_{k+1})
    =
    \sum_{k=0}^{t}
    w_{1,t+1,k}
    \left(
        \bar{\boldsymbol g}_{k+1}(\boldsymbol\theta_k)
        +
        \boldsymbol\xi_{k+1}
    \right).
\]
Subtracting
\(\bar{\boldsymbol g}_{t+1}(\boldsymbol\theta_t)\) and using
\(\sum_{k=0}^{t}w_{1,t+1,k}=1\) gives
\[
    \boldsymbol r_{t+1}
    =
    \sum_{k=0}^{t}
    w_{1,t+1,k}
    \left(
        \bar{\boldsymbol g}_{k+1}(\boldsymbol\theta_k)
        -
        \bar{\boldsymbol g}_{t+1}(\boldsymbol\theta_t)
    \right)
    +
    \sum_{k=0}^{t}
    w_{1,t+1,k}
    \boldsymbol\xi_{k+1},
\]
which gives the desired decomposition. It remains to bound the bias
term. Since the conditional minimizers lie in the interior of
\(\Theta\),
\(\bar{\boldsymbol g}_{k+1}(\boldsymbol\theta_{k+1}^\star)
=\boldsymbol0\).
Thus, by \(L\)-Lipschitz continuity and
\(\operatorname{diam}(\Theta)\le R\), for every \(k\ge0\),
\[
    \norm{\bar{\boldsymbol g}_{k+1}(\boldsymbol\theta_k)}
    =
    \norm{
        \bar{\boldsymbol g}_{k+1}(\boldsymbol\theta_k)
        -
        \bar{\boldsymbol g}_{k+1}
        (\boldsymbol\theta_{k+1}^\star)
    }
    \le
    L\norm{
        \boldsymbol\theta_k-\boldsymbol\theta_{k+1}^\star
    }
    \le
    LR.
\]
Consequently, for \(0\le k\le t-1\),
\begin{align}
    \norm{
        \bar{\boldsymbol g}_{k+1}(\boldsymbol\theta_k)
        -
        \bar{\boldsymbol g}_{t+1}(\boldsymbol\theta_t)
    }
    &\le
    \norm{\bar{\boldsymbol g}_{k+1}(\boldsymbol\theta_k)}
    +
    \norm{\bar{\boldsymbol g}_{t+1}(\boldsymbol\theta_t)}
    \notag\\
    &\le
    2LR.
    \label{eq:app-adam-stale-gradient-difference}
\end{align}

Since the \(k=t\) summand in
\cref{eq:app-adam-B1-def} is zero and the Adam weights are
nonnegative, averaging \cref{eq:app-adam-stale-gradient-difference}
gives
\[
    \norm{\boldsymbol B_{t+1}^{(1)}}
    \le
    2LR
    \sum_{k=0}^{t-1}w_{1,t+1,k}.
\]
For \(t\ge1\), the geometric-series identity yields
\begin{align*}
    \sum_{k=0}^{t-1}w_{1,t+1,k}
    &=
    \frac{1-\beta_1}{1-\beta_1^{t+1}}
    \sum_{k=0}^{t-1}\beta_1^{t-k}
    =
    \frac{\beta_1(1-\beta_1^t)}
    {1-\beta_1^{t+1}}
    \le
    \beta_1,
\end{align*}
where the last inequality uses
\(1-\beta_1^t\le1-\beta_1^{t+1}\).
Therefore,
\[
    \norm{\boldsymbol B_{t+1}^{(1)}}
    \le
    2LR\,
    \frac{\beta_1(1-\beta_1^t)}
    {1-\beta_1^{t+1}}
    \le
    2LR\beta_1.
\]
When \(t=0\), \(\boldsymbol B_1^{(1)}=\boldsymbol0\), and the
same bound holds since \(1-\beta_1^0=0\).
This proves the claim.
\end{proof}

The bias component of \(\boldsymbol r_{t+1}\) is therefore controlled
pathwise by the total weight assigned to past conditional mean
gradients, which is at most \(\beta_1\). The stochastic component
is a weighted martingale sum. Under conditional sub-Gaussian
gradient noise, this weighted sum remains sub-Gaussian with effective
variance determined by the sum of squared Adam weights
\(\kappa_{1,t+1}\).

\vspace{0.5em}

\begin{lemma}[High-probability control of \(\boldsymbol r_{t+1}\)]
\label{lem:app-adam-r-hp}
Suppose Assumption~\ref{assump:cond-subgauss-noise-restate} holds and
\(\operatorname{diam}(\Theta)\le R\). Then there exists a universal
constant \(C>0\) such that, for every \(T\ge1\) and
\(\delta\in(0,1)\), with probability at least \(1-\delta\),
simultaneously for every \(t\in\{0,\ldots,T-1\}\),
\begin{equation}
    \norm{\boldsymbol r_{t+1}}
    \le
    2LR\,
    \frac{\beta_1(1-\beta_1^t)}
    {1-\beta_1^{t+1}}
    +
    C\sigma
    \sqrt{
        \kappa_{1,t+1}
        \left(
            d+\log\frac{2T}{\delta}
        \right)
    }.
    \label{eq:app-adam-r-hp-explicit}
\end{equation}
\end{lemma}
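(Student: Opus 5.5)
By \cref{lem:app-adam-r-split}, $\boldsymbol r_{t+1}=\boldsymbol B_{t+1}^{(1)}+\boldsymbol N_{t+1}^{(1)}$. The bias part is already bounded pathwise by $2LR\,\beta_1(1-\beta_1^t)/(1-\beta_1^{t+1})$. By the triangle inequality, it therefore suffices to show that, with probability at least $1-\delta$ and simultaneously for all $t\in\{0,\ldots,T-1\}$,
\[
    \norm{\boldsymbol N_{t+1}^{(1)}}\le C\sigma\sqrt{\kappa_{1,t+1}\left(d+\log(2T/\delta)\right)}.
\]
Here $\boldsymbol N_{t+1}^{(1)}=\sum_{k=0}^{t}w_{1,t+1,k}\boldsymbol\xi_{k+1}$. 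The weights $w_{1,t+1,k}$ are deterministic for fixed $t$, and $\sum_k w_{1,t+1,k}^2=\kappa_{1,t+1}$ by the identity recorded after \cref{eq:adam-constants}.

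\textbf{Step 1 (fixed direction).} Fix $t$ and a unit vector $\boldsymbol u\in\mathbb S^{d-1}$. The sequence $M_j:=\sum_{k=0}^{j}w_{1,t+1,k}\langle\boldsymbol u,\boldsymbol\xi_{k+1}\rangle$, for $j=0,\ldots,t$, is a martingale with respect to $(\mathcal F_{j+1})$. Its increments satisfy $\|w_{1,t+1,k}\langle\boldsymbol u,\boldsymbol\xi_{k+1}\rangle\|_{\Psi_2\mid\mathcal F_k}\le w_{1,t+1,k}\sigma$ by \cref{assump:cond-subgauss-noise-restate}. The martingale sub-Gaussian Hoeffding inequality (\cref{thm:martingale-subgaussian-hoeffding}, used in conditional one-step form in the proof of \cref{lem:app-adam-gradient-hp}) then gives
\[
    \mathbb P\big(|\langle\boldsymbol u,\boldsymbol N_{t+1}^{(1)}\rangle|\ge x\big)\le 2\exp\!\big(-cx^2/(\sigma^2\kappa_{1,t+1})\big).
\]
The iterated-conditioning argument works because each conditional MGF bound $\mathbb E[\exp(\lambda w\langle\boldsymbol u,\boldsymbol\xi_{k+1}\rangle)\mid\mathcal F_k]\le\exp(C\lambda^2w^2\sigma^2)$ holds with a deterministic right-hand side.

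\textbf{Step 2 (net and union bound).} Take a $1/2$-net $\mathcal N$ of $\mathbb S^{d-1}$ with $|\mathcal N|\le5^d$. Since $\norm{\boldsymbol x}\le2\max_{\boldsymbol u\in\mathcal N}|\langle\boldsymbol u,\boldsymbol x\rangle|$, exactly as in \cref{lem:app-adam-gradient-hp}, a union bound over $\mathcal N$ gives
\[
    \mathbb P\big(\norm{\boldsymbol N_{t+1}^{(1)}}\ge C\sigma\sqrt{\kappa_{1,t+1}(d+x)}\big)\le2e^{-x}.
\]
Set $x=\log(2T/\delta)$ and take a union bound over the $T$ values of $t$; the total failure probability is at most $\delta$. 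Adding the deterministic bias bound yields \cref{eq:app-adam-r-hp-explicit}.

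\textbf{Main obstacle.} The only delicate point is Step 1. The weights depend on the terminal index $t$, so the summands do not form a single martingale across different $t$. This is why we fix $t$ first and pay for it with the union bound over $t$, which costs only $\log T$. We must also check that the Hoeffding bound uses the sum of squared weights $\kappa_{1,t+1}$ rather than the square of the sum. This holds because the weights are deterministic, so no stopping-time or self-normalization argument is needed.
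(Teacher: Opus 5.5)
Your proposal is correct and matches the paper's proof essentially step for step. Both use the pathwise bias bound from \cref{lem:app-adam-r-split}, a fixed-$t$, fixed-direction application of the martingale sub-Gaussian Hoeffding inequality with variance proxy $\sigma^2\kappa_{1,t+1}$, a $1/2$-net union bound over $\mathbb S^{d-1}$, and a final union bound over $t$ with $x=\log(2T/\delta)$. Your choice to fix $t$ first, because the weights $w_{1,t+1,k}$ depend on the terminal index, is exactly how the paper organizes the argument.
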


\begin{proof}[Proof of \cref{lem:app-adam-r-hp}]
Recall from \cref{lem:app-adam-r-split} that
\(
    \boldsymbol r_{t+1}
    =
    \boldsymbol B_{t+1}^{(1)}
    +
    \boldsymbol N_{t+1}^{(1)}
\),
where
\(
    \boldsymbol N_{t+1}^{(1)}
    =
    \sum_{k=0}^{t}
    w_{1,t+1,k}\boldsymbol\xi_{k+1}
\).
The bias component satisfies
\[
    \norm{\boldsymbol B_{t+1}^{(1)}}
    \le
    2LR\,
    \frac{\beta_1(1-\beta_1^t)}
    {1-\beta_1^{t+1}},
\]
so it remains to control \(\boldsymbol N_{t+1}^{(1)}\).

Fix \(t\in\{0,\ldots,T-1\}\) and
\(\boldsymbol u\in\mathbb S^{d-1}\). Define
\(Z_{k+1}:=
w_{1,t+1,k}
\langle\boldsymbol u,\boldsymbol\xi_{k+1}\rangle\),
\(k=0,\ldots,t\).
Since \(w_{1,t+1,k}\) is deterministic and
\(\mathbb E[\boldsymbol\xi_{k+1}\mid\mathcal F_k]=\boldsymbol0\),
the sequence \((Z_{k+1})_{k=0}^{t}\) is a martingale difference
sequence. Moreover,
Assumption~\ref{assump:cond-subgauss-noise-restate} gives
\(\norm{Z_{k+1}}_{\Psi_2\mid\mathcal F_k}
\le\sigma|w_{1,t+1,k}|\) a.s.
Therefore, by \cref{thm:martingale-subgaussian-hoeffding},
\[
    \mathbb P\left(
        \left|
            \left\langle
                \boldsymbol u,
                \boldsymbol N_{t+1}^{(1)}
            \right\rangle
        \right|
        \ge s
    \right)
    \le
    2\exp\left(
        -\frac{c s^2}
        {\sigma^2\sum_{k=0}^{t}w_{1,t+1,k}^2}
    \right)
    =
    2\exp\left(
        -\frac{c s^2}
        {\sigma^2\kappa_{1,t+1}}
    \right).
\]
Thus, for every \(\boldsymbol u\in\mathbb S^{d-1}\),
\(\langle\boldsymbol u,\boldsymbol N_{t+1}^{(1)}\rangle\) is
sub-Gaussian with scale at most
\(C\sigma\sqrt{\kappa_{1,t+1}}\).
Let \(\mathcal N\) be a \(1/2\)-net of \(\mathbb S^{d-1}\)
satisfying \(\lvert\mathcal N\rvert\le5^d\)
\cite[Corollary~4.2.11]{Vershynin_2018}.
By \cite[Exercise~4.35]{Vershynin_2018},
\(\norm{\boldsymbol x}
\le2\max_{\boldsymbol u\in\mathcal N}
|\langle\boldsymbol u,\boldsymbol x\rangle|\)
for every \(\boldsymbol x\in\mathbb R^d\).
A union bound over \(\mathcal N\) therefore gives, for every \(x>0\),
\[
    \mathbb P\left(
        \norm{\boldsymbol N_{t+1}^{(1)}}
        \ge
        C\sigma
        \sqrt{
            \kappa_{1,t+1}(d+x)
        }
    \right)
    \le
    2e^{-x}.
\]
Taking \(x=\log(2T/\delta)\) and applying a union bound over
\(t=0,\ldots,T-1\) yields, with probability at least \(1-\delta\),
simultaneously for every \(t\in\{0,\ldots,T-1\}\),
\[
    \norm{\boldsymbol N_{t+1}^{(1)}}
    \le
    C\sigma
    \sqrt{
        \kappa_{1,t+1}
        \left(
            d+\log\frac{2T}{\delta}
        \right)
    }.
\]
Combining this with the bound on $ \norm{\boldsymbol B_{t+1}^{(1)}}$
proves \cref{eq:app-adam-r-hp-explicit}.
\end{proof}

We can now combine the one-step recursion in
\cref{lem:app-adam-recursive-relation} with the high-probability control of
\(\boldsymbol r_{t+1}\) from \cref{lem:app-adam-r-hp} and
\(\boldsymbol\eta_{t+1}\) from \cref{lem:app-adam-eta-hp}. To simplify the
resulting discounted convolutions, define
\[
    \rho_\alpha
    :=
    1-\frac12\alpha\mu,
    \;\;
    \lambda_i
    :=
    \rho_\alpha\vee\beta_i,
    \; i\in\{1,2\}.
\]
Thus, \(\lambda_i\) is the slower of the tracking contraction and the
corresponding Adam moment decay.

\vspace{0.5em}

\begin{theorem}[High probability tracking error bound for Adam]
    \label{thm:app-high-prob-adam-restated}
    Suppose Assumptions~\ref{assump:cond-subgauss-noise-restate} and
    \ref{assumption:adaptive-strong-monotonicity-restated} hold and
    \(\operatorname{diam}(\Theta)\le R\). Then, for all \(t\in[T]\),
    \(\delta\in(0,1)\), and
    \(\alpha\le\min\{\mu\epsilon^2/(4L^2),\,\mu^{-1}\}\), the following
    tracking error bound holds for \plaineqref{eq:adam-update} with probability
    at least \(1-\delta\):
    \begin{align*}
        \norm{\boldsymbol\theta_t-\boldsymbol\theta_t^\star}^2
        \le\;&
        \rho_\alpha^t
        \norm{\boldsymbol\theta_0-\boldsymbol\theta_0^\star}^2
        +\frac{5\mathfrak D_t}{\alpha\mu}
        +\frac{40}{\mu^2\epsilon^2}
    \left\{
        4\beta_1^2L^2R^2
        +\frac{C^2\sigma^2}{1+\beta_1}
        \left(d+\log\frac{8T}{\delta}\right)
        \left[
            1-\beta_1
            +\alpha\mu\beta_1t\lambda_1^{t-1}
        \right]
    \right\}
        \\
        &+
        \frac{10C\alpha}{\mu\epsilon^4}
        \left[
            L^2R^2
            +\sigma^2
            \left(d+\log\frac{8T}{\delta}\right)
        \right]^2
        \left[
            \beta_2t\lambda_2^{t-1}
            +\frac{2(1-\beta_2)}{\alpha\mu}
        \right],
    \end{align*}
    where
    \(\rho_\alpha:=1-\alpha\mu/2\),
    \(\mathfrak D_t:=\sum_{\ell=0}^{t-1}
    \rho_\alpha^{t-\ell-1}\Delta_\ell^2\),
    \(\Delta_\ell
    :=\norm{\boldsymbol\theta_\ell^\star-\boldsymbol\theta_{\ell+1}^\star}\),
    and \(\lambda_i:=\rho_\alpha\vee\beta_i\) for \(i\in\{1,2\}\).
\end{theorem}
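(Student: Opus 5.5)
The plan is to unroll the one-step recursion of \cref{lem:app-adam-recursive-relation} and substitute the pathwise high-probability bounds on \(\boldsymbol r_{t+1}\) and \(\boldsymbol\eta_{t+1}\).

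First, I intersect the events of \cref{lem:app-adam-gradient-hp}, \cref{lem:app-adam-eta-hp}, and \cref{lem:app-adam-r-hp}, each invoked at confidence \(\delta/4\) or so. By a union bound, the bounds
\[
\norm{\boldsymbol r_{t+1}}\le 2LR\beta_1+C\sigma\sqrt{\kappa_{1,t+1}(d+\log(8T/\delta))}
\]
and
\[
\norm{\boldsymbol\eta_{t+1}}^2\le C\epsilon^{-4}\bigl[L^2R^2+\sigma^2(d+\log(8T/\delta))\bigr]^2\vartheta_{2,t+1}
\]
then hold simultaneously for all \(t<T\) with probability at least \(1-\delta\). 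This is where the \(8T/\delta\) inside the logarithms comes from. On this event, the recursion \cref{eq:app-adam-descent-recursion} is deterministic. Iterating it from \(0\) to \(t\) gives
\[
\norm{\boldsymbol e_t}^2\le\rho_\alpha^t\norm{\boldsymbol e_0}^2+\frac{5}{\alpha\mu}\mathfrak D_t+\frac{10\alpha}{\mu\epsilon^2}\sum_{\ell=0}^{t-1}\rho_\alpha^{t-\ell-1}\norm{\boldsymbol r_{\ell+1}}^2+\frac{10\alpha}{\mu}\sum_{\ell=0}^{t-1}\rho_\alpha^{t-\ell-1}\norm{\boldsymbol\eta_{\ell+1}}^2 .
\]

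Next, I evaluate the two discounted convolutions. For the first, I use \(\norm{\boldsymbol r}^2\le 2\norm{\boldsymbol B}^2+2\norm{\boldsymbol N}^2\).

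\begin{itemize}
\item \textbf{Bias part.} The bias contributes \(8\beta_1^2L^2R^2\sum_\ell\rho_\alpha^{t-\ell-1}\le 16\beta_1^2L^2R^2/(\alpha\mu)\). After multiplying by \(10\alpha/(\mu\epsilon^2)\), this gives the \(\beta_1^2L^2R^2/(\mu^2\epsilon^2)\) term.
\item \textbf{Variance part.} Write
\[
\kappa_{1,\ell+1}=\frac{1-\beta_1}{1+\beta_1}\cdot\frac{1+\beta_1^{\ell+1}}{1-\beta_1^{\ell+1}}.
\]
I split this as \(\frac{1-\beta_1}{1+\beta_1}\) plus a transient \(\frac{1-\beta_1}{1+\beta_1}\cdot\frac{2\beta_1^{\ell+1}}{1-\beta_1^{\ell+1}}\). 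Using \(1-\beta_1^{\ell+1}\ge(1-\beta_1)\), the transient is at most \(\frac{2\beta_1^{\ell+1}}{1+\beta_1}\). The persistent part yields \(\frac{1-\beta_1}{1+\beta_1}\cdot\frac{2}{\alpha\mu}\). For the transient part I need the convolution bound
\[
\sum_{\ell=0}^{t-1}\rho_\alpha^{t-\ell-1}\beta_1^{\ell+1}\le\beta_1 t\lambda_1^{t-1},
\]
which holds because each summand is at most \(\beta_1\lambda_1^{t-1}\). Multiplying by \(\alpha/(\mu\epsilon^2)\) and factoring out \(1/(\mu^2\epsilon^2)\) then produces exactly the bracket \([1-\beta_1+\alpha\mu\beta_1t\lambda_1^{t-1}]\).
\item \textbf{Preconditioner part.} The \(\boldsymbol\eta\) convolution is handled the same way, using \(\vartheta_{2,\ell+1}=\beta_2^{\ell+1}+(1-\beta_2)\). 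This gives \(\beta_2t\lambda_2^{t-1}+2(1-\beta_2)/(\alpha\mu)\) times \(10C\alpha\mu^{-1}\epsilon^{-4}[\cdots]^2\).
\end{itemize}

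Finally, I collect the constants \(40\), \(4\) and \(10C\) and check the stated form.

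I expect the main obstacle to be bookkeeping rather than anything conceptual. The recursion is used pathwise, but \(\boldsymbol r\) and \(\boldsymbol\eta\) are controlled only on a high-probability event. So I have to make sure the union bound over all \(t\le T\) is taken once, via the uniform-in-\(t\) statements of the lemmas, rather than per \(t\). I also need to confirm that the lemmas' confidence levels combine to \(1-\delta\) with \(\log(8T/\delta)\); the \(\log(4dT/\delta)\) in \cref{lem:app-adam-eta-hp} must also be absorbed into \(d+\log(8T/\delta)\) up to constants. The transient estimate for \(\kappa_{1,\ell+1}\) near \(\ell=0\) needs care too. There \(\kappa_{1,1}=1\), and the crude bound \(1-\beta_1^{\ell+1}\ge 1-\beta_1\) is what makes the \(\beta_1^{\ell+1}/(1+\beta_1)\) transient form valid.
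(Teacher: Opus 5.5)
Your proposal is correct and follows the paper's proof: you unroll \cref{lem:app-adam-recursive-relation}, substitute the uniform-in-$t$ bounds on $\boldsymbol r_{\ell+1}$ and $\boldsymbol\eta_{\ell+1}$ on a single union-bound event, and evaluate the discounted sums via the $\kappa_{1,\ell+1}$ and $\vartheta_{2,\ell+1}$ splits with $\rho_\alpha^{t-\ell-1}\beta_i^{\ell+1}\le\beta_i\lambda_i^{t-1}$, which reproduces the stated constants $40$, $4$ and $10C$. The only difference is cosmetic: the paper invokes just \cref{lem:app-adam-r-hp,lem:app-adam-eta-hp} at $\delta/2$ each, which gives exactly $\log(8T/\delta)$, since the gradient-concentration event is already contained in \cref{lem:app-adam-eta-hp}; your $\delta/4$ split instead gives $\log(16T/\delta)$ in the $\boldsymbol\eta$ term, which is harmlessly absorbed into $C$.
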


\begin{proof}[Proof of \cref{thm:app-high-prob-adam-restated}]
Fix $s\in[T]$. Iterating
\cref{lem:app-adam-recursive-relation} from $0$ to $s-1$ gives
\begin{align}
    \|\boldsymbol\theta_s-\boldsymbol\theta_s^\star\|^2
    \le\;&
    \rho_\alpha^s
    \|\boldsymbol\theta_0-\boldsymbol\theta_0^\star\|^2
    +
    \frac{5\mathfrak D_s}{\alpha\mu}
    +
    \frac{10\alpha q_+^2}{\mu}
    \sum_{\ell=0}^{s-1}
    \rho_\alpha^{s-\ell-1}
    \|\boldsymbol r_{\ell+1}\|^2
    +
    \frac{10\alpha}{\mu}
    \sum_{\ell=0}^{s-1}
    \rho_\alpha^{s-\ell-1}
    \|\boldsymbol\eta_{\ell+1}\|^2.
    \label{eq:app-adam-hp-proof-recursion}
\end{align}

Apply \cref{lem:app-adam-r-hp,lem:app-adam-eta-hp} with failure
probability $\delta/2$ each. By a union bound, with probability at least
$1-\delta$, simultaneously for every $\ell=0,\ldots,T-1$,
\[
\begin{aligned}
    &\|\boldsymbol r_{\ell+1}\|
    \le
    2LR
    \frac{\beta_1(1-\beta_1^\ell)}
    {1-\beta_1^{\ell+1}}
    +
    C\sigma
    \sqrt{
        \kappa_{1,\ell+1}
        \left(
            d+\log\frac{8T}{\delta}
        \right)
    },\\
    &\|\boldsymbol\eta_{\ell+1}\|^2
    \le
    C\epsilon^{-4}
    \left[
        L^2R^2
        +
        \sigma^2
        \left(
            d+\log\frac{8T}{\delta}
        \right)
    \right]^2
    \vartheta_{2,\ell+1}.
\end{aligned}
\]
Hence, by $(a+b)^2\le2a^2+2b^2$,
\[
\|\boldsymbol r_{\ell+1}\|^2
\le
8L^2R^2
\left(
    \frac{\beta_1(1-\beta_1^\ell)}
    {1-\beta_1^{\ell+1}}
\right)^2
+
2C^2\sigma^2\kappa_{1,\ell+1}
\left(
    d+\log\frac{8T}{\delta}
\right).
\]

Since $1-\rho_\alpha=\alpha\mu/2$,
$\sum_{\ell=0}^{s-1}\rho_\alpha^{s-\ell-1}
\le2/(\alpha\mu)$.
Moreover,
$\beta_1(1-\beta_1^\ell)/(1-\beta_1^{\ell+1})\le\beta_1$, so
\[
\sum_{\ell=0}^{s-1}
\rho_\alpha^{s-\ell-1}
\left(
    \frac{\beta_1(1-\beta_1^\ell)}
    {1-\beta_1^{\ell+1}}
\right)^2
\le
\frac{2\beta_1^2}{\alpha\mu}.
\]

For the variance term, recall that
\[
\kappa_{1,\ell+1}
=
\frac{1-\beta_1}{1+\beta_1}
+
\frac{
    2(1-\beta_1)\beta_1^{\ell+1}
}{
    (1+\beta_1)(1-\beta_1^{\ell+1})
}
\le
\frac{1-\beta_1}{1+\beta_1}
+
\frac{2\beta_1^{\ell+1}}{1+\beta_1},
\]
where we used $1-\beta_1^{\ell+1}\ge1-\beta_1$.
Let $\lambda_1:=\rho_\alpha\vee\beta_1$. Then
$\rho_\alpha^{s-\ell-1}\beta_1^{\ell+1}
\le\beta_1\lambda_1^{s-1}$, and therefore
\[
\begin{aligned}
    \sum_{\ell=0}^{s-1}
    \rho_\alpha^{s-\ell-1}
    \kappa_{1,\ell+1}
    &\le
    \frac{1-\beta_1}{1+\beta_1}
    \sum_{\ell=0}^{s-1}\rho_\alpha^{s-\ell-1}
    +
    \frac{2}{1+\beta_1}
    \sum_{\ell=0}^{s-1}
    \rho_\alpha^{s-\ell-1}\beta_1^{\ell+1}\\
    &\le
    \frac{2}{1+\beta_1}
    \left[
        \frac{1-\beta_1}{\alpha\mu}
        +
        \beta_1s\lambda_1^{s-1}
    \right].
\end{aligned}
\]
Consequently,
\[
\begin{aligned}
    \frac{10\alpha q_+^2}{\mu}
    \sum_{\ell=0}^{s-1}
    \rho_\alpha^{s-\ell-1}
    \|\boldsymbol r_{\ell+1}\|^2
    \le
    \frac{40q_+^2}{\mu^2}
    \left\{
        4\beta_1^2L^2R^2
        +
        \frac{C^2\sigma^2}{1+\beta_1}
        \left(
            d+\log\frac{8T}{\delta}
        \right)
        \left[
            1-\beta_1
            +
            \alpha\mu\beta_1s\lambda_1^{s-1}
        \right]
    \right\}.
\end{aligned}
\]

Next, since
$\vartheta_{2,\ell+1}=(1-\beta_2)+\beta_2^{\ell+1}$, let
$\lambda_2:=\rho_\alpha\vee\beta_2$. As above,
$\sum_{\ell=0}^{s-1}
\rho_\alpha^{s-\ell-1}\beta_2^{\ell+1}
\le\beta_2s\lambda_2^{s-1}$, and hence
\[
\sum_{\ell=0}^{s-1}
\rho_\alpha^{s-\ell-1}
\vartheta_{2,\ell+1}
\le
\frac{2(1-\beta_2)}{\alpha\mu}
+
\beta_2s\lambda_2^{s-1}.
\]
It follows that
\[
\begin{aligned}
    \frac{10\alpha}{\mu}
    \sum_{\ell=0}^{s-1}
    \rho_\alpha^{s-\ell-1}
    \|\boldsymbol\eta_{\ell+1}\|^2
    \le
    \frac{10C\alpha}{\mu\epsilon^4}
    \left[
        L^2R^2
        +
        \sigma^2
        \left(
            d+\log\frac{8T}{\delta}
        \right)
    \right]^2
    \left[
        \frac{2(1-\beta_2)}{\alpha\mu}
        +
        \beta_2s\lambda_2^{s-1}
    \right].
\end{aligned}
\]

Substituting these estimates into
\cref{eq:app-adam-hp-proof-recursion} and using
$q_+=\epsilon^{-1}$ gives
\begin{align*}
    \|\boldsymbol\theta_s-\boldsymbol\theta_s^\star\|^2
    \le\;&
    \rho_\alpha^s
    \|\boldsymbol\theta_0-\boldsymbol\theta_0^\star\|^2
    +
    \frac{5\mathfrak D_s}{\alpha\mu}
    +
    \frac{160\beta_1^2L^2R^2}
    {\mu^2\epsilon^2}
    +
    \frac{40C^2\sigma^2}
    {\mu^2\epsilon^2(1+\beta_1)}
    \left(
        d+\log\frac{8T}{\delta}
    \right)
    \left[
        1-\beta_1
        +
        \alpha\mu\beta_1s\lambda_1^{s-1}
    \right]
    \\
    &+
    \frac{10C\alpha}{\mu\epsilon^4}
    \left[
        L^2R^2
        +
        \sigma^2
        \left(
            d+\log\frac{8T}{\delta}
        \right)
    \right]^2
    \left[
        \frac{2(1-\beta_2)}{\alpha\mu}
        +
        \beta_2s\lambda_2^{s-1}
    \right].
\end{align*}
Since the concentration event is uniform over
$\ell=0,\ldots,T-1$, this bound holds simultaneously for every
$s\in[T]$, proving the claim.
\end{proof}

\subsection{Proof of \cref{thm:time-to-track-hp-adam}}
\label{app:C2}

Using \cref{thm:app-high-prob-adam-restated}, we can obtain the following result which gives us an algorithmic guarantee for \plaineqref{eq:adam-update-restate}:

\vspace{0.5em}

\begin{theorem}[Time to reach the tracking floor with high probability for \plaineqref{eq:adam-update-restate}]
\label{thm:time-to-track-hp-adam-restate}
Under the standing assumptions of
\cref{thm:app-high-prob-adam-restated}, suppose
$0<\alpha \le \alpha_{\max} := \min\{\mu\epsilon^2/(4L^2),\,\mu^{-1}\}$,
$\Delta_t \le \Delta$ almost surely for all $t\ge 0$, and fix a
deterministic horizon bound $N\ge1$ and $\delta\in(0,1)$.
Define the tracking-floor
$$
\mathcal{E}_{\mathrm{A}}(N,\delta,\alpha)
\asymp 
\frac{\Delta^2}{\mu^2\alpha^2}
+
\frac{\beta_1^2L^2R^2}{\mu^2\epsilon^2}
+
\frac{C^2\sigma^2(1-\beta_1)}
{\mu^2\epsilon^2(1+\beta_1)}
\left(d+\log\frac{N}{\delta}\right)
+
\frac{C(1-\beta_2)}{\mu^2\epsilon^4}
\left[
    L^2R^2
    +
    \sigma^2\left(d+\log\frac{N}{\delta}\right)
\right]^2,
$$
where $C>0$ is a fixed sufficiently large universal constant.
For $\alpha\in(0,\alpha_{\max}]$, let
$\rho_\alpha:=1-\alpha\mu/2$ and
$\lambda_{i,\alpha}:=\rho_\alpha\vee\beta_i$ for $i\in\{1,2\}$.
For a target floor $E>0$, define
\begin{align*}
\mathcal{B}_1(\alpha,E)
&\asymp
\frac{1}{1-\lambda_{1,\alpha}}
\log_{+}\!\left(
    \frac{
        C^2\alpha\sigma^2
        \left(d+\log\frac{N}{\delta}\right)
    }{
        \mu\epsilon^2(1+\beta_1)
        (1-\lambda_{1,\alpha})E
    }
\right) \\
\mathcal{B}_2(\alpha,E)
&\asymp
\frac{1}{1-\lambda_{2,\alpha}}
\log_{+}\!\left(
    \frac{
        C\alpha
        \left[
            L^2R^2
            +
            \sigma^2\left(d+\log\frac{N}{\delta}\right)
        \right]^2
    }{
        \mu\epsilon^4
        (1-\lambda_{2,\alpha})E
    }
\right).
\end{align*}
Then we have the following:
\begin{enumerate}
    \item \textbf{(Constant learning rate).} If $\alpha_t\equiv\alpha$,
    with $\alpha$ chosen deterministically, then
    for any integer $1\le T\le N$, with probability $\ge 1-\delta$,
    for all $t\in[T]$,
    $$
    \begin{aligned}
        \|\boldsymbol{\theta}_t-\boldsymbol{\theta}_t^\star\|^2
        \;\lesssim\;&
        \rho_\alpha^t
        \|\boldsymbol{\theta}_0-\boldsymbol{\theta}_0^\star\|^2
        +
        \mathcal{E}_{\mathrm{A}}(N,\delta,\alpha)
        +
        \frac{\alpha\sigma^2}
        {\mu\epsilon^2(1+\beta_1)}
        \left(d+\log\frac{N}{\delta}\right)
        \sum_{\ell=0}^{t-1}
        \rho_\alpha^{t-\ell-1}\beta_1^{\ell+1}
        \\
        &+
        \frac{\alpha}{\mu\epsilon^4}
        \left[
            L^2R^2
            +\sigma^2\left(d+\log\frac{N}{\delta}\right)
        \right]^2
        \sum_{\ell=0}^{t-1}
        \rho_\alpha^{t-\ell-1}\beta_2^{\ell+1}.
    \end{aligned}
    $$
    Let $\alpha_{\mathrm{A}}^\star\in(0,\alpha_{\max}]$
    be a deterministic target stepsize and
    $
        \mathcal{E}_{\mathrm{A}}^\star
        :=
        \mathcal{E}_{\mathrm{A}}
        (N,\delta,\alpha_{\mathrm{A}}^\star).
    $
    Then, running Adam with $\alpha=\alpha_{\mathrm A}^\star$ gives
    $\|\boldsymbol{\theta}_t-\boldsymbol{\theta}_t^\star\|^2
    \lesssim
    \mathcal{E}_{\mathrm{A}}^\star$
    after at most
    $$
    \begin{aligned}
        t_{\mathrm A}^\star
        \;\lesssim\;&
        \frac{1}{\mu\alpha_{\mathrm{A}}^\star}
        \log_{+}\!\left(
            \frac{
                \|\boldsymbol{\theta}_0
                -\boldsymbol{\theta}_0^\star\|^2
            }{
                \mathcal{E}_{\mathrm{A}}^\star
            }
        \right)
        +
        \mathcal{B}_1
        \left(
            \alpha_{\mathrm{A}}^\star,
            \mathcal{E}_{\mathrm{A}}^\star
        \right)
        +
        \mathcal{B}_2
        \left(
            \alpha_{\mathrm{A}}^\star,
            \mathcal{E}_{\mathrm{A}}^\star
        \right)
    \end{aligned}
    $$
    iterations, provided $t_{\mathrm A}^\star\le T$.
    The bound holds simultaneously for
    $t_{\mathrm A}^\star\le t\le T$ with probability $\ge1-\delta$.

    \item \textbf{(Step-decay with Adam-state restart).}
    Suppose a deterministic upper bound
    $D\ge \|\boldsymbol{\theta}_0-\boldsymbol{\theta}_0^\star\|^2$
    is available and $\alpha_{\mathrm{A}}^\star<\alpha_{\max}$.
    Set $\alpha_0:=\alpha_{\max}$,
    $\alpha_k:=(\alpha_{k-1}+\alpha_{\mathrm{A}}^\star)/2$, and
    $K:=1+\lceil\log_2(\alpha_0/\alpha_{\mathrm{A}}^\star)\rceil$.
    For each $k\geq0$, let
    $
        E_k
        :=
        \mathcal{E}_{\mathrm{A}}(N,\delta,\alpha_k),
    $
    and set
    $$
    \begin{aligned}
        T_0
        &\asymp
        \left\lceil
        \max\left\{
            \frac{1}{\mu\alpha_0}
            \log_{+}\!\left(\frac{D}{E_0}\right),
            \,
            \mathcal{B}_1(\alpha_0,E_0),
            \,
            \mathcal{B}_2(\alpha_0,E_0)
        \right\}
        \right\rceil,
        \\
        T_k
        &\asymp
        \left\lceil
        \max\left\{
            \frac{1}{\mu\alpha_k},
            \,
            \mathcal{B}_1(\alpha_k,E_k),
            \,
            \mathcal{B}_2(\alpha_k,E_k)
        \right\}
        \right\rceil,
        \;\; k\geq1.
    \end{aligned}
    $$
    Suppose $T:=\sum_{k=0}^{K-1}T_k\le N$.
    Running Adam at constant stepsize $\alpha_k$ for $T_k$ steps per
    epoch with $(\boldsymbol m,\boldsymbol v)$ and the bias-correction restarted at the beginning of every epoch yields
    $
        \|\boldsymbol{\theta}_T-\boldsymbol{\theta}_T^\star\|^2
        \lesssim
        \mathcal{E}_{\mathrm{A}}^\star
    $
    with probability $\geq1-K\delta$
    Moreover,
    $$
    \begin{aligned}
        T
        \;\lesssim\;
        \frac{1}{\mu\alpha_0}
        \log_{+}\!\left(
            \frac{D}{\mathcal{E}_{\mathrm{A}}^\star}
        \right)
        +
        \frac{1}{\mu\alpha_{\mathrm{A}}^\star}
        +
        \sum_{k=0}^{K-1}
        \mathcal{B}_1(\alpha_k,E_k)
        +
        \sum_{k=0}^{K-1}
        \mathcal{B}_2(\alpha_k,E_k).
    \end{aligned}
    $$
\end{enumerate}
\end{theorem}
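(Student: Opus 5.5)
The plan is to derive both parts from the one-step recursion (\cref{lem:app-adam-recursive-relation}) together with the uniform concentration events of \cref{lem:app-adam-r-hp,lem:app-adam-eta-hp}. I would work at the level of the pre-summed recursion \eqref{eq:app-adam-hp-proof-recursion}, not the final form of \cref{thm:app-high-prob-adam-restated}, because the burn-in times require the exact convolutions $\sum_\ell\rho_\alpha^{t-\ell-1}\beta_i^{\ell+1}$ rather than the cruder $t\lambda_i^{t-1}$.

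\textbf{Part 1 (constant step).} Fix $T\le N$ and work on the event of probability $\ge1-\delta$ given by those lemmas with $\log(8T/\delta)\le\log(8N/\delta)$. Use $\Delta_\ell\le\Delta$ to bound $\frac{5\mathfrak D_t}{\alpha\mu}\le\frac{10\Delta^2}{\alpha^2\mu^2}$. Substitute $\|\boldsymbol r_{\ell+1}\|^2\le 8\beta_1^2L^2R^2+2C^2\sigma^2\kappa_{1,\ell+1}(d+\log)$ and $\kappa_{1,\ell+1}\le\frac{1-\beta_1}{1+\beta_1}+\frac{2\beta_1^{\ell+1}}{1+\beta_1}$. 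Then use $\vartheta_{2,\ell+1}=(1-\beta_2)+\beta_2^{\ell+1}$ and $\sum_\ell\rho_\alpha^{t-\ell-1}\le2/(\alpha\mu)$. The stationary pieces collapse into $\mathcal E_{\mathrm A}(N,\delta,\alpha)$, and the $\beta_i^{\ell+1}$ pieces remain as the two displayed convolution sums. This gives the first display.

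\textbf{Burn-in time.} Bound each convolution by $\sum_{\ell}\rho_\alpha^{t-\ell-1}\beta_i^{\ell+1}\le t\lambda_{i,\alpha}^{t}$ (up to a factor $\lambda^{-1}$). Then use $t\lambda^t\le \frac{2}{e(1-\lambda)}\lambda^{t/2}$. Requiring each transient prefactor times $\lambda_{i,\alpha}^{t/2}/(1-\lambda_{i,\alpha})\le E$ gives $t\gtrsim\frac{1}{1-\lambda_{i,\alpha}}\log_+(\cdot)$, which is exactly $\mathcal B_i(\alpha,E)$. Similarly, $\rho_\alpha^t\|\boldsymbol\theta_0-\boldsymbol\theta_0^\star\|^2\le E$ once $t\ge\frac{2}{\alpha\mu}\log_+(\|\boldsymbol\theta_0-\boldsymbol\theta_0^\star\|^2/E)$, using $\log(1/\rho_\alpha)\ge\alpha\mu/2$. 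Taking $E=\mathcal E^\star_{\mathrm A}$ and summing the three thresholds gives $t^\star_{\mathrm A}$. Monotonicity of each term in $t$, together with the event being uniform over $[T]$, gives simultaneity for $t^\star_{\mathrm A}\le t\le T$.

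\textbf{Part 2 (restarts).} Each epoch is a fresh Adam run with zeroed moments and bias correction. I would apply Part 1 within epoch $k$, started from $\boldsymbol\theta$ at the epoch boundary, and treat the incoming error as the new initial error. Two points need care here. First, the filtration is shifted, but \cref{lem:app-adam-recursive-relation} and the concentration lemmas are time-homogeneous, so they apply verbatim. Second, the drift term relative to the shifted minimizer is still bounded by $\Delta^2$. A union bound over epochs gives probability $1-K\delta$.

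The induction is as follows. After epoch $0$, the error is $\lesssim E_0$ by the choice of $T_0$. In epoch $k\ge1$, the initial error is $\lesssim E_{k-1}$. Since $\alpha_k\ge\alpha_{k-1}/2$, and the only $\alpha$-dependent term of $\mathcal E_{\mathrm A}$ is $\Delta^2/(\mu^2\alpha^2)$, we get $E_{k-1}\lesssim E_k$ up to a factor $4$. Hence $\log_+(E_{k-1}/E_k)=O(1)$, so $T_k\gtrsim1/(\mu\alpha_k)$ suffices for contraction, and the $\mathcal B_i$ terms clear the transients. After $K$ epochs we have $\alpha_{K-1}\le2\alpha^\star_{\mathrm A}$, so $E_{K-1}\lesssim\mathcal E^\star_{\mathrm A}$. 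For the total time, the $\alpha_k-\alpha^\star_{\mathrm A}$ halve, so $\sum_k 1/(\mu\alpha_k)$ is dominated by $O(1/(\mu\alpha^\star_{\mathrm A}))$ up to the geometric count. I expect this to be the main obstacle: to keep the sum from picking up an extra factor of $K$, I would split the epochs by whether $\alpha_k\ge2\alpha^\star_{\mathrm A}$. On that set $\alpha_k$ decays geometrically, so $1/\alpha_k$ grows geometrically and sums to $O(1/\alpha^\star_{\mathrm A})$. The remaining $O(1)$ epochs each cost $O(1/(\mu\alpha^\star_{\mathrm A}))$.
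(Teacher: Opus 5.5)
Your plan follows the paper's proof essentially step for step. It uses the same pre-summed recursion with the same substitutions for $\kappa_{1,\ell+1}$ and $\vartheta_{2,\ell+1}$, the same $t\lambda^t\le\frac{2}{e(1-\lambda)}\lambda^{t/2}$ burn-in argument, and the same epoch induction. Your split at $\alpha_k\ge 2\alpha^\star_{\mathrm A}$ for bounding $\sum_k\alpha_k^{-1}$ is a harmless variant of the paper's direct use of $\alpha_k\ge 2^{-k}(\alpha_0-\alpha^\star_{\mathrm A})$.

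Two small points need fixing:
\begin{itemize}
\item $E_{k-1}\le E_k$ follows from $\alpha_k\le\alpha_{k-1}$, not from $\alpha_k\ge\alpha_{k-1}/2$ (which gives the reverse comparison).
\item Your total-time accounting leaves the epoch-$0$ term as $\frac{1}{\mu\alpha_0}\log_+(D/E_0)$. The statement instead has $\log_+(D/\mathcal E^\star_{\mathrm A})$, and since $E_0\le\mathcal E^\star_{\mathrm A}$ this is the wrong direction for a direct bound. The paper closes it with $\mathcal E^\star_{\mathrm A}\le r^2E_0$, where $r:=\alpha_0/\alpha^\star_{\mathrm A}$, together with $(\log r)/r\le 1/e$. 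The conversion then costs only an extra $O(1/(\mu\alpha^\star_{\mathrm A}))$.
\end{itemize}
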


\begin{proof}[Proof of \cref{thm:time-to-track-hp-adam-restate}]
Fix a deterministic stepsize
$0<\alpha\le\alpha_{\max}$ and an integer $1\le T\le N$.
By \cref{lem:app-adam-r-split},
$\|\boldsymbol B_{t+1}^{(1)}\|
\le 2LR\beta_1(1-\beta_1^t)/(1-\beta_1^{t+1})
\le 2LR\beta_1$.
Applying
\cref{lem:app-adam-r-hp,lem:app-adam-eta-hp}
with horizon $N$ and failure probability $\delta/2$ each, and taking a
union bound, gives an event of probability at least $1-\delta$ on which,
simultaneously for every $\ell=0,\ldots,N-1$,
\[
\begin{aligned}
    &\|\boldsymbol r_{\ell+1}\|
    \le
    2LR
    \frac{\beta_1(1-\beta_1^\ell)}
    {1-\beta_1^{\ell+1}}
    +
    C\sigma
    \sqrt{
        \kappa_{1,\ell+1}
        \left(
            d+\log\frac{8T}{\delta}
        \right)
    },\\
    &\|\boldsymbol\eta_{\ell+1}\|^2
    \le
    C\epsilon^{-4}
    \left[
        L^2R^2
        +
        \sigma^2
        \left(
            d+\log\frac{8T}{\delta}
        \right)
    \right]^2
    \vartheta_{2,\ell+1}.
\end{aligned}
\]
Combining these bounds with
\cref{lem:app-adam-recursive-relation} and using $q_+=\epsilon^{-1}$,
we obtain, on the same event, for every $t\in[T]$,
\begin{align}
    \|\boldsymbol{\theta}_{t}-\boldsymbol{\theta}_{t}^{\star}\|^2
    \le\;&
    \rho_{\alpha}^t
    \|\boldsymbol{\theta}_{0}-\boldsymbol{\theta}_{0}^{\star}\|^2
    +
    \frac{5}{\alpha\mu}
    \sum_{\ell=0}^{t-1}
    \rho_{\alpha}^{t-\ell-1}\Delta_{\ell}^2
    +
    \frac{10C\alpha}{\mu\epsilon^4}
    \left[
        L^2R^2
        +
        \sigma^2
        \left(
            d+\log\frac{N}{\delta}
        \right)
    \right]^2
    \sum_{\ell=0}^{t-1}
    \rho_{\alpha}^{t-\ell-1}\vartheta_{2,\ell+1}
    \notag\\
    &+
    \frac{10\alpha}{\mu\epsilon^2}
    \sum_{\ell=0}^{t-1}
    \rho_{\alpha}^{t-\ell-1}
    \left(
        2LR\beta_1
        +
        C\sigma
        \sqrt{
            \kappa_{1,\ell+1}
            \left(
                d+\log\frac{N}{\delta}
            \right)
        }
    \right)^2 .
    \label{eq:adam-time-proof-start}
\end{align}

Since $1-\rho_\alpha=\alpha\mu/2$,
$\sum_{\ell=0}^{t-1}\rho_\alpha^{t-\ell-1}
\le 2/(\alpha\mu)$.
Hence the drift term in \cref{eq:adam-time-proof-start} is at most
$10\Delta^2/(\alpha^2\mu^2)$.
Moreover,
$\vartheta_{2,\ell+1}=(1-\beta_2)+\beta_2^{\ell+1}$, so
\[
\sum_{\ell=0}^{t-1}
\rho_\alpha^{t-\ell-1}\vartheta_{2,\ell+1}
\le
\frac{2(1-\beta_2)}{\alpha\mu}
+
\sum_{\ell=0}^{t-1}
\rho_\alpha^{t-\ell-1}\beta_2^{\ell+1}.
\]
Similarly, since
\[
\kappa_{1,\ell+1}
=
\frac{1-\beta_1}{1+\beta_1}
+
\frac{
    2(1-\beta_1)\beta_1^{\ell+1}
}{
    (1+\beta_1)(1-\beta_1^{\ell+1})
}
\le
\frac{1-\beta_1}{1+\beta_1}
+
\frac{2\beta_1^{\ell+1}}{1+\beta_1},
\]
we have
\[
\sum_{\ell=0}^{t-1}
\rho_\alpha^{t-\ell-1}\kappa_{1,\ell+1}
\le
\frac{2(1-\beta_1)}
{\alpha\mu(1+\beta_1)}
+
\frac{2}{1+\beta_1}
\sum_{\ell=0}^{t-1}
\rho_\alpha^{t-\ell-1}\beta_1^{\ell+1}.
\]
Using $(a+b)^2\le2a^2+2b^2$ in the last term of
\cref{eq:adam-time-proof-start} therefore yields
\begin{align}
    \|\boldsymbol{\theta}_t-\boldsymbol{\theta}_t^\star\|^2
    \lesssim\;&
    \rho_\alpha^t
    \|\boldsymbol{\theta}_0-\boldsymbol{\theta}_0^\star\|^2
    +
    \mathcal E_{\mathrm A}(N,\delta,\alpha)
    +
    \frac{\alpha\sigma^2}
    {\mu\epsilon^2(1+\beta_1)}
    \left(
        d+\log\frac{N}{\delta}
    \right)
    \sum_{\ell=0}^{t-1}
    \rho_\alpha^{t-\ell-1}\beta_1^{\ell+1}
    \notag\\
    &+
    \frac{\alpha}{\mu\epsilon^4}
    \left[
        L^2R^2
        +
        \sigma^2
        \left(
            d+\log\frac{N}{\delta}
        \right)
    \right]^2
    \sum_{\ell=0}^{t-1}
    \rho_\alpha^{t-\ell-1}\beta_2^{\ell+1}.
    \label{eq:adam-time-finite}
\end{align}
This proves the finite-time bound in part~(i).

We next control the three transient terms in
\cref{eq:adam-time-finite}.
Set $\alpha=\alpha_{\mathrm A}^\star$.
Since $1-x\le e^{-x}$,
$\rho_{\alpha_{\mathrm A}^\star}^t
\le e^{-\mu\alpha_{\mathrm A}^\star t/2}$.
Thus the initialization term is at most
$\mathcal E_{\mathrm A}^\star$ whenever
$t\ge
2(\mu\alpha_{\mathrm A}^\star)^{-1}
\log_+(
\|\boldsymbol\theta_0-\boldsymbol\theta_0^\star\|^2/
\mathcal E_{\mathrm A}^\star)$. For $i\in\{1,2\}$, let
$\lambda_{i,\alpha}
=\rho_\alpha\vee\beta_i$.
Then
\[
\begin{aligned}
    \sum_{\ell=0}^{t-1}
    \rho_\alpha^{t-\ell-1}\beta_i^{\ell+1}
    &\le
    t\lambda_{i,\alpha}^t\le
    \frac{2}{1-\lambda_{i,\alpha}}
    \exp\!\left(
        -\frac{1-\lambda_{i,\alpha}}{2}t
    \right),
\end{aligned}
\]
where the second inequality follows from
$\sup_{x\ge0}x\lambda^{x/2}
=2/[e\log(1/\lambda)]$ and
$\log(1/\lambda)\ge1-\lambda$.
By the definitions of
$\mathcal B_1$ and $\mathcal B_2$, the first and second-moment
transients in \cref{eq:adam-time-finite} are each at most
$\mathcal E_{\mathrm A}^\star$ for
$t\ge
\mathcal B_1(
\alpha_{\mathrm A}^\star,\mathcal E_{\mathrm A}^\star)$
and
$t\ge
\mathcal B_2(
\alpha_{\mathrm A}^\star,\mathcal E_{\mathrm A}^\star)$,
respectively.
Consequently, taking
\[
t_{\mathrm A}^\star \lesssim \frac{1}{\mu\alpha_{\mathrm A}^\star}
\log_+\!\left(
    \frac{
        \|\boldsymbol\theta_0-\boldsymbol\theta_0^\star\|^2
    }{
        \mathcal E_{\mathrm A}^\star
    }
\right)
+
\mathcal B_1(
    \alpha_{\mathrm A}^\star,\mathcal E_{\mathrm A}^\star)
+
\mathcal B_2(
    \alpha_{\mathrm A}^\star,\mathcal E_{\mathrm A}^\star)
\]
gives
$\|\boldsymbol\theta_t-\boldsymbol\theta_t^\star\|^2
\lesssim\mathcal E_{\mathrm A}^\star$
simultaneously for every
$t_{\mathrm A}^\star\le t\le T$, provided
$t_{\mathrm A}^\star\le T$.
This proves part~(i).

For part~(ii), let
$t_k:=\sum_{j=0}^{k-1}T_j$,
$\boldsymbol X_k:=\boldsymbol\theta_{t_k}$, and
$\boldsymbol X_k^\star:=\boldsymbol\theta_{t_k}^\star$.
The epoch boundaries and stepsizes are deterministic.
Since $(\boldsymbol m,\boldsymbol v)$ and the bias-correction are
restarted at every epoch, the zero-initialized moment expansions used
above apply with respect to the shifted filtration
$(\mathcal F_{t_k+j})_{j\ge0}$.
Applying part~(i) on each epoch and taking a union bound over the $K$
epochs gives an event of probability at least $1-K\delta$ on which
\begin{align}
    \|\boldsymbol X_{k+1}-\boldsymbol X_{k+1}^\star\|^2
    \lesssim\;&
    e^{-\mu\alpha_kT_k/2}
    \|\boldsymbol X_k-\boldsymbol X_k^\star\|^2
    +
    E_k
    +
    \frac{\alpha_k\sigma^2}
    {\mu\epsilon^2(1+\beta_1)}
    \left(
        d+\log\frac{N}{\delta}
    \right)
    \sum_{\ell=0}^{T_k-1}
    \rho_{\alpha_k}^{T_k-\ell-1}\beta_1^{\ell+1}
    \notag\\
    &+
    \frac{\alpha_k}{\mu\epsilon^4}
    \left[
        L^2R^2
        +
        \sigma^2
        \left(
            d+\log\frac{N}{\delta}
        \right)
    \right]^2
    \sum_{\ell=0}^{T_k-1}
    \rho_{\alpha_k}^{T_k-\ell-1}\beta_2^{\ell+1}
    \label{eq:adam-epoch-recursion}
\end{align}
for every $k=0,\ldots,K-1$.

By the convolution bound above and the definitions of
$\mathcal B_1(\alpha_k,E_k)$ and
$\mathcal B_2(\alpha_k,E_k)$, the choice of $T_k$ makes both moment
transients in \cref{eq:adam-epoch-recursion} of order at most $E_k$.
The choice of $T_0$ also gives
$e^{-\mu\alpha_0T_0/2}D\lesssim E_0$, whereas for every $k\ge1$,
$T_k\gtrsim(\mu\alpha_k)^{-1}$ gives
$e^{-\mu\alpha_kT_k/2}\le c$ for a fixed $c<1$.
Hence
$\|\boldsymbol X_1-\boldsymbol X_1^\star\|^2\lesssim E_0$.

The recursion
$\alpha_k=(\alpha_{k-1}+\alpha_{\mathrm A}^\star)/2$
implies $\alpha_k\le\alpha_{k-1}$.
Since the only $\alpha$-dependent term in
$\mathcal E_{\mathrm A}(N,\delta,\alpha)$ is proportional to
$\alpha^{-2}$, we have $E_{k-1}\lesssim E_k$.
Inductively, if
$\|\boldsymbol X_k-\boldsymbol X_k^\star\|^2\lesssim E_{k-1}$,
then \cref{eq:adam-epoch-recursion} gives
$\|\boldsymbol X_{k+1}-\boldsymbol X_{k+1}^\star\|^2
\lesssim E_k$.
Therefore,
$\|\boldsymbol X_k-\boldsymbol X_k^\star\|^2
\lesssim E_{k-1}$ for every $k=1,\ldots,K$.

The stepsize recursion has the explicit solution
$\alpha_k
=\alpha_{\mathrm A}^\star
+2^{-k}(\alpha_0-\alpha_{\mathrm A}^\star)$.
Thus
$\alpha_{\mathrm A}^\star\le\alpha_{K-1}
\le2\alpha_{\mathrm A}^\star$ by the definition of $K$.
Since
$\mathcal E_{\mathrm A}(N,\delta,\alpha)$ is nonincreasing in $\alpha$,
$E_{K-1}\lesssim\mathcal E_{\mathrm A}^\star$, and consequently
\[
\|\boldsymbol\theta_T-\boldsymbol\theta_T^\star\|^2
=
\|\boldsymbol X_K-\boldsymbol X_K^\star\|^2
\lesssim
E_{K-1}
\lesssim
\mathcal E_{\mathrm A}^\star
\]
with probability at least $1-K\delta$.

It remains to bound $T$.
The definitions of the epoch lengths give
\[
\begin{aligned}
    T
    \lesssim\;&
    \frac{1}{\mu\alpha_0}
    \log_+\!\left(\frac{D}{E_0}\right)
    +
    \sum_{k=1}^{K-1}\frac{1}{\mu\alpha_k}
    +
    \sum_{k=0}^{K-1}\mathcal B_1(\alpha_k,E_k)
    +
    \sum_{k=0}^{K-1}\mathcal B_2(\alpha_k,E_k).
    \label{eq:adam-total-time-pre}
\end{aligned}
\]
Let $r:=\alpha_0/\alpha_{\mathrm A}^\star\ge1$.
From the form of the tracking floor,
$E_0\lesssim\mathcal E_{\mathrm A}^\star\lesssim r^2E_0$, and hence
\[
\frac{1}{\mu\alpha_0}
\log_+\!\left(\frac{D}{E_0}\right)
\lesssim
\frac{1}{\mu\alpha_0}
\log_+\!\left(
    \frac{D}{\mathcal E_{\mathrm A}^\star}
\right)
+
\frac{1}{\mu\alpha_{\mathrm A}^\star},
\]
where we used $(\log r)/r\le1/e$. Finally,
$\sum_{k=1}^{K-1}\alpha_k^{-1}
\lesssim(\alpha_{\mathrm A}^\star)^{-1}$.
Indeed, this is immediate when $r\le2$. If $r>2$, writing
$n:=K-1=\lceil\log_2r\rceil$ and using
$\alpha_k\ge2^{-k}(\alpha_0-\alpha_{\mathrm A}^\star)$ gives
\[
\sum_{k=1}^{K-1}\frac{1}{\alpha_k}
\le
\frac{2^{n+1}-2}
{\alpha_0-\alpha_{\mathrm A}^\star}
\le
\frac{8}{\alpha_{\mathrm A}^\star}.
\]
Substituting these estimates into
\cref{eq:adam-total-time-pre} yields
\[
\begin{aligned}
    T
    \lesssim\;&
    \frac{1}{\mu\alpha_0}
    \log_+\!\left(
        \frac{D}{\mathcal E_{\mathrm A}^\star}
    \right)
    +
    \frac{1}{\mu\alpha_{\mathrm A}^\star}
    +
    \sum_{k=0}^{K-1}
    \mathcal B_1(\alpha_k,E_k)
    +
    \sum_{k=0}^{K-1}
    \mathcal B_2(\alpha_k,E_k),
\end{aligned}
\]
which is the claimed bound.
\end{proof}

\subsection{Proofs for expected tracking error}
\label{app:C3}

The high-probability analysis above controls the stochastic first-moment
and preconditioner errors uniformly over time. For completeness, we also
give their corresponding second-moment bounds. In expectation, the analysis
is simpler: orthogonality of the martingale differences controls the
first-moment noise, while conditional sub-Gaussianity provides the required
fourth-moment control for the preconditioner perturbation.

\vspace{0.5em}

\begin{lemma}[Second-moment control of the first-moment tracking error]
\label{lem:app-adam-r-second-moment}
Under the standing setup, suppose
Assumption~\ref{assump:cond-subgauss-noise-restate} holds and
\(\operatorname{diam}(\Theta)\le R\). Then, for every \(t\ge0\),
\begin{equation}
    \mathbb E\norm{\boldsymbol r_{t+1}}^2
    \le
    8L^2R^2
    \left(
        \frac{\beta_1(1-\beta_1^t)}
        {1-\beta_1^{t+1}}
    \right)^2
    +
    2C d\sigma^2\kappa_{1,t+1},
    \label{eq:app-adam-r-second-moment}
\end{equation}
where \(C>0\) is a universal constant.
\end{lemma}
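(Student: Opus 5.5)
We reuse the bias--noise split of \cref{lem:app-adam-r-split}, writing \(\boldsymbol r_{t+1}=\boldsymbol B_{t+1}^{(1)}+\boldsymbol N_{t+1}^{(1)}\), and then apply \((a+b)^2\le 2a^2+2b^2\):
\[
    \mathbb E\norm{\boldsymbol r_{t+1}}^2
    \le
    2\,\mathbb E\norm{\boldsymbol B_{t+1}^{(1)}}^2
    +
    2\,\mathbb E\norm{\boldsymbol N_{t+1}^{(1)}}^2 .
\]

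\textbf{Bias term.} \cref{lem:app-adam-r-split} gives the deterministic pathwise bound
\[
    \norm{\boldsymbol B_{t+1}^{(1)}}
    \le
    2LR\,\frac{\beta_1(1-\beta_1^t)}{1-\beta_1^{t+1}} .
\]
Squaring it, taking expectations and multiplying by \(2\) yields exactly the first term \(8L^2R^2(\cdot)^2\) of \cref{eq:app-adam-r-second-moment}. No probabilistic input is needed here. The bound uses only \(\operatorname{diam}(\Theta)\le R\), Lipschitz continuity, and the interior-minimizer condition \(\bar{\boldsymbol g}_{k+1}(\boldsymbol\theta_{k+1}^\star)=\boldsymbol 0\), all already used in that lemma.

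\textbf{Noise term.} We have \(\boldsymbol N_{t+1}^{(1)}=\sum_{k=0}^{t}w_{1,t+1,k}\boldsymbol\xi_{k+1}\), where the weights are deterministic and each \(\boldsymbol\xi_{k+1}\) is \(\mathcal F_{k+1}\)-measurable with \(\mathbb E[\boldsymbol\xi_{k+1}\mid\mathcal F_k]=\boldsymbol 0\). Expanding the square gives diagonal terms and cross terms. For \(j<k\), condition on \(\mathcal F_k\): since \(\boldsymbol\xi_{j+1}\) is \(\mathcal F_k\)-measurable, the tower property gives \(\mathbb E\langle\boldsymbol\xi_{j+1},\boldsymbol\xi_{k+1}\rangle=0\). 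This requires integrability, which follows from sub-Gaussianity. Hence
\[
    \mathbb E\norm{\boldsymbol N_{t+1}^{(1)}}^2
    =
    \sum_{k=0}^{t}w_{1,t+1,k}^2\,\mathbb E\norm{\boldsymbol\xi_{k+1}}^2 .
\]

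\textbf{Per-step second moment.} It remains to show \(\mathbb E[\norm{\boldsymbol\xi_{k+1}}^2\mid\mathcal F_k]\le Cd\sigma^2\). Assumption~\ref{assump:cond-subgauss-noise-restate} bounds every one-dimensional projection, and in particular each coordinate, by \(\norm{\xi_{k+1,i}}_{\Psi_2\mid\mathcal F_k}\le\sigma\). This gives \(\mathbb E[\xi_{k+1,i}^2\mid\mathcal F_k]\le C\sigma^2\) by the standard moment characterization of the \(\Psi_2\) norm, for example \(x^2\le e^{x^2}-1\), which yields the constant \(1\). Summing over \(i\in[d]\) gives \(Cd\sigma^2\). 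Alternatively, one could integrate the tail bound \(\norm{\boldsymbol\xi}\ge C\sigma\sqrt{d+x}\) with probability at most \(2e^{-x}\) from the proof of \cref{lem:app-adam-gradient-hp}.

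Using \(\sum_k w_{1,t+1,k}^2=\kappa_{1,t+1}\), we obtain \(2\,\mathbb E\norm{\boldsymbol N_{t+1}^{(1)}}^2\le 2Cd\sigma^2\kappa_{1,t+1}\). Combining this with the bias term completes the claim.

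\textbf{Main obstacle.} The only delicate step is justifying that the cross terms vanish. This relies on the weights being deterministic, which holds because the bias-correction factor depends only on \(t\), not on the data, and on the filtration ordering. Both are already built into the setup, so the step should be straightforward.
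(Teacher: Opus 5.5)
Your proposal is correct and follows essentially the same argument as the paper: the bias--noise split of \cref{lem:app-adam-r-split} with its pathwise bias bound, then $(a+b)^2\le 2a^2+2b^2$, then martingale orthogonality to remove the cross terms, and finally the per-step bound $\mathbb E[\norm{\boldsymbol\xi_{k+1}}^2\mid\mathcal F_k]\le Cd\sigma^2$ summed against $\sum_k w_{1,t+1,k}^2=\kappa_{1,t+1}$. Your coordinatewise justification of that last bound, via $x^2\le e^{x^2}-1$, spells out a step the paper only asserts.
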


\begin{proof}[Proof of \cref{lem:app-adam-r-second-moment}]
By \cref{lem:app-adam-r-split},
\[
    \boldsymbol r_{t+1}
    =
    \boldsymbol B_{t+1}^{(1)}
    +
    \boldsymbol N_{t+1}^{(1)},
    \;\;
    \norm{\boldsymbol B_{t+1}^{(1)}}
    \le
    2LR\,
    \frac{\beta_1(1-\beta_1^t)}
    {1-\beta_1^{t+1}}.
\]
Thus, using \((a+b)^2\le2a^2+2b^2\),
\[
    \mathbb E\norm{\boldsymbol r_{t+1}}^2
    \le
    8L^2R^2
    \left(
        \frac{\beta_1(1-\beta_1^t)}
        {1-\beta_1^{t+1}}
    \right)^2
    +
    2\mathbb E\norm{\boldsymbol N_{t+1}^{(1)}}^2.
\]
Since
\(\boldsymbol N_{t+1}^{(1)}
=\sum_{k=0}^{t}w_{1,t+1,k}\boldsymbol\xi_{k+1}\),
the martingale-difference property gives, for \(k<\ell\),
$
    \mathbb E
    \left\langle
        \boldsymbol\xi_{k+1},
        \boldsymbol\xi_{\ell+1}
    \right\rangle
    =
    \mathbb E
    \left\langle
        \boldsymbol\xi_{k+1},
        \mathbb E[
            \boldsymbol\xi_{\ell+1}\mid\mathcal F_\ell
        ]
    \right\rangle
    =
    0
$
where \(\boldsymbol\xi_{k+1}\) is
\(\mathcal F_\ell\)-measurable and the noise vectors are
square-integrable by conditional sub-Gaussianity.
Hence
\[
    \mathbb E\norm{\boldsymbol N_{t+1}^{(1)}}^2
    =
    \sum_{k=0}^{t}
    w_{1,t+1,k}^2
    \mathbb E\norm{\boldsymbol\xi_{k+1}}^2.
\]
Assumption~\ref{assump:cond-subgauss-noise-restate} implies
\(\mathbb E[\norm{\boldsymbol\xi_{k+1}}^2\mid\mathcal F_k]
\le C d\sigma^2\) a.s., and therefore
\[
    \mathbb E\norm{\boldsymbol N_{t+1}^{(1)}}^2
    \le
    C d\sigma^2
    \sum_{k=0}^{t}w_{1,t+1,k}^2
    =
    C d\sigma^2\kappa_{1,t+1}.
\]
Substituting this bound above proves
\cref{eq:app-adam-r-second-moment}.
\end{proof}

The next lemma gives the corresponding second-moment control of the
preconditioner perturbation. Without assuming bounded sample gradients,
conditional sub-Gaussianity provides sufficient fourth-moment control.

\vspace{0.5em}

\begin{lemma}[Second-moment control of the preconditioner perturbation]
\label{lem:app-adam-eta-second-moment}
Under the standing setup, suppose
Assumption~\ref{assump:cond-subgauss-noise-restate} holds and
\(\operatorname{diam}(\Theta)\le R\). Then, for every \(t\ge0\),
\begin{equation}
    \mathbb E\norm{\boldsymbol\eta_{t+1}}^2
    \le
    C\epsilon^{-4}
    \left(
        L^2R^2+d\sigma^2
    \right)^2
    \vartheta_{2,t+1},
    \label{eq:app-adam-eta-second-moment}
\end{equation}
where \(C>0\) is a universal constant.
\end{lemma}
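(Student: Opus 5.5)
The plan is to mirror the proof of \cref{lem:app-adam-eta-hp}, replacing uniform high-probability bounds by conditional moment bounds. Since \(\boldsymbol\eta_{t+1}=(\boldsymbol P_{t+1}-\widetilde{\boldsymbol P}_{t+1})\widehat{\boldsymbol m}_{t+1}\), the deterministic part of that argument applies verbatim. The coordinatewise inequality \(|(\sqrt a+\epsilon)^{-1}-(\sqrt b+\epsilon)^{-1}|\le\epsilon^{-2}\sqrt{|a-b|}\) gives
\[
\norm{\boldsymbol\eta_{t+1}}^2\le\epsilon^{-4}\norm{\widehat{\boldsymbol v}_{t+1}-\widetilde{\boldsymbol v}_{t+1}}_\infty\norm{\widehat{\boldsymbol m}_{t+1}}^2 .
\]
Cauchy--Schwarz in expectation then yields
\[
\mathbb E\norm{\boldsymbol\eta_{t+1}}^2\le\epsilon^{-4}\bigl(\mathbb E\norm{\widehat{\boldsymbol v}_{t+1}-\widetilde{\boldsymbol v}_{t+1}}_\infty^2\bigr)^{1/2}\bigl(\mathbb E\norm{\widehat{\boldsymbol m}_{t+1}}^4\bigr)^{1/2},
\]
so it suffices to bound a second moment of the \(\boldsymbol v\)-discrepancy and a fourth moment of \(\widehat{\boldsymbol m}_{t+1}\).

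\textbf{Fourth moment of \(\widehat{\boldsymbol m}_{t+1}\).} Write \(\widehat{\boldsymbol m}_{t+1}=\sum_j w_{1,t+1,j}\boldsymbol g_{j+1}\) with convex weights. Jensen gives \(\norm{\widehat{\boldsymbol m}_{t+1}}^4\le\sum_j w_{1,t+1,j}\norm{\boldsymbol g_{j+1}}^4\). Each term satisfies \(\norm{\boldsymbol g_{j+1}}\le LR+\norm{\boldsymbol\xi_{j+1}}\), using \(\norm{\bar{\boldsymbol g}_{j+1}(\boldsymbol\theta_j)}\le LR\) from \cref{eq:app-adam-mean-gradient-bound}. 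Conditional sub-Gaussianity gives \(\mathbb E[\norm{\boldsymbol\xi_{j+1}}^4\mid\mathcal F_j]\le C d^2\sigma^4\), for instance via the \(1/2\)-net tail bound \(\mathbb P(\norm{\boldsymbol\xi}\ge C\sigma\sqrt{d+x}\mid\mathcal F_j)\le 2e^{-x}\) already derived in \cref{lem:app-adam-gradient-hp}, integrated against \(x\). Hence \(\mathbb E\norm{\widehat{\boldsymbol m}_{t+1}}^4\le C(L^2R^2+d\sigma^2)^2\).

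\textbf{Second moment of the \(\boldsymbol v\)-discrepancy.} Decompose \(\widehat{\boldsymbol v}_{t+1}-\widetilde{\boldsymbol v}_{t+1}=\frac{\beta_2^{t+1}}{1-\beta_2^{t+1}}\boldsymbol v_{t+1}+(1-\beta_2)\boldsymbol\chi_{t+1}\). For the first term, bound \(\norm{\boldsymbol v_{t+1}}_\infty\le\norm{\boldsymbol v_{t+1}}_1=(1-\beta_2)\sum_j\beta_2^{t-j}\norm{\boldsymbol g_{j+1}}^2\). Jensen over the normalized weights gives a second moment at most \((1-\beta_2^{t+1})^2C(L^2R^2+d\sigma^2)^2\), so this term contributes \(\beta_2^{2(t+1)}C(L^2R^2+d\sigma^2)^2\). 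For the second term, use \(\norm{\boldsymbol\chi_{t+1}}_\infty\le\norm{\boldsymbol g_{t+1}}^2+\norm{\boldsymbol s_{t+1}(\boldsymbol\theta_t)}_1\), where \(\norm{\boldsymbol s_{t+1}(\boldsymbol\theta_t)}_1=\mathbb E[\norm{\boldsymbol g_{t+1}}^2\mid\mathcal F_t]\le C(L^2R^2+d\sigma^2)\). The same fourth-moment estimate then gives \(\mathbb E\norm{\boldsymbol\chi_{t+1}}_\infty^2\le C(L^2R^2+d\sigma^2)^2\).

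\textbf{Combining.} Together these give \(\bigl(\mathbb E\norm{\widehat{\boldsymbol v}_{t+1}-\widetilde{\boldsymbol v}_{t+1}}_\infty^2\bigr)^{1/2}\le C(L^2R^2+d\sigma^2)(\beta_2^{t+1}+1-\beta_2)=C(L^2R^2+d\sigma^2)\vartheta_{2,t+1}\). Multiplying by the \(\widehat{\boldsymbol m}\) factor produces \(C\epsilon^{-4}(L^2R^2+d\sigma^2)^2\vartheta_{2,t+1}\), which is \cref{eq:app-adam-eta-second-moment}.

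The main obstacle is the square-root loss at the first step. The inequality \(|\sqrt a-\sqrt b|\le\sqrt{|a-b|}\) is what makes the bound linear in \(\vartheta_{2,t+1}\), rather than in its square, and it forces a Cauchy--Schwarz split. That split requires honest fourth moments of the stochastic gradient, which must come from sub-Gaussianity rather than a bounded-gradient assumption. The care needed is in obtaining those fourth moments with the \(d^2\sigma^4\) scaling, via the net argument and the tail integration, uniformly in the conditioning. The remaining steps are bookkeeping with geometric weights.
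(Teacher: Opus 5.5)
Your proposal is correct and follows essentially the same route as the paper. Both arguments use the deterministic bound $\|\boldsymbol\eta_{t+1}\|^2\le\epsilon^{-4}\|\widehat{\boldsymbol v}_{t+1}-\widetilde{\boldsymbol v}_{t+1}\|_\infty\|\widehat{\boldsymbol m}_{t+1}\|^2$, a Cauchy--Schwarz split into a fourth moment of $\widehat{\boldsymbol m}_{t+1}$ and a second moment of the $\boldsymbol v$-discrepancy, the same bias-correction/$\boldsymbol\chi_{t+1}$ decomposition, and the same bound $\|\boldsymbol\chi_{t+1}\|_\infty\le\|\boldsymbol g_{t+1}\|^2+\mathbb E[\|\boldsymbol g_{t+1}\|^2\mid\mathcal F_t]$. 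The differences are cosmetic: you get $\mathbb E[\|\boldsymbol\xi_{t+1}\|^4\mid\mathcal F_t]\lesssim d^2\sigma^4$ by integrating the net tail bound, where the paper uses $\|\boldsymbol\xi_{t+1}\|^4\le d\sum_i\xi_{t+1,i}^4$, and you apply Jensen over normalized weights to $\boldsymbol v_{t+1}$, where the paper uses the triangle inequality in $L^2$.
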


\begin{proof}[Proof of \cref{lem:app-adam-eta-second-moment}]
By the deterministic coordinatewise comparison in the proof of
\cref{lem:app-adam-eta-hp},
\[
    \norm{
        \boldsymbol P_{t+1}
        -
        \widetilde{\boldsymbol P}_{t+1}
    }_{\operatorname{op}}^2
    \le
    \epsilon^{-4}
    \norm{
        \widehat{\boldsymbol v}_{t+1}
        -
        \widetilde{\boldsymbol v}_{t+1}
    }_\infty.
\]
Since
\(\boldsymbol\eta_{t+1}
=(\boldsymbol P_{t+1}
-\widetilde{\boldsymbol P}_{t+1})
\widehat{\boldsymbol m}_{t+1}\),
Cauchy--Schwarz therefore gives
\begin{align}
    \mathbb E\norm{\boldsymbol\eta_{t+1}}^2
    &\le
    \epsilon^{-4}
    \mathbb E\left[
        \norm{\widehat{\boldsymbol m}_{t+1}}^2
        \norm{
            \widehat{\boldsymbol v}_{t+1}
            -
            \widetilde{\boldsymbol v}_{t+1}
        }_\infty
    \right]
    \notag\\
    &\le
    \epsilon^{-4}
    \left(
        \mathbb E
        \norm{\widehat{\boldsymbol m}_{t+1}}^4
    \right)^{1/2}
    \left(
        \mathbb E
        \norm{
            \widehat{\boldsymbol v}_{t+1}
            -
            \widetilde{\boldsymbol v}_{t+1}
        }_\infty^2
    \right)^{1/2}.
    \label{eq:app-adam-eta-expectation-cs}
\end{align}

Since
\(\norm{\bar{\boldsymbol g}_{t+1}(\boldsymbol\theta_t)}\le LR\)
and
\(\norm{\boldsymbol\xi_{t+1}}_{\Psi_2\mid\mathcal F_t}\le\sigma\),
standard sub-Gaussian moment bounds imply
\(\mathbb E[\xi_{t+1,i}^4\mid\mathcal F_t]\le C\sigma^4\)
for every \(i\in[d]\).
Using
\(\norm{\boldsymbol\xi_{t+1}}^4
\le d\sum_{i=1}^{d}\xi_{t+1,i}^4\)
and \((a+b)^4\le8a^4+8b^4\), we obtain
\[
    \mathbb E
    \norm{\boldsymbol\xi_{t+1}}^4
    \le
    C d^2\sigma^4,
    \;\;
    \mathbb E
    \norm{
        \nabla_{\boldsymbol\theta}
        g(\boldsymbol\theta_t,X_{t+1})
    }^4
    \le
    C\left(
        L^2R^2+d\sigma^2
    \right)^2.
\]
Because \(\widehat{\boldsymbol m}_{t+1}\) is a convex combination of
the past stochastic gradients, convexity gives
\(\mathbb E\norm{\widehat{\boldsymbol m}_{t+1}}^4
\le C(L^2R^2+d\sigma^2)^2\).
It remains to control the second-moment discrepancy. Recall that
\[
    \widehat{\boldsymbol v}_{t+1}
    -
    \widetilde{\boldsymbol v}_{t+1}
    =
    \frac{\beta_2^{t+1}}
    {1-\beta_2^{t+1}}
    \boldsymbol v_{t+1}
    +
    (1-\beta_2)\boldsymbol\chi_{t+1}.
\]
Since
\(\boldsymbol v_{t+1}
=(1-\beta_2)\sum_{j=0}^{t}\beta_2^{t-j}
(\nabla_{\boldsymbol\theta}
g(\boldsymbol\theta_j,X_{j+1}))^{\odot2}\),
the triangle inequality in \(L^2\) gives
\[
\begin{aligned}
    \left(
        \mathbb E
        \norm{\boldsymbol v_{t+1}}_\infty^2
    \right)^{1/2}
    &\le
    (1-\beta_2)
    \sum_{j=0}^{t}
    \beta_2^{t-j}
    \left(
        \mathbb E
        \norm{
            \nabla_{\boldsymbol\theta}
            g(\boldsymbol\theta_j,X_{j+1})
        }^4
    \right)^{1/2}\\
    &\le
    C(1-\beta_2^{t+1})
    \left(
        L^2R^2+d\sigma^2
    \right).
\end{aligned}
\]
Moreover,
\[
    \norm{\boldsymbol\chi_{t+1}}_\infty
    \le
    \left\|
        \nabla_{\boldsymbol\theta}
        g(\boldsymbol\theta_t,X_{t+1})
    \right\|^2
    +
    \mathbb E\left[
        \left.
        \left\|
            \nabla_{\boldsymbol\theta}
            g(\boldsymbol\theta_t,X_{t+1})
        \right\|^2
        \right|
        \mathcal F_t
    \right],
\]
and conditional Jensen's inequality together with the fourth-moment
bound above yields
\[
    \left(
        \mathbb E
        \norm{\boldsymbol\chi_{t+1}}_\infty^2
    \right)^{1/2}
    \le
    C\left(
        L^2R^2+d\sigma^2
    \right).
\]
Hence, by the triangle inequality in \(L^2\),
\begin{align}
    \left(
        \mathbb E
        \norm{
            \widehat{\boldsymbol v}_{t+1}
            -
            \widetilde{\boldsymbol v}_{t+1}
        }_\infty^2
    \right)^{1/2}
    &\le
    C\left(
        L^2R^2+d\sigma^2
    \right)
    \left[
        \beta_2^{t+1}
        +
        (1-\beta_2)
    \right]
    \notag\\
    &=
    C\left(
        L^2R^2+d\sigma^2
    \right)
    \vartheta_{2,t+1}.
    \label{eq:app-adam-vhat-vtilde-second-moment}
\end{align}
Substituting this estimate and the fourth-moment bound for
\(\widehat{\boldsymbol m}_{t+1}\) into
\cref{eq:app-adam-eta-expectation-cs} proves
\cref{eq:app-adam-eta-second-moment}.
\end{proof}

\vspace{0.5em}

\begin{theorem}[Expected tracking bound for Adam]
\label{thm:app-adam-expected-tracking}
Suppose Assumptions~\ref{assump:cond-subgauss-noise-restate} and
\ref{assumption:adaptive-strong-monotonicity-restated} hold under the
standing setup,
\(\operatorname{diam}(\Theta)\le R\), and the deterministic stepsize
satisfies
$0<\alpha
    \le
    \min\left\{
        \frac{\mu\epsilon^2}{4L^2},
        \frac{1}{\mu}
    \right\}$.
If \(\mathbb E[\Delta_t^2]\le\Delta^2\) for all \(t\ge0\), then,
for every \(t\ge1\),
\begin{align}
    \mathbb E\norm{
        \boldsymbol\theta_t-\boldsymbol\theta_t^\star
    }^2
    \le\;&
    \rho_\alpha^t
    \mathbb E\norm{
        \boldsymbol\theta_0-\boldsymbol\theta_0^\star
    }^2
    +
    \frac{10\Delta^2}{\alpha^2\mu^2}
    +
    \frac{160\beta_1^2L^2R^2}
    {\mu^2\epsilon^2}
    +
    \frac{40C d\sigma^2}
    {\mu^2\epsilon^2(1+\beta_1)}
    \left[
        1-\beta_1
        +
        \alpha\mu\beta_1t\lambda_1^{t-1}
    \right]
    \notag\\
    &+
    \frac{10C\alpha}{\mu\epsilon^4}
    \left(
        L^2R^2+d\sigma^2
    \right)^2
    \left[
        \beta_2t\lambda_2^{t-1}
        +
        \frac{2(1-\beta_2)}{\alpha\mu}
    \right],
    \label{eq:app-adam-expected-tracking}
\end{align}
where
\(\rho_\alpha:=1-\alpha\mu/2\),
\(\lambda_i:=\rho_\alpha\vee\beta_i\) for \(i\in\{1,2\}\), and
\(C>0\) is a sufficiently large universal constant.
\end{theorem}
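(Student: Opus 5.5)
Take expectations in the deterministic one-step recursion of \cref{lem:app-adam-recursive-relation}, which holds pathwise for every $t\ge0$ under the stated stepsize restriction. Taking total expectations gives
\[
\mathbb E\norm{\boldsymbol\theta_{t+1}-\boldsymbol\theta_{t+1}^\star}^2
\le
\rho_\alpha\,\mathbb E\norm{\boldsymbol\theta_t-\boldsymbol\theta_t^\star}^2
+\frac{5}{\alpha\mu}\mathbb E[\Delta_t^2]
+\frac{10\alpha}{\mu\epsilon^{2}}\mathbb E\norm{\boldsymbol r_{t+1}}^2
+\frac{10\alpha}{\mu}\mathbb E\norm{\boldsymbol\eta_{t+1}}^2 ,
\]
with $\rho_\alpha=1-\alpha\mu/2$. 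Because $\alpha$ is deterministic, unrolling this linear recursion from $0$ to $t-1$ produces the initialization term $\rho_\alpha^t\mathbb E\norm{\boldsymbol\theta_0-\boldsymbol\theta_0^\star}^2$ plus three discounted convolutions against the weights $\rho_\alpha^{t-\ell-1}$. No concentration or union bounds are needed; this is the main simplification relative to \cref{thm:app-high-prob-adam-restated}.

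Next, bound each convolution by substituting the second-moment lemmas.

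\textbf{Drift.} Using $\mathbb E[\Delta_\ell^2]\le\Delta^2$ and $\sum_\ell\rho_\alpha^{t-\ell-1}\le 2/(\alpha\mu)$ gives $10\Delta^2/(\alpha^2\mu^2)$.

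\textbf{First moment.} \cref{lem:app-adam-r-second-moment} gives $\mathbb E\norm{\boldsymbol r_{\ell+1}}^2\le 8L^2R^2\beta_1^2+2Cd\sigma^2\kappa_{1,\ell+1}$. The bias part contributes $\frac{10\alpha}{\mu\epsilon^2}\cdot 8L^2R^2\beta_1^2\cdot\frac{2}{\alpha\mu}=160\beta_1^2L^2R^2/(\mu^2\epsilon^2)$. For the variance part, reuse the estimate $\kappa_{1,\ell+1}\le\frac{1-\beta_1}{1+\beta_1}+\frac{2\beta_1^{\ell+1}}{1+\beta_1}$ and the bound $\rho_\alpha^{t-\ell-1}\beta_1^{\ell+1}\le\beta_1\lambda_1^{t-1}$, both already established in the proof of \cref{thm:app-high-prob-adam-restated}. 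Together these give $\sum_\ell\rho_\alpha^{t-\ell-1}\kappa_{1,\ell+1}\le\frac{2}{1+\beta_1}\bigl[\frac{1-\beta_1}{\alpha\mu}+\beta_1t\lambda_1^{t-1}\bigr]$. Multiplying by $\frac{20C\alpha d\sigma^2}{\mu\epsilon^2}$ yields $\frac{40Cd\sigma^2}{\mu^2\epsilon^2(1+\beta_1)}[1-\beta_1+\alpha\mu\beta_1t\lambda_1^{t-1}]$.

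\textbf{Preconditioner.} \cref{lem:app-adam-eta-second-moment} gives $\mathbb E\norm{\boldsymbol\eta_{\ell+1}}^2\le C\epsilon^{-4}(L^2R^2+d\sigma^2)^2\vartheta_{2,\ell+1}$ with $\vartheta_{2,\ell+1}=(1-\beta_2)+\beta_2^{\ell+1}$. The same convolution argument gives $\sum_\ell\rho_\alpha^{t-\ell-1}\vartheta_{2,\ell+1}\le\frac{2(1-\beta_2)}{\alpha\mu}+\beta_2t\lambda_2^{t-1}$, which produces the last term after multiplying by $10\alpha/\mu$.

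The main point requiring care is that \cref{lem:app-adam-recursive-relation} must be usable pathwise before taking expectations. It is: its proof uses only projection nonexpansiveness, \cref{assumption:adaptive-strong-monotonicity-restated} (which holds almost surely), and Young's inequality, all deterministic. The second-moment lemmas also require integrability of $\norm{\boldsymbol r_{t+1}}^2$ and $\norm{\boldsymbol\eta_{t+1}}^2$, which follows from conditional sub-Gaussianity via the fourth-moment bounds already derived. A minor check is that the drift term is bounded using the uniform $\mathbb E[\Delta_\ell^2]\le\Delta^2$ rather than the pathwise $\mathfrak D_t$; this is immediate because the recursion is linear with nonnegative deterministic coefficients. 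Collecting the four contributions, with $C$ enlarged as needed, gives \cref{eq:app-adam-expected-tracking}.
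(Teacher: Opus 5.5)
Your proposal is correct and follows essentially the same route as the paper: take expectations in the pathwise recursion of \cref{lem:app-adam-recursive-relation}, unroll it with the deterministic weights $\rho_\alpha^{t-\ell-1}$, and bound the drift, first-moment, and preconditioner convolutions using \cref{lem:app-adam-r-second-moment,lem:app-adam-eta-second-moment} together with the same $\kappa_{1,\ell+1}$ and $\vartheta_{2,\ell+1}$ convolution estimates. These steps reproduce the stated constants $10$, $160$, and $40$ exactly, so the final enlargement of $C$ you allow for is not actually needed.
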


\begin{proof}[Proof of \cref{thm:app-adam-expected-tracking}]
Iterating \cref{lem:app-adam-recursive-relation} and taking expectations gives
\begin{align}
    \mathbb E
    \|\boldsymbol\theta_t-\boldsymbol\theta_t^\star\|^2
    \le\;&
    \rho_\alpha^t
    \mathbb E
    \|\boldsymbol\theta_0-\boldsymbol\theta_0^\star\|^2
    +
    \frac{5}{\alpha\mu}
    \sum_{\ell=0}^{t-1}
    \rho_\alpha^{t-\ell-1}
    \mathbb E[\Delta_\ell^2]
    \notag\\
    &+
    \frac{10\alpha q_+^2}{\mu}
    \sum_{\ell=0}^{t-1}
    \rho_\alpha^{t-\ell-1}
    \mathbb E\|\boldsymbol r_{\ell+1}\|^2
    +
    \frac{10\alpha}{\mu}
    \sum_{\ell=0}^{t-1}
    \rho_\alpha^{t-\ell-1}
    \mathbb E\|\boldsymbol\eta_{\ell+1}\|^2.
    \label{eq:app-adam-expected-pathwise}
\end{align}
Since $1-\rho_\alpha=\alpha\mu/2$,
$\sum_{\ell=0}^{t-1}\rho_\alpha^{t-\ell-1}
\le 2/(\alpha\mu)$.
Thus, using $\mathbb E[\Delta_\ell^2]\le\Delta^2$, the drift term is
bounded by $10\Delta^2/(\alpha^2\mu^2)$.

By \cref{eq:app-adam-r-second-moment},
\[
\mathbb E\|\boldsymbol r_{\ell+1}\|^2
\le
8L^2R^2
\left(
    \frac{\beta_1(1-\beta_1^\ell)}
    {1-\beta_1^{\ell+1}}
\right)^2
+
2Cd\sigma^2\kappa_{1,\ell+1}.
\]
Since
$\beta_1(1-\beta_1^\ell)/(1-\beta_1^{\ell+1})\le\beta_1$,
\[
\sum_{\ell=0}^{t-1}
\rho_\alpha^{t-\ell-1}
\left(
    \frac{\beta_1(1-\beta_1^\ell)}
    {1-\beta_1^{\ell+1}}
\right)^2
\le
\frac{2\beta_1^2}{\alpha\mu}.
\]
Moreover,
\[
\kappa_{1,\ell+1}
=
\frac{1-\beta_1}{1+\beta_1}
+
\frac{
    2(1-\beta_1)\beta_1^{\ell+1}
}{
    (1+\beta_1)(1-\beta_1^{\ell+1})
}
\le
\frac{1-\beta_1}{1+\beta_1}
+
\frac{2\beta_1^{\ell+1}}{1+\beta_1},
\]
where $1-\beta_1^{\ell+1}\ge1-\beta_1$.
Let $\lambda_1:=\rho_\alpha\vee\beta_1$. Since
$\rho_\alpha^{t-\ell-1}\beta_1^{\ell+1}
\le\beta_1\lambda_1^{t-1}$,
\[
\begin{aligned}
    \sum_{\ell=0}^{t-1}
    \rho_\alpha^{t-\ell-1}\kappa_{1,\ell+1}
    &\le
    \frac{1-\beta_1}{1+\beta_1}
    \sum_{\ell=0}^{t-1}
    \rho_\alpha^{t-\ell-1}
    +
    \frac{2}{1+\beta_1}
    \sum_{\ell=0}^{t-1}
    \rho_\alpha^{t-\ell-1}\beta_1^{\ell+1}\\
    &\le
    \frac{2}{1+\beta_1}
    \left[
        \frac{1-\beta_1}{\alpha\mu}
        +
        \beta_1t\lambda_1^{t-1}
    \right].
\end{aligned}
\]
Recalling $q_+=\epsilon^{-1}$, it follows that
\[
\begin{aligned}
    \frac{10\alpha q_+^2}{\mu}
    \sum_{\ell=0}^{t-1}
    \rho_\alpha^{t-\ell-1}
    \mathbb E\|\boldsymbol r_{\ell+1}\|^2
    \le\;&
    \frac{160\beta_1^2L^2R^2}
    {\mu^2\epsilon^2}
    +
    \frac{40Cd\sigma^2}
    {\mu^2\epsilon^2(1+\beta_1)}
    \left[
        1-\beta_1
        +
        \alpha\mu\beta_1t\lambda_1^{t-1}
    \right].
\end{aligned}
\]

Similarly, \cref{lem:app-adam-eta-second-moment} gives
$\mathbb E\|\boldsymbol\eta_{\ell+1}\|^2
\le
C\epsilon^{-4}(L^2R^2+d\sigma^2)^2
\vartheta_{2,\ell+1}$.
Since
$\vartheta_{2,\ell+1}=(1-\beta_2)+\beta_2^{\ell+1}$, let
$\lambda_2:=\rho_\alpha\vee\beta_2$. Then
$\sum_{\ell=0}^{t-1}
\rho_\alpha^{t-\ell-1}\beta_2^{\ell+1}
\le\beta_2t\lambda_2^{t-1}$, and hence
\[
\sum_{\ell=0}^{t-1}
\rho_\alpha^{t-\ell-1}
\vartheta_{2,\ell+1}
\le
\frac{2(1-\beta_2)}{\alpha\mu}
+
\beta_2t\lambda_2^{t-1}.
\]
Therefore,
\[
\begin{aligned}
    \frac{10\alpha}{\mu}
    \sum_{\ell=0}^{t-1}
    \rho_\alpha^{t-\ell-1}
    \mathbb E\|\boldsymbol\eta_{\ell+1}\|^2
    \le
    \frac{10C\alpha}{\mu\epsilon^4}
    \left(
        L^2R^2+d\sigma^2
    \right)^2
    \left[
        \frac{2(1-\beta_2)}{\alpha\mu}
        +
        \beta_2t\lambda_2^{t-1}
    \right].
\end{aligned}
\]

Substituting these estimates into
\cref{eq:app-adam-expected-pathwise} proves
\cref{eq:app-adam-expected-tracking}.
\end{proof}

\section{Proofs for metric-projected stationarity guarantees under general Adam preconditioning}
\label{app:D}
For this section, we will make the following standard assumption in nonconvex optimization:

\vspace{0.5em}

\begin{assumption}[Uniform lower boundedness]
\label{assumption-appendix:lower-bounded}
There exists a constant $G^\star > -\infty$ such that for all 
$t \in [T]$ and all $\boldsymbol{\theta} \in \mathbb{R}^d$, $G_t(\boldsymbol{\theta}) \geq G^\star$.
\end{assumption}

For this section, $\mathcal P_\Theta$ denotes projection onto $\Theta$ in the $\widetilde{\boldsymbol P}_{t+1}^{-1}$-metric. For the projected Adam recursion, we replace the final line of
\plaineqref{eq:adam-update-restate} by
\[
    \boldsymbol\theta_{t+1}
    :=
    \mathcal P_\Theta
    \left(
        \boldsymbol\theta_t
        -
        \alpha\widetilde{\boldsymbol P}_{t+1}
        \nabla_{\boldsymbol\theta}G_{t+1}(\boldsymbol\theta_t)
        -
        \alpha
        \left(
            \widetilde{\boldsymbol P}_{t+1}\boldsymbol r_{t+1}
            +
            \boldsymbol\eta_{t+1}
        \right)
    \right).
\]
We also define the noiseless projected point by
\(\bar{\boldsymbol\theta}_{t+1}
:=
\mathcal P_\Theta(
\boldsymbol\theta_t
-
\alpha\widetilde{\boldsymbol P}_{t+1}
\nabla_{\boldsymbol\theta}G_{t+1}(\boldsymbol\theta_t))\), and the
preconditioned projected-gradient mapping by
\[
    \mathcal G_{\alpha,t}(\boldsymbol\theta_t)
    :=
    \frac{1}{\alpha}
    \left(
        \boldsymbol\theta_t-\mathcal P_\Theta(
\boldsymbol\theta_t
-
\alpha\widetilde{\boldsymbol P}_{t+1}
\nabla_{\boldsymbol\theta}G_{t+1}(\boldsymbol\theta_t))
    \right).
\]

The projected-gradient mapping is the constrained analogue of the gradient:
\(\mathcal G_{\alpha}^{\boldsymbol P}(\boldsymbol\theta)=\boldsymbol 0\)
if and only if \(\boldsymbol\theta\) satisfies the first-order constrained
stationarity condition
\(-\nabla f(\boldsymbol\theta)\in N_\Theta(\boldsymbol\theta)\) (see \cref{lem:projected-gradient-stationarity}). Thus, small
\(\|\mathcal G_{\alpha}^{\boldsymbol P}(\boldsymbol\theta)\|\) measures
approximate stationarity for the constrained problem, while in the
unconstrained case \(\Theta=\mathbb R^d\), it reduces to the usual
preconditioned gradient \(\boldsymbol P\nabla f(\boldsymbol\theta)\).

Before we proceed, we will need to prove a few technical results that we will find useful. The projected stationarity analysis requires slightly different control of
the preconditioner perturbation than the tracking analysis above. In
particular, the descent argument naturally involves the weighted quantity
\(\|\widetilde{\boldsymbol P}_{t+1}^{-1}\boldsymbol\eta_{t+1}\|
_{\widetilde{\boldsymbol P}_{t+1}}^2\). Controlling this quantity directly
avoids introducing a deterministic lower spectral bound on \(\widetilde{\boldsymbol P}_{t+1}\).

\vspace{0.5em}

\begin{lemma}[Weighted control of the preconditioner perturbation]
\label{lem:app-adam-eta-weighted}
Under the standing setup, suppose
Assumption~\ref{assump:cond-subgauss-noise-restate} holds and
\(\operatorname{diam}(\Theta)\le R\).
Then there exists a universal constant \(C>0\) such that, for
every \(T\ge1\) and \(\delta\in(0,1)\), with probability at least
\(1-\delta\), simultaneously for every
\(t\in\{0,\ldots,T-1\}\),
\begin{equation}
    \left\|
        \widetilde{\boldsymbol P}_{t+1}^{-1}
        \boldsymbol\eta_{t+1}
    \right\|_{\widetilde{\boldsymbol P}_{t+1}}^2
    \le
    C\epsilon^{-3}
    \left[
        L^2R^2
        +
        \sigma^2
        \left(
            d+\log\frac{4T}{\delta}
        \right)
    \right]^2
    \vartheta_{2,t+1}.
    \label{eq:app-adam-eta-weighted-hp}
\end{equation}
Moreover, for every \(t\ge0\),
\begin{equation}
    \mathbb E
    \left[
        \left\|
            \widetilde{\boldsymbol P}_{t+1}^{-1}
            \boldsymbol\eta_{t+1}
        \right\|_{\widetilde{\boldsymbol P}_{t+1}}^2
    \right]
    \le
    C\epsilon^{-3}
    \left(
        L^2R^2+d\sigma^2
    \right)^2
    \vartheta_{2,t+1}.
    \label{eq:app-adam-eta-weighted-expectation}
\end{equation}
\end{lemma}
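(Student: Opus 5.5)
The plan is to reduce the weighted quantity to the unweighted estimates of \cref{lem:app-adam-eta-hp} and \cref{lem:app-adam-eta-second-moment}, gaining one power of \(\epsilon\) from the upper spectral bound on \(\widetilde{\boldsymbol P}_{t+1}\). Since \(\widetilde{\boldsymbol P}_{t+1}\) is diagonal and positive definite,
\[
\left\|\widetilde{\boldsymbol P}_{t+1}^{-1}\boldsymbol\eta_{t+1}\right\|_{\widetilde{\boldsymbol P}_{t+1}}^2
=\boldsymbol\eta_{t+1}^\top\widetilde{\boldsymbol P}_{t+1}^{-1}\boldsymbol\eta_{t+1}
=\sum_{i=1}^d(\sqrt{\widetilde v_{t+1,i}}+\epsilon)\,\eta_{t+1,i}^2 .
\]
Writing \(p_i:=(\sqrt{\widehat v_{t+1,i}}+\epsilon)^{-1}\) and \(\tilde p_i:=(\sqrt{\widetilde v_{t+1,i}}+\epsilon)^{-1}\), we have \(\eta_{t+1,i}=(p_i-\tilde p_i)\widehat m_{t+1,i}\). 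The key coordinatewise identity is
\[
p_i-\tilde p_i=\frac{\sqrt{\widetilde v_{t+1,i}}-\sqrt{\widehat v_{t+1,i}}}{(\sqrt{\widehat v_{t+1,i}}+\epsilon)(\sqrt{\widetilde v_{t+1,i}}+\epsilon)} .
\]
Multiplying its square by \(\sqrt{\widetilde v_{t+1,i}}+\epsilon\) cancels one factor of the proxy denominator and gives
\[
(\sqrt{\widetilde v_{t+1,i}}+\epsilon)(p_i-\tilde p_i)^2
\le\frac{|\sqrt{\widetilde v_{t+1,i}}-\sqrt{\widehat v_{t+1,i}}|^2}{\epsilon^{2}(\sqrt{\widetilde v_{t+1,i}}+\epsilon)}
\le\epsilon^{-3}\,|\widetilde v_{t+1,i}-\widehat v_{t+1,i}| ,
\]
using \(|\sqrt a-\sqrt b|^2\le|a-b|\) together with \((\sqrt{\widehat v}+\epsilon)^{-2}\le\epsilon^{-2}\) and \((\sqrt{\widetilde v}+\epsilon)^{-1}\le\epsilon^{-1}\). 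Summing over coordinates then yields the deterministic inequality
\[
\left\|\widetilde{\boldsymbol P}_{t+1}^{-1}\boldsymbol\eta_{t+1}\right\|_{\widetilde{\boldsymbol P}_{t+1}}^2
\le\epsilon^{-3}\,\|\widehat{\boldsymbol v}_{t+1}-\widetilde{\boldsymbol v}_{t+1}\|_\infty\,\|\widehat{\boldsymbol m}_{t+1}\|^2 .
\]
This is the only genuinely new step. The point is not to bound \(\widetilde{\boldsymbol P}^{-1}\) crudely by its operator norm, which would require an upper bound on \(\widetilde{\boldsymbol v}\) and would not give a clean \(\epsilon^{-3}\). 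Instead, the extra weight is absorbed into one of the two denominators.

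For the high-probability bound I reuse the event constructed in the proof of \cref{lem:app-adam-eta-hp}, built from \cref{lem:app-adam-gradient-hp}. On that event, \cref{eq:app-adam-vhat-vtilde-hp-bound} combined with Young's inequality gives
\[
\|\widehat{\boldsymbol v}_{t+1}-\widetilde{\boldsymbol v}_{t+1}\|_\infty\le C\left(L^2R^2+\sigma^2\log\tfrac{4dT}{\delta}\right)\vartheta_{2,t+1}.
\]
The convex-combination argument gives \(\|\widehat{\boldsymbol m}_{t+1}\|^2\le C(L^2R^2+\sigma^2(d+\log\frac{4T}{\delta}))\), uniformly in \(t\). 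Multiplying these two bounds produces \cref{eq:app-adam-eta-weighted-hp}. The logarithmic factor \(\log(4dT/\delta)\le d+\log(4T/\delta)\) is absorbed since \(\log d\le d\), with constants adjusted.

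For the expectation bound I apply Cauchy--Schwarz to the deterministic inequality, exactly as in \cref{eq:app-adam-eta-expectation-cs}:
\[
\mathbb E\left\|\widetilde{\boldsymbol P}_{t+1}^{-1}\boldsymbol\eta_{t+1}\right\|_{\widetilde{\boldsymbol P}_{t+1}}^2
\le\epsilon^{-3}\left(\mathbb E\|\widehat{\boldsymbol m}_{t+1}\|^4\right)^{1/2}\left(\mathbb E\|\widehat{\boldsymbol v}_{t+1}-\widetilde{\boldsymbol v}_{t+1}\|_\infty^2\right)^{1/2} .
\]
The proof of \cref{lem:app-adam-eta-second-moment} already supplies both ingredients: the fourth-moment bound \(\mathbb E\|\widehat{\boldsymbol m}_{t+1}\|^4\le C(L^2R^2+d\sigma^2)^2\) and the bound \cref{eq:app-adam-vhat-vtilde-second-moment}. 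Substituting them gives \cref{eq:app-adam-eta-weighted-expectation}.

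I expect no real obstacle beyond the coordinatewise cancellation in the first step. The one point to check is that the \(\sqrt{\widetilde v}+\epsilon\) weight is paired with the proxy denominator \((\sqrt{\widetilde v}+\epsilon)\), not with \((\sqrt{\widehat v}+\epsilon)\). The other pairing would leave the ratio \((\sqrt{\widetilde v}+\epsilon)/(\sqrt{\widehat v}+\epsilon)\), which is not bounded without further work.
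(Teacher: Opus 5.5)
Your proposal is correct and follows the paper's proof essentially step for step. The same coordinatewise cancellation $(\sqrt{\widetilde v_{t+1,i}}+\epsilon)(p_i-\tilde p_i)^2\le\epsilon^{-3}|\widehat v_{t+1,i}-\widetilde v_{t+1,i}|$ gives the deterministic reduction to $\epsilon^{-3}\|\widehat{\boldsymbol m}_{t+1}\|^2\|\widehat{\boldsymbol v}_{t+1}-\widetilde{\boldsymbol v}_{t+1}\|_\infty$. The paper then reuses the gradient-concentration event of \cref{lem:app-adam-gradient-hp} for the high-probability bound, and Cauchy--Schwarz with the moment estimates from \cref{lem:app-adam-eta-second-moment} for the expectation bound, exactly as you do; your explicit check that $\log(4dT/\delta)\le d+\log(4T/\delta)$ fills in a small step the paper leaves implicit.
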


\begin{proof}[Proof of \cref{lem:app-adam-eta-weighted}]
For each coordinate, write
\(P_{t+1,i}=(\sqrt{\widehat v_{t+1,i}}+\epsilon)^{-1}\) and
\(\widetilde P_{t+1,i}
=(\sqrt{\widetilde v_{t+1,i}}+\epsilon)^{-1}\).
For any \(a,b\ge0\),
\[
    \frac{
        \left[
            (\sqrt a+\epsilon)^{-1}
            -
            (\sqrt b+\epsilon)^{-1}
        \right]^2
    }{
        (\sqrt b+\epsilon)^{-1}
    }
    =
    \frac{
        (\sqrt a-\sqrt b)^2
    }{
        (\sqrt a+\epsilon)^2(\sqrt b+\epsilon)
    }
    \le
    \epsilon^{-3}|a-b|.
\]
Since
\(\boldsymbol\eta_{t+1}
=(\boldsymbol P_{t+1}-\widetilde{\boldsymbol P}_{t+1})
\widehat{\boldsymbol m}_{t+1}\),
applying this inequality coordinatewise gives
\begin{equation}
    \left\|
        \widetilde{\boldsymbol P}_{t+1}^{-1}
        \boldsymbol\eta_{t+1}
    \right\|_{\widetilde{\boldsymbol P}_{t+1}}^2
    \le
    \epsilon^{-3}
    \norm{\widehat{\boldsymbol m}_{t+1}}^2
    \norm{
        \widehat{\boldsymbol v}_{t+1}
        -
        \widetilde{\boldsymbol v}_{t+1}
    }_\infty.
    \label{eq:app-adam-eta-weighted-reduction}
\end{equation}

On the event of \cref{lem:app-adam-gradient-hp}, which has
probability at least \(1-\delta\), the estimates in the proof of
\cref{lem:app-adam-eta-hp} give, simultaneously for every
\(t\in\{0,\ldots,T-1\}\),
\begin{align*}
    &\norm{\widehat{\boldsymbol m}_{t+1}}^2
    \le
    C\left[
        L^2R^2
        +
        \sigma^2
        \left(
            d+\log\frac{4T}{\delta}
        \right)
    \right], \\
    &\norm{
        \widehat{\boldsymbol v}_{t+1}
        -
        \widetilde{\boldsymbol v}_{t+1}
    }_\infty
    \le
    C\left[
        L^2R^2
        +
        \sigma^2
        \left(
            d+\log\frac{4T}{\delta}
        \right)
    \right]
    \vartheta_{2,t+1}.
\end{align*}
Both estimates hold on the same gradient-concentration event.
Substituting them into
\cref{eq:app-adam-eta-weighted-reduction} proves
\cref{eq:app-adam-eta-weighted-hp}.

For the expectation bound, Cauchy--Schwarz applied to
\cref{eq:app-adam-eta-weighted-reduction} gives
\begin{align*}
    \mathbb E
    \left[
        \norm{\widehat{\boldsymbol m}_{t+1}}^2
        \norm{
            \widehat{\boldsymbol v}_{t+1}
            -
            \widetilde{\boldsymbol v}_{t+1}
        }_\infty
    \right]
    &\le
    \left(
        \mathbb E
        \norm{\widehat{\boldsymbol m}_{t+1}}^4
    \right)^{1/2}
    \left(
        \mathbb E
        \norm{
            \widehat{\boldsymbol v}_{t+1}
            -
            \widetilde{\boldsymbol v}_{t+1}
        }_\infty^2
    \right)^{1/2}\\
    &\le
    C
    \left(
        L^2R^2+d\sigma^2
    \right)^2
    \vartheta_{2,t+1},
\end{align*}
where the last inequality follows from the sub-Gaussian moment
estimates used in
\cref{lem:app-adam-eta-second-moment}.
Combining this with
\cref{eq:app-adam-eta-weighted-reduction} proves
\cref{eq:app-adam-eta-weighted-expectation}.
\end{proof}

With this weighted perturbation estimate in hand, we can now analyze
stationarity directly in the geometry induced by the predictable
Adam preconditioner. The next lemma bounds the average
\(\widetilde{\boldsymbol P}_{t+1}^{-1}\)-metric
projected-gradient mapping.

\vspace{0.5em}

\begin{lemma}[High-probability metric-projected gradient bound under general Adam preconditioning]
\label{thm:app-nonconvex-adam-hp}
Under the standing setup and
Assumptions~\ref{assumption:lipschitz-restate},
\ref{assumption-appendix:lower-bounded}, and
\ref{assump:cond-subgauss-noise-restate}, suppose
\(\operatorname{diam}(\Theta)\le R\).
Then, for all integers \(T\ge1\) and all
\(\delta\in(0,1)\), if \(0<\alpha\le\epsilon/(4L)\), the
iterates generated by the metric-projected Adam update satisfy,
with probability at least \(1-\delta\),
\begin{align}
    \frac{1}{T}\sum_{t=0}^{T-1}
    \left\|
        \mathcal G_{\alpha,t}(\boldsymbol\theta_t)
    \right\|_{\widetilde{\boldsymbol P}_{t+1}^{-1}}^2
    &\le
    \frac{
        8\bigl(
            G_1(\boldsymbol\theta_0)
            -
            G^\star
            +
            \mathfrak D_T
        \bigr)
    }{\alpha T}
    \notag
    \;+
    \frac{12C}{\epsilon^3T}
    \left[
        L^2R^2
        +
        \sigma^2
        \left(
            d+\log\frac{8T}{\delta}
        \right)
    \right]^2
    \sum_{t=0}^{T-1}
    \vartheta_{2,t+1}
    \\
    &+
    \frac{12}{\epsilon T}
    \sum_{t=0}^{T-1}
    \left(
        2LR\,
        \frac{\beta_1(1-\beta_1^t)}
        {1-\beta_1^{t+1}}
        +
        C\sigma
        \sqrt{
            \kappa_{1,t+1}
            \left(
                d+\log\frac{8T}{\delta}
            \right)
        }
    \right)^2.
    \label{eq:app-nonconvex-adam-hp}
\end{align}
Here
\(
\mathfrak D_T
:=
\sum_{t=0}^{T-2}
\bigl(
G_{t+2}(\boldsymbol\theta_{t+1})
-
G_{t+1}(\boldsymbol\theta_{t+1})
\bigr)_+
\)
is the pathwise objective-variation budget, with the empty sum
interpreted as zero when \(T=1\).
The matrix \(\widetilde{\boldsymbol P}_{t+1}\) is the predictable
Adam preconditioner, \(\kappa_{1,t}\) and
\(\vartheta_{2,t}\) are defined in
\plaineqref{eq:adam-constants}, and \(C>0\) is a sufficiently
large universal constant.
\end{lemma}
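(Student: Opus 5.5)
The argument is a one-step descent inequality for $G_{t+1}$ in the $\widetilde{\boldsymbol P}_{t+1}^{-1}$-metric. We telescope it across time, absorbing the change of objective $G_{t+1}\to G_{t+2}$ into $\mathfrak D_T$. The errors $\boldsymbol r_{t+1}$ and $\boldsymbol\eta_{t+1}$ are then controlled on the high-probability events of \cref{lem:app-adam-r-hp} and \cref{lem:app-adam-eta-weighted}, each applied with failure probability $\delta/2$; this is where $\log(8T/\delta)$ comes from.

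\textbf{Step 1 (projection inequality).} Write $\boldsymbol H:=\widetilde{\boldsymbol P}_{t+1}^{-1}$, $\boldsymbol g:=\nabla G_{t+1}(\boldsymbol\theta_t)$ and $\boldsymbol e:=\boldsymbol r_{t+1}+\boldsymbol H\boldsymbol\eta_{t+1}$. Then the actual update is
\[
\boldsymbol\theta_{t+1}=\mathcal P_\Theta^{\boldsymbol H}\bigl(\boldsymbol\theta_t-\alpha\widetilde{\boldsymbol P}_{t+1}(\boldsymbol g+\boldsymbol e)\bigr).
\]
The variational inequality for the $\boldsymbol H$-metric projection, tested at $\boldsymbol\theta_t\in\Theta$, gives
\[
\langle \boldsymbol g+\boldsymbol e,\boldsymbol\theta_{t+1}-\boldsymbol\theta_t\rangle\le-\tfrac1\alpha\|\boldsymbol\theta_{t+1}-\boldsymbol\theta_t\|_{\boldsymbol H}^2 .
\]
By $L$-smoothness, $\|\cdot\|^2\le\epsilon^{-1}\|\cdot\|_{\boldsymbol H}^2$ (since $\widetilde{\boldsymbol P}_{t+1}\preceq\epsilon^{-1}\boldsymbol I$), and $\alpha\le\epsilon/(4L)$,
\[
G_{t+1}(\boldsymbol\theta_{t+1})\le G_{t+1}(\boldsymbol\theta_t)-\tfrac{7}{8\alpha}\|\boldsymbol\theta_{t+1}-\boldsymbol\theta_t\|_{\boldsymbol H}^2-\langle\boldsymbol e,\boldsymbol\theta_{t+1}-\boldsymbol\theta_t\rangle .
\]
Young's inequality in the dual pairing of $\|\cdot\|_{\boldsymbol H}$ and $\|\cdot\|_{\widetilde{\boldsymbol P}}$ gives
\[
|\langle\boldsymbol e,\Delta\boldsymbol\theta\rangle|\le\tfrac{1}{4\alpha}\|\Delta\boldsymbol\theta\|_{\boldsymbol H}^2+\alpha\|\boldsymbol e\|_{\widetilde{\boldsymbol P}}^2 .
\]
We then use
\[
\|\boldsymbol e\|_{\widetilde{\boldsymbol P}}^2\le2\epsilon^{-1}\|\boldsymbol r_{t+1}\|^2+2\|\boldsymbol H\boldsymbol\eta_{t+1}\|_{\widetilde{\boldsymbol P}}^2 ,
\]
which is exactly the weighted quantity controlled by \cref{lem:app-adam-eta-weighted}.

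\textbf{Step 2 (from realized step to gradient mapping).} The projected-gradient mapping uses the noiseless point $\bar{\boldsymbol\theta}_{t+1}$, so we need to compare. Nonexpansiveness of $\mathcal P^{\boldsymbol H}_\Theta$ in the $\boldsymbol H$-norm (\cref{lem:metric-projection-facts}) gives
\[
\|\boldsymbol\theta_{t+1}-\bar{\boldsymbol\theta}_{t+1}\|_{\boldsymbol H}\le\alpha\|\widetilde{\boldsymbol P}\boldsymbol e\|_{\boldsymbol H}=\alpha\|\boldsymbol e\|_{\widetilde{\boldsymbol P}} .
\]
Hence
\[
\alpha^2\|\mathcal G_{\alpha,t}\|_{\boldsymbol H}^2\le2\|\boldsymbol\theta_{t+1}-\boldsymbol\theta_t\|_{\boldsymbol H}^2+2\alpha^2\|\boldsymbol e\|_{\widetilde{\boldsymbol P}}^2 .
\]
Combining with Step 1 yields
\[
\tfrac{\alpha}{8}\|\mathcal G_{\alpha,t}(\boldsymbol\theta_t)\|_{\boldsymbol H}^2\le G_{t+1}(\boldsymbol\theta_t)-G_{t+1}(\boldsymbol\theta_{t+1})+c\,\alpha\bigl(\epsilon^{-1}\|\boldsymbol r_{t+1}\|^2+\|\boldsymbol H\boldsymbol\eta_{t+1}\|_{\widetilde{\boldsymbol P}}^2\bigr),
\]
with numerical constants consistent with the $8$ and $12$ in the statement. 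I expect this bookkeeping, in particular keeping every term in the correct metric so that no lower spectral bound on $\widetilde{\boldsymbol P}$ is needed, to be the main obstacle. If the constants do not close, the fallback is the standard three-point argument comparing $\boldsymbol\theta_{t+1}$ with $\bar{\boldsymbol\theta}_{t+1}$ directly in the descent inequality.

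\textbf{Step 3 (telescoping and concentration).} Write
\[
G_{t+1}(\boldsymbol\theta_t)-G_{t+1}(\boldsymbol\theta_{t+1})=\bigl[G_{t+1}(\boldsymbol\theta_t)-G_{t+2}(\boldsymbol\theta_{t+1})\bigr]+\bigl(G_{t+2}-G_{t+1}\bigr)(\boldsymbol\theta_{t+1}).
\]
Summing over $t=0,\dots,T-1$, the first brackets telescope to $G_1(\boldsymbol\theta_0)-G_{T+1}(\boldsymbol\theta_T)\le G_1(\boldsymbol\theta_0)-G^\star$. The second terms are bounded by their positive parts, which sum to $\mathfrak D_T$; the final index $t=T-1$ can instead be handled by stopping at $G_T(\boldsymbol\theta_{T})\ge G^\star$. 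Dividing by $\alpha T/8$ gives the first term of the bound. On the intersection of the two concentration events, we insert the pathwise bound on $\|\boldsymbol r_{t+1}\|$ from \cref{lem:app-adam-r-hp} and the bound of order $\epsilon^{-3}[\cdot]^2\vartheta_{2,t+1}$ from \cref{lem:app-adam-eta-weighted}, which produces the remaining two sums. Absorbing the numerical factors into $C$ completes the proof.
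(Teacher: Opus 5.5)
Your proposal is correct and follows the paper's proof essentially step for step: the variational inequality at $\boldsymbol\theta_t$ with Young's inequality in the $\widetilde{\boldsymbol P}_{t+1}$/$\widetilde{\boldsymbol P}_{t+1}^{-1}$ pairing, the descent lemma combined with $\|\cdot\|^2\le\epsilon^{-1}\|\cdot\|_{\widetilde{\boldsymbol P}_{t+1}^{-1}}^2$, nonexpansiveness to pass to $\bar{\boldsymbol\theta}_{t+1}$, and the two concentration lemmas at level $\delta/2$ each. For the telescoping, you must use the version stopped at $G_T(\boldsymbol\theta_T)\ge G^\star$, because neither $G_{T+1}\ge G^\star$ nor the $t=T-1$ drift term is covered by the hypotheses and $\mathfrak D_T$. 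No fallback is needed for the constants: keeping your unweakened per-step coefficient $\tfrac{5\alpha}{16}$ gives $\tfrac{16}{5}\le 8$ in front of $G_1(\boldsymbol\theta_0)-G^\star+\mathfrak D_T$ (legitimate after summing, since that quantity is nonnegative) and $\tfrac{52}{5}\le 12$ in front of the error sums. Normalizing to $\tfrac{\alpha}{8}$ as you wrote would instead give $26$, which cannot be absorbed into $C$ because of the explicit $2LR$ term.
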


\begin{proof}[Proof of \cref{thm:app-nonconvex-adam-hp}]
Let
\(\boldsymbol s_{t+1}
:=
\boldsymbol\theta_{t+1}-\boldsymbol\theta_t\).
By the variational inequality for the
\(\widetilde{\boldsymbol P}_{t+1}^{-1}\)-metric projection
(\cref{lem:metric-projection-facts}), taking
\(\boldsymbol y=\boldsymbol\theta_t\) and applying
Cauchy--Schwarz and Young's inequality in the
\(\widetilde{\boldsymbol P}_{t+1}/
\widetilde{\boldsymbol P}_{t+1}^{-1}\)
dual norm pair gives
\begin{align}
    \left\langle
        \nabla_{\boldsymbol\theta}
        G_{t+1}(\boldsymbol\theta_t),
        \boldsymbol s_{t+1}
    \right\rangle
    \le
    -
    \frac{1}{2\alpha}
    \norm{\boldsymbol s_{t+1}}_{\widetilde{\boldsymbol P}_{t+1}^{-1}}^2
    +
    \frac{\alpha}{2}
    \left\|
        \boldsymbol r_{t+1}
        +
        \widetilde{\boldsymbol P}_{t+1}^{-1}
        \boldsymbol\eta_{t+1}
    \right\|_{\widetilde{\boldsymbol P}_{t+1}}^2.
    \label{eq:projected-inner-product}
\end{align}

Since \(G_{t+1}\) is \(L\)-smooth,
\cref{lem:descent-lemma-l-smooth} implies
\begin{align}
    G_{t+1}(\boldsymbol\theta_{t+1})
    \le\;&
    G_{t+1}(\boldsymbol\theta_t)
    -
    \frac{1}{2\alpha}
    \norm{\boldsymbol s_{t+1}}_{\widetilde{\boldsymbol P}_{t+1}^{-1}}^2
    +
    \frac{\alpha}{2}
    \left\|
        \boldsymbol r_{t+1}
        +
        \widetilde{\boldsymbol P}_{t+1}^{-1}
        \boldsymbol\eta_{t+1}
    \right\|_{\widetilde{\boldsymbol P}_{t+1}}^2
    +
    \frac{L}{2}
    \norm{\boldsymbol s_{t+1}}^2.
    \label{eq:projected-descent-start}
\end{align}
Because
\(\widetilde{\boldsymbol P}_{t+1}\preceq q_+\boldsymbol I\),
where \(q_+=\epsilon^{-1}\),
\(\norm{\boldsymbol s_{t+1}}^2
\le
q_+\norm{\boldsymbol s_{t+1}}
_{\widetilde{\boldsymbol P}_{t+1}^{-1}}^2\).
The stepsize condition
\(\alpha\le(4Lq_+)^{-1}=\epsilon/(4L)\) therefore gives
\begin{equation}
    G_{t+1}(\boldsymbol\theta_{t+1})
    \le
    G_{t+1}(\boldsymbol\theta_t)
    -
    \frac{1}{4\alpha}
    \norm{\boldsymbol s_{t+1}}_{\widetilde{\boldsymbol P}_{t+1}^{-1}}^2
    +
    \frac{\alpha}{2}
    \left\|
        \boldsymbol r_{t+1}
        +
        \widetilde{\boldsymbol P}_{t+1}^{-1}
        \boldsymbol\eta_{t+1}
    \right\|_{\widetilde{\boldsymbol P}_{t+1}}^2.
    \label{eq:projected-descent-step}
\end{equation}
Equivalently,
\begin{align}
    \norm{\boldsymbol s_{t+1}}_{\widetilde{\boldsymbol P}_{t+1}^{-1}}^2
    \le\;&
    4\alpha
    \left(
        G_{t+1}(\boldsymbol\theta_t)
        -
        G_{t+1}(\boldsymbol\theta_{t+1})
    \right)
    +
    2\alpha^2
    \left\|
        \boldsymbol r_{t+1}
        +
        \widetilde{\boldsymbol P}_{t+1}^{-1}
        \boldsymbol\eta_{t+1}
    \right\|_{\widetilde{\boldsymbol P}_{t+1}}^2.
    \label{eq:projected-step-bound}
\end{align}

We next compare the realized projected step with the noiseless
projected-gradient step. By nonexpansiveness of the
\(\widetilde{\boldsymbol P}_{t+1}^{-1}\)-metric projection,
\[
    \norm{
        \boldsymbol\theta_{t+1}
        -
        \bar{\boldsymbol\theta}_{t+1}
    }_{\widetilde{\boldsymbol P}_{t+1}^{-1}}
    \le
    \alpha
    \left\|
        \boldsymbol r_{t+1}
        +
        \widetilde{\boldsymbol P}_{t+1}^{-1}
        \boldsymbol\eta_{t+1}
    \right\|_{\widetilde{\boldsymbol P}_{t+1}}.
\]
Hence,
\begin{align}
    \left\|
        \mathcal G_{\alpha,t}(\boldsymbol\theta_t)
    \right\|_{\widetilde{\boldsymbol P}_{t+1}^{-1}}^2
    &\le
    \frac{2}{\alpha^2}
    \norm{\boldsymbol s_{t+1}}_{\widetilde{\boldsymbol P}_{t+1}^{-1}}^2
    +
    2
    \left\|
        \boldsymbol r_{t+1}
        +
        \widetilde{\boldsymbol P}_{t+1}^{-1}
        \boldsymbol\eta_{t+1}
    \right\|_{\widetilde{\boldsymbol P}_{t+1}}^2.
    \label{eq:gradient-mapping-step}
\end{align}
Combining
\cref{eq:projected-step-bound,eq:gradient-mapping-step} gives
\begin{align}
    \left\|
        \mathcal G_{\alpha,t}(\boldsymbol\theta_t)
    \right\|_{\widetilde{\boldsymbol P}_{t+1}^{-1}}^2
    \le\;&
    \frac{8}{\alpha}
    \left(
        G_{t+1}(\boldsymbol\theta_t)
        -
        G_{t+1}(\boldsymbol\theta_{t+1})
    \right)
    +
    6
    \left\|
        \boldsymbol r_{t+1}
        +
        \widetilde{\boldsymbol P}_{t+1}^{-1}
        \boldsymbol\eta_{t+1}
    \right\|_{\widetilde{\boldsymbol P}_{t+1}}^2.
    \label{eq:projected-one-step}
\end{align}

We now sum the one-step descent inequality and account for the
temporal variation of the objective. By telescoping,
\[
\begin{aligned}
    \sum_{t=0}^{T-1}
    \left(
        G_{t+1}(\boldsymbol\theta_t)
        -
        G_{t+1}(\boldsymbol\theta_{t+1})
    \right)
    &=
    G_1(\boldsymbol\theta_0)
    -
    G_T(\boldsymbol\theta_T)
    +
    \sum_{t=0}^{T-2}
    \left(
        G_{t+2}(\boldsymbol\theta_{t+1})
        -
        G_{t+1}(\boldsymbol\theta_{t+1})
    \right)\\
    &\;\;\le
    G_1(\boldsymbol\theta_0)-G^\star+\mathfrak D_T,
\end{aligned}
\]
where the last inequality uses
\(G_T(\boldsymbol\theta_T)\ge G^\star\) and the definition of
\(\mathfrak D_T\).
Summing \cref{eq:projected-one-step} over
\(t=0,\ldots,T-1\) and dividing by \(T\) therefore gives
\begin{align}
    \frac{1}{T}\sum_{t=0}^{T-1}
    \left\|
        \mathcal G_{\alpha,t}(\boldsymbol\theta_t)
    \right\|_{\widetilde{\boldsymbol P}_{t+1}^{-1}}^2
    \le\;&
    \frac{
        8\bigl(
            G_1(\boldsymbol\theta_0)
            -
            G^\star
            +
            \mathfrak D_T
        \bigr)
    }{\alpha T}
    +
    \frac{6}{T}
    \sum_{t=0}^{T-1}
    \left\|
        \boldsymbol r_{t+1}
        +
        \widetilde{\boldsymbol P}_{t+1}^{-1}
        \boldsymbol\eta_{t+1}
    \right\|_{\widetilde{\boldsymbol P}_{t+1}}^2.
    \label{eq:projected-pathwise-bound}
\end{align}

Finally,
\[
    \left\|
        \boldsymbol r_{t+1}
        +
        \widetilde{\boldsymbol P}_{t+1}^{-1}
        \boldsymbol\eta_{t+1}
    \right\|_{\widetilde{\boldsymbol P}_{t+1}}^2
    \le
    2q_+\norm{\boldsymbol r_{t+1}}^2
    +
    2
    \left\|
        \widetilde{\boldsymbol P}_{t+1}^{-1}
        \boldsymbol\eta_{t+1}
    \right\|_{\widetilde{\boldsymbol P}_{t+1}}^2.
\]
Applying
\cref{lem:app-adam-r-hp,lem:app-adam-eta-weighted}
with failure probability \(\delta/2\) each and taking a union
bound therefore gives, with probability at least \(1-\delta\),
simultaneously for all \(t=0,\ldots,T-1\),
\[
    \norm{\boldsymbol r_{t+1}}
    \le
    2LR\,
    \frac{\beta_1(1-\beta_1^t)}
    {1-\beta_1^{t+1}}
    +
    C\sigma
    \sqrt{
        \kappa_{1,t+1}
        \left(
            d+\log\frac{8T}{\delta}
        \right)
    }
\]
and
\[
    \left\|
        \widetilde{\boldsymbol P}_{t+1}^{-1}
        \boldsymbol\eta_{t+1}
    \right\|_{\widetilde{\boldsymbol P}_{t+1}}^2
    \le
    C\epsilon^{-3}
    \left[
        L^2R^2
        +
        \sigma^2
        \left(
            d+\log\frac{8T}{\delta}
        \right)
    \right]^2
    \vartheta_{2,t+1}.
\]
Substituting these two estimates into
\cref{eq:projected-pathwise-bound} and recalling
\(q_+=\epsilon^{-1}\) proves
\cref{eq:app-nonconvex-adam-hp}.
\end{proof}

The average bound immediately gives a corresponding
randomized-iterate statement. This is useful when the optimization
algorithm returns an iterate sampled uniformly from its trajectory
rather than the entire sequence.

\vspace{0.5em}

\begin{remark}[Returned-iterate interpretation]
\label{rem:returned-iterate-gradient-mapping}
Let \(\tau\sim\mathrm{Unif}\{0,\ldots,T-1\}\) be sampled
independently of \(\mathcal F_T\).
Then, on the same high-probability event as
\cref{thm:app-nonconvex-adam-hp},
\[
    \mathbb E_\tau
    \left[
        \left\|
            \mathcal G_{\alpha,\tau}(\boldsymbol\theta_\tau)
        \right\|_{\widetilde{\boldsymbol P}_{\tau+1}^{-1}}^2
    \right]
    =
    \frac{1}{T}
    \sum_{t=0}^{T-1}
    \left\|
        \mathcal G_{\alpha,t}(\boldsymbol\theta_t)
    \right\|_{\widetilde{\boldsymbol P}_{t+1}^{-1}}^2.
\]
Here \(\mathbb E_\tau\) averages only over the independent index
\(\tau\), with \(\mathcal F_T\) fixed.
Thus \cref{thm:app-nonconvex-adam-hp} also provides a
returned-iterate guarantee in the predictable preconditioner
geometry. Moreover, if the right-hand side of
\cref{eq:app-nonconvex-adam-hp} is denoted by
\(B_T(\delta)\), then for every \(\eta\in(0,1)\), Markov's
inequality gives, on the same event,
\[
    \mathbb P_\tau\left(
        \left\|
            \mathcal G_{\alpha,\tau}(\boldsymbol\theta_\tau)
        \right\|_{\widetilde{\boldsymbol P}_{\tau+1}^{-1}}^2
        >
        \frac{B_T(\delta)}{\eta}
    \right)
    \le
    \eta,
\]
where \(\mathbb P_\tau\) also refers only to the independent
choice of \(\tau\), with \(\mathcal F_T\) fixed.
Consequently, the corresponding unconditional failure probability
is at most \(\delta+\eta\).
\end{remark}

\subsection{Proof of \cref{cor:nonconvex-adam-hp}}
\label{app:D2}

The bound in \cref{thm:app-nonconvex-adam-hp} retains the exact
bias-correction factors at each iteration. We now average these
quantities explicitly to separate the finite-horizon contributions
from the persistent terms in the metric-residual upper bound.

\vspace{0.5em}

\begin{theorem}[Explicit high-probability projected-gradient rate under projected Adam]
\label{cor:nonconvex-adam-hp-app}
Under the conditions of \cref{thm:app-nonconvex-adam-hp}, with
probability at least \(1-\delta\),
\begin{equation}
    \frac{1}{T}
    \sum_{t=0}^{T-1}
    \left\|
        \mathcal G_{\alpha,t}(\boldsymbol\theta_t)
    \right\|_{\widetilde{\boldsymbol P}_{t+1}^{-1}}^2
    \lesssim
    \frac{
        G_1(\boldsymbol\theta_0)-G^\star+\mathfrak D_T
    }{\alpha T}
    +
    \frac{\mathsf{Decay}_T(\delta)}{T}
    +
    \mathsf{Floor}_T(\delta),
\end{equation}
where
\begin{align*}
    \mathsf{Decay}_T(\delta)
    &\sim \mathcal{O} \left\{
    \frac{
        \beta_2
    }{
        \epsilon^3(1-\beta_2)
    }
    \left[
        L^2R^2
        +
        \sigma^2
        \left(
            d+\log\frac{T}{\delta}
        \right)
    \right]^2
    +
    \frac{
        \sigma^2
        \left(
            d+\log\frac{T}{\delta}
        \right)
        (1+\log T)
    }{
        \epsilon(1+\beta_1)
    } \right\}, \\
    \mathsf{Floor}_T(\delta)
    &\sim \mathcal{O} \left \{ 
    \frac{
        L^2R^2\beta_1^2
    }{
        \epsilon
    }
    +
    \frac{
        \sigma^2(1-\beta_1)
    }{
        \epsilon(1+\beta_1)
    }
    \left(
        d+\log\frac{T}{\delta}
    \right)
    +
    \frac{
        1-\beta_2
    }{
        \epsilon^3
    }
    \left[
        L^2R^2
        +
        \sigma^2
        \left(
            d+\log\frac{T}{\delta}
        \right)
    \right]^2 \right \}.
\end{align*}
\end{theorem}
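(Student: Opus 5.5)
The plan is to start from the pathwise high-probability bound of \cref{thm:app-nonconvex-adam-hp} and simply average its per-iteration bias-correction factors explicitly. The first term, \(8(G_1(\boldsymbol\theta_0)-G^\star+\mathfrak D_T)/(\alpha T)\), is already in the required form. Since \(\log(8T/\delta)\lesssim\log(T/\delta)\), the constants inside the logarithms are harmless. What remains is to bound the two sums
\[
S_2:=\sum_{t=0}^{T-1}\vartheta_{2,t+1}
\quad\text{and}\quad
S_1:=\sum_{t=0}^{T-1}\Bigl(2LR\tfrac{\beta_1(1-\beta_1^t)}{1-\beta_1^{t+1}}+C\sigma\sqrt{\kappa_{1,t+1}(d+\log\tfrac{8T}{\delta})}\Bigr)^2 ,
\]
and to sort each piece into either the \(T\)-independent floor or a part that is \(O(\text{polylog}\,T)\) after multiplying by \(T\).

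\textbf{Second-moment sum.} Using \(\vartheta_{2,t+1}=(1-\beta_2)+\beta_2^{t+1}\) and summing the geometric series gives
\[
S_2\le T(1-\beta_2)+\frac{\beta_2}{1-\beta_2}.
\]
Multiplying by \(12C\epsilon^{-3}[L^2R^2+\sigma^2(d+\log(T/\delta))]^2/T\) then yields two contributions. The \((1-\beta_2)\epsilon^{-3}[\cdots]^2\) piece goes into \(\mathsf{Floor}_T(\delta)\). The \(\beta_2\epsilon^{-3}(1-\beta_2)^{-1}[\cdots]^2/T\) piece goes into \(\mathsf{Decay}_T(\delta)/T\).

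\textbf{First-moment sum.} First apply \((a+b)^2\le 2a^2+2b^2\) to split the square. The bias part is bounded using \(\beta_1(1-\beta_1^t)/(1-\beta_1^{t+1})\le\beta_1\), which gives \(8L^2R^2\beta_1^2T\). After dividing by \(\epsilon T\), this is the \(L^2R^2\beta_1^2/\epsilon\) floor term.

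For the variance part, use the decomposition already established,
\[
\kappa_{1,t+1}\le\frac{1-\beta_1}{1+\beta_1}+\frac{2(1-\beta_1)\beta_1^{t+1}}{(1+\beta_1)(1-\beta_1^{t+1})}.
\]
The first term, summed over \(t\), produces \(T(1-\beta_1)/(1+\beta_1)\), which gives the variance floor \(\sigma^2(1-\beta_1)(d+\log(T/\delta))/(\epsilon(1+\beta_1))\).

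The transient term is the only place requiring care, and I expect it to be the main obstacle, because bounding it by the crude \(2\beta_1^{t+1}/(1+\beta_1)\) would leave a factor \((1-\beta_1)^{-1}\). To avoid that, I will use \(1-\beta_1^{t+1}\ge(1-\beta_1)(t+1)\beta_1^{t}\). This follows from the identity \(1-x^{n}=(1-x)\sum_{j<n}x^j\) together with \(x^j\ge x^{n-1}\). It gives
\[
\frac{(1-\beta_1)\beta_1^{t+1}}{1-\beta_1^{t+1}}\le\frac{\beta_1}{t+1},
\]
and summing over \(t\) gives at most \(\beta_1(1+\log T)\). This produces exactly the \(\sigma^2(d+\log(T/\delta))(1+\log T)/(\epsilon(1+\beta_1))\) term of \(\mathsf{Decay}_T(\delta)\), with no \((1-\beta_1)^{-1}\) blow-up.

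Collecting the pieces and absorbing the universal constants (\(8\), \(12\), \(C\), \(C^2\)) into \(\lesssim\) proves the theorem. All the bounds are deterministic consequences of the single probability-\(1-\delta\) event of \cref{thm:app-nonconvex-adam-hp}, so no further union bound is needed.
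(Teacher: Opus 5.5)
Your proposal is correct and follows essentially the same route as the paper. Both arguments split the first-moment term with $(a+b)^2\le 2a^2+2b^2$, bound the bias factor by $\beta_1$, sum $\vartheta_{2,t}$ as a geometric series, and control the transient part of $\kappa_{1,t}$ via $1-\beta^t\ge t(1-\beta)\beta^{t-1}$, which yields the harmonic $(1+\log T)$ decay term without a $(1-\beta_1)^{-1}$ factor (your version even keeps an extra harmless factor $\beta_1$ there).
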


\begin{proof}[Proof of \cref{cor:nonconvex-adam-hp-app}]
By \cref{thm:app-nonconvex-adam-hp} and
$(a+b)^2\le2a^2+2b^2$, the first-moment term is bounded by
\[
\frac{96L^2R^2}{\epsilon T}
\sum_{t=1}^{T}
\left(
    \frac{\beta_1(1-\beta_1^{t-1})}
    {1-\beta_1^t}
\right)^2
+
\frac{24C^2\sigma^2}{\epsilon T}
\left(
    d+\log\frac{8T}{\delta}
\right)
\sum_{t=1}^{T}\kappa_{1,t}.
\]
Since $1-\beta_1^{t-1}\le1-\beta_1^t$,
$\beta_1(1-\beta_1^{t-1})/(1-\beta_1^t)\le\beta_1$, and hence
\[
\frac{1}{T}\sum_{t=1}^{T}
\left(
    \frac{\beta_1(1-\beta_1^{t-1})}
    {1-\beta_1^t}
\right)^2
\le\beta_1^2.
\]

We next bound the average of $\kappa_{1,t}$. For $0<\beta<1$,
\[
1-\beta^t
=
(1-\beta)\sum_{j=0}^{t-1}\beta^j
\ge
t(1-\beta)\beta^{t-1},
\]
so $\beta^t/(1-\beta^t)\le \beta/[t(1-\beta)]
\le1/[t(1-\beta)]$; the same bound is trivial for $\beta=0$.
Since
\[
\kappa_{1,t}
=
\frac{1-\beta_1}{1+\beta_1}
\left(
    1+\frac{2\beta_1^t}{1-\beta_1^t}
\right),
\]
we obtain
$\kappa_{1,t}\le(1-\beta_1)/(1+\beta_1)
+2/[(1+\beta_1)t]$. Therefore,
\[
\frac{1}{T}\sum_{t=1}^{T}\kappa_{1,t}
\le
\frac{1-\beta_1}{1+\beta_1}
+
\frac{2(1+\log T)}{(1+\beta_1)T},
\]
where $\sum_{t=1}^{T}t^{-1}\le1+\log T$. Similarly, using
$\vartheta_{2,t}=(1-\beta_2)+\beta_2^t$ and the geometric-series
identity,
\[
\frac{1}{T}\sum_{t=1}^{T}\vartheta_{2,t}
=
(1-\beta_2)
+
\frac{\beta_2(1-\beta_2^T)}
{T(1-\beta_2)}
\le
(1-\beta_2)
+
\frac{\beta_2}{(1-\beta_2)T}.
\]

Substituting these estimates into
\cref{thm:app-nonconvex-adam-hp} yields
\begin{align*}
    \frac{1}{T}\sum_{t=0}^{T-1}
    \left\|
        \mathcal G_{\alpha,t}(\boldsymbol\theta_t)
    \right\|_{\widetilde{\boldsymbol P}_{t+1}^{-1}}^2
    \lesssim\;&
    \frac{
        G_1(\boldsymbol\theta_0)-G^\star+\mathfrak D_T
    }{\alpha T}
    +
    \frac{L^2R^2\beta_1^2}{\epsilon}
    +
    \frac{\sigma^2}{\epsilon}
    \left(
        d+\log\frac{8T}{\delta}
    \right)
    \left[
        \frac{1-\beta_1}{1+\beta_1}
        +
        \frac{1+\log T}{(1+\beta_1)T}
    \right]
    \\
    &+
    \frac{1}{\epsilon^3}
    \left[
        L^2R^2
        +
        \sigma^2
        \left(
            d+\log\frac{8T}{\delta}
        \right)
    \right]^2
    \left[
        (1-\beta_2)
        +
        \frac{\beta_2}{(1-\beta_2)T}
    \right].
\end{align*}

Thus we find that \begin{align*} \mathsf{Decay}_T(\delta) &:= \mathcal{O} \left\{ \frac{ \beta_2 }{ \epsilon^3(1-\beta_2) } \left[ L^2R^2 + \sigma^2 \left( d+\log\frac{T}{\delta} \right) \right]^2 + \frac{ \sigma^2 \left( d+\log\frac{T}{\delta} \right) (1+\log T) }{ \epsilon(1+\beta_1) } \right\},\\ \mathsf{Floor}_T(\delta) &:= \mathcal{O} \left\{ \frac{ L^2R^2\beta_1^2 }{ \epsilon } + \frac{ \sigma^2(1-\beta_1) }{ \epsilon(1+\beta_1) } \left( d+\log\frac{T}{\delta} \right) + \frac{ 1-\beta_2 }{ \epsilon^3 } \left[ L^2R^2 + \sigma^2 \left( d+\log\frac{T}{\delta} \right) \right]^2 \right\}. \end{align*} This proves the claim.
\end{proof}

\subsection{Proof of expected projected stationarity bounds}
\label{app:D4}

For completeness, we also give the expectation analogue of
\cref{thm:app-nonconvex-adam-hp}. The proof follows the same pathwise
descent argument, replacing the uniform high-probability controls of the
first-moment and preconditioner errors by their corresponding second-moment
bounds. As in \cref{cor:nonconvex-adam-hp-app}, the sums over
\(c_{1,t}\), \(\kappa_{1,t}\), and \(\vartheta_{2,t}\) can be evaluated
explicitly to obtain a decay--floor decomposition; we omit this additional
step for brevity.

\vspace{0.5em}

\begin{theorem}[Expected projected-gradient stationarity gap under projected Adam]
\label{thm:nonconvex-adam-expectation}
Under Assumptions~\ref{assumption:lipschitz-restate},
\ref{assumption-appendix:lower-bounded}, and
\ref{assump:cond-subgauss-noise-restate}, suppose
\(\operatorname{diam}(\Theta)\le R\). Then, for all integers \(T\ge1\),
if \(\alpha\le\epsilon/(4L)\), the iterates generated by the projected Adam
update satisfy
\begin{align}
    \frac{1}{T}\sum_{t=0}^{T-1}
    \mathbb E\left[
        \left\|
            \mathcal G_{\alpha,t}(\boldsymbol\theta_t)
        \right\|_{\widetilde{\boldsymbol P}_{t+1}^{-1}}^2
    \right]
    &\le
    \frac{
        8\bigl(
            G_1(\boldsymbol\theta_0)
            -
            G^\star
            +
            \mathbb E[\mathfrak D_T]
        \bigr)
    }{\alpha T}
    \notag
    +
    \frac{12C}{\epsilon^3T}
    \left(
        L^2R^2+d\sigma^2
    \right)^2
    \sum_{t=0}^{T-1}
    \vartheta_{2,t+1}
    \notag\\
    &\;+
    \frac{12}{\epsilon T}
    \sum_{t=0}^{T-1}
    \left(
        18L^2R^2\left( \frac{\beta_1(1-\beta_1^t)}
    {1-\beta_1^{t+1}}\right)^2
        +
        2Cd\sigma^2\kappa_{1,t+1}
    \right).
    \label{eq:nonconvex-adam-expectation}
\end{align}
Here
\(
\mathfrak D_T
:=
\sum_{t=0}^{T-2}
\bigl(
G_{t+2}(\boldsymbol\theta_{t+1})
-
G_{t+1}(\boldsymbol\theta_{t+1})
\bigr)_+
\)
is the pathwise objective-variation budget,
\(\widetilde{\boldsymbol P}_{t+1}\) is the predictable Adam
preconditioner, and \(\kappa_{1,t}\), and \(\vartheta_{2,t}\) are defined in \plaineqref{eq:adam-constants}.
\end{theorem}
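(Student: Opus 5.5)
The plan is to reuse the pathwise descent argument from the proof of \cref{thm:app-nonconvex-adam-hp} verbatim up to \cref{eq:projected-pathwise-bound}, and only then take expectations. Nothing in that argument uses a high-probability event. The projected inner-product inequality \cref{eq:projected-inner-product}, the $L$-smooth descent step under $\alpha\le\epsilon/(4L)$ with $\widetilde{\boldsymbol P}_{t+1}\preceq q_+\boldsymbol I$, the nonexpansiveness comparison \cref{eq:gradient-mapping-step}, and the telescoping with the variation budget $\mathfrak D_T$ all hold almost surely. This gives, surely,
\[
\frac{1}{T}\sum_{t=0}^{T-1}\left\|\mathcal G_{\alpha,t}(\boldsymbol\theta_t)\right\|_{\widetilde{\boldsymbol P}_{t+1}^{-1}}^2
\le\frac{8(G_1(\boldsymbol\theta_0)-G^\star+\mathfrak D_T)}{\alpha T}
+\frac{12}{T}\sum_{t=0}^{T-1}\Bigl(q_+\norm{\boldsymbol r_{t+1}}^2+\bigl\|\widetilde{\boldsymbol P}_{t+1}^{-1}\boldsymbol\eta_{t+1}\bigr\|_{\widetilde{\boldsymbol P}_{t+1}}^2\Bigr),
\]
after splitting the squared weighted norm with $(a+b)^2\le 2a^2+2b^2$ as in the final step of that proof.

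Next I take expectations term by term. Since $G_1(\boldsymbol\theta_0)$ is deterministic ($\mathcal F_0$ trivial), the first term becomes $8(G_1(\boldsymbol\theta_0)-G^\star+\mathbb E[\mathfrak D_T])/(\alpha T)$. For the preconditioner term I invoke the expectation bound \cref{eq:app-adam-eta-weighted-expectation} of \cref{lem:app-adam-eta-weighted}, namely $C\epsilon^{-3}(L^2R^2+d\sigma^2)^2\vartheta_{2,t+1}$; multiplying by $12/T$ yields the middle term of \cref{eq:nonconvex-adam-expectation}. For the first-moment term I use \cref{lem:app-adam-r-second-moment}, which gives $\mathbb E\norm{\boldsymbol r_{t+1}}^2\le 8L^2R^2(\beta_1(1-\beta_1^t)/(1-\beta_1^{t+1}))^2+2Cd\sigma^2\kappa_{1,t+1}$. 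With $q_+=\epsilon^{-1}$ this is dominated by the stated summand, since $8\le 18$ and the constant can absorb any slack.

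The only point needing care, and the likely place for a subtle error, is integrability. All terms are nonnegative and bounded on the compact $\Theta$ except the noise terms, and those have finite moments by sub-Gaussianity. Linearity of expectation and Tonelli therefore apply, and $\mathbb E[\mathfrak D_T]$ is finite because $G_t$ is continuous on the compact set. A second check is that \cref{lem:app-adam-eta-weighted} uses the fourth-moment bound on $\widehat{\boldsymbol m}_{t+1}$ without assuming $\mathcal F_t$-measurability of $\widetilde{\boldsymbol P}_{t+1}$ beyond what is given; this is already established there. With these verified, summing the expectations gives the claim.
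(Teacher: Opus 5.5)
Your proposal is correct and follows the paper's proof essentially verbatim. Like the paper, you keep the almost-sure pathwise bound from the proof of \cref{thm:app-nonconvex-adam-hp}, take expectations, and substitute \cref{lem:app-adam-r-second-moment} together with the expectation half of \cref{lem:app-adam-eta-weighted}; you are also right that the lemma actually yields $8L^2R^2$, which the stated $18L^2R^2$ dominates.

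The integrability digression is unnecessary. Every term on the right of the pathwise bound is nonnegative, so expectations can be taken in $[0,\infty]$ and the inequality is trivial if any term is infinite. Your claim that $\mathbb E[\mathfrak D_T]<\infty$ follows from continuity on a compact set is also not justified, since $G_t$ is random.
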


\begin{proof}[Proof of \cref{thm:nonconvex-adam-expectation}]
From the proof of \cref{thm:app-nonconvex-adam-hp}, before applying the
high-probability error bounds, we have the pathwise inequality
\begin{align}
    \frac{1}{T}\sum_{t=0}^{T-1}
    \left\|
        \mathcal G_{\alpha,t}(\boldsymbol\theta_t)
    \right\|_{\widetilde{\boldsymbol P}_{t+1}^{-1}}^2
    \le\;&
    \frac{
        8\bigl(
            G_1(\boldsymbol\theta_0)
            -
            G^\star
            +
            \mathfrak D_T
        \bigr)
    }{\alpha T}
    +
    \frac{12q_+}{T}
    \sum_{t=0}^{T-1}
    \norm{\boldsymbol r_{t+1}}^2
    +
    \frac{12}{T}
    \sum_{t=0}^{T-1}
    \left\|
        \widetilde{\boldsymbol P}_{t+1}^{-1}
        \boldsymbol\eta_{t+1}
    \right\|_{\widetilde{\boldsymbol P}_{t+1}}^2.
    \label{eq:app-nonconvex-expectation-pathwise}
\end{align}

Taking expectations and applying
\cref{lem:app-adam-r-second-moment} gives
\(
\mathbb E\norm{\boldsymbol r_{t+1}}^2
\le
18L^2R^2\left( \frac{\beta_1(1-\beta_1^t)}
    {1-\beta_1^{t+1}}\right)^2
+
2Cd\sigma^2\kappa_{1,t+1}
\).
Similarly, the expectation part of
\cref{lem:app-adam-eta-weighted} gives
\[
    \mathbb E\left[
        \left\|
            \widetilde{\boldsymbol P}_{t+1}^{-1}
            \boldsymbol\eta_{t+1}
        \right\|_{\widetilde{\boldsymbol P}_{t+1}}^2
    \right]
    \le
    C\epsilon^{-3}
    \left(
        L^2R^2+d\sigma^2
    \right)^2
    \vartheta_{2,t+1}.
\]
Substituting these two estimates into
\cref{eq:app-nonconvex-expectation-pathwise} and using
\(q_+=\epsilon^{-1}\) yields
\cref{eq:nonconvex-adam-expectation}.
\end{proof}

\section{Technical lemmas}
\label{app:E}

\subsection{Metric projection and projected-gradient mappings}
\label{app:E1}
We collect a few standard facts about projected-gradient mappings used in
\cref{thm:app-nonconvex-adam-hp}. Let \(\Theta\subset\mathbb R^d\) be
nonempty, closed, and convex. For a positive definite matrix
\(\boldsymbol A\), define the \(\boldsymbol A\)-metric projection by
\[
\mathcal P_\Theta^{\boldsymbol A}(\boldsymbol z)
:=
\argmin_{\boldsymbol\theta\in\Theta}
\frac12
\|\boldsymbol\theta-\boldsymbol z\|_{\boldsymbol A}^2.
\]
When the metric is clear from context, we suppress the superscript and write
\(\mathcal P_\Theta\). For \(\boldsymbol\theta\in\Theta\), we write
\[
    N_\Theta(\boldsymbol\theta)
    :=
    \left\{
        \boldsymbol v\in\mathbb R^d:
        \langle \boldsymbol v,\boldsymbol y-\boldsymbol\theta\rangle
        \le 0
        \;\;\text{for all }\boldsymbol y\in\Theta
    \right\}
\]
for the normal cone of \(\Theta\) at \(\boldsymbol\theta\). Thus
\(-\nabla f(\boldsymbol\theta)\in N_\Theta(\boldsymbol\theta)\) is equivalent
to $\left\langle
        \nabla f(\boldsymbol\theta),
        \boldsymbol y-\boldsymbol\theta
    \right\rangle
    \ge 0$ 
for all $\boldsymbol y\in\Theta$, which says that no feasible first-order direction decreases \(f\). When
\(\boldsymbol\theta\) lies in the interior of \(\Theta\), this reduces to the usual unconstrained condition \(\nabla f(\boldsymbol\theta)=\boldsymbol 0\).

\vspace{0.5em}

\begin{lemma}[Metric projection facts]
\label{lem:metric-projection-facts}
Let \(\Theta\subset\mathbb R^d\) be nonempty, closed, and convex, and let
\(\boldsymbol A\succ 0\). Then, for every \(\boldsymbol z\in\mathbb R^d\),
\(\mathcal P_\Theta^{\boldsymbol A}(\boldsymbol z)\) exists and is unique.
Moreover, it satisfies the variational inequality
\[
    \left\langle
        \boldsymbol A
        \left(
            \mathcal P_\Theta^{\boldsymbol A}(\boldsymbol z)-\boldsymbol z
        \right),
        \boldsymbol y-\mathcal P_\Theta^{\boldsymbol A}(\boldsymbol z)
    \right\rangle
    \ge 0,
    \;\;
    \forall \boldsymbol y\in\Theta.
\]
In addition, the projection is nonexpansive in the
\(\boldsymbol A\)-metric:
\[
    \left\|
        \mathcal P_\Theta^{\boldsymbol A}(\boldsymbol z)
        -
        \mathcal P_\Theta^{\boldsymbol A}(\boldsymbol z')
    \right\|_{\boldsymbol A}
    \le
    \|\boldsymbol z-\boldsymbol z'\|_{\boldsymbol A},
    \;\;
    \forall \boldsymbol z,\boldsymbol z'\in\mathbb R^d .
\]
\end{lemma}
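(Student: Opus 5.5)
The plan is to establish the three claims of \cref{lem:metric-projection-facts} in order: existence and uniqueness, the variational inequality, and nonexpansiveness. All three follow from the single observation that the objective \(\phi(\boldsymbol\theta):=\frac12\|\boldsymbol\theta-\boldsymbol z\|_{\boldsymbol A}^2\) is continuous, coercive, and strongly convex, because \(\nabla^2\phi=\boldsymbol A\succeq\lambda_{\min}(\boldsymbol A)\boldsymbol I\) with \(\lambda_{\min}(\boldsymbol A)>0\).

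For existence, I would fix any \(\boldsymbol y_0\in\Theta\) and restrict the minimization to the sublevel set \(\{\boldsymbol\theta\in\Theta:\phi(\boldsymbol\theta)\le\phi(\boldsymbol y_0)\}\). This set is closed, and it is bounded because \(\phi(\boldsymbol\theta)\ge\frac{\lambda_{\min}(\boldsymbol A)}{2}\|\boldsymbol\theta-\boldsymbol z\|^2\). A continuous function attains its minimum on it by Weierstrass. For uniqueness, suppose two minimizers \(\boldsymbol p\ne\boldsymbol p'\) exist. Their midpoint lies in \(\Theta\) by convexity, and the parallelogram identity in the inner product \(\langle\cdot,\boldsymbol A\,\cdot\rangle\) gives
\[
\phi\bigl(\tfrac{\boldsymbol p+\boldsymbol p'}{2}\bigr)=\tfrac12\phi(\boldsymbol p)+\tfrac12\phi(\boldsymbol p')-\tfrac18\|\boldsymbol p-\boldsymbol p'\|_{\boldsymbol A}^2,
\]
which is strictly smaller than the minimum, a contradiction.

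For the variational inequality, let \(\boldsymbol p:=\mathcal P_\Theta^{\boldsymbol A}(\boldsymbol z)\) and take any \(\boldsymbol y\in\Theta\). For \(s\in(0,1]\) the point \(\boldsymbol p+s(\boldsymbol y-\boldsymbol p)\) lies in \(\Theta\), so \(\phi(\boldsymbol p+s(\boldsymbol y-\boldsymbol p))-\phi(\boldsymbol p)\ge0\). Expanding the left side gives
\[
s\langle\boldsymbol A(\boldsymbol p-\boldsymbol z),\boldsymbol y-\boldsymbol p\rangle+\tfrac{s^2}{2}\|\boldsymbol y-\boldsymbol p\|_{\boldsymbol A}^2\ge0.
\]
Dividing by \(s\) and letting \(s\downarrow0\) yields the claimed inequality.

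For nonexpansiveness, set \(\boldsymbol p=\mathcal P_\Theta^{\boldsymbol A}(\boldsymbol z)\) and \(\boldsymbol p'=\mathcal P_\Theta^{\boldsymbol A}(\boldsymbol z')\). I would apply the variational inequality twice: at \(\boldsymbol z\) with \(\boldsymbol y=\boldsymbol p'\), and at \(\boldsymbol z'\) with \(\boldsymbol y=\boldsymbol p\). Adding the two gives
\[
\langle\boldsymbol A[(\boldsymbol p-\boldsymbol p')-(\boldsymbol z-\boldsymbol z')],\boldsymbol p'-\boldsymbol p\rangle\ge0,
\quad\text{i.e.}\quad
\|\boldsymbol p-\boldsymbol p'\|_{\boldsymbol A}^2\le\langle\boldsymbol A(\boldsymbol z-\boldsymbol z'),\boldsymbol p-\boldsymbol p'\rangle.
\]
Cauchy--Schwarz in the \(\boldsymbol A\)-inner product bounds the right side by \(\|\boldsymbol z-\boldsymbol z'\|_{\boldsymbol A}\|\boldsymbol p-\boldsymbol p'\|_{\boldsymbol A}\). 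Dividing by \(\|\boldsymbol p-\boldsymbol p'\|_{\boldsymbol A}\), after handling the trivial case where it is zero, finishes the proof.

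None of these steps is a real obstacle. The only point that needs care is the existence step: \(\Theta\) is merely closed here, not compact, so existence genuinely relies on coercivity, which comes from \(\boldsymbol A\succ0\).
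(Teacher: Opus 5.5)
Your proof is correct and follows the paper's route. The paper treats \(\mathbb R^d\) with the inner product \(\langle\boldsymbol x,\boldsymbol A\boldsymbol y\rangle\) as a Hilbert space, cites the projection theorem for existence, uniqueness, and nonexpansiveness, and reads off the variational inequality as first-order optimality; you simply carry out the standard proof of that theorem in finite dimensions (coercivity with Weierstrass, the parallelogram identity, a one-sided directional derivative, and summing two variational inequalities), which makes the argument self-contained.
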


\begin{proof}
The claim follows by viewing \(\mathbb R^d\) as a Hilbert space with inner
product \(\langle \boldsymbol x,\boldsymbol y\rangle_{\boldsymbol A}
:=\boldsymbol x^\top\boldsymbol A\boldsymbol y\). Since \(\Theta\) is closed
and convex, the Hilbert-space projection theorem gives existence,
uniqueness, and nonexpansiveness of the projection. The variational
inequality is the first-order optimality condition for the strongly convex
problem defining \(\mathcal P_\Theta^{\boldsymbol A}(\boldsymbol z)\).
\end{proof}

For the Adam analysis, we take
\(\boldsymbol A=\widetilde{\boldsymbol P}_{t+1}^{-1}\). Thus
\(\mathcal P_\Theta\) denotes projection onto \(\Theta\) in the
\(\widetilde{\boldsymbol P}_{t+1}^{-1}\)-metric. Given a differentiable
function \(f\), a positive definite preconditioner \(\boldsymbol P\), and a
stepsize \(\alpha>0\), define the preconditioned projected-gradient mapping
\[
    \mathcal G_{\alpha}^{\boldsymbol P}(\boldsymbol\theta)
    :=
    \frac{1}{\alpha}
    \left[
        \boldsymbol\theta
        -
        \mathcal P_\Theta^{\boldsymbol P^{-1}}
        \left(
            \boldsymbol\theta
            -
            \alpha\boldsymbol P\nabla f(\boldsymbol\theta)
        \right)
    \right].
\]
In \cref{thm:app-nonconvex-adam-hp}, this becomes
\(\mathcal G_{\alpha,t}(\boldsymbol\theta_t)\) with
\(\boldsymbol P=\widetilde{\boldsymbol P}_{t+1}\) and
\(f=G_{t+1}\).

\vspace{0.5em}

\begin{lemma}[Projected-gradient mapping and constrained stationarity]
\label{lem:projected-gradient-stationarity}
Let \(\Theta\subset\mathbb R^d\) be nonempty, closed, and convex, let
\(f:\mathbb R^d\to\mathbb R\) be differentiable, let
\(\boldsymbol P\succ 0\), and let \(\alpha>0\). Then
\[
    \mathcal G_{\alpha}^{\boldsymbol P}(\boldsymbol\theta)=\boldsymbol 0
    \Longleftrightarrow
    -\nabla f(\boldsymbol\theta)\in N_\Theta(\boldsymbol\theta).
\]
In particular, \(\mathcal G_{\alpha}^{\boldsymbol P}(\boldsymbol\theta)=0\)
is exactly the first-order stationarity condition for the constrained
problem \(\min_{\boldsymbol\theta\in\Theta} f(\boldsymbol\theta)\). Moreover,
when \(\Theta=\mathbb R^d\),
\[
    \mathcal G_{\alpha}^{\boldsymbol P}(\boldsymbol\theta)
    =
    \boldsymbol P\nabla f(\boldsymbol\theta),
    \;\;
    \left\|
        \mathcal G_{\alpha}^{\boldsymbol P}(\boldsymbol\theta)
    \right\|_{\boldsymbol P^{-1}}^2
    =
    \|\nabla f(\boldsymbol\theta)\|_{\boldsymbol P}^2 .
\]
\end{lemma}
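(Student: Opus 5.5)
The plan is to prove both directions of the equivalence using the variational inequality from \cref{lem:metric-projection-facts}, applied with \(\boldsymbol A=\boldsymbol P^{-1}\), and then treat the unconstrained case by direct computation. Write \(\boldsymbol z:=\boldsymbol\theta-\alpha\boldsymbol P\nabla f(\boldsymbol\theta)\) and \(\boldsymbol p:=\mathcal P_\Theta^{\boldsymbol P^{-1}}(\boldsymbol z)\). Then \(\mathcal G_\alpha^{\boldsymbol P}(\boldsymbol\theta)=\boldsymbol 0\) is equivalent to \(\boldsymbol p=\boldsymbol\theta\). Implicitly \(\boldsymbol\theta\in\Theta\), since the normal cone is only defined there. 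By uniqueness of the projection, \(\boldsymbol p=\boldsymbol\theta\) holds if and only if \(\boldsymbol\theta\) satisfies the variational inequality characterizing \(\mathcal P_\Theta^{\boldsymbol P^{-1}}(\boldsymbol z)\).

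For the direction (\(\Rightarrow\)), assume \(\boldsymbol p=\boldsymbol\theta\). The variational inequality gives, for all \(\boldsymbol y\in\Theta\),
\[
\langle \boldsymbol P^{-1}(\boldsymbol\theta-\boldsymbol z),\boldsymbol y-\boldsymbol\theta\rangle\ge0 .
\]
Since \(\boldsymbol P^{-1}(\boldsymbol\theta-\boldsymbol z)=\alpha\nabla f(\boldsymbol\theta)\), this becomes \(\alpha\langle\nabla f(\boldsymbol\theta),\boldsymbol y-\boldsymbol\theta\rangle\ge0\). Dividing by \(\alpha>0\) shows \(-\nabla f(\boldsymbol\theta)\in N_\Theta(\boldsymbol\theta)\).

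For the direction (\(\Leftarrow\)), assume \(\langle\nabla f(\boldsymbol\theta),\boldsymbol y-\boldsymbol\theta\rangle\ge0\) for all \(\boldsymbol y\in\Theta\). Reversing the computation above shows that \(\boldsymbol\theta\in\Theta\) satisfies the first-order optimality condition of the strongly convex problem \(\min_{\boldsymbol y\in\Theta}\tfrac12\|\boldsymbol y-\boldsymbol z\|_{\boldsymbol P^{-1}}^2\). It remains to argue that this condition is sufficient and not only necessary. For any \(\boldsymbol y\in\Theta\), expanding the square gives
\[
\|\boldsymbol y-\boldsymbol z\|_{\boldsymbol P^{-1}}^2=\|\boldsymbol\theta-\boldsymbol z\|_{\boldsymbol P^{-1}}^2+2\langle\boldsymbol P^{-1}(\boldsymbol\theta-\boldsymbol z),\boldsymbol y-\boldsymbol\theta\rangle+\|\boldsymbol y-\boldsymbol\theta\|_{\boldsymbol P^{-1}}^2\ge\|\boldsymbol\theta-\boldsymbol z\|_{\boldsymbol P^{-1}}^2 .
\]
Hence \(\boldsymbol\theta\) is a minimizer, and uniqueness from \cref{lem:metric-projection-facts} gives \(\boldsymbol p=\boldsymbol\theta\). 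This sufficiency step is the only point needing care, because the lemma as stated records only the necessary direction of the variational inequality; the expansion above closes that gap. The claim that \(\mathcal G^{\boldsymbol P}_\alpha(\boldsymbol\theta)=\boldsymbol 0\) is exactly first-order constrained stationarity then follows from the definition of \(N_\Theta\) recalled before the lemma.

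Finally, suppose \(\Theta=\mathbb R^d\). The projection is then the identity, so \(\boldsymbol p=\boldsymbol z\). It follows that \(\mathcal G_\alpha^{\boldsymbol P}(\boldsymbol\theta)=\alpha^{-1}(\boldsymbol\theta-\boldsymbol z)=\boldsymbol P\nabla f(\boldsymbol\theta)\). Using the symmetry of \(\boldsymbol P\), we compute
\[
\|\boldsymbol P\nabla f(\boldsymbol\theta)\|_{\boldsymbol P^{-1}}^2=\nabla f(\boldsymbol\theta)^\top\boldsymbol P\boldsymbol P^{-1}\boldsymbol P\nabla f(\boldsymbol\theta)=\|\nabla f(\boldsymbol\theta)\|_{\boldsymbol P}^2,
\]
which completes the proof.
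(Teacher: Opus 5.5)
Your proof is correct and follows the same route as the paper. Both directions go through the variational inequality of \cref{lem:metric-projection-facts} with \(\boldsymbol A=\boldsymbol P^{-1}\), and the unconstrained case is handled by the same direct computation. The one difference is in the converse direction: you verify that the variational inequality is sufficient by expanding \(\|\boldsymbol y-\boldsymbol z\|_{\boldsymbol P^{-1}}^2\), whereas the paper just invokes the ``variational characterization'' of the projection even though its lemma states only the necessary direction, so your version fills in a step the paper leaves implicit.
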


\begin{proof}
Let
\[
    \bar{\boldsymbol\theta}
    :=
    \mathcal P_\Theta^{\boldsymbol P^{-1}}
    \left(
        \boldsymbol\theta-\alpha\boldsymbol P\nabla f(\boldsymbol\theta)
    \right).
\]
If \(\mathcal G_{\alpha}^{\boldsymbol P}(\boldsymbol\theta)=\boldsymbol 0\),
then \(\bar{\boldsymbol\theta}=\boldsymbol\theta\). Applying
\cref{lem:metric-projection-facts} with
\(\boldsymbol A=\boldsymbol P^{-1}\) and
\(\boldsymbol z=\boldsymbol\theta-\alpha\boldsymbol P\nabla f(\boldsymbol\theta)\)
gives, for every \(\boldsymbol y\in\Theta\),
\[
    \left\langle
        \boldsymbol P^{-1}
        \left(
            \boldsymbol\theta
            -
            \boldsymbol z
        \right),
        \boldsymbol y-\boldsymbol\theta
    \right\rangle
    =
    \alpha
    \left\langle
        \nabla f(\boldsymbol\theta),
        \boldsymbol y-\boldsymbol\theta
    \right\rangle
    \ge 0.
\]
Equivalently,
\(\langle-\nabla f(\boldsymbol\theta),\boldsymbol y-\boldsymbol\theta\rangle
\le 0\) for all \(\boldsymbol y\in\Theta\), which is precisely
\(-\nabla f(\boldsymbol\theta)\in N_\Theta(\boldsymbol\theta)\).

Conversely, if
\(-\nabla f(\boldsymbol\theta)\in N_\Theta(\boldsymbol\theta)\), then
\(\langle\nabla f(\boldsymbol\theta),
\boldsymbol y-\boldsymbol\theta\rangle\ge 0\) for every
\(\boldsymbol y\in\Theta\). Hence
\[
    \left\langle
        \boldsymbol P^{-1}
        \left(
            \boldsymbol\theta
            -
            \left[
                \boldsymbol\theta
                -
                \alpha\boldsymbol P\nabla f(\boldsymbol\theta)
            \right]
        \right),
        \boldsymbol y-\boldsymbol\theta
    \right\rangle
    \ge 0,
    \;\;
    \forall \boldsymbol y\in\Theta.
\]
By the variational characterization of the metric projection,
\(\boldsymbol\theta=
\mathcal P_\Theta^{\boldsymbol P^{-1}}(
\boldsymbol\theta-\alpha\boldsymbol P\nabla f(\boldsymbol\theta))\), and
therefore
\(\mathcal G_{\alpha}^{\boldsymbol P}(\boldsymbol\theta)=\boldsymbol 0\).
The unconstrained identities follow immediately from
\(\mathcal P_{\mathbb R^d}^{\boldsymbol P^{-1}}\) being the identity map.
\end{proof}

\subsection{Martingale concentration inequalities}
\label{app:E2}

\begin{theorem}[Martingale subgaussian Hoeffding inequality; cf.\
{\cite[Theorem~2.7.3]{Vershynin_2018}}]
    \label{thm:martingale-subgaussian-hoeffding}
    Let \((\mathcal{F}_t)_{t=0}^T\) be a filtration and let
    \(Z_1,\ldots,Z_T\) be an adapted martingale difference sequence, so that
    \(\mathbb{E}[Z_t\mid\mathcal{F}_{t-1}]=0\) almost surely. Suppose that
    \(\norm{Z_t\mid\mathcal{F}_{t-1}}_{\psi_2}\le K_t\) almost surely for
    deterministic constants \(K_t\ge 0\). Then, for every \(s\ge 0\),
    \[
        \mathbb{P}\left\{
            \sum_{t=1}^T Z_t \ge s
        \right\}
        \le
        \exp\left(
            -\frac{c s^2}{\sum_{t=1}^T K_t^2}
        \right),
    \]
    where \(c>0\) is an absolute constant.
\end{theorem}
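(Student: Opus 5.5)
The plan is the standard exponential-supermartingale argument, run conditionally one step at a time and then iterated by the tower property. Write $S_t:=\sum_{k=1}^{t}Z_k$ and fix $\lambda\ge0$.

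\textbf{Step 1: conditional MGF bound.} I will first establish the one-step estimate
\[
\mathbb E\!\left[e^{\lambda Z_t}\mid\mathcal F_{t-1}\right]\le \exp\!\left(C\lambda^2K_t^2\right)\quad\text{a.s.}
\]
This is the conditional analogue of the equivalence between the $\psi_2$-norm and MGF characterizations of centered sub-Gaussian variables \cite[Proposition~2.6.1]{Vershynin_2018}. I will use the hypothesis $\mathbb E[\exp(Z_t^2/K_t^2)\mid\mathcal F_{t-1}]\le2$ to get conditional moment bounds $\mathbb E[|Z_t|^p\mid\mathcal F_{t-1}]\le 2\Gamma(p/2+1)K_t^p$. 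I will then Taylor-expand $e^{\lambda Z_t}$. The linear term vanishes because $\mathbb E[Z_t\mid\mathcal F_{t-1}]=0$. The remaining series is controlled by the moment bounds, which gives $1+C'\lambda^2K_t^2$ when $|\lambda|K_t\le 1$.

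For $|\lambda|K_t>1$, I will instead use $\lambda Z_t\le \lambda^2K_t^2/2+Z_t^2/(2K_t^2)$ together with the $\psi_2$ hypothesis. This gives a bound of $\sqrt2\,e^{\lambda^2K_t^2/2}\le e^{C\lambda^2K_t^2}$.

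This is the main obstacle, because it is the only place where the zero-mean hypothesis and the Orlicz assumption interact. Since the constants are deterministic and the argument is pointwise in $\omega$ via regular conditional distributions, the unconditional proof carries over verbatim.

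\textbf{Step 2: iterate.} Because $K_t$ is deterministic, I will apply the tower property:
\[
\mathbb E e^{\lambda S_T}=\mathbb E\!\left[e^{\lambda S_{T-1}}\,\mathbb E[e^{\lambda Z_T}\mid\mathcal F_{T-1}]\right]\le e^{C\lambda^2K_T^2}\,\mathbb E e^{\lambda S_{T-1}}.
\]
Inducting down to $S_0=0$ then yields
\[
\mathbb E e^{\lambda S_T}\le\exp\!\Big(C\lambda^2\sum_{t}K_t^2\Big).
\]

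\textbf{Step 3: Chernoff.} Markov's inequality gives
\[
\mathbb P(S_T\ge s)\le\exp\!\Big(-\lambda s+C\lambda^2\sum_t K_t^2\Big).
\]
Choosing $\lambda=s/(2C\sum_tK_t^2)$ gives the bound $\exp(-s^2/(4C\sum_tK_t^2))$, i.e.\ the claim with $c=1/(4C)$. If $\sum_tK_t^2=0$, then every $Z_t=0$ a.s.\ and the statement is trivial for $s>0$; at $s=0$ the bound reads $1$ and holds trivially.
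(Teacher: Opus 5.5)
Your proposal is correct and matches the paper's proof: a conditional sub-Gaussian MGF bound, iteration via the tower property using the deterministic $K_t$, and a Chernoff bound with $\lambda=s/(2C\sum_t K_t^2)$. The differences are that you actually prove the conditional MGF step (moment bounds when $|\lambda|K_t\le 1$, Young's inequality plus conditional Jensen when $|\lambda|K_t>1$), which the paper only cites as standard, and that you handle the degenerate case $\sum_t K_t^2=0$.
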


\begin{proof}
    Let \(S_T:=\sum_{t=1}^T Z_t\). We use the exponential moment method.
    For any \(\lambda>0\), Markov's inequality gives
    \[
        \mathbb{P}\{S_T\ge s\}
        \le
        e^{-\lambda s}\mathbb{E}e^{\lambda S_T}.
    \]
    Since \(S_{T-1}\) is \(\mathcal{F}_{T-1}\)-measurable, the tower property
    gives
    \[
        \mathbb{E}e^{\lambda S_T}
        =
        \mathbb{E}\left[
            e^{\lambda S_{T-1}}
            \mathbb{E}\left[
                e^{\lambda Z_T}\mid\mathcal{F}_{T-1}
            \right]
        \right].
    \]
    By the standard subgaussian MGF bound in conditional form,
    \(\mathbb{E}[e^{\lambda Z_t}\mid\mathcal{F}_{t-1}]
    \le \exp(C\lambda^2K_t^2)\) almost surely for every
    \(\lambda\in\mathbb{R}\). Iterating the tower-property argument therefore
    yields
    \[
        \mathbb{E}e^{\lambda S_T}
        \le
        \exp\left(
            C\lambda^2\sum_{t=1}^T K_t^2
        \right).
    \]
    Hence,
    \[
        \mathbb{P}\{S_T\ge s\}
        \le
        \exp\left(
            -\lambda s
            +C\lambda^2\sum_{t=1}^T K_t^2
        \right).
    \]
    Optimizing in \(\lambda\) by taking
    \(\lambda=s/(2C\sum_{t=1}^T K_t^2)\) gives the claimed bound after
    adjusting the absolute constant.
\end{proof}

\vspace{0.5em}

\begin{theorem}[Martingale subexponential Bernstein inequality; cf.\
{\cite[Theorem~2.9.1]{Vershynin_2018}}]
    \label{thm:martingale-subexponential-bernstein}
    Let \((\mathcal{F}_t)_{t=0}^T\) be a filtration and let
    \(Z_1,\ldots,Z_T\) be an adapted martingale difference sequence, so that
    \(\mathbb{E}[Z_t\mid\mathcal{F}_{t-1}]=0\) almost surely. Suppose that
    \(\norm{Z_t\mid\mathcal{F}_{t-1}}_{\psi_1}\le K_t\) almost surely for
    deterministic constants \(K_t\ge 0\). Then, for every \(s\ge 0\),
    \[
        \mathbb{P}\left\{
            \sum_{t=1}^T Z_t \ge s
        \right\}
        \le
        \exp\left[
            -c\min\left\{
                \frac{s^2}{\sum_{t=1}^T K_t^2},
                \frac{s}{\max_{1\le t\le T}K_t}
            \right\}
        \right],
    \]
    where \(c>0\) is an absolute constant.
\end{theorem}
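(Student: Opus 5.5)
The plan is to prove the statement with the standard exponential moment method, reducing the sub-exponential martingale case to a one-step conditional MGF bound in the same way as in the proof of \cref{thm:martingale-subgaussian-hoeffding}. Write \(S_T:=\sum_{t=1}^T Z_t\). For every \(\lambda>0\), Markov's inequality gives \(\mathbb P\{S_T\ge s\}\le e^{-\lambda s}\mathbb E e^{\lambda S_T}\).

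\textbf{Step 1 (one-step MGF bound).} The key ingredient is the conditional analogue of the standard sub-exponential MGF bound for centered variables. If \(\mathbb E[Z_t\mid\mathcal F_{t-1}]=0\) and \(\|Z_t\mid\mathcal F_{t-1}\|_{\psi_1}\le K_t\), then
\[
\mathbb E[e^{\lambda Z_t}\mid\mathcal F_{t-1}]\le\exp(C\lambda^2K_t^2)\quad\text{for } |\lambda|\le c_0/K_t .
\]
I would prove this by expanding \(e^{x}\le 1+x+\sum_{p\ge2}|x|^p/p!\). The linear term is removed by taking conditional expectations. The conditional moment bounds \(\mathbb E[|Z_t|^p\mid\mathcal F_{t-1}]\le (CpK_t)^p\) follow from the conditional \(\psi_1\) condition, as in the unconditional proof of \cite[Proposition~2.7.1]{Vershynin_2018}. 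Then \(p^p/p!\le e^p\) makes the tail sum a geometric series that is \(O(\lambda^2K_t^2)\) when \(|\lambda|K_t\) is small, and \(1+u\le e^u\) finishes the step. This is the step that needs care, because everything must hold almost surely conditionally. The deterministic \(K_t\) assumption ensures the admissible range of \(\lambda\) is nonrandom, which is exactly what the iteration in Step 2 requires.

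\textbf{Step 2 (tower iteration).} Fix \(\lambda\in(0,c_0/\max_tK_t]\). Condition successively on \(\mathcal F_{T-1},\mathcal F_{T-2},\dots\), using that \(S_{t-1}\) is \(\mathcal F_{t-1}\)-measurable and applying Step 1 at each stage. This gives \(\mathbb E e^{\lambda S_T}\le\exp(C\lambda^2\sum_tK_t^2)\), and hence
\[
\mathbb P\{S_T\ge s\}\le\exp\Big(-\lambda s+C\lambda^2\sum_tK_t^2\Big).
\]

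\textbf{Step 3 (optimization over the constrained range).} Take \(\lambda=\min\{s/(2C\sum_tK_t^2),\,c_0/\max_tK_t\}\). In the first case the exponent equals \(-s^2/(4C\sum_tK_t^2)\). In the second case \(\lambda\le s/(2C\sum_tK_t^2)\), so \(C\lambda^2\sum_tK_t^2\le\lambda s/2\). The exponent is then at most \(-\lambda s/2=-c_0s/(2\max_tK_t)\). Combining the two cases yields \(\exp[-c\min\{s^2/\sum K_t^2,\,s/\max K_t\}]\) after adjusting the absolute constant, which is the claimed bound.
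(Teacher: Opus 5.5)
Your proposal is correct and follows the paper's proof essentially step for step. Both use the exponential moment method, the conditional sub-exponential MGF bound $\mathbb{E}[e^{\lambda Z_t}\mid\mathcal F_{t-1}]\le\exp(C\lambda^2K_t^2)$ on the deterministic range $\lambda\le c/\max_t K_t$, tower iteration, and the choice $\lambda=\min\{s/(2C\sum_tK_t^2),\,c/\max_tK_t\}$; the only difference is that you actually derive the one-step conditional MGF bound from conditional moment estimates and spell out the two-case optimization, both of which the paper simply asserts.
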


\begin{proof}
    Again let \(S_T:=\sum_{t=1}^T Z_t\). For any \(\lambda>0\), the
    exponential moment method gives
    \[
        \mathbb{P}\{S_T\ge s\}
        \le
        e^{-\lambda s}\mathbb{E}e^{\lambda S_T}.
    \]
    The conditional subexponential MGF bound implies that there exist
    absolute constants \(c,C>0\) such that
    \[
        \mathbb{E}\left[
            e^{\lambda Z_t}\mid\mathcal{F}_{t-1}
        \right]
        \le
        \exp(C\lambda^2K_t^2)
    \]
    almost surely whenever \(\lambda\le c/K_t\). Therefore, using the tower
    property recursively, for
    \[
        0\le\lambda
        \le
        \frac{c}{\max_{1\le t\le T}K_t},
    \]
    we obtain
    \[
        \mathbb{E}e^{\lambda S_T}
        \le
        \exp\left(
            C\lambda^2\sum_{t=1}^T K_t^2
        \right).
    \]
    Consequently,
    \[
        \mathbb{P}\{S_T\ge s\}
        \le
        \exp\left(
            -\lambda s
            +C\lambda^2\sum_{t=1}^T K_t^2
        \right).
    \]
    Writing \(\sigma^2:=\sum_{t=1}^T K_t^2\), we minimize the exponent
    subject to the preceding constraint. Taking
    \[
        \lambda
        :=
        \min\left\{
            \frac{s}{2C\sigma^2},
            \frac{c}{\max_{1\le t\le T}K_t}
        \right\}
    \]
    yields
    \[
        \mathbb{P}\{S_T\ge s\}
        \le
        \exp\left[
            -c'
            \min\left\{
                \frac{s^2}{\sum_{t=1}^T K_t^2},
                \frac{s}{\max_{1\le t\le T}K_t}
            \right\}
        \right]
    \]
    for an absolute constant \(c'>0\), which proves the claim.
\end{proof}

\subsection{\(L\)-smoothness and descent inequalities}
\label{app:E3}

\begin{lemma}[Descent lemma for $L$-smooth functions; \cite{nesterov2018lectures}]
\label{lem:descent-lemma-l-smooth}
Let $f:\mathbb{R}^n \to \mathbb{R}$ be differentiable and $L$-smooth, meaning that $$\|\nabla f(y)-\nabla f(x)\| \le L\|y-x\| \;\; \text{for all } x,y \in \mathbb{R}^n.$$  Then, for any $x,y \in \mathbb{R}^n$, we have
\[
\left|
f(y) - f(x) - \left\langle \nabla f(x), y - x \right\rangle
\right|
\leq \frac{L}{2}\|y-x\|^2 .
\]
\end{lemma}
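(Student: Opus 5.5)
The statement is the descent lemma for $L$-smooth functions (\cref{lem:descent-lemma-l-smooth}), and I would prove it with the fundamental theorem of calculus along the segment from $x$ to $y$.

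Fix $x,y\in\mathbb R^n$ and set $\phi(s):=f(x+s(y-x))$ for $s\in[0,1]$. Since $f$ is differentiable, $\phi$ is differentiable with $\phi'(s)=\langle\nabla f(x+s(y-x)),y-x\rangle$. The map $s\mapsto\nabla f(x+s(y-x))$ is $L\|y-x\|$-Lipschitz, hence continuous, so $\phi'$ is continuous. Therefore $\phi(1)-\phi(0)=\int_0^1\phi'(s)\,ds$.

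Subtracting $\langle\nabla f(x),y-x\rangle=\int_0^1\langle\nabla f(x),y-x\rangle\,ds$ gives
\[
f(y)-f(x)-\langle\nabla f(x),y-x\rangle=\int_0^1\left\langle\nabla f(x+s(y-x))-\nabla f(x),\,y-x\right\rangle ds .
\]

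Take absolute values and move them inside the integral. Cauchy--Schwarz together with $L$-smoothness bounds the integrand by $\|\nabla f(x+s(y-x))-\nabla f(x)\|\,\|y-x\|\le Ls\|y-x\|^2$. Integrating, $\int_0^1 Ls\,ds=L/2$, which gives the two-sided bound $\frac{L}{2}\|y-x\|^2$.

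The only step needing care is justifying the fundamental theorem of calculus for $\phi$, and continuity of $\phi'$ settles it. Everything else is a routine estimate, so I see no real obstacle.
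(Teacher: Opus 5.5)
Your proof is correct. The paper gives no proof of its own and cites Nesterov, whose standard argument is the same one you use: the fundamental theorem of calculus along the segment from $x$ to $y$, followed by Cauchy--Schwarz and the bound $\int_0^1 Ls\,ds = L/2$.
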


\section{Additional experimental details and results}
\label{app:F}
We provide additional details for the numerical experiments in Appendix~\ref{app:F3}. For each problem and regime, we tune the learning rate over a small fixed grid and choose the run with the best learning rate predictive performance.

\subsection{General setup}
\label{app:F1}

\paragraph{Optimizers and tuning.}
We compare vanilla SGD and Adam. For SGD, we tune the learning rate over
\(\{10^{-4},3\cdot 10^{-4}, 10^{-3},3\cdot 10^{-3},10^{-2},3\cdot 10^{-2}\}\). For Adam, we tune the learning rate over the same grid as SGD with
\(\beta_1=0.9\), \(\beta_2=0.999\), and \(\epsilon=10^{-8}\). For each
optimizer, problem, and regime, we select the learning rate that minimizes
the average metric over the second half of the trajectory. Curves are then
reported for the selected learning rate, averaged over seeds
\(\{0,1,2, 3, 4, 5, 6, 7, 8, 9\}\), with shaded regions denoting one standard error of the mean.

\paragraph{Drift and noise schedules.}
For vector-valued targets, the drift is generated through normalized
Gaussian directions:
\[
    \boldsymbol{\theta}_{t+1}^\star
    =
    \boldsymbol{\theta}_t^\star
    +
    \Delta_t
    \frac{\boldsymbol u_t}{\|\boldsymbol u_t\|},
    \;\;
    \boldsymbol u_t\sim\mathcal N(\boldsymbol 0,\boldsymbol I_d).
\]
Thus \(\Delta_t\) sets the drift magnitude, while \(\sigma_t\) sets the
stochastic observation noise. In the two-layer MLP experiment, the same idea
is implemented in function space: the teacher parameters are perturbed in a
random normalized parameter direction and rescaled so that the induced change
in teacher predictions has magnitude controlled by \(\Delta_t\).

\begin{table}[H]

\caption{Drift and noise schedules used in the main comparison figures.}

\label{tab:experiment-schedules}
\centering
\begin{tabular}{llcc}
\toprule
Problem & Regime & \(\Delta_t\) & \(\sigma_t\) \\
\midrule
Least squares
& High drift / low noise
& \(\log(t+2)\)
& \(5\cdot 10^{-4}\log(t+2)\) \\
& Low drift / high noise
& \(5\cdot 10^{-4}\log(t+2)\)
& \(10\log(t+2)\) \\
\midrule
Logistic regression
& High drift / low noise
& \(100\)
& \(1\) \\
& Low drift / high noise
& \(10^{-4}\)
& \(10\) \\
\midrule
Teacher--student MLP
& High drift / low noise
& \(5\log(t+2)\)
& \(10^{-3}\log(t+2)\) \\
& Low drift / high noise
& \(5\cdot10^{-4}\log(t+2)\)
& \(100\log(t+2)\) \\
\midrule
Phase retrieval
& High drift / low noise
& \(10\log(t+2)\)
& \(10^{-2}\log(t+2)\) \\
& Low drift / high noise
& \(10^{-4}\log(t+2)\)
& \(10\log(t+2)\) \\
\midrule
Matrix factorization
& High drift / low noise
& \(0.1\)
& \(0.01\) \\
& Low drift / high noise
& \(5\cdot10^{-4}\)
& \(1.0\) \\
\midrule
Lasso
& Higher drift / low noise
& \(2\cdot10^{-3}\log(t+2)\)
& \(0.1\log(t+2)\) \\
& Lower drift / high noise
& \(10^{-3}\log(t+2)\)
& \(10\log(t+2)\) \\
\bottomrule
\end{tabular}
\end{table}

\paragraph{Evaluation.}
For least squares, logistic regression, and Lasso, we run \(T=800\) iterations and evaluate every \(20\) iterations. For teacher--student MLP
and phase retrieval, we run \(T=500\) iterations and evaluate every \(20\) iterations, while for matrix factorization we run \(T=500\) iterations and evaluate every \(5\) iterations. We use mini-batches of size \(1\) for least squares, \(32\) for Lasso, and \(256\) for teacher--student MLP and phase retrieval.

For least squares and Lasso, we report the parameter tracking error
\(\|\boldsymbol{\theta}_t-\boldsymbol{\theta}_t^\star\|^2\).
For logistic regression, teacher--student MLP, phase retrieval, and
matrix factorization, we report the stationarity gap
\(\|\nabla F_t(\boldsymbol{\theta}_t)\|^2\), where for matrix
factorization the gradient is taken jointly with respect to both factor matrices.

\subsection{Task-specific details}
\label{app:F2}

\paragraph{Strongly convex least squares.}
We use dimension \(d=50\) and \(n=100\) observations. The population
objective is
\[
    F_t(\boldsymbol{\theta})
    =
    \frac12
    \|\boldsymbol A(\boldsymbol{\theta}-\boldsymbol{\theta}_t^\star)\|^2,
\]
where \(\boldsymbol A^\top \boldsymbol A\) has eigenvalues logarithmically
spaced between \(\mu=1\) and \(L=10\). Stochastic observations are generated as
\[
    \boldsymbol y_t
    =
    \boldsymbol A\boldsymbol{\theta}_t^\star
    +
    \boldsymbol\varepsilon_t,
\]
where the observation noise is scaled by \(\sigma_t/\sqrt{nL}\). We report
the tracking error
\(\|\boldsymbol{\theta}_t-\boldsymbol{\theta}_t^\star\|^2\), since the
moving optimum is identifiable in parameter space.

\paragraph{Teacher--student MLP regression.}
We use a two-layer ReLU network
\[
    f_{\boldsymbol{\theta}}(\boldsymbol x)
    =
    \boldsymbol W_2
    \mathrm{ReLU}(\boldsymbol W_1\boldsymbol x+\boldsymbol b_1)
    +
    b_2,
\]
with input dimension \(100\) and hidden width \(128\). Teacher and student
networks are initialized with weights drawn at scale \(0.04\), and the
student is warm-started from the teacher at \(t=0\). At each step,
\(\boldsymbol x\sim\mathcal N(\boldsymbol 0,\boldsymbol I)\) and labels are generated as
\[
    y_t
    =
    f_{\boldsymbol{\theta}_t^\star}(\boldsymbol x)
    +
    \sigma_t\varepsilon_t,
    \;\;
    \varepsilon_t\sim\mathcal N(0,1).
\]
We evaluate on a fixed validation set of size \(1024\) and report stationarity gap \(\|\nabla F_t(\boldsymbol{\theta}_t)\|^2\).

\paragraph{Phase retrieval.}
We use dimension \(d=50\). Given
\(\boldsymbol x\sim\mathcal N(\boldsymbol 0,\boldsymbol I_d)\), the
time-varying teacher produces
\[
    y_t
    =
    (\boldsymbol x^\top\boldsymbol w_t^\star)^2
    +
    \sigma_t\varepsilon_t,
    \;\;
    \varepsilon_t\sim\mathcal N(0,1).
\]
The corresponding population objective is
\[
    F_t(\boldsymbol{\theta})
    =
    \frac12
    \mathbb E_{\boldsymbol x}
    \left[
        \left(
            (\boldsymbol x^\top\boldsymbol{\theta})^2
            -
            (\boldsymbol x^\top\boldsymbol w_t^\star)^2
        \right)^2
    \right].
\]
The target \(\boldsymbol w_t^\star\) evolves by normalized Gaussian drift,
and the model is warm-started at the initial target. We use batch size
\(256\), a validation set of size \(1024\), and report stationarity gap \(\|\nabla F_t(\boldsymbol{\theta}_t)\|^2\).

\paragraph{Matrix factorization.}
We use a rank-\(5\) matrix factorization problem with \(m=n=60\). We write
the time-varying target matrix as
\[
    \boldsymbol M_t^\star
    =
    \boldsymbol P_t^\star
    \boldsymbol\Sigma
    (\boldsymbol Q_t^\star)^\top,
    \;\;
    \boldsymbol U_t^\star
    =
    \boldsymbol P_t^\star\boldsymbol\Sigma^{1/2},
    \;\;
    \boldsymbol V_t^\star
    =
    \boldsymbol Q_t^\star\boldsymbol\Sigma^{1/2},
\]
where \(\boldsymbol P_t^\star,\boldsymbol Q_t^\star\in\mathbb R^{60\times 5}\)
have orthonormal columns and \(\boldsymbol\Sigma\) is fixed. The model predicts
\(\boldsymbol U\boldsymbol V^\top\), with population objective
\(F_t(\boldsymbol U,\boldsymbol V)
=
(2mn)^{-1}
\|\boldsymbol U\boldsymbol V^\top-\boldsymbol M_t^\star\|_F^2\).
We normalize the initial target so that
\(\|\boldsymbol M_0^\star\|_F/\sqrt{mn}=1\).

To generate nonstationarity while preserving the rank and singular values of
the target, we draw fixed skew-symmetric matrices satisfying
\(\boldsymbol S_U^\top=-\boldsymbol S_U\) and
\(\boldsymbol S_V^\top=-\boldsymbol S_V\), and define the corresponding
orthogonal rotations
\[
    \boldsymbol R_U
    =
    \exp(\omega\boldsymbol S_U),
    \;\;
    \boldsymbol R_V
    =
    \exp(\omega\boldsymbol S_V).
\]
The target subspaces then evolve according to
\[
    \boldsymbol P_{t+1}^\star
    =
    \boldsymbol R_U\boldsymbol P_t^\star,
    \;\;
    \boldsymbol Q_{t+1}^\star
    =
    \boldsymbol R_V\boldsymbol Q_t^\star.
\]
Since matrix exponentials of skew-symmetric matrices are orthogonal,
\(\boldsymbol P_t^\star\) and \(\boldsymbol Q_t^\star\) retain orthonormal
columns, so the rank and singular values of \(\boldsymbol M_t^\star\) remain
fixed. Moreover,
\[
    \boldsymbol M_t^\star
    =
    \boldsymbol R_U^t
    \boldsymbol M_0^\star
    (\boldsymbol R_V^t)^\top,
\]
and hence, by orthogonal invariance of the Frobenius norm,
\[
    \|\boldsymbol M_{t+1}^\star-\boldsymbol M_t^\star\|_F
    =
    \|
        \boldsymbol R_U\boldsymbol M_0^\star\boldsymbol R_V^\top
        -
        \boldsymbol M_0^\star
    \|_F.
\]
Thus the per-step drift
\(\Delta_t
=
\|\boldsymbol M_{t+1}^\star-\boldsymbol M_t^\star\|_F/\sqrt{mn}\)
is constant over time, and we choose \(\omega\) to attain the desired
drift level.

We inject noise directly into the gradient oracle. Writing
\(\boldsymbol E_t
=
\boldsymbol U_t\boldsymbol V_t^\top-\boldsymbol M_t^\star\), we use
\[
    \widehat{\nabla}_{\boldsymbol U}F_t
    =
    \frac{1}{mn}
    (\boldsymbol E_t+\sigma_t\boldsymbol Z_t)\boldsymbol V_t,
    \;\;
    \widehat{\nabla}_{\boldsymbol V}F_t
    =
    \frac{1}{mn}
    (\boldsymbol E_t+\sigma_t\boldsymbol Z_t)^\top\boldsymbol U_t,
\]
where \(\boldsymbol Z_t\) is independent and mean zero with
\(\|\boldsymbol Z_t\|_F/\sqrt{mn}=1\). Thus \(\Delta_t\) and \(\sigma_t\)
measure target drift and gradient noise on the same per-entry RMS scale.
We report the stationarity gap
\(\|\nabla_{\boldsymbol U}F_t\|_F^2
+\|\nabla_{\boldsymbol V}F_t\|_F^2\).

\subsection{Additional experimental examples}
\label{app:F3}

We include logistic regression and lasso regression as additional examples to
test whether the same qualitative noise--drift tradeoff appears beyond the
main comparison tasks.

\paragraph{Logistic regression.}
We consider convex rank-deficient logistic regression with dimension
\(d=100\), rank \(r=20\), and \(n=1000\) samples. Covariates are generated as
\(\boldsymbol A=\boldsymbol Z\boldsymbol U^\top\), where
\(\boldsymbol U\in\mathbb R^{d\times r}\) has orthonormal columns and
\(\boldsymbol Z\in\mathbb R^{n\times r}\) has i.i.d.\ Gaussian entries scaled
by \(1/\sqrt r\). Let \(\varsigma(z)=(1+e^{-z})^{-1}\) denote the sigmoid
function and define
\(\boldsymbol p_t=\varsigma(\boldsymbol A\boldsymbol\theta_t^\star)\).
The population objective is
\[
    F_t(\boldsymbol\theta)
    =
    \frac{1}{n}
    \sum_{i=1}^{n}
    \left[
        \log\!\left(1+\exp(\boldsymbol a_i^\top\boldsymbol\theta)\right)
        -
        p_{t,i}\boldsymbol a_i^\top\boldsymbol\theta
    \right],
\]
so that \(\nabla F_t(\boldsymbol\theta_t^\star)=\boldsymbol 0\).

Since
\(\boldsymbol A\boldsymbol\theta
=\boldsymbol Z\boldsymbol U^\top\boldsymbol\theta\), the objective depends on
\(\boldsymbol\theta\) only through its active coordinates
\(\boldsymbol U^\top\boldsymbol\theta\). In particular,
\[
    \nabla F_t(\boldsymbol\theta)
    =
    \boldsymbol U
    \left[
        \frac{1}{n}\boldsymbol Z^\top
        \left(
            \varsigma(\boldsymbol Z\boldsymbol U^\top\boldsymbol\theta)
            -
            \boldsymbol p_t
        \right)
    \right]
    \in \operatorname{range}(\boldsymbol U),
\]
and the objective is flat on
\(\operatorname{range}(\boldsymbol U)^\perp\), with
\(\operatorname{rank}(\nabla^2F_t)\le r\). We therefore restrict stochastic-gradient noise to the same identifiable
subspace and use
\[
    \boldsymbol g_{t+1}(\boldsymbol\theta)
    =
    \nabla F_t(\boldsymbol\theta)
    +
    \frac{\sigma_t}{\sqrt r}
    \boldsymbol U\boldsymbol\varepsilon_{t+1},
    \;\;
    \boldsymbol\varepsilon_{t+1}
    \sim\mathcal N(\boldsymbol 0,\boldsymbol I_r).
\]
This oracle is conditionally unbiased, with covariance
\((\sigma_t^2/r)\boldsymbol U\boldsymbol U^\top\) and
\(\mathbb E\|\boldsymbol g_{t+1}(\boldsymbol\theta)
-\nabla F_t(\boldsymbol\theta)\|^2=\sigma_t^2\).
Restricting the perturbation to \(\operatorname{range}(\boldsymbol U)\)
avoids injecting noise into the \(d-r\) null-space directions, where the loss
provides no restoring gradient and isotropic noise would induce an unidentifiable random walk.

The target likewise drifts only within the active subspace according to
\(\boldsymbol\theta_{t+1}^\star
=\boldsymbol\theta_t^\star+\Delta_t\boldsymbol U\boldsymbol u_t\), where
\(\boldsymbol u_t=\boldsymbol h_t/\|\boldsymbol h_t\|_2\) and
\(\boldsymbol h_t\sim\mathcal N(\boldsymbol 0,\boldsymbol I_r)\).
Thus \(\Delta_t\) controls target drift while \(\sigma_t\) independently
controls stochastic-gradient noise. We report the stationarity gap
\(\|\nabla F_t(\boldsymbol\theta_t)\|^2\).

\paragraph{Lasso regression.}
We consider Lasso regression with dimension \(d=100\), sample size
\(n=200\), sparsity level \(s=10\), and regularization parameter
\(\lambda=0.02\). The design matrix
\(\boldsymbol X\in\mathbb R^{n\times d}\) has independent Gaussian entries
scaled by \(1/\sqrt d\). The initial target \(\boldsymbol\beta_0^\star\) is
supported on a randomly selected set \(S\subset[d]\) with \(|S|=s\) and is
normalized so that \(\|\boldsymbol\beta_0^\star\|_2=3\). The support \(S\)
remains fixed throughout the experiment.

The population objective at time \(t\) is
\[
    F_t(\boldsymbol\theta)
    =
    \frac{1}{2n}
    \|\boldsymbol X\boldsymbol\theta
    -
    \boldsymbol X\boldsymbol\beta_t^\star\|_2^2
    +
    \lambda\|\boldsymbol\theta\|_1 .
\]
At each stochastic step, we sample a mini-batch of \(32\) rows from
\(\boldsymbol X\) and observe
\(y_{i,t}
=
\boldsymbol x_i^\top\boldsymbol\beta_t^\star
+
\sigma_t\varepsilon_{i,t}\), where
\(\varepsilon_{i,t}\) are independent standard Gaussian random variables. Thus the observation noise is mean zero and sub-Gaussian.

The target evolves only on its fixed support \(S\). Writing
\(\boldsymbol\beta_{t,S}^\star\) for its active coordinates, we take a
normalized random step of magnitude \(\Delta_t\) along the sphere
\(\|\boldsymbol\beta_{t,S}^\star\|_2=3\), so that
\[
    \operatorname{supp}(\boldsymbol\beta_t^\star)=S,
    \;\;
    \|\boldsymbol\beta_t^\star\|_2=3
\]
for all \(t\). Hence the target remains \(s\)-sparse and in a fixed compact
set throughout the experiment. We warm-start the model at
\(\boldsymbol\beta_0^\star\) and report the parameter tracking error
\(\|\boldsymbol\theta_t-\boldsymbol\beta_t^\star\|_2^2\).

\begin{figure}[!hbtp]
    \centering
    \includegraphics[width=\textwidth]{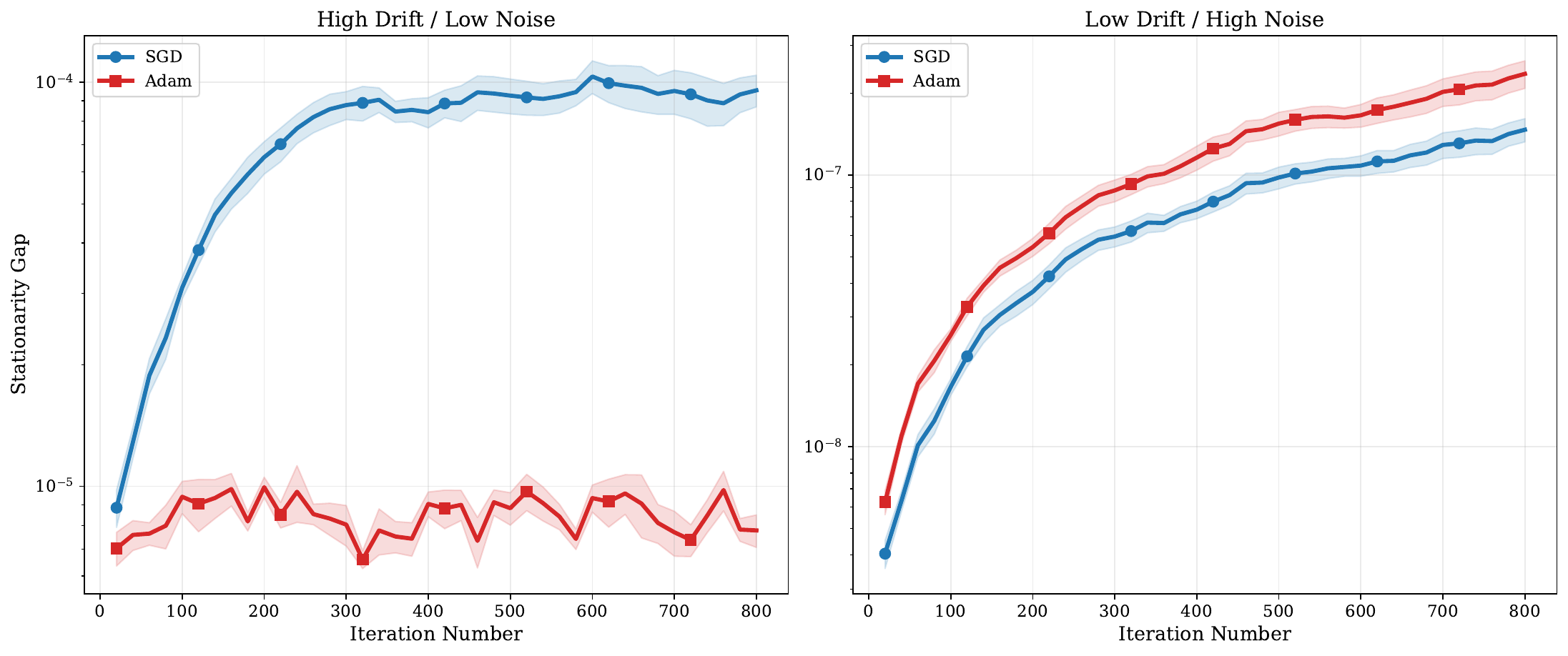}
    \caption{Logistic regression, where we report the stationarity gap.
\textbf{Left}: high-drift, low-noise regime with
\(\Delta_t=c_2\) and \(\sigma_t=c_1\).
\textbf{Right}: low-drift, high-noise regime with
\(\Delta_t=c_3\) and \(\sigma_t=c_4\),
where \(0<c_1<c_2\) and \(0<c_3<c_4\).
    The same qualitative behavior appears: SGD performs better in the lower-noise regime, whereas Adam improves relative performance in the substantially noisier regime on the right.}
    \label{fig:diagram_logistic}
\end{figure}

\begin{figure}[H]
    \centering
    \includegraphics[width=\textwidth]{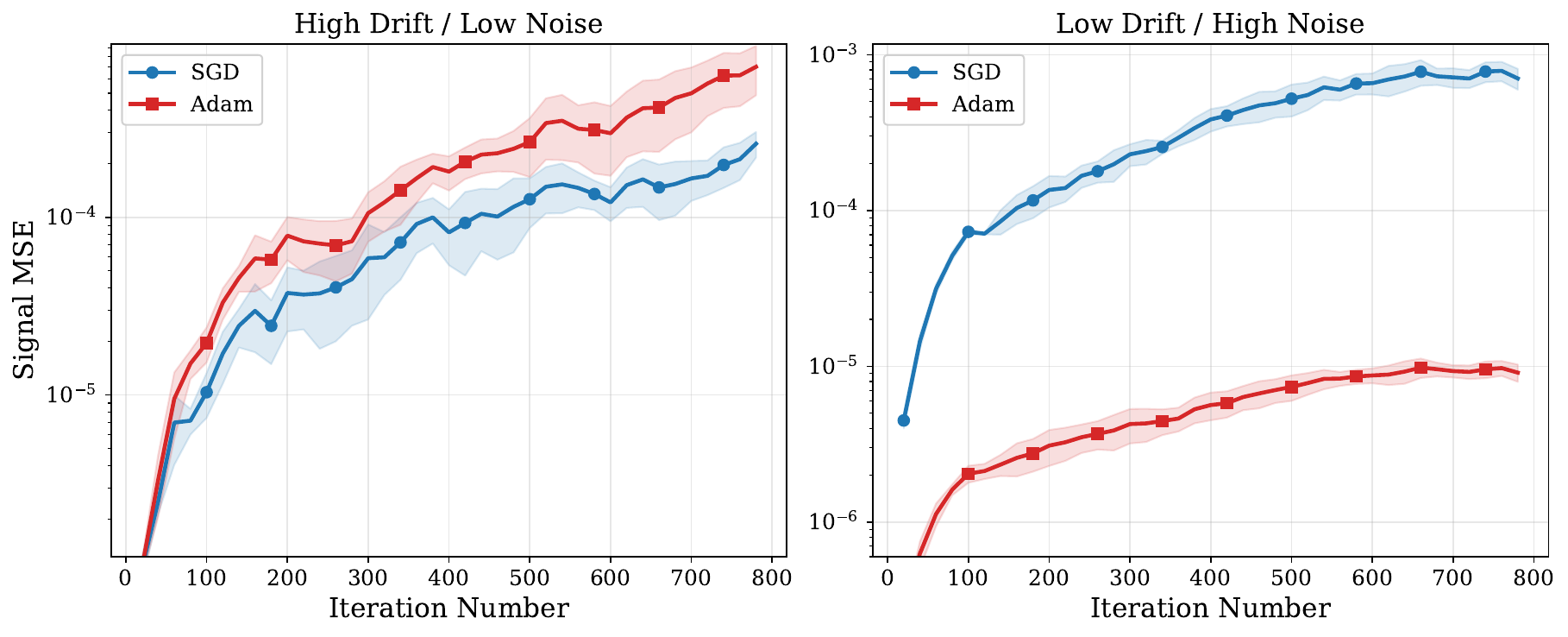}
    \caption{Lasso regression, where we report prediction MSE.
\textbf{Left}: higher-drift, lower-noise regime with
\(\Delta_t=c_2\log(t+2)\) and \(\sigma_t=c_3\log(t+2)\).
\textbf{Right}: lower-drift, higher-noise regime with
\(\Delta_t=c_1\log(t+2)\) and \(\sigma_t=c_4\log(t+2)\),
where \(0<c_1<c_2\) and \(0<c_3<c_4\).
A similar qualitative pattern is observed here as well: the relatively drift-dominated regime favors SGD, while the noise-dominated regime favors Adam.}
    \label{fig:diagram_lasso}
\end{figure}

\subsection{Adam hyperparameter dependence}
\label{app:F4}
We next isolate how Adam's hyperparameters affect the noise--drift tradeoff.
The first set of experiments uses the online quadratic tracking problem $F_t(\boldsymbol{\theta})
    =
    \frac{\mu}{2}
    \|\boldsymbol{\theta}-\boldsymbol{\theta}_t^\star\|^2$.
The second uses phase retrieval and reports
\(\|\nabla F_t(\boldsymbol{\theta}_t)\|^2\).

\begin{figure}[H]
    \centering
    \includegraphics[width=\textwidth]{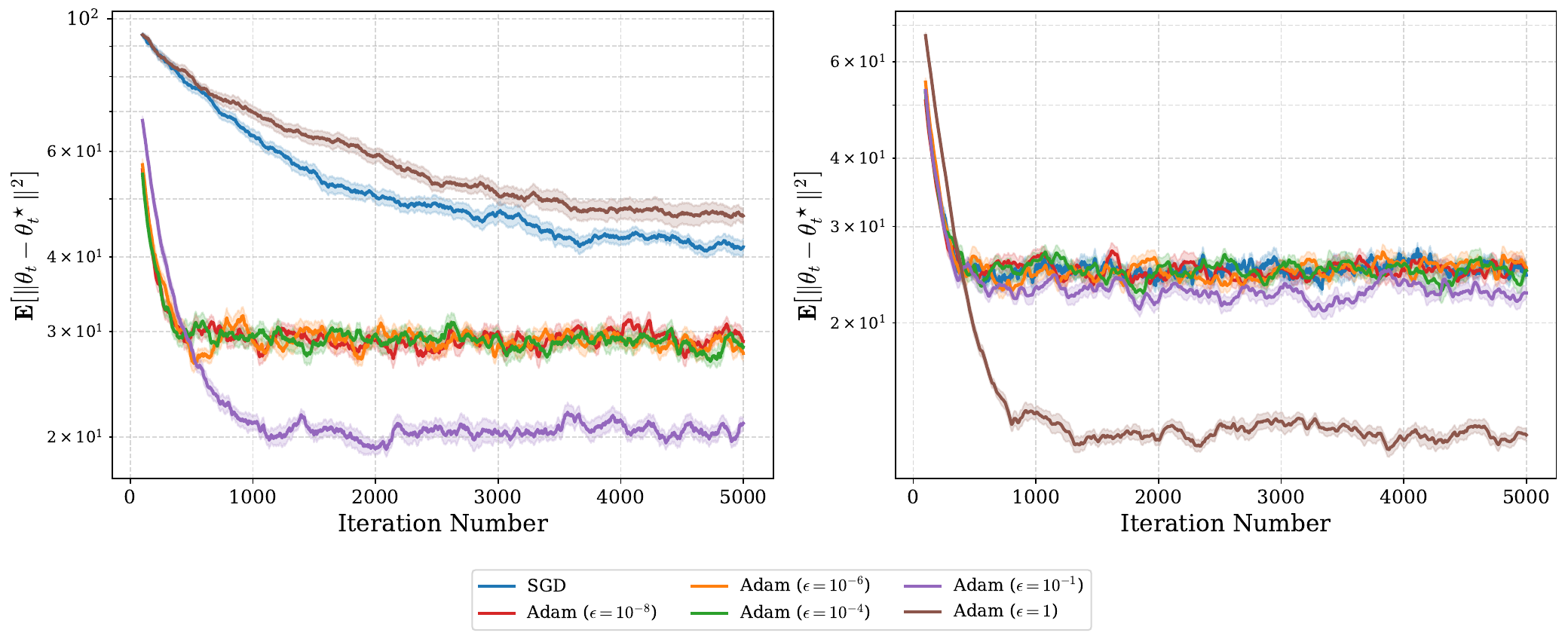}
    \caption{
Dependence on the Adam stabilization parameter $\epsilon$ for the online quadratic tracking problem.
The \textbf{left panel} corresponds to high drift and low noise ($\Delta=0.2$, $\sigma^2=0.01$), while the \textbf{right panel} corresponds to low drift and high noise ($\Delta=0.01$, $\sigma^2=1$).
Across both panels, we fix $\beta_1=0.9$, $\beta_2=0.999$, $\mu=0.01$, and $d=100$.
Consistent with our theory, moderate stabilization performs best under high drift, while larger $\epsilon$ improves tracking in the noise-dominated regime by suppressing stochastic adaptive-preconditioning effects.
}
    \label{fig:diagram_eps_comparison}
\end{figure}

\begin{figure}[H]
    \centering
    \includegraphics[width=\textwidth]{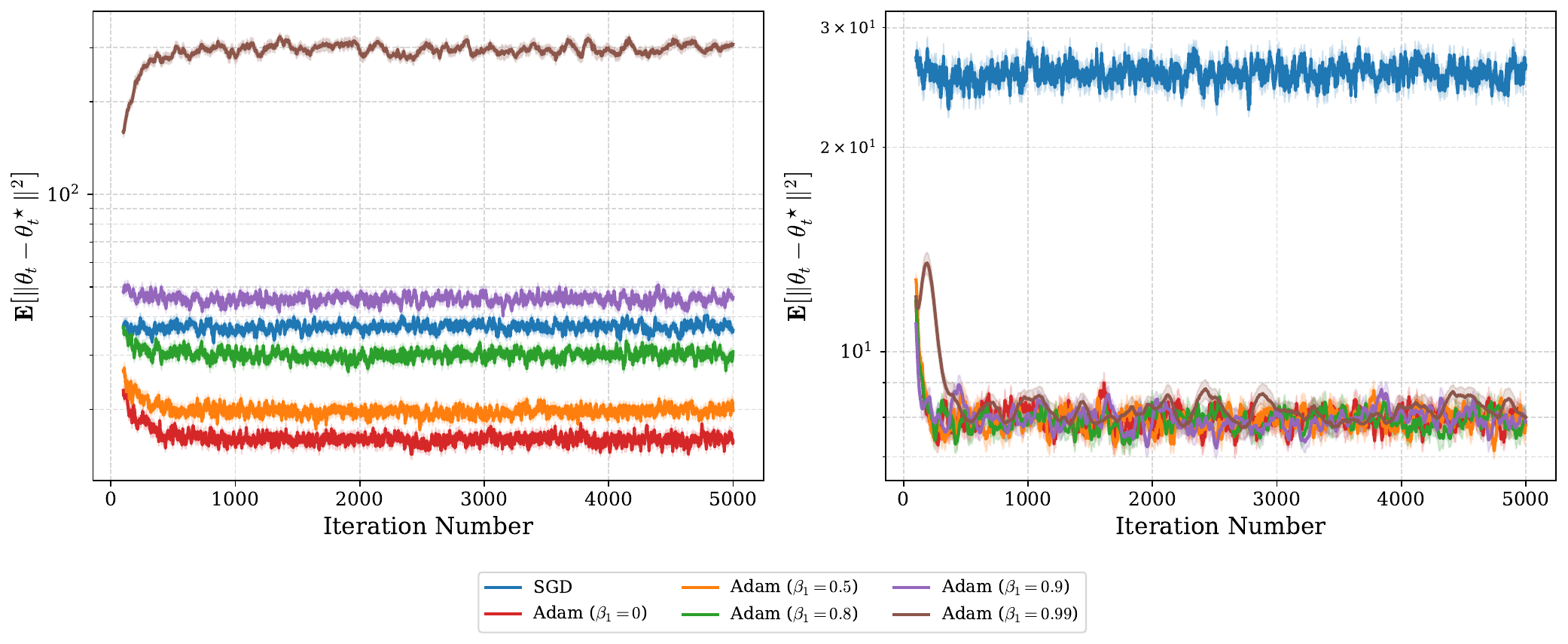}
    \caption{
Dependence on the Adam first-moment parameter $\beta_1$ for the online quadratic tracking problem.
The \textbf{left panel} corresponds to high drift and low noise ($\Delta=2$, $\sigma^2=0.01$), while the \textbf{right panel} corresponds to low drift and high noise ($\Delta=0.05$, $\sigma^2=10$).
Across both panels, we fix $\beta_2=0.99$, $\epsilon=10^{-8}$, $d=100$, and $\gamma=0.05$.
Consistent with our theory, increasing $\beta_1$ degrades tracking in the drift-dominated regime by slowing adaptation, whereas under high noise the dependence on $\beta_1$ is much weaker, reflecting the tradeoff between noise averaging and stale-gradient bias.
}
    \label{fig:diagram_beta1_comparison}
\end{figure}

\begin{figure}[H]
    \centering
    \includegraphics[width=\textwidth]{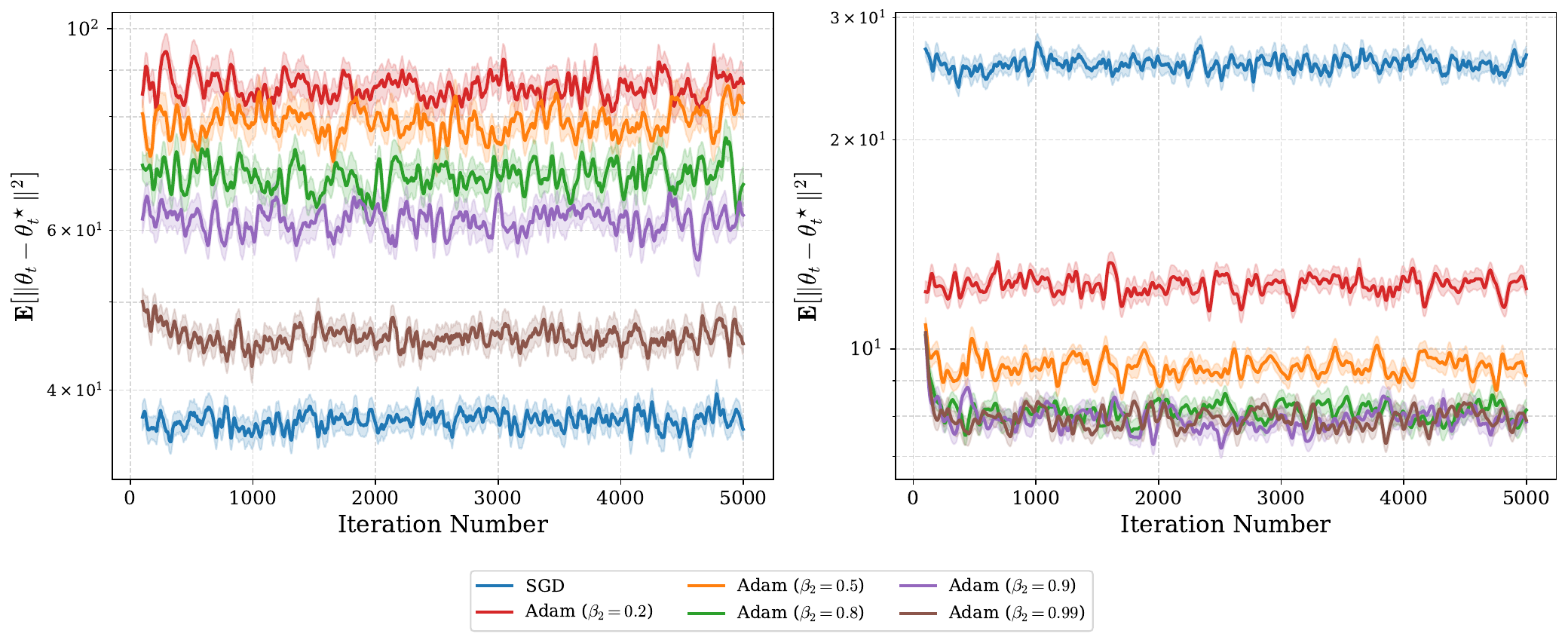}
  \caption{
Dependence on the Adam second-moment parameter $\beta_2$ for the online quadratic tracking problem.
The \textbf{left panel} corresponds to high drift and low noise ($\Delta=2$, $\sigma^2=0.01$), while the \textbf{right panel} corresponds to low drift and high noise ($\Delta=0.02$, $\sigma^2=10$).
Across both panels, we fix $\beta_1=0.9$, $\epsilon=10^{-8}$, $d=100$, and $\gamma=0.05$.
Consistent with our theory, increasing $\beta_2$ improves tracking in both regimes by reducing persistent perturbations from the stochastic second-moment estimate. Over this horizon, this stability benefit dominates the slower adaptation associated with longer second-moment memory.
}
    \label{fig:diagram_beta2_comparison}
\end{figure}

\begin{figure}[H]
    \centering
    \includegraphics[width=\textwidth]{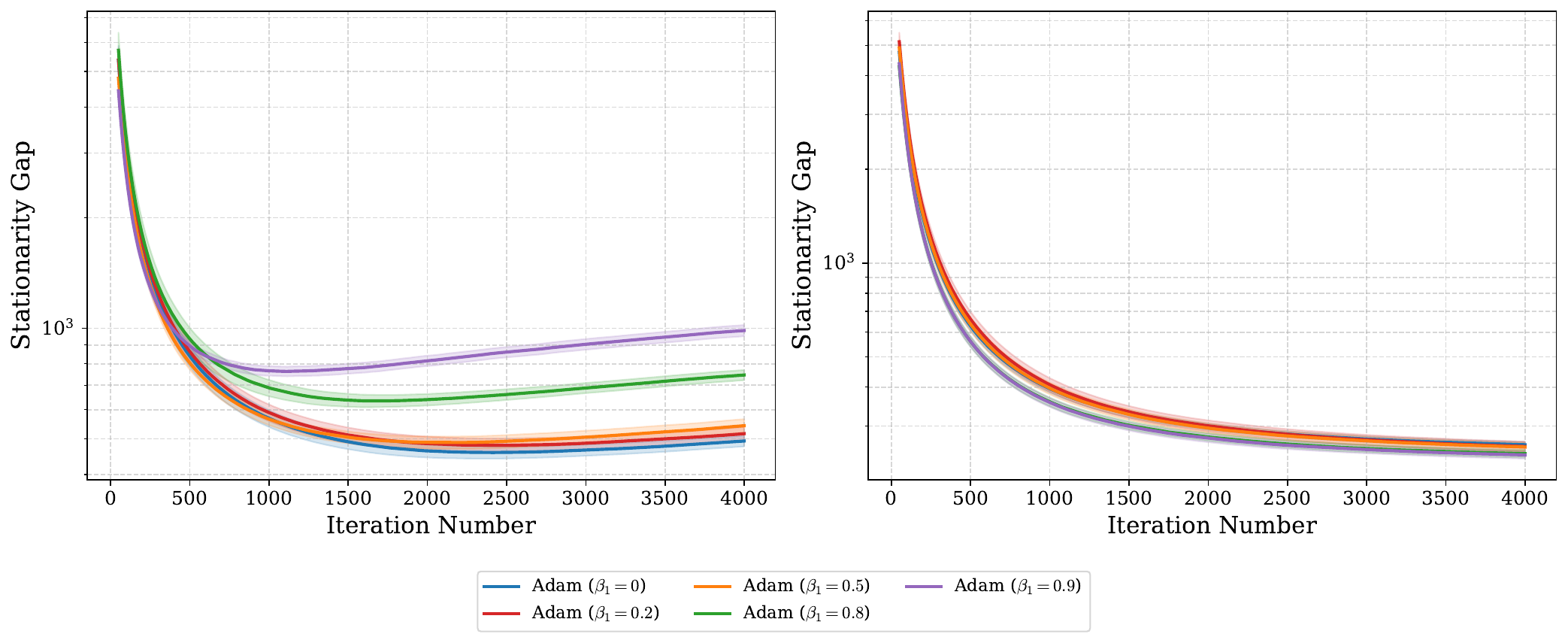}
 \caption{
Dependence on the Adam first-moment parameter $\beta_1$ for the phase-retrieval problem, measured by the stationarity gap.
The \textbf{left panel} corresponds to high drift and low noise ($\Delta=0.10$, $\sigma^2=0.01$), while the \textbf{right panel} corresponds to no drift and high noise ($\Delta=0$, $\sigma^2=1000$).
Across both panels, we fix $\beta_2=0.999$, $\epsilon=10^{-4}$, $d=100$, and $\gamma=0.01$.
Consistent with our theory, increasing $\beta_1$ worsens stationarity under high drift due to stale first-moment information, while under high noise larger $\beta_1$ improves stationarity through stronger gradient averaging.
}
    \label{fig:diagram_stationary_gap_comparison}
\end{figure}

\begin{figure}[H]
    \centering
    \includegraphics[width=\textwidth]{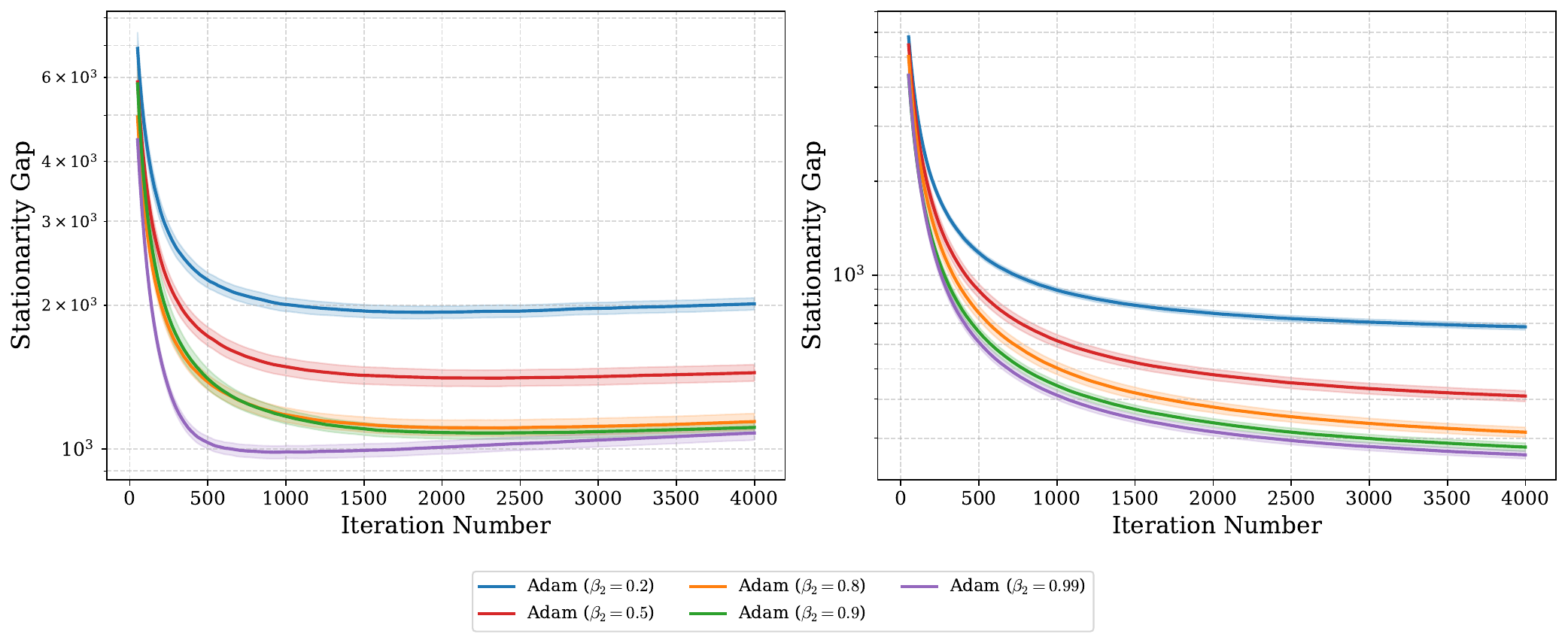}
 \caption{
Dependence on the Adam second-moment parameter $\beta_2$ for the phase-retrieval problem, measured by the stationarity gap.
The \textbf{left panel} corresponds to high drift and low noise ($\Delta=0.10$, $\sigma^2=0.01$), while the \textbf{right panel} corresponds to no drift and high noise ($\Delta=0$, $\sigma^2=1000$).
Across both panels, we fix $\beta_1=0.9$, $\epsilon=10^{-4}$, $d=100$, and $\gamma=0.01$.
Consistent with our theory, increasing $\beta_2$ improves long-run stationarity in both regimes by reducing persistent perturbations from the stochastic second-moment estimate. Over this horizon, this stability benefit dominates the slower adaptation induced by longer second-moment memory.
}
    \label{fig:diagram_stationary_gap_comparison_beta2}
\end{figure}

\end{document}